\theoremstyle{plain}
\newtheorem{theorem}{Theorem}[section]
\newtheorem{proposition}[theorem]{Proposition}
\newtheorem{lemma}[theorem]{Lemma}
\newtheorem{claim}[theorem]{Claim}
\theoremstyle{definition}
\newtheorem{example}[theorem]{Example}
\theoremstyle{remark}
\newcommand{\cmark}{\textcolor{ForestGreen}{\ding{51}}}%
\newcommand{\xmark}{\textcolor{RedOrange}{\ding{55}}}%
\newcommand{\tblalgname}[1]{{\color{JungleGreen} \tiny \sf  #1}\xspace} 
\newcommand{\algname}[1]{{\color{JungleGreen} \small \sf  #1}\xspace}
\newcommand{\squeeze}{\textstyle} 
\newcommand{\cB}{{\cal B}}
\newcommand{\cC}{{\cal C}}
\newcommand{\cD}{{\cal D}}
\newcommand{\cN}{{\cal N}}
\newcommand{\cO}{{\cal O}}
\newcommand{\R}{\mathbb{R}}
\newcommand{\clip}{{\rm\sf\color{Bittersweet}clip}}
\newcommand{\Exp}[1]{\mathbf{E}\left[#1\right]}
\newcommand{\eqdef}{\coloneqq}
\newcommand{\norm}[1]{\left\|#1\right\|}
\newcommand{\FF}{F_0}
\newcommand{\GG}{G_0}
\title{Clip21: Error Feedback for Gradient Clipping}
\author{Sarit Khirirat\\Department of Machine Learning\\MBZUAI${}^{\dagger}$
\And Eduard Gorbunov\\Department of Machine Learning\\MBZUAI
\And Samuel Horv\'{a}th\\Department of Machine Learning\\MBZUAI
\And Rustem Islamov\\Department of Applied Mathematics\\Institut Polytechnique de Paris
\And Fakhri Karray\\Department of Machine Learning\\MBZUAI
\And Peter Richt\'{a}rik\\Artificial Intelligence Initiative\\KAUST${}^{\ddagger}$}
\begin{document}
	
\maketitle

\begin{abstract}
Motivated by the increasing popularity and importance of large-scale training under differential privacy (DP) constraints, we study distributed gradient methods with {\em gradient clipping}, i.e., clipping applied to the gradients computed from local information at the nodes. While gradient clipping is an essential tool for injecting formal DP guarantees into gradient-based methods~\citep{abadi2016deep}, it also induces bias which causes serious convergence issues specific to the distributed setting. Inspired by  recent progress in the error-feedback literature which is focused on taming the bias/error introduced by communication compression operators such as Top-$k$~\citep{richtarik2021ef21}, and  mathematical similarities between the clipping operator and contractive compression operators, we design \algname{Clip21} -- the first provably effective and practically useful error feedback mechanism for distributed methods with gradient clipping. We prove that our method converges at the same $\cO(\nicefrac{1}{K})$ rate as distributed gradient descent in the smooth nonconvex regime, which improves the previous best $\cO(\nicefrac{1}{\sqrt{K}})$ rate which was obtained under significantly stronger assumptions.
Our method converges significantly faster in practice than competing methods. 
\end{abstract}

\section{Introduction}

Gradient clipping is a popular and versatile  tool used in several areas of machine learning. For example, it is employed to i) enforce bounded $\ell_2$ sensitivity in order to obtain formal differentially privacy guarantees in gradient-based optimization methods~\citep{abadi2016deep,chen2020understanding}, ii) tame the exploding gradient problem in deep learning~\citep{pascanu2013difficulty,  Zhang2020Why}, iii) stabilize convergence of \algname{SGD} in the heavy-tailed noise regime~\cite{nazin2019algorithms,gorbunov2020stochastic}, and iv) design provably Byzantine-robust gradient aggregators~\citep{karimireddy2021learning}. 

\subsection{The problem}
Our work is motivated by the increasing popularity and importance of large-scale training under differential privacy (DP) constraints \citep{DP-ImageNet-2022,FL-DP-google-blog-2022}. 
In particular, we wish to solve the optimization problem
\begin{equation}\label{eq:main}  \squeeze
\min\limits_{x\in \R^d} \left[f(x)\eqdef \frac{1}{n}\sum\limits_{i=1}^n f_i(x)\right],
\end{equation}
where $n$ is the number of clients and $f_i$ is the loss of a model parameterized by vector $x\in \R^d$ over all private data $\cD_i$ owned by client $i\in [n]\eqdef \{1,\dots,n\}$. We assume each $f_i$ has $L_i$-Lipschitz gradient, i.e., \begin{equation}\label{eq:L_i-smooth}\squeeze \norm{\nabla f_i(x) - \nabla f_i(y)} \leq  L_i \norm{x-y} \end{equation} for all $x,y\in \R^d$, where $\norm{x}\eqdef \langle x,x\rangle^{1/2}$ and $\langle x,y\rangle \eqdef \sum_{i=1}^d x_i y_i$ is the standard Euclidean inner product.  This implies that $f$ has $L$-Lipschitz gradient, i.e.,
\begin{equation}\label{eq:L-smooth}\squeeze \norm{\nabla f(x) - \nabla f(y)} \leq  L \norm{x-y} \end{equation}
for all $x,y\in \R^d$, where $L$ satisfies $L\leq \frac{1}{n}\sum_{i=1}^n L_i$. Let $L_{\max}\eqdef \max_i L_i$. Moreover, we assume $f$ to be  lower bounded by some $f_{\inf}\in \R$. Note that we do not require any convexity assumptions.  Further, we do not assume $f_i$ nor $f$ to be Lipchitz, which is a typical assumption used to enforce bounded $\ell_2$ sensitivity -- a crucial property when proving formal DP guarantees.  

\subsection{Optimization with gradient clipping}
Perhaps the simplest DP algorithm for solving \eqref{eq:main}  in this regime is \algname{DP-Clip-GD} \citep{abadi2016deep,chen2020understanding}, performing the iterations
\begin{equation}\label{eq:DP-GD} \squeeze  x_{k+1} = x_k - \gamma_k \left(\frac{1}{n}\sum\limits_{i=1}^n \clip_\tau (\nabla f_i(x_k) ) + \zeta_k \right),\end{equation}
where $\gamma_k>0$ is a stepsize, $\zeta_k\sim \cN(0,\sigma^2 I_d)$ is zero-mean Gaussian noise with variance $\sigma^2 \geq 0$, and $\clip_\tau: \R^d \to \R^d$ is the clipping operator with threshold $\tau>0$, defined via 
\begin{equation}\label{eq:clip} \squeeze \clip_\tau(x) \eqdef \begin{cases} x & \text{if} \quad \norm{x} \leq \tau \\
\frac{\tau}{\norm{x}} x & \text{if} \quad \norm{x}> \tau \end{cases}.\end{equation}
Formal privacy guarantees for algorithm \eqref{eq:DP-GD} can be found in \citep[Theorem 1]{abadi2016deep}\footnote{\citep{abadi2016deep} consider the more general setting with {\em subsampled} gradients $\{\nabla f_i : i\in S_k\}$ for random $S_k\subseteq [n]$. }. 
When $\sigma^2=0$, we will refer to this method as \algname{Clip-GD}, dropping the DP designation.

	\begin{table*}
	\caption{\footnotesize Known and our new results for first order methods with clipping.  ${}^{(a)}$ It is not clear if the analysis  is correct:  \url{https://openreview.net/forum?id=hq7vLjZTJPk}. ${}^{(b)}$ Relies on bounded gradient and gradient similarity assumptions:  there exists $G\geq 0$ and $\sigma_g \geq 0$ such that $\norm{\nabla f_i(x)}^2 \leq G^2$ and $\norm{\nabla f_i(x) - \nabla f(x)}^2 \leq \sigma_g^2$ for all $x\in \R^d$ and all $i\in [n]$. ${}^{(c)}$ considers local steps for communication efficiency, but these lead to a worse communication efficiency guarantee. ${}^{(d)}$  $\sigma^2$ is the variance of the Gaussian noise.}\label{tbl:many_methods}
	\centering		
	\scriptsize
        \setlength{\tabcolsep}{1pt}
	\begin{tabular}{cccccc}
		\bf Algorithm &   \bf \begin{tabular}{c} Covers\\ $n>1$ Case?\end{tabular}  &  \bf \begin{tabular}{c} Nonconvex \\ Rate  \end{tabular} & \bf \begin{tabular}{c} DP \\  Guarantees \end{tabular} & \bf \begin{tabular}{c}Communication \\  Compression \end{tabular} & \bf Comment \\
		\hline 
		\hline 			
		\begin{tabular}{c} \tblalgname{Clip-GD} \\ \citep{Zhang2020Why} \end{tabular}&  \xmark & $\cO(\nicefrac{1}{K})$ & \xmark & --- & \begin{tabular}{c} $(L_0',L_1')$-smoothness \\
			applies to $n=1$ case only	\end{tabular} \\
		
		\hline 
		\begin{tabular}{c} \tblalgname{CE-FedAvg} \\ \citep{zhang2022understanding}  \end{tabular}&  \cmark & $\cO(\nicefrac{1}{\sqrt{K}})$ & \xmark & \xmark  \; ${}^{(c)}$ & 
		\begin{tabular}{c}strong assumptions ${}^{(b)}$\\  slow rate  \end{tabular}  \\
		\hline 
		\begin{tabular}{c} \tblalgname{CELGC} \\ \citep{liu2022communication}   \end{tabular}&  \cmark & $\cO(\nicefrac{1}{\sqrt{K}})$ ${}^{(a)}$ & \xmark & \xmark & \begin{tabular}{c}strong assumptions ${}^{(b)}$ \\  slow rate \end{tabular} \\		
		
		\hline 
		\hline 				
		\begin{tabular}{c} \tblalgname{Clip21-Avg} \\ {\color{red}NEW} (Alg~\ref{alg:Clip21-Avg}) \end{tabular}&  \cmark & \begin{tabular}{c}$\cO(\max\{0,1-K\})$ \\ (Thm~\ref{thm:Clip21-Avg}) \end{tabular} & \xmark & \xmark & \begin{tabular}{c} Exact solution in $O(1)$ steps \\
			Solves average estimation			\end{tabular} \\\hline 		
		\begin{tabular}{c} \tblalgname{Clip21-GD} \\ {\color{red}NEW} (Alg~\ref{alg:Clip21-GD})  \end{tabular}&  \cmark & \begin{tabular}{c}$\cO(\nicefrac{1}{K})$ \\ (Thm~\ref{thm:multinode-clippedEF21}) \end{tabular} & \xmark & \xmark & \begin{tabular}{c} Fast \tblalgname{GD}-like rate \\ under $L$-smoothness \end{tabular} \\
		\hline 			
		\begin{tabular}{c} \tblalgname{DP-Clip21-GD} \\ {\color{red}NEW} (Alg~\ref{alg:DP-Clip21-GD}) \end{tabular}&  \cmark & \begin{tabular}{c} $\cO(e^{-K} + \frac{\ln (1/\delta)}{\epsilon}) $ ${}^{(d)}$ \\ (Thm~\ref{thm:DP-clippedEF21})\end{tabular} & \cmark & \xmark & \begin{tabular}{c} Linear rate up to $\cO(\sigma^2)$ distance \\ of optimum under PŁ assumption \end{tabular}\\
		
		
		\hline 			
		\begin{tabular}{c} \tblalgname{Press-Clip21-GD} \\ {\color{red}NEW} (Alg~\ref{alg:ClipPress21-GD})  \end{tabular}&  \cmark & \begin{tabular}{c}$\cO(\nicefrac{1}{K})$ \\ (Thm~\ref{thm:ClipPress21-GD})\end{tabular} & \xmark & \cmark & \begin{tabular}{c} Fast \tblalgname{GD}-like rate \\ under $L$-smoothness \end{tabular}\\				
		\hline 
		\hline        
	\end{tabular}      		   
\end{table*}

\subsection{Convergence of \algname{Clip-GD} in the $n=1$ case} \label{intro:subsec:n=1}
In the $n=1$ case, \algname{Clip-GD}  was studied by \citep{Zhang2020Why}, where it was shown that, under our assumptions\footnote{\citep{Zhang2020Why} need to additionally assume $f$ to be twice continuously differentiable.}, and for suitable stepsizes, $$\squeeze \norm{\nabla f(x_K)}^2 \leq \frac{20 L (f(x_0)-f_{\inf})}{K}.$$ This is the same rate as that of vanilla \algname{GD}, i.e., the clipping bias does not cause any convergence issues in the $n=1$ regime. \citep{Zhang2020Why} did not consider the  distributed ($n>1$) regime since their work was motivated by orthogonal considerations to ours: instead of tackling the issue of clipping bias inherent to the  distributed  regime, as we do, they set out to explain the efficacy of clipping as a tool for convergence stabilization of \algname{GD} for functions with rapidly growing gradients (this issue exists even in the single node  regime). To model functions with rapidly growing gradients, they consider the  $(L'_0, L'_1)$-smoothness assumption, which requires the bound $\norm{\nabla^2 f(x)} \leq L'_0 + L'_1 \norm{\nabla f(x)}$ to hold for all $x\in \R^d$. For twice continuously differentiable functions, this assumption specializes to  ours by setting $L'_0=L=L_1$ and $L'_1=0$. 
	
\subsection{Divergence of \algname{Clip-GD} in the $n>1$ case}
Surprisingly little is known about the convergence properties of \algname{Clip-GD} in the $n>1$ case.
The key reason behind this is the poor quality of \begin{equation} \label{eq:clip_estimator} \squeeze  g(x) \eqdef \frac{1}{n}\sum\limits_{i=1}^n \clip_\tau (\nabla f_i(x) ) \end{equation} as an estimator of $\nabla f(x)$. This issue does not arise in the $n=1$ case since clipped gradient retains the directional information of the original gradient and all that is lost is just some of the scaling information, which is in the light of the results of \citep{Zhang2020Why} described above is not problematic. However, the situation is more complicated in the  $n>1$ case as illustrated in the following example.

\begin{example} Let $d=1$, $n=2$ with $f_1(x)=\frac{\beta}{2} x^2$ and $f_2(x)=-\frac{\alpha}{2} x^2$, where $\beta > \alpha \geq \tau>0 $ and $x>0$. Both of these functions have Lipschitz gradients, and  $f$ has $(\beta-\alpha)$-Lipschitz gradient. Moreover, $f(x) = \frac{\beta-\alpha}{2}x^2$ is lower bounded below by $f_{\inf} =0$. 
So, this setup satisfies our assumptions.  Notice that while the gradient is equal to $\nabla f(x) = \frac{1}{2} \nabla f_1(x) + \frac{1}{2} \nabla f_1(x)  =  \beta x-\alpha x = (\beta-\alpha)x$, the gradient estimator \eqref{eq:clip_estimator} gives $g(x) = \frac{1}{2} \clip_\tau(\nabla f_1(x)) + \frac{1}{2} \clip_\tau(\nabla f_2(x)) = \min \{1, \frac{\tau}{| \beta x |}\} \beta x +  \min \{1, \frac{\tau}{ | - \alpha x | }\} (-\alpha x) = \tau-\tau = 0$ whenever $x \geq \frac{\tau}{\alpha}$. This means that \algname{Clip-GD} will not progress at all if initialized with  $x_0 \geq \frac{\tau}{\alpha}$. For example, if we choose $x_0 = \frac{\tau}{\alpha}$, then both the function value $f(x_0) = \frac{\beta-\alpha}{2}\frac{\tau^2}{\alpha^2}$ and the gradient $\nabla f(x_0) = (\beta - \alpha ) \frac{\tau}{\alpha}$ can be arbitrarily large (by fixing $\tau$ and $\alpha$, and increasing $\beta$), while the optimal value and gradient of $f$ are both zero.
\end{example}

As the above example illustrates, \algname{Clip-GD}  is a fundamentally flawed method  in the $n>1$ case, unable to converge from many starting points to {\em any} finite  degree of accuracy, however weak accuracy requirements we may have! This is true for any class of functions which includes the above example, and hence it is true for the class of functions we consider in this paper: lower bounded, with Lipschitz gradient.

\section{Summary of Contributions} \label{sec:contributions}

In the light of the above discussion, there are at least three ways forward: i) consider a different algorithm, ii) consider a different function class, or iii) change both the algorithm and the function class. In our work we explore the first of these three possible approaches: we design a new way of combining clipping and gradient descent -- one that does not suffer from any convergence issues.  

\subsection{\algname{Clip21-Avg}: Error feedback for average estimation with clipping} 
As a first step in our process of discovery of a fix for the bias caused by the clipping estimator  \eqref{eq:clip_estimator} in the estimation of  the gradient, we first study a simplified setting void of any optimization aspect, thus removing one source of dynamics, which greatly simplifies the situation. In particular, we study the problem of the estimation of the average of a number of {\em fixed} vectors $a^1,\dots,a^n\in \R^d$ via repeated use of clipping. 
We then propose an error feedback mechanism for this task, leading to method \algname{Clip21-Avg} (see Algorithm~\ref{alg:Clip21-Avg}), and prove that our method finds the {\em exact} average in $$K=\cO(\nicefrac{1}{\tau})$$ iterations  (see Theorem~\ref{thm:Clip21-Avg}). As a corollary, if $\tau$  is sufficiently large, then \algname{Clip21-Avg} finds the exact average in a single iteration, which is to be expected from any reasonable mechanism. In particular, \algname{Clip21-Avg} maintains a collection of auxiliary iterates $v_k^1, \dots, v_k^n$ used to estimate the vectors $a^1,\dots,a^n$ which we  evolve via the rule
\begin{equation}\label{eq:Avg-Step3}\squeeze v_k^i = v_{k-1}^i + \clip_\tau (a^i - v_{k-1}^i), \quad i\in [n].\end{equation}
  The average of the vectors  is then estimated by $v_k\eqdef \frac{1}{n}\sum_{i=1}^n v_k^i \approx  \frac{1}{n}\sum_{i=1}^n a^i$.

\subsection{\algname{Clip21-GD}: Error feedback for \algname{GD} with clipping} 
Having solved the simpler task of average estimation, we now use similar ideas to design an error feedback mechanism for fixing the bias caused by clipping in an optimization setting for solving problem \eqref{eq:main}. In particular, we propose to estimate the average of the gradients $\nabla f_i(x^k), \dots, \nabla f_n(x^k)$ by the output of a {\em single} iteration of \algname{Clip21-Avg}, i.e., by $v_k  = \frac{1}{n} \sum_{i=1}^n v_k^i$,
where
\begin{equation}\label{eq:98g98fd}\squeeze v_k^i = v_{k-1}^i + \clip_\tau (\nabla f_i(x_k) - v_{k-1}^i), \quad i\in [n],\end{equation}
 and use this estimator in lieu of the true gradient to progress in our optimization task: 
$ x_{k+1} = x_k - \gamma v_k.$
This approach leads to the main method of this paper: \algname{Clip21-GD} (see Algorithm~\ref{alg:Clip21-GD}).  
The re-introduction of gradient dynamics into the picture causes considerable issues in that our analysis can not rely on the arguments used to analyze \algname{Clip21-Avg}. Indeed, while the vectors $a^1,\dots,a^n$ whose average we are trying to estimate in  \algname{Clip21-Avg} remain static throughout the iterations of \algname{Clip21-Avg}, in \algname{Clip21-GD} they change after every step. Analyzing the combined dynamics of two these methods turned out challenging, but ultimately possible, and the result is satisfying. In particular, in Theorem~\ref{thm:multinode-clippedEF21} we prove that  \algname{Clip21-GD} enjoys the same  rate as vanilla \algname{GD}; that is, our method outputs  a (random) point $\hat{x}_K$ such that $$\Exp{\norm{\nabla f(\hat{x}_K)}^2} \leq  \cO(\nicefrac{1}{K}).$$ Our main result can be found in Section~\ref{sec:optimization}; see also Table~\ref{tbl:many_methods}.
Next, the closest competing method to ours is that of
\citep{zhang2022understanding}, which 
studies the convergence of a clip-enabled variant of the \algname{FedAvg} algorithm, called \algname{CE-FedAvg}, under the  additional assumptions that there exists $G\geq 0$ and $\sigma_g \geq 0$ such that $\norm{\nabla f_i(x)}^2 \leq G^2$ and $\norm{\nabla f_i(x) - \nabla f(x)}^2 \leq \sigma_g^2$ for all $x\in \R^d$ and all $i\in [n]$.  Given the discussion from the beginning of Section~\ref{sec:contributions}, their work  can thus be seen  as embarking on approach ii) towards taming the divergence issues associated with applying gradient clipping in the $n>1$ case.   We wish to note that these additional assumptions are rather strong. First, it is unclear why gradient clipping is required in the regime when the gradients are already bounded. Second, the function similarity assumption typically does not hold in practice~\citep{localSGD-AISTATS2020,richtarik2021ef21}. Finally, none of these assumptions hold even for convex quadratics. Even with these additional and arguably strong assumptions, \citep{zhang2022understanding} in their Theorem~3.1 establish
a result of the form $$\squeeze \norm{\nabla f(\hat{x}_K)}^2 \leq  \cO(\nicefrac{1}{\sqrt{K}}),$$ which is weaker than the $\cO(\nicefrac{1}{K})$ rate we achieve.

\subsection{Extension 1: Adding noise for DP guarantees} 
We add bounded Gaussian noise to \algname{Clip21-GD}, which leads to the new method \algname{DP-Clip21-GD} (see Algorithm~\ref{alg:DP-Clip21-GD}),  and prove convergence with privacy guarantees for solving nonconvex problems under the PŁ condition. The results can be found in Appendix~\ref{sec:DP}.

\subsection{Extension 2: Adding communication compression for increased communication efficiency}
We extend \algname{Clip21-GD} to communication-efficient learning applications. We call this method \algname{Press-Clip21-GD} where each node compresses the clipped vector before it communicates. The description and convergence theorem of \algname{Press-Clip21-GD} are provided in Appendix~\ref{sec_app:ClipPress21}.

\subsection{Experiments}  
Our experiments on regression and deep learning problems suggest that \algname{Clip21-GD} and \algname{DP-Clip21-GD} substantially outperform \algname{Clip-GD} and its DP variants.  See Section~\ref{sec:exp}.

\section{Related Work} \label{sec:related_work}

As we shall outline next, our work is complementary to the existing literature on methods utilizing the clipping operator for solving various  problems.

\subsection{Relation to literature  on exploding gradients}
Gradient clipping and normalization were studied in early subgradient optimization literature  by  \citep{Shor1985} and \citep{Ermoliev1988}, among others, as techniques  for enforcing convergence when minimizing rapidly growing (i.e., non-Lipschitz)  functions. In contrast to this and also more recent  literature on this topic~\cite{pascanu2013difficulty, goodfellow2016deep,Zhang2020Why, mai2021stability}, we assume $L$-Lipschitzness of the gradient. This is because the issue we are trying to overcome in our work exists even in this more restrictive regime.  In other words, we are not employing clipping as a tool for taming the exploding gradients problem, and our work is fully complementary to this literature. 
Citing \citep{abadi2016deep}, ``gradient clipping of this form is a popular ingredient of \algname{SGD} for deep networks for non-privacy reasons, though in that setting it usually suffices to clip after averaging.'' Clipping after averaging does not cause the severe bias and divergence issues we are addressing in our work.

\subsection{Relation to literature on heavy-tailed noise}
In contrast to the literature on using clipping to tame  stochastic gradient estimators with a heavy-tailed behavior~\cite{nazin2019algorithms, zhang2020adaptive, gorbunov2020stochastic, gorbunov2021near, cutkosky2021high}, we do not consider the heavy-tailed setup in our work. In fact, our key methods and results are fully meaningful in the deterministic gradient regime, which is  why we focus on it in much of the paper.

\subsection{Relation to literature  on Byzantine robustness}
In our work we do {\em not} consider the Byzantine setup and instead focus on standard distributed optimization with nodes whose outputs can be trusted. However, there is a certain similarity between our \algname{Clip21-Avg} method and the centered clipping mechanism employed   by \citep{karimireddy2021learning} to obtain a Byzantine-robust estimator of the gradient. We shall comment on this in Appendix~\ref{subsec:Byz}. 

\subsection{Relation to literature  on error feedback}
Error feedback (\algname{EF}), originally proposed by \citep{seide20141}, is a popular mechanism for stabilizing optimization methods that use  compressed gradients to reduce communication costs. 
Variants of \algname{EF} methods were originally  analyzed by \citep{alistarh2018convergence,stich2018sparsified,wu2018error} and later refined by \citep{tang2019doublesqueeze,karimireddy2019error,stich2020error,qian2021error,khirirat2020compressed}. The current best results  can be found in \citep{gorbunov2020linearly,qian2021error}.
However, these methods were analyzed either in the single-node setting, or homogeneous data setting, or otherwise suffer from  restrictive assumptions (e.g., bounded gradient-norm and bounded data dissimilarity conditions) and not fully satisfying rates (e.g., $\cO(\nicefrac{1}{K^{2/3}})$ in the nonconvex regime).
To address these problems, a new error-feedback mechnism called \algname{EF21} was proposed by \citep{richtarik2021ef21}, and shown to provide fast $\cO(\nicefrac{1}{K})$ convergence for distributed optimization over smooth, heterogeneous objective functions \cite{richtarik2021ef21,EF21BW,richtarik20223pc}, under weak assumptions.
 Our algorithmic approach behind \algname{Clip21-GD}  is inspired by the  error feedback mechanism \algname{EF21} of \citep{richtarik2021ef21} proposed in the context  of distributed optimization with contractive communication compression, but needs a different theoretical approach due to a difference between the properties of the clipping and compression operators which necessitates a   substantially more refined and involved analysis. We shall comment on this in more detail in Section~\ref{sec:optimization}.

\section{Error Feedback for Average Estimation with Clipping} \label{sec:average}

\begin{algorithm}[t]
		\centering
		\caption{\algname{Clip21-Avg} (Error Feedback for Average Estimation with Clipping)}\label{alg:Clip21-Avg}
		\begin{algorithmic}[1]
			\STATE \textbf{Input:}  initial shifts $v^1_{-1},\dots,v^n_{-1}\in \R^d$; clipping threshold $\tau>0$
			\FOR{$k=0,1, 2, \dots, K-1 $}
		\FOR{each worker $i=1,\ldots,n$ in parallel}		
			\STATE $v^i_{k} = v^i_{k-1} + \clip_\tau( a^i - v^i_{k-1} )$		
		\ENDFOR			
			\STATE $v_k =  \frac{1}{n}\sum_{i=1}^n v^i_k$ {\tiny \hfill $\diamond$ estimate of $a\eqdef \frac{1}{n}\sum_i a^i$}
	\ENDFOR				
		\end{algorithmic}
	\end{algorithm}
	
We shall now describe the properties of our \algname{Clip21-Avg} method (Algorithm~\ref{alg:Clip21-Avg}) for finding the average of $n$ vectors, $a^1,\dots,a^n \in \R^d$.
	
\subsection{Basic properties of the clipping operator}
It is easy to verify that $\clip_\tau$ is the projection operator onto the ball $\cB(0,\tau)\eqdef \{x : \norm{x} \leq \tau\}$, and that is satisfies the properties 
in the next lemma.

\begin{lemma} \label{lem:clip} The clipping operator $\clip_\tau:\R^d \to \R^d$ has the following properties for all $\tau>0$: 
\begin{itemize}
\item[(i)] $ \clip_{\gamma \tau}(x) = \gamma  \clip_{\tau} \left( \nicefrac{x}{\gamma} \right)$ for all $x\in \R^d$ and $\gamma>0$, 
\item[(ii)] $\norm{ \clip_\tau(x) -x } = 0$ if $\norm{x}\leq \tau$, 
\item[(iii)] $\norm{ \clip_\tau(x) -x } = \norm{x} - \tau $ if $ \norm{x} \geq \tau$, 
\item[(iv)]
 $ \norm{\clip_\tau(x) - x}^2 = \left(1-\nicefrac{\tau}{\norm{x}}\right)^2 \norm{x}^2$ if  $\norm{x} \geq \tau$.
\end{itemize}

\end{lemma}

We will use parts (ii)-(iii) of this lemma in the rest of this section. Part (iv) will be useful in Section~\ref{sec:optimization}.

\subsection{Estimating $a^i$}
We now analyze Step 3 of Algorithm~\ref{alg:Clip21-Avg} (i.e.,  \eqref{eq:Avg-Step3}). As we shall see, $v^i_k$ converges to $a^i$ exactly  in a finite \# of steps.
\begin{lemma}  \label{lem:a^i} 
For all iterates $k\geq 0$ of Algorithm~\ref{alg:Clip21-Avg}, 
\begin{equation}\label{eq:nb9f8g0dfd} \squeeze\norm{v^i_k - a^i} \leq \max \left\{ 0, \norm{v^i_{-1} - a^i} - (k+1) \tau \right\}, \forall i.\end{equation}
In particular, if $\norm{v^i_{-1} - a^i}\leq \tau$, then $v^i_0 = a^i$. Otherwise, if  $k \geq  \left \lceil \frac{1}{\tau}\norm{v^i_{-1} - a^i} - 1 \right \rceil$, then $v^i_{k} = a^i$.
\end{lemma}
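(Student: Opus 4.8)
The plan is to track the residual vector $e_k^i \eqdef v_k^i - a^i$ and show that its norm is reduced by exactly $\tau$ at each iteration as long as it exceeds $\tau$, and collapses to zero the moment it does not. First I would use the fact that $\clip_\tau$ is odd (immediate from the definition \eqref{eq:clip}, since $\clip_\tau$ is the projection onto the centered ball $\cB(0,\tau)$) to rewrite the update \eqref{eq:Avg-Step3} purely in terms of the residual: subtracting $a^i$ from both sides of $v_k^i = v_{k-1}^i + \clip_\tau(a^i - v_{k-1}^i)$ gives $e_k^i = e_{k-1}^i + \clip_\tau(-e_{k-1}^i) = e_{k-1}^i - \clip_\tau(e_{k-1}^i)$. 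This removes $a^i$ and $v^i$ from the picture and reduces everything to a self-contained recursion for $e_k^i$.

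Next I would case-split using Lemma~\ref{lem:clip}. If $\norm{e_{k-1}^i} \leq \tau$, then $\clip_\tau(e_{k-1}^i) = e_{k-1}^i$, so $e_k^i = 0$ (equivalently, by part (ii)). If $\norm{e_{k-1}^i} \geq \tau$, then part (iii) of Lemma~\ref{lem:clip} applied to $x = e_{k-1}^i$ gives $\norm{e_k^i} = \norm{\clip_\tau(e_{k-1}^i) - e_{k-1}^i} = \norm{e_{k-1}^i} - \tau$; note the two cases agree at $\norm{e_{k-1}^i}=\tau$, where both yield a zero residual at the next step. In words: each iteration peels off exactly $\tau$ from the norm of the residual until that norm drops to at most $\tau$, after which the residual is identically zero forever.

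Then I would prove \eqref{eq:nb9f8g0dfd} by induction on $k$. The base case $k=0$ is exactly the dichotomy above applied to $e_{-1}^i$: either $\norm{e_{-1}^i}\le\tau$ and $\norm{e_0^i}=0=\max\{0,\norm{e_{-1}^i}-\tau\}$, or $\norm{e_0^i}=\norm{e_{-1}^i}-\tau$. For the inductive step, assume $\norm{e_{k-1}^i} \leq \max\{0, \norm{e_{-1}^i} - k\tau\}$. If $\norm{e_{k-1}^i} \leq \tau$ then $e_k^i = 0$ and the bound holds trivially; if $\norm{e_{k-1}^i} > \tau$, then the inductive hypothesis forces $\max\{0, \norm{e_{-1}^i} - k\tau\} \geq \norm{e_{k-1}^i} > \tau > 0$, hence $\max\{0, \norm{e_{-1}^i} - k\tau\} = \norm{e_{-1}^i} - k\tau$, and therefore $\norm{e_k^i} = \norm{e_{k-1}^i} - \tau \leq \norm{e_{-1}^i} - (k+1)\tau \leq \max\{0, \norm{e_{-1}^i} - (k+1)\tau\}$, completing the induction.

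Finally, the two ``in particular'' statements follow from \eqref{eq:nb9f8g0dfd}. If $\norm{v_{-1}^i - a^i}\leq \tau$, then at $k=0$ the right-hand side is $0$, so $v_0^i = a^i$. For general $k \geq \lceil \tfrac{1}{\tau}\norm{v_{-1}^i - a^i} - 1\rceil$, we have $k+1 \geq \tfrac{1}{\tau}\norm{v_{-1}^i - a^i}$, i.e.\ $(k+1)\tau \geq \norm{v_{-1}^i - a^i}$, so the right-hand side of \eqref{eq:nb9f8g0dfd} is again $0$ and $v_k^i = a^i$. I do not expect a genuine obstacle here; the only points requiring care are exploiting the oddness of $\clip_\tau$ to linearize the residual dynamics, and handling the boundary case $\norm{e_{k-1}^i}=\tau$, where parts (ii) and (iii) of Lemma~\ref{lem:clip} coincide.
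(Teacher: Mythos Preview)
Your proof is correct and essentially identical to the paper's: both subtract $a^i$ from the update, identify $\norm{v_k^i-a^i}$ with $\norm{\clip_\tau(x)-x}$ for $x=a^i-v_{k-1}^i$ (or $x=e_{k-1}^i$ in your notation), invoke Lemma~\ref{lem:clip}(ii)--(iii), and unroll the resulting one-step recursion $\norm{e_k^i}=\max\{0,\norm{e_{k-1}^i}-\tau\}$. The only cosmetic difference is that the paper keeps $a^i - v_{k-1}^i$ as the argument of $\clip_\tau$ and so never needs to invoke oddness, whereas your sign convention for the residual forces that extra (but harmless) observation.
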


\subsection{Estimating $a\eqdef \frac{1}{n}\sum_{i=1}^n a^i$}
A convergence result for $v_k \to a$ follows by applying Lemma~\ref{lem:a^i} for all $i\in [n]$ and using convexity of the norm.

\begin{theorem} \label{thm:Clip21-Avg}
For all  iterates $k\geq 0$ of Algorithm~\ref{alg:Clip21-Avg},
 $$\squeeze \norm{ v_k - a } \leq   \frac{1}{n} \sum\limits_{i=1}^n \max \left \{ 0, \norm{v^i_{-1} - a^i} - (k+1) \tau  \right \}  .$$ In particular, if $\norm{v^i_{-1} - a^i}\leq \tau$ for all $i$, then $v_0 = a$. Otherwise, if  $k \geq \max_i \left \lceil \frac{1}{\tau}\norm{v^i_{-1} - a^i} - 1 \right \rceil$, then $v_{k} = a$. \vspace{-5pt}
\end{theorem}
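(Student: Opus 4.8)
The plan is to derive Theorem~\ref{thm:Clip21-Avg} directly from Lemma~\ref{lem:a^i} by a short averaging argument. First I would recall that the estimator is defined as $v_k = \frac{1}{n}\sum_{i=1}^n v_k^i$ and the target is $a = \frac{1}{n}\sum_{i=1}^n a^i$, so that $v_k - a = \frac{1}{n}\sum_{i=1}^n (v_k^i - a^i)$. Then I would apply the triangle inequality (convexity of the norm) to get
\begin{equation*}
\norm{v_k - a} = \norm{\frac{1}{n}\sum_{i=1}^n (v_k^i - a^i)} \leq \frac{1}{n}\sum_{i=1}^n \norm{v_k^i - a^i}.
\end{equation*}
Finally I would plug in the bound \eqref{eq:nb9f8g0dfd} from Lemma~\ref{lem:a^i}, namely $\norm{v_k^i - a^i} \leq \max\{0, \norm{v_{-1}^i - a^i} - (k+1)\tau\}$, term by term, which yields exactly the claimed inequality.

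For the two ``in particular'' statements, I would reason as follows. If $\norm{v_{-1}^i - a^i} \leq \tau$ for every $i$, then Lemma~\ref{lem:a^i} gives $v_0^i = a^i$ for all $i$, and averaging immediately gives $v_0 = a$ (alternatively: each term in the sum for $k=0$ is $\max\{0, \norm{v_{-1}^i-a^i} - \tau\} = 0$, so the right-hand side vanishes). For the second claim, if $k \geq \max_i \lceil \frac{1}{\tau}\norm{v_{-1}^i - a^i} - 1 \rceil$, then in particular $k \geq \lceil \frac{1}{\tau}\norm{v_{-1}^i - a^i} - 1\rceil$ for each individual $i$, so Lemma~\ref{lem:a^i} gives $v_k^i = a^i$ for every $i$; averaging then gives $v_k = a$. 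Again one can see this directly from the main inequality: for such $k$ one has $(k+1)\tau \geq \norm{v_{-1}^i - a^i}$ for all $i$, so every $\max$-term is zero.

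There is essentially no obstacle here — the theorem is a pure corollary of Lemma~\ref{lem:a^i} plus the triangle inequality, and all the real work (the finite-termination argument for each coordinate $v_k^i$) has already been done in that lemma. The only thing to be slightly careful about is bookkeeping with the ceilings: one should note that the per-$i$ termination time is $\lceil \frac{1}{\tau}\norm{v_{-1}^i - a^i} - 1\rceil$ and that the maximum over $i$ of these is exactly the time after which all $v_k^i$ have stabilized, so that the aggregate $v_k$ has stabilized as well. No stronger synchronization is needed because the worker updates in Step~3 are independent across $i$.
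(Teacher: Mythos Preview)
Your proposal is correct and matches the paper's own proof essentially verbatim: the paper also bounds $\norm{v_k-a}$ by $\frac{1}{n}\sum_i \norm{v_k^i-a^i}$ via convexity of the norm and then invokes Lemma~\ref{lem:a^i} term by term. The ``in particular'' consequences follow exactly as you describe.
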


\section{Error Feedback for Distributed Optimization with Clipping} \label{sec:optimization}

\begin{algorithm}[t]
		\centering
		\caption{\algname{Clip21-GD} (Error Feedback for  Distributed Optimization with Clipping)}\label{alg:Clip21-GD}
		\begin{algorithmic}[1]
		\STATE \textbf{Input:} initial iterate $x_{0} \in \R^d$; learning rate $\gamma>0$; initial gradient shifts $v^1_{-1},\dots,v^n_{-1}\in \R^d$; clipping threshold $\tau>0$
		\FOR{$k=0,1, 2, \dots, K-1 $}
		\STATE Broadcast $x_k$ to all workers 
		\FOR{each worker $i=1,\ldots,n$ in parallel}
		\STATE Compute 	$g_k^i = \clip_\tau( \nabla f_i(x_{k}) - v^i_{k-1} )$
		\STATE Update $v^i_{k} = v^i_{k-1} + g_k^i$
		\ENDFOR
		\STATE  $v_{k} = v_{k-1} + \frac{1}{n}\sum_{i=1}^n g_k^i$			
		\STATE  $x_{k+1} = x_k - \gamma  \frac{1}{n}\sum_{i=1}^n v^i_k$ 
		
		\ENDFOR
	\end{algorithmic}
	\end{algorithm}

The design of our new method \algname{Clip21-GD} (Algorithm \ref{alg:Clip21-GD}) is inspired by the current state-of-the-art error feedback mechanism  called \algname{EF21} developed by \citep{richtarik2021ef21} (see \citep{EF21BW,richtarik20223pc} for extensions) the goal of  which  is to progressively remove the error introduced by a {\em contractive compression operator} applied to the gradients\footnote{This is why use the number 21 in the name \algname{Clip21}.}. A contractive operator is a  possibly randomized mapping $\cC:\R^d\to \R^d$ satisfying 
\begin{equation} \label{eq:contraction}\squeeze\Exp{\norm{\cC(x)-x}^2} \leq (1-\alpha)\norm{x}^2, \quad \forall x\in \R^d\end{equation} for some $0<\alpha \leq 1$. However, the results of \citep{richtarik2021ef21} do not apply to our setup since the clipping operator is not contractive in the sense of \eqref{eq:contraction}. Our idea is to instead rely on the related identity
\[\squeeze \norm{\clip_\tau(x) - x}^2 =  \begin{cases}0 & \text{if } \norm{x} \leq \tau\\
\left(1-\frac{\tau}{\norm{x}}\right)^2 \norm{x}^2 & \text{if }\norm{x} \geq \tau \end{cases}\] 
established in Lemma~\ref{lem:clip}. Using this identity in lieu of \eqref{eq:contraction} is more complicated since the contraction factor can be arbitrarily if   $\norm{x}$ is large. We needed to develop a new analysis technique to handle this situation.

In the rest of this section we describe the strong theoretical properties of \algname{Clip21-GD} (Algorithm \ref{alg:Clip21-GD}).
	
\subsection{Single-node regime ($n=1$)}\label{subsec:singlenode_clip21}

We begin by studying \algname{Clip21-GD} in the single-node  case. For simplicity, in the subsequent text and statements, we drop the superscript $i$ from all iterates. 

In the light of the discussion from Section~\ref{intro:subsec:n=1}, in this case our error feedback mechanism for clipping is {\em not} needed; that is, \algname{Clip-GD} suffices, and there is no need for \algname{Clip21-GD}. However, one would hope that our approach offers comparable guarantees in this case to those obtained by \citep{Zhang2020Why} in the $L$-smooth regime; i.e., we expect to obtain a $\cO(\nicefrac{1}{K})$ rate. The key purpose of this section is to see that this is indeed the case. 
However, we believe that \algname{Clip21-GD} {\em is} needed even in the $n=1$ case if one wants to obtain results in the more general constrained or proximal regime\footnote{We believe such results can be obtained by using the techniques developed by \citep{EF21BW}.}, and hence the results of this section can serve as a basis for further exploration and extensions.
  
Our first result establishes a descent lemma for a certain Lyapunov function. 
This is a substantial departure from existing analyses of clipping methods which do not make use of the control variate sequence $\{v_k\}$.

\begin{lemma}[Descent lemma]\label{lem:singlenode-clippedEF21}
Consider the problem of minimizing $f:\R^d \to \R$, assuming it has $L$-Lipschitz gradient and lower bounded by $f_{\rm inf}\in \R$.
Let $v_{-1}=0\in \R^d$, $\eta \eqdef \min\left\{1,\frac{\tau}{\| \nabla f(x_0)\|}\right\}$, $\FF\eqdef f(x_0)-f_{\rm inf}$, and $\GG\eqdef  | \norm{ \nabla f(x_0)}  - \tau  |$.
Then, single-node \algname{Clip21-GD} (described in Algorithm \ref{alg:Clip21-GD} with $n=1$) with stepsize
\begin{align}
\squeeze 
\gamma \leq \frac{1}{L}\min \left\{ \frac{1-1/\sqrt{2}}{1 + \sqrt{1 + 2\beta_1}} , \frac{\tau^2}{4L \left[ \sqrt{\FF} + \sqrt{\beta_2} \right]^2 } \right\},	 \label{eq:stepsize_single_node_clipped_EF21}
\end{align}
where $\beta_1 \eqdef  \frac{ (1-\eta)^2(1+2/\eta)}{1-(1-\eta)(1-\eta/2)}$ and $\beta_2 \eqdef  \FF + \frac{\tau\GG}{\sqrt{2\eta} L}$, satisfies 
\begin{align}
\squeeze	\phi_{k+1} \leq \phi_k - \frac{\gamma}{2}\norm{ \nabla f(x_k)}^2, \label{eq:descent_inequality_single_node}
\end{align}
where $\phi_k \eqdef f(x_k)-f_{\rm inf}+A \norm{ \nabla f(x_k) - v_k }^2$ and $A \eqdef \frac{\gamma}{2[1-(1-\eta)(1-\eta/2)]}$.
\end{lemma}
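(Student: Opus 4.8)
The plan is to establish the one-step descent inequality \eqref{eq:descent_inequality_single_node} for the Lyapunov function $\phi_k = f(x_k) - f_{\rm inf} + A\norm{\nabla f(x_k) - v_k}^2$ by combining the standard smoothness descent for $f$ with a recursion controlling the ``shift error'' $e_k \eqdef \norm{\nabla f(x_k) - v_k}^2$. First I would write the descent lemma for $f$: since $x_{k+1} = x_k - \gamma v_k$ and $\nabla f$ is $L$-Lipschitz,
\begin{equation*}
f(x_{k+1}) \le f(x_k) - \gamma\langle \nabla f(x_k), v_k\rangle + \frac{L\gamma^2}{2}\norm{v_k}^2,
\end{equation*}
and then rewrite $\langle \nabla f(x_k), v_k\rangle = \norm{\nabla f(x_k)}^2 - \langle \nabla f(x_k), \nabla f(x_k) - v_k\rangle$ and $\norm{v_k}^2 \le 2\norm{\nabla f(x_k)}^2 + 2e_k$, so as to get something of the form $f(x_{k+1}) \le f(x_k) - \gamma(1 - L\gamma)\norm{\nabla f(x_k)}^2 + (\text{const})\,\gamma e_k + (\text{const})\,\gamma\norm{\nabla f(x_k)}\sqrt{e_k}$, with the cross term handled by Young's inequality.

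The heart of the argument is the recursion for $e_k$. Using Step 4--5 of Algorithm~\ref{alg:Clip21-GD} with $n=1$, we have $v_k = v_{k-1} + \clip_\tau(\nabla f(x_k) - v_{k-1})$, so $\nabla f(x_k) - v_k = (\nabla f(x_k) - v_{k-1}) - \clip_\tau(\nabla f(x_k) - v_{k-1})$, which by Lemma~\ref{lem:clip}(iv) equals $0$ if $\norm{\nabla f(x_k) - v_{k-1}} \le \tau$ and otherwise has squared norm $\bigl(1 - \tau/\norm{\nabla f(x_k) - v_{k-1}}\bigr)^2\norm{\nabla f(x_k) - v_{k-1}}^2$. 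The subtlety flagged in the text is that this contraction factor degenerates toward $1$ when $\norm{\nabla f(x_k) - v_{k-1}}$ is large, so I cannot treat clipping as a genuine contraction. Instead, I would split $\nabla f(x_k) - v_{k-1} = (\nabla f(x_k) - \nabla f(x_{k-1})) + (\nabla f(x_{k-1}) - v_{k-1})$, bound the first piece by $L\norm{x_k - x_{k-1}} = L\gamma\norm{v_{k-1}}$ via smoothness, and use Lemma~\ref{lem:clip}(iii), namely $\norm{\clip_\tau(x) - x} = \norm{x} - \tau$ when $\norm{x}\ge\tau$, to linearize: $\sqrt{e_k} \le \norm{\nabla f(x_{k-1}) - v_{k-1}} + L\gamma\norm{v_{k-1}} - \tau$ (when the argument exceeds $\tau$; it is $0$ otherwise), hence $\sqrt{e_k} \le (1-\text{something})\sqrt{e_{k-1}} + L\gamma\norm{v_{k-1}}$ type bounds after absorbing the $-\tau$. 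The role of $\eta = \min\{1, \tau/\norm{\nabla f(x_0)}\}$ and the base quantities $\FF, \GG, \beta_1, \beta_2$ is, I expect, to seed an induction: one shows by induction on $k$ that $\norm{\nabla f(x_k) - v_{k-1}}$ stays in a regime where the linearization via part (iii) gives an effective contraction rate $(1-\eta)(1-\eta/2)$, so that $e_k \le (1-\eta)(1-\eta/2) e_{k-1} + (\text{multiple of})\,\gamma^2\norm{v_{k-1}}^2$ — this is why $A$ carries the factor $\frac{1}{1-(1-\eta)(1-\eta/2)}$, which telescopes the geometric part. Establishing that this regime is self-maintaining under the stepsize restriction \eqref{eq:stepsize_single_node_clipped_EF21} — in particular that the iterates never re-enter the ``large contraction factor'' zone, which is exactly what the $\tau^2/(4L[\sqrt{\FF}+\sqrt{\beta_2}]^2)$ branch of the stepsize and the definition of $\beta_2$ (with its $\tau\GG/(\sqrt{2\eta}L)$ term) are calibrated to guarantee — is the main obstacle.

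With the two ingredients in hand, I would form $\phi_{k+1} - \phi_k = [f(x_{k+1}) - f(x_k)] + A[e_{k+1} - e_k]$, substitute the $f$-descent bound and the $e$-recursion (shifted by one index), and collect terms. The geometric decay $A(1-(1-\eta)(1-\eta/2))e_k$ coming out of the $e$-recursion should exactly cancel the positive $\gamma e_k$ contributions from the $f$-descent step given the choice $A = \frac{\gamma}{2[1-(1-\eta)(1-\eta/2)]}$ (up to constants that the first branch of the stepsize bound, involving $1 - 1/\sqrt{2}$ and $\beta_1$, is chosen to dominate), while the $\gamma^2\norm{v_{k-1}}^2$ and $\gamma^2\norm{v_k}^2$ terms — each bounded by multiples of $\gamma^2(\norm{\nabla f}^2 + e)$ — are absorbed into the $-\frac{\gamma}{2}\norm{\nabla f(x_k)}^2$ term and the geometric slack by taking $\gamma$ small enough, which is where the factor $L$ and the constant $\frac{1-1/\sqrt 2}{1+\sqrt{1+2\beta_1}}$ in \eqref{eq:stepsize_single_node_clipped_EF21} enter. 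The bookkeeping of which constant goes where is tedious but routine once the structural induction is set up; I would organize it by first proving an auxiliary claim that $\norm{\nabla f(x_k) - v_{k-1}}\le \text{(explicit bound in terms of } \FF, \GG, \eta)$ for all $k$ by induction, then deriving the clean recursions on that event, then assembling $\phi_{k+1}\le\phi_k - \frac{\gamma}{2}\norm{\nabla f(x_k)}^2$.
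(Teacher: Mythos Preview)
Your high-level plan --- maintain an inductive bound on $\|\nabla f(x_k) - v_{k-1}\|$, use it to get a uniform lower bound $\eta_{k+1} \geq \eta$ and hence the recursion $e_{k+1} \leq (1-\eta)(1-\eta/2)e_k + C\gamma^2\|v_k\|^2$, then combine with smoothness descent --- is exactly what the paper does. But the specific $f$-descent decomposition you propose has a defect that prevents the argument from closing with the stated $A$.

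You write $-\gamma\langle\nabla f(x_k),v_k\rangle = -\gamma\|\nabla f(x_k)\|^2 + \gamma\langle\nabla f(x_k), \nabla f(x_k)-v_k\rangle$, bound the cross term by Cauchy--Schwarz plus Young, and separately upper-bound the positive $\frac{L\gamma^2}{2}\|v_k\|^2$ via $\|v_k\|^2 \leq 2\|\nabla f(x_k)\|^2 + 2e_k$. This yields
\[
f(x_{k+1}) \le f(x_k) - \gamma\bigl(\tfrac12 - L\gamma\bigr)\|\nabla f(x_k)\|^2 + \bigl(\tfrac{\gamma}{2} + L\gamma^2\bigr)e_k .
\]
After adding $A e_{k+1} \le A(1-\eta)(1-\eta/2)e_k + A C\gamma^2\|v_k\|^2$ and using $A[1-(1-\eta)(1-\eta/2)] = \gamma/2$, the leading $e_k$ terms cancel but you are left with a residual $+L\gamma^2 e_k + AC\gamma^2\|v_k\|^2$, both positive, with nothing to absorb them. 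Bounding $\|v_k\|^2$ again by $2\|\nabla f\|^2 + 2e_k$ only relocates the problem: a strictly positive $O(\gamma^2)e_k$ survives and there is no remaining negative $e_k$ term. The paper avoids this by using the polarization identity in the descent step (Lemma~\ref{lemma:descent_li2021page}): $-\gamma\langle\nabla f,v_k\rangle = -\frac{\gamma}{2}\|\nabla f\|^2 - \frac{\gamma}{2}\|v_k\|^2 + \frac{\gamma}{2}e_k$, which produces a \emph{negative} $-(\frac{1}{2\gamma} - \frac{L}{2})\|x_{k+1}-x_k\|^2$ term. That term is precisely what absorbs the $AC\gamma^2\|v_k\|^2$ coming from the error recursion (this is where the $\beta_1$ branch of the stepsize enters), after which the $e_k$ terms cancel cleanly with no residual.

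A second, smaller issue: you propose to first establish the bound on $\|\nabla f(x_k)-v_{k-1}\|$ for all $k$, and \emph{then} derive the Lyapunov descent. In the paper these two statements are proved by a \emph{simultaneous} induction, because showing $\|\nabla f(x_{k+1})-v_k\| \leq B - (k+1)\tau/2$ relies on $\|v_k\| \le \sqrt{4\phi_0/\gamma}$ (Claim~\ref{claim:fu9o}), which in turn uses $\phi_{k-1}\le\phi_0$, i.e.\ the descent inequality at the previous step. The two pieces cannot be decoupled into sequential stages.
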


Inequality \eqref{eq:descent_inequality_single_node} states that the Lyapunov function $\phi_k$ decreases in each iteration by an amount proportional to the squared norm of the gradient, regardless of whether the clipping operator is active or not.
We next prove the state of the clipping operator when the algorithm is run. 
Our next result states that if the clipping operator is ``active'' at the start (i.e., $\| \nabla f(x_0)\|>\tau$), it will become ``inactive'' (i.e., it will act as the identity mapping) after at most $\cO(\nicefrac{\norm{\nabla f(x_0)}}{\tau})$ iterations, and then stay inactive from then on.

\begin{proposition}[Finite-time to no clipping]\label{prop:singlenode-clippedEF21}
Let the conditions of Lemma~\ref{lem:singlenode-clippedEF21} hold.
\begin{itemize}
\item [(i)] On the one hand, if $x_0$ satisfies $\| \nabla f(x_0) \| \leq \tau$, then  $\|\nabla f(x_k)-v_{k-1}\| \leq \tau$ for all $k \geq 0$. That is, the clipping operator is inactive for all iterations. 
\item [(ii)] 
On the other hand, if $x_0$ satisfies $\| \nabla f(x_0) \| > \tau$, then $\|\nabla f(x_k)-v_{k-1}\| \leq \tau$ for $k \geq k^\star \eqdef \frac{2}{\tau}\left( \| \nabla f(x_0) \|-\tau \right)+1$. That is, the clipping operator becomes inactive after at most $k^\star=\cO(\nicefrac{1}{\tau})$ iterations.
\end{itemize}
\end{proposition}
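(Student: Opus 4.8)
The plan is to recast both claims in terms of the ``clipped quantity'' $r_k\eqdef\nabla f(x_k)-v_{k-1}$ and the clipping residual $e_k\eqdef\nabla f(x_k)-v_k$. By Steps~5--6 of Algorithm~\ref{alg:Clip21-GD} (with $n=1$) we have $e_k=r_k-\clip_\tau(r_k)$, so Lemma~\ref{lem:clip}(ii)--(iii) gives $\norm{e_k}=\max\{0,\norm{r_k}-\tau\}$; in particular the clipping operator is inactive at iteration $k$ (i.e.\ $\norm{\nabla f(x_k)-v_{k-1}}=\norm{r_k}\le\tau$) iff $e_k=0$ iff $v_k=\nabla f(x_k)$. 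Since $x_{k+1}-x_k=-\gamma v_k$ and $\nabla f$ is $L$-Lipschitz, writing $r_{k+1}=(\nabla f(x_{k+1})-\nabla f(x_k))+e_k$ yields the basic recursion
\[\squeeze \norm{r_{k+1}}\le L\gamma\norm{v_k}+\norm{e_k}.\]
Moreover, the descent inequality \eqref{eq:descent_inequality_single_node} implies $\phi_k\le\phi_0$ for all $k$, hence the uniform bounds $\norm{\nabla f(x_k)}^2\le 2L\bigl(f(x_k)-f_{\rm inf}\bigr)\le 2L\phi_0$ (using the elementary inequality $\norm{\nabla f(x)}^2\le 2L(f(x)-f_{\rm inf})$, valid for any $L$-smooth $f$ bounded below by $f_{\rm inf}$) and $A\norm{e_k}^2\le\phi_0$; these will be used to control $\norm{v_k}$ in the recursion above.

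\emph{Part (i).} Here $\norm{\nabla f(x_0)}\le\tau$, so $\eta=1$, $\beta_1=0$, $A=\gamma/2$, $\phi_0=\FF$. I would show by induction that clipping is inactive at every iteration. The base case holds because $v_{-1}=0$, so $\norm{r_0}=\norm{\nabla f(x_0)}\le\tau$. If clipping is inactive at iterations $0,\dots,k$, then $v_j=\nabla f(x_j)$ for $j\le k$, so $x_{j+1}=x_j-\gamma\nabla f(x_j)$ is a plain gradient step; since $\gamma<1/L$ this is $f$-monotone, so $f(x_k)-f_{\rm inf}\le\FF$ and hence $\norm{\nabla f(x_k)}\le\sqrt{2L\FF}$. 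The recursion then gives $\norm{r_{k+1}}=\norm{\nabla f(x_{k+1})-\nabla f(x_k)}\le L\gamma\norm{\nabla f(x_k)}\le L\gamma\sqrt{2L\FF}$, and it remains to check that \eqref{eq:stepsize_single_node_clipped_EF21} forces $L\gamma\sqrt{2L\FF}\le\tau$. This follows from the second term of the $\min$ together with the bound $4L\bigl[\sqrt{\FF}+\sqrt{\beta_2}\bigr]^2\ge 4L\FF+2\sqrt2\,\tau\GG$ and the fact that $2\sqrt2\,t^2-(2\sqrt2+1)t+2>0$ for all $t>0$ (negative discriminant), applied with $t=\tau/\sqrt{2L\FF}$ (the subcase $\tau\le\sqrt{2L\FF}$ being immediate). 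So $\norm{r_{k+1}}\le\tau$ and the induction closes.

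\emph{Part (ii).} Now $\norm{\nabla f(x_0)}>\tau$, $\GG=\norm{\nabla f(x_0)}-\tau>0$, $\eta\in(0,1)$. I would prove by induction that $\norm{r_k}\le\norm{\nabla f(x_0)}$ for all $k$ (the base case being an equality). Within the inductive step, $\norm{r_k}\le\norm{\nabla f(x_0)}$ gives $\norm{e_k}=\max\{0,\norm{r_k}-\tau\}\le\GG$; combined with $v_k=\nabla f(x_k)-e_k$ and $\norm{\nabla f(x_k)}\le\sqrt{2L\phi_0}$ this yields $\norm{v_k}\le\sqrt{2L\phi_0}+\GG$. Substituting $\phi_0=\FF+A\GG^2$ with $A=\gamma/(2c)$, $c=\eta(3-\eta)/2\ge\eta$, and using $\sqrt{a+b}\le\sqrt a+\sqrt b$, the inequality $L\gamma\norm{v_k}\le\tau/2$ reduces to a bound on $\gamma$ in terms of $\FF,\GG,\eta,L,\tau$ — exactly the one encoded in \eqref{eq:stepsize_single_node_clipped_EF21} (the two summands of $\beta_2=\FF+\tau\GG/(\sqrt{2\eta}L)$ matching the two scales $\sqrt{2L\FF}$ and $\sqrt{2LA}\,\GG$, while the first term of the $\min$, controlling $\gamma$ against $\beta_1$, absorbs the remaining contribution, which is of higher order in $\gamma$). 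Granting $L\gamma\norm{v_k}\le\tau/2$, the recursion $\norm{r_{k+1}}\le L\gamma\norm{v_k}+\norm{e_k}$ dichotomizes: if $\norm{r_k}\le\tau$ then $\norm{e_k}=0$ and $\norm{r_{k+1}}\le\tau/2\le\norm{\nabla f(x_0)}$; if $\norm{r_k}>\tau$ then $\norm{e_k}=\norm{r_k}-\tau$ and $\norm{r_{k+1}}\le\norm{r_k}-\tau/2<\norm{\nabla f(x_0)}$. This closes the induction and shows $\norm{r_k}$ drops by at least $\tau/2$ at each iteration where it exceeds $\tau$. Since $\norm{r_0}=\norm{\nabla f(x_0)}$, the first index $j^\star$ with $\norm{r_{j^\star}}\le\tau$ satisfies $j^\star<k^\star=\tfrac2\tau(\norm{\nabla f(x_0)}-\tau)+1$ (because $\norm{r_{j^\star-1}}>\tau$ while $\norm{r_{j^\star-1}}\le\norm{r_0}-(j^\star-1)\tau/2$), and once $\norm{r_j}\le\tau$ the first branch inductively keeps $\norm{r_k}\le\tau$ for all $k\ge j$; hence $\norm{\nabla f(x_k)-v_{k-1}}=\norm{r_k}\le\tau$ for all $k\ge k^\star$.

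\emph{Main obstacle.} The only nonroutine step is the stepsize bookkeeping in each part: showing that the intricate form of \eqref{eq:stepsize_single_node_clipped_EF21} — in particular the two additive pieces inside $\beta_2$ and the separate $\beta_1$-dependent factor — is exactly strong enough to give $L\gamma\sqrt{2L\FF}\le\tau$ and $L\gamma\norm{v_k}\le\tau/2$. Concretely, one must track how the uniform bound on $\norm{v_k}$ splits (a $\sqrt{\FF}$-scale term, a $\sqrt{\tau\GG/L}$-scale term arising from $A\GG^2$ inside $\phi_0$, and a remainder of higher order in $\gamma$) and pair each piece with one of the constraints in the stepsize. Everything else — the two recursions, the per-iteration $\tau/2$ drop, and ``once inactive, stays inactive'' — is elementary.
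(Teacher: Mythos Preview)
Your approach is correct and matches the paper's structure: both parts rest on the recursion $\norm{r_{k+1}}\le L\gamma\norm{v_k}+\norm{e_k}$ together with a uniform bound on $\norm{v_k}$, yielding that $\norm{r_k}$ drops by at least $\tau/2$ while clipping is active, and stays $\le\tau$ once it is not. The one genuine difference is \emph{how} you bound $\norm{v_k}$. You use the direct triangle inequality $\norm{v_k}\le\norm{\nabla f(x_k)}+\norm{e_k}\le\sqrt{2L\phi_0}+\GG$, whereas the paper uses the convex-combination identity $v_k=(1-\eta_k)v_{k-1}+\eta_k\nabla f(x_k)$ recursively to get $\norm{v_k}\le\sqrt{4\phi_0/\gamma}$. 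The payoff of the paper's route is that the stepsize check collapses to the single clean inequality $2L\sqrt{\gamma\phi_0}\le\tau/2$, which is exactly what the second term of \eqref{eq:stepsize_single_node_clipped_EF21} encodes (this is Claim~6.4 in the proof of Lemma~\ref{lem:singlenode-clippedEF21}); no appeal to $\beta_1$ or the first term of the $\min$ is needed. Your direct bound also works --- one can check $L\gamma\sqrt{2L\phi_0}\le\sqrt{L\gamma/2}\cdot\tau/2$ and $L\gamma\GG\le\frac{(1-1/\sqrt2)\sqrt3}{2\sqrt2}\tau$ from the first stepsize term via $\beta_1\ge\frac{4(1-\eta)^2}{3\eta^2}$, and the two pieces sum to $<\tau/2$ --- but this is exactly the ``stepsize bookkeeping'' you flagged as the main obstacle, and it requires combining both halves of \eqref{eq:stepsize_single_node_clipped_EF21} rather than just one. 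Your Part~(i) quadratic-discriminant argument is opaque as written; the cleaner observation is simply that $[\sqrt{\FF}+\sqrt{\beta_2}]^2\ge4\FF$, so the second stepsize term gives $\gamma\le\tau^2/(16L^2\FF)$ and hence $L\gamma\tau+2L\sqrt{\gamma\FF}\le\tau$, which is the paper's form of the inactive-stays-inactive check.
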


Note that when clipping becomes inactive,  then
$v_k = v_{k-1} + \clip_\tau( \nabla f(x_{k}) - v_{k-1} ) = \nabla f(x_{k})$, 
which means that \algname{Clip21-GD} turns into  \algname{GD} after at most $k^\star$ iterations. Finally, Lemma~\ref{lem:singlenode-clippedEF21} and Proposition~\ref{prop:singlenode-clippedEF21} lead to our main convergence result.

\begin{theorem}[Convergence result]\label{thm:singlenode-clippedEF21}
Consider  single-node \algname{Clip21-GD} (described in Algorithm \ref{alg:Clip21-GD} with $n=1$).  Let the conditions of Lemma~\ref{lem:singlenode-clippedEF21} hold and let $\hat x_K$ be a  point selected from the set $\{x_0,x_1,\ldots,x_{K-1}\}$ for $K\geq 1$ uniformly at random. Then
\begin{align*}
\squeeze 	\Exp{\| \nabla f(\hat x_K) \|^2} \leq \frac{2}{\gamma} \frac{\phi_0}{K}.
\end{align*}
If $\gamma$ is chosen to be the right-hand side of \eqref{eq:stepsize_single_node_clipped_EF21}, then for $C\eqdef \max \left\{ L\FF, \norm{ \nabla f(x_0) }^2 \right\}$,
\begin{align*}
\squeeze 	\Exp{\| \nabla f(\hat x_K) \|^2} = \cO\left(\left[ \frac{C(\norm{\nabla f(x_0)} + \tau)}{\tau} + \frac{L^2(\FF)^2}{\tau^2}\right]\frac{1}{K}\right).
\end{align*}
\end{theorem}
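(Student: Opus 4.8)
The plan is to obtain the first (exact) inequality by telescoping the descent estimate of Lemma~\ref{lem:singlenode-clippedEF21}, and then to get the explicit rate by substituting the stepsize \eqref{eq:stepsize_single_node_clipped_EF21} and carefully bounding the initial Lyapunov value $\phi_0$ together with the quantities $\beta_1,\beta_2$. Summing \eqref{eq:descent_inequality_single_node} over $k=0,\dots,K-1$ gives $\tfrac{\gamma}{2}\sum_{k=0}^{K-1}\norm{\nabla f(x_k)}^2\le\phi_0-\phi_K$. Since $\gamma>0$ and $1-(1-\eta)(1-\eta/2)=\tfrac{\eta}{2}(3-\eta)>0$ for $\eta\in(0,1]$, we have $A>0$, and $f(x_K)\ge f_{\rm inf}$ gives $\phi_K\ge0$, so $\sum_{k=0}^{K-1}\norm{\nabla f(x_k)}^2\le\tfrac{2\phi_0}{\gamma}$. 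As $\hat x_K$ is uniform on $\{x_0,\dots,x_{K-1}\}$, $\Exp{\norm{\nabla f(\hat x_K)}^2}=\tfrac{1}{K}\sum_{k=0}^{K-1}\norm{\nabla f(x_k)}^2\le\tfrac{2}{\gamma}\tfrac{\phi_0}{K}$, which is the first claim.

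For the explicit bound I first bound $\phi_0$: since $v_{-1}=0$, the first iteration gives $v_0=\clip_\tau(\nabla f(x_0))$, so Lemma~\ref{lem:clip}(ii)--(iii) gives $\norm{\nabla f(x_0)-v_0}=\max\{0,\norm{\nabla f(x_0)}-\tau\}\le\GG$, and this equals $0$ when $\norm{\nabla f(x_0)}\le\tau$. Combining with $A\le\tfrac{\gamma}{2\eta}$ (which follows from $1-(1-\eta)(1-\eta/2)\ge\eta$) yields $\phi_0\le\FF+\tfrac{\gamma}{2\eta}\GG^2$ and hence $\tfrac{2\phi_0}{\gamma}\le\tfrac{2\FF}{\gamma}+\tfrac{\GG^2}{\eta}$. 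Next, taking $\gamma$ equal to the right-hand side of \eqref{eq:stepsize_single_node_clipped_EF21}, $\tfrac{1}{\gamma}\le\tfrac{L(1+\sqrt{1+2\beta_1})}{1-1/\sqrt2}+\tfrac{4L^2[\sqrt{\FF}+\sqrt{\beta_2}]^2}{\tau^2}$; using $(1-\eta)^2\le1$ gives $\beta_1\le\tfrac{1}{\eta}+\tfrac{2}{\eta^2}$ and so $\sqrt{1+2\beta_1}=\cO(1/\eta)$, while $[\sqrt{\FF}+\sqrt{\beta_2}]^2\le2(\FF+\beta_2)=\cO\!\bigl(\FF+\tfrac{\tau\GG}{\sqrt{\eta}L}\bigr)$. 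Multiplying $\tfrac{2\FF}{\gamma}$ out and adding $\tfrac{\GG^2}{\eta}$ gives $\tfrac{2\phi_0}{\gamma}=\cO\!\bigl(\tfrac{L\FF}{\eta}+\tfrac{L^2\FF^2}{\tau^2}+\tfrac{L\FF\GG}{\tau\sqrt\eta}+\tfrac{\GG^2}{\eta}\bigr)$.

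It remains to bound each of these four terms by $\cO\!\bigl(\tfrac{C(\norm{\nabla f(x_0)}+\tau)}{\tau}+\tfrac{L^2\FF^2}{\tau^2}\bigr)$ with $C=\max\{L\FF,\norm{\nabla f(x_0)}^2\}$, splitting on whether clipping is initially active. If $\norm{\nabla f(x_0)}\le\tau$, then $\eta=1$ and $\norm{\nabla f(x_0)-v_0}=0$, leaving only $\tfrac{L\FF}{\eta}+\tfrac{L^2\FF^2}{\tau^2}=\cO(C+\tfrac{L^2\FF^2}{\tau^2})$, and $C\le\tfrac{C(\norm{\nabla f(x_0)}+\tau)}{\tau}$. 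If $\norm{\nabla f(x_0)}>\tau$, then $\eta=\tau/\norm{\nabla f(x_0)}$ and $\GG=\norm{\nabla f(x_0)}-\tau\le\norm{\nabla f(x_0)}$, so $\tfrac{L\FF}{\eta}\le\tfrac{C\norm{\nabla f(x_0)}}{\tau}$ and $\tfrac{\GG^2}{\eta}\le\tfrac{\norm{\nabla f(x_0)}^3}{\tau}\le\tfrac{C\norm{\nabla f(x_0)}}{\tau}$ (using $C\ge\norm{\nabla f(x_0)}^2$); for the cross term $\tfrac{L\FF\GG}{\tau\sqrt\eta}\le\tfrac{L\FF\norm{\nabla f(x_0)}^{3/2}}{\tau^{3/2}}$ one splits further on $L\FF\le\sqrt\tau\norm{\nabla f(x_0)}^{3/2}$ (which bounds it by $\tfrac{\norm{\nabla f(x_0)}^3}{\tau}\le\tfrac{C\norm{\nabla f(x_0)}}{\tau}$) versus $L\FF>\sqrt\tau\norm{\nabla f(x_0)}^{3/2}$ (so $\norm{\nabla f(x_0)}^{3/2}<L\FF/\sqrt\tau$, hence the term is $<\tfrac{L^2\FF^2}{\tau^2}$). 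Collecting the cases and dividing by $K$ yields the stated rate.

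The only nontrivial point is the cross term $\tfrac{L\FF\GG}{\tau\sqrt\eta}$, which originates from the $\beta_2$-part of the stepsize: in the active regime it scales like $\tfrac{L\FF\norm{\nabla f(x_0)}^{3/2}}{\tau^{3/2}}$, which is \emph{not} absorbed by $\tfrac{C(\norm{\nabla f(x_0)}+\tau)}{\tau}$ through $L\FF\le C$ alone. The resolution is the observation that this term can be large only when $\beta_2$ is dominated by $\tfrac{\tau\GG}{\sqrt{2\eta}L}$ rather than by $\FF$, i.e.\ when $L\FF=\cO(\sqrt\tau\norm{\nabla f(x_0)}^{3/2})$, and it is exactly this extra relation that makes the term fit (any leftover being absorbed into $\tfrac{L^2\FF^2}{\tau^2}$). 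Everything else is routine bookkeeping using the elementary bounds $C\ge\norm{\nabla f(x_0)}^2$, $C\ge L\FF$ and convexity/AM--GM.
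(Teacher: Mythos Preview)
Your proof is correct and follows essentially the same route as the paper: telescope the descent inequality \eqref{eq:descent_inequality_single_node}, use $\phi_K\ge0$, and then substitute the stepsize \eqref{eq:stepsize_single_node_clipped_EF21} together with the elementary bounds $\beta_1=\cO(1/\eta^2)$ and $A\le\gamma/(2\eta)$. The only cosmetic slip is in the case $\norm{\nabla f(x_0)}\le\tau$: you say ``leaving only $\tfrac{L\FF}{\eta}+\tfrac{L^2\FF^2}{\tau^2}$'', but the term $\tfrac{L\FF\GG}{\tau\sqrt\eta}$ (which comes from $\tfrac{2\FF}{\gamma}$, not from $\phi_0$) is still present; since here $\eta=1$ and $\GG\le\tau$ it is $\le L\FF\le C$ and is absorbed, so the conclusion stands. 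Your case-split treatment of the cross term $\tfrac{L\FF\|\nabla f(x_0)\|^{3/2}}{\tau^{3/2}}$ is in fact cleaner than the paper's derivation, which hides the same estimate inside an AM--GM step when simplifying $\frac{1}{\gamma}$.
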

Theorem \ref{thm:singlenode-clippedEF21} states that in the  $L$-smooth non-convex  regime,    single-node \algname{Clip21-GD} enjoys the $\mathcal{O}(\nicefrac{1}{K})$ rate. Up to constant factors, this is the same rate as that of \algname{GD}.

\subsection{Multi-node regime ($n>1$)}

Next, we turn our attention to  multi-node  \algname{Clip21-GD} as described in Algorithm \ref{alg:Clip21-GD}.
Note that this method becomes \algname{EF21} when we replace $\clip_\tau(\cdot)$ with  a contractive compressor, and becomes \algname{GD} when we let $ \tau \to +\infty$.
Our results for  multi-node  \algname{Clip21-GD}  have the same meaning as those for the single-node  case, and hence a  commentary comparing these results to the $n=1$ case should suffice.

\begin{lemma}[Descent lemma]\label{lem:multinode-clippedEF21}
Consider multi-node  \algname{Clip21-GD} (described in Algorithm \ref{alg:Clip21-GD} for general $n$) for solving 
\eqref{eq:main}.	Suppose that each $ f_i(x)$ has $L_i$-Lipschitz  gradient and that $f$ has $L$-Lipschitz  gradient and lower bounded by $f_{\rm inf} \in\R$.
	Let $v_{-1}^i=0$ for all $i$, $\eta\eqdef\min\left\{1,\frac{\tau}{\max_i\norm{   \nabla f_i(x_0) }}\right\}$, $\FF\eqdef f(x_0)-f_{\rm inf}$, and $\GG^2\eqdef \frac{1}{n}\sum_{i=1}^n (\norm{ \nabla f_i(x_0)  } - \tau)^2$,
	with stepsize
	\begin{align} \label{eq:stepsize_multi_node_clipped_EF21}
		\squeeze 	\gamma \leq \min\left\{ \frac{\phi_0}{ (B-\tau)^2  } ,  \frac{\nicefrac{(1-1/\sqrt{2})}{L}}{1+\sqrt{1+2\beta_1}} , \frac{\nicefrac{\tau^2}{L_{\max}^2}}{16 \left[ \sqrt{\FF} + \sqrt{\beta_2} \right]^2} \right\},	
	\end{align}
	where $\beta_1 \eqdef 2\frac{ (1-\eta)^2(1+2/\eta)}{1-(1-\eta)(1-\eta/2)} \left(\nicefrac{L_{\max}}{L} \right)^2$ and $\beta_2\eqdef \FF + \frac{\GG \tau}{2\sqrt{2\eta} L_{\max}}$. Then,
	\begin{align} \label{eq:descentIneq_multi_node_clipped_EF21}
		\squeeze 	\phi_{k+1} \leq \phi_k - \frac{\gamma}{2}\norm{\nabla f(x_k)}^2, 
	\end{align}
	where $\squeeze \phi_k \eqdef f(x_k)-f_{\rm inf}+\frac{A}{n}\sum\limits_{i=1}^n\norm{ \nabla f_i(x_k) - v_k^i  }^2$ and $A\eqdef \frac{\gamma}{2[1-(1-\eta)(1-\eta/2)]}$.
\end{lemma}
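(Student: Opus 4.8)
The plan is to mimic the structure of the single-node descent lemma (Lemma~\ref{lem:singlenode-clippedEF21}), but now carrying an averaged control-variate error $\frac{1}{n}\sum_i \norm{\nabla f_i(x_k) - v_k^i}^2$ in the Lyapunov function. First I would write the standard $L$-smoothness descent step for $f$: since $x_{k+1} = x_k - \gamma v_k$ where $v_k = \frac1n\sum_i v_k^i$, I get $f(x_{k+1}) \le f(x_k) - \gamma\langle \nabla f(x_k), v_k\rangle + \frac{L\gamma^2}{2}\norm{v_k}^2$, and then rewrite $v_k = \nabla f(x_k) - e_k$ where $e_k \eqdef \nabla f(x_k) - v_k = \frac1n\sum_i(\nabla f_i(x_k) - v_k^i)$ is the aggregate error. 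Completing the square (Young's inequality on the cross term) produces the expected form $f(x_{k+1}) \le f(x_k) - \frac{\gamma}{2}\norm{\nabla f(x_k)}^2 + \frac{\gamma}{2}\norm{e_k}^2 + \text{(terms in }\gamma^2 L\norm{v_k}^2)$, and by convexity of $\norm{\cdot}^2$ we bound $\norm{e_k}^2 \le \frac1n\sum_i\norm{\nabla f_i(x_k)-v_k^i}^2 \eqdef \mathcal{E}_k$, the error term appearing in $\phi_k$.

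Next I would derive a contraction-type recursion for each per-node error $\norm{\nabla f_i(x_{k+1}) - v_{k+1}^i}^2$. Using $v_{k+1}^i = v_k^i + \clip_\tau(\nabla f_i(x_{k+1}) - v_k^i)$ and Lemma~\ref{lem:clip}(iv), when clipping is active the residual $\norm{\nabla f_i(x_{k+1}) - v_{k+1}^i} = \norm{\nabla f_i(x_{k+1}) - v_k^i} - \tau$ (Lemma~\ref{lem:clip}(iii)); I then insert $\nabla f_i(x_{k+1}) - v_k^i = (\nabla f_i(x_k) - v_k^i) + (\nabla f_i(x_{k+1}) - \nabla f_i(x_k))$, use $L_i$-smoothness to bound the increment by $L_i\gamma\norm{v_k}$, and apply a weighted Young's inequality with parameter tied to $\eta$ (mirroring the $\beta_1$ definition) to get something like $\mathcal{E}_{k+1} \le (1-\eta)(1-\eta/2)\mathcal{E}_k + (1 + 2/\eta)(1-\eta)^2 \cdot \gamma^2 L_{\max}^2 \norm{v_k}^2$ — provided clipping is still active; if clipping is inactive the error is exactly zero, so the bound holds trivially. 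The coefficient $1-(1-\eta)(1-\eta/2) > 0$ is what makes $A$ and $\beta_1$ well-defined.

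Then I would form $\phi_{k+1} - \phi_k = [f(x_{k+1}) - f(x_k)] + A[\mathcal{E}_{k+1} - \mathcal{E}_k]$, substitute the two recursions, and choose $A$ so that the $-A\cdot(1-(1-\eta)(1-\eta/2))\mathcal{E}_k$ term absorbs the $+\frac{\gamma}{2}\mathcal{E}_k$ from the smoothness step — this forces $A = \frac{\gamma}{2[1-(1-\eta)(1-\eta/2)]}$, exactly as stated. What remains is to show all the leftover $\norm{v_k}^2$ coefficients are nonpositive, i.e. $\frac{L\gamma^2}{2} + \text{const}\cdot A\gamma^2 L_{\max}^2 \le (\text{something})$; bounding $\norm{v_k}^2 \le 2\norm{\nabla f(x_k)}^2 + 2\mathcal{E}_k$ and re-absorbing is the delicate bookkeeping that yields the first two terms in the stepsize bound \eqref{eq:stepsize_multi_node_clipped_EF21}. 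The main obstacle — and the reason the paper emphasizes a "more refined and involved analysis" — is handling the regime where clipping is active but $\norm{\nabla f_i(x_{k+1}) - v_k^i}$ can be large: the "contraction factor" $(1-\tau/\norm{x})^2$ is not bounded away from $1$, so unlike in \algname{EF21} one cannot get a uniform geometric decay. I expect this is resolved by a bootstrapping argument using the third stepsize term $\frac{\tau^2/L_{\max}^2}{16[\sqrt{\FF}+\sqrt{\beta_2}]^2}$ and the first term $\phi_0/(B-\tau)^2$ (with $B \eqdef \max_i\norm{\nabla f_i(x_0)}$ presumably): these guarantee $\norm{v_k}$ stays controlled by $\sqrt{\phi_0}$-type quantities, so that after the initial transient (analogous to Proposition~\ref{prop:singlenode-clippedEF21}) clipping deactivates and the error collapses — and crucially the descent inequality \eqref{eq:descentIneq_multi_node_clipped_EF21} holds on every step, including the transient ones, because $\phi_k$ is engineered to decrease regardless.
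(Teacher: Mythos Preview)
Your proposal is essentially on the right track and matches the paper's architecture: a descent step plus a per-node error recursion glued together via the Lyapunov function, with the key difficulty being the non-uniform ``contraction'' of clipping. Your identification of the bootstrapping role of the stepsize bounds involving $\phi_0/(B-\tau)^2$ and $\tau^2/L_{\max}^2$ is exactly right.

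There is one technical misstep worth flagging. You write the leftover $\norm{v_k}^2$ terms as positive and propose to bound $\norm{v_k}^2 \le 2\norm{\nabla f(x_k)}^2 + 2\mathcal{E}_k$ and re-absorb. The paper does \emph{not} do this, and it is not where control of $\norm{v_k}$ enters. Completing the square properly (equivalently, the Li--Page descent lemma, Lemma~\ref{lemma:descent_li2021page}) gives a \emph{negative} $-(\tfrac{1}{2\gamma}-\tfrac{L}{2})\norm{x_{k+1}-x_k}^2$ term; after combining with the $+A(1+2/\eta)(1-\eta)^2 L_{\max}^2\norm{x_{k+1}-x_k}^2$ from the error recursion, the second stepsize bound makes the net coefficient $\le -\tfrac{1}{4\gamma}$, yielding the stronger inequality $\phi_{k+1} \le \phi_k - \tfrac{\gamma}{2}\norm{\nabla f(x_k)}^2 - \tfrac{\gamma}{4}\norm{v_k}^2$. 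So $\norm{v_k}^2$ never needs to be upper-bounded for the Lyapunov combination.

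Where an a-priori bound on $\norm{v_k}$ \emph{is} essential is one level earlier: to get the error recursion with factor $(1-\eta)^2$ you need $\eta_{k+1}^i \ge \eta$, i.e.\ $\norm{\nabla f_i(x_{k+1}) - v_k^i} \le B$ for active $i$. The paper establishes this by a \emph{simultaneous} strong induction on two statements: (a) $\norm{\nabla f_i(x_t)-v_{t-1}^i} \le B - t\tau/2$ for clipping-active $i$, and (b) the (stronger) descent inequality above. Statement (b) gives $\phi_{t-1}\le\phi_0$, hence $\norm{\nabla f(x_{t-1})}\le\sqrt{2\phi_0/\gamma}$; together with the inductive $\norm{v_{t-1}}$ bound this yields $\norm{v_t}\le\sqrt{4\phi_0/\gamma}+2(B-\tau)$ (Claims~\ref{claim:v_0_multinode}--\ref{claim:v_k_multinode}); then Claim~\ref{claim:bound_diff_case_on_multinode} plus the first and third stepsize bounds close (a) at step $t+1$; and finally (a) enables the error recursion, closing (b). Your ``bootstrapping'' intuition is correct, but the circularity is resolved by this joint induction rather than by substituting $\norm{v_k}^2$ into the Lyapunov inequality.
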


\begin{proposition}[Finite-time to no clipping]\label{prop:multinode-clippedEF21}
Let the conditions of Lemma~\ref{lem:multinode-clippedEF21} hold.
On the one hand, if $x_0$ satisfies $\| \nabla f_i(x_0) \| \leq \tau$, then  $\|\nabla f_i(x_k)-v^i_{k-1}\| \leq \tau$ for all $k \geq 0$. 
On the other hand, if $x_0$ satisfies $\| \nabla f_i(x_0) \| > \tau$, then $\|\nabla f_i(x_k)-v^i_{k-1}\| \leq \tau$ for $k \geq k^\star \eqdef \frac{2}{\tau}\left( \| \nabla f_i(x_0) \|-\tau \right)+1$.
\end{proposition}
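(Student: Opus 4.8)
The plan is to reduce the multi-node statement to a per-coordinate (i.e., per-worker) argument that closely mirrors the single-node Proposition~\ref{prop:singlenode-clippedEF21}, with the only twist being that each worker $i$ sees its own sequence $\{v_k^i\}$ driven by a common iterate sequence $\{x_k\}$. Fix a worker $i$. The key quantity to control is $e_k^i \eqdef \norm{\nabla f_i(x_k) - v_{k-1}^i}$, the argument passed to $\clip_\tau$ at step $k$. I would prove by induction on $k$ the claim that $e_k^i \leq \max\{0,\, e_0^i - k\tau\}$ for the $\| \nabla f_i(x_0)\| > \tau$ case (and $e_k^i \le \tau$ for all $k$ in the $\|\nabla f_i(x_0)\|\le \tau$ case), from which both bullets follow immediately: in the first case $e_k^i \le \tau$ is trivial (with room to spare) and $v_{k-1}^i = \nabla f_i(x_{k-1})$ by Lemma~\ref{lem:clip}(ii) combined with the update rule; in the second case $e_k^i \le \tau$ once $k\tau \ge e_0^i - \tau$, i.e. once $k \ge \tfrac{1}{\tau}(\|\nabla f_i(x_0)\|-\tau)+1$ — and the stated $k^\star = \tfrac{2}{\tau}(\|\nabla f_i(x_0)\|-\tau)+1$ is a (slightly loose) upper bound for this, presumably chosen to match the descent-lemma constants.

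The inductive step decomposes $e_{k+1}^i = \norm{\nabla f_i(x_{k+1}) - v_k^i}$ via the triangle inequality into (a) how far $v_k^i$ moved toward $\nabla f_i(x_k)$ during the clipping update, and (b) how far the target itself drifted, $\norm{\nabla f_i(x_{k+1}) - \nabla f_i(x_k)}$. For (a), the error-feedback update gives $\norm{\nabla f_i(x_k) - v_k^i} = \norm{\nabla f_i(x_k) - v_{k-1}^i - \clip_\tau(\nabla f_i(x_k)-v_{k-1}^i)} = \max\{0, e_k^i - \tau\}$ by Lemma~\ref{lem:clip}(ii)--(iii); so if clipping was already inactive ($e_k^i \le \tau$) this term is zero, and otherwise it drops by exactly $\tau$. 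For (b), $L_i$-smoothness gives $\norm{\nabla f_i(x_{k+1}) - \nabla f_i(x_k)} \le L_i \norm{x_{k+1}-x_k} = \gamma L_i \norm{\tfrac1n\sum_j v_k^j}$, and this must be bounded by something $\le \tau$ (or more precisely, small enough that the net effect of one step is a decrease by at least roughly $\tau$ until clipping switches off). Here is where the first term in the stepsize bound \eqref{eq:stepsize_multi_node_clipped_EF21}, $\gamma \le \phi_0/(B-\tau)^2$ — with $B$ presumably a uniform bound like $\max_i \|\nabla f_i(x_0)\|$ — does the work: it ensures $\gamma L_i \|\tfrac1n\sum_j v_k^j\|$ stays below the budget, using the descent lemma (Lemma~\ref{lem:multinode-clippedEF21}) to bound $\phi_k \le \phi_0$ hence $\norm{\nabla f(x_k)}$ and, through it, the $v_k^j$.

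I expect the main obstacle to be exactly this coupling in part (b): bounding the drift $\gamma L_i \norm{\tfrac1n\sum_j v_k^j}$ uniformly over $k$ requires knowing the $v_k^j$ stay bounded, which in turn relies on the descent lemma (so the proof is not self-contained within the proposition — it leans on Lemma~\ref{lem:multinode-clippedEF21} having already established $\phi_k \le \phi_0$). One then needs $\norm{v_k^j} \le \norm{\nabla f_j(x_k)} + e_k^j \le \norm{\nabla f_j(x_k)} + e_0^j$ (using the induction hypothesis), bound $\norm{\nabla f_j(x_k)}$ via $\norm{\nabla f(x_k)}$ plus heterogeneity terms and $L$-smoothness relative to $x_0$, and finally choose constants so the per-step decrease of $e_k^i$ is genuinely at least (some fixed fraction of) $\tau$ until it falls below $\tau$. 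The bookkeeping of these constants — reconciling the clean "$-\tau$ per step" intuition with the factor-of-2 slack in $k^\star$ and the precise form of $B$ — is the fiddly part; the conceptual structure (triangle inequality + Lemma~\ref{lem:clip}(ii)--(iii) + $L_i$-smoothness + $\phi_k\le\phi_0$) is straightforward.
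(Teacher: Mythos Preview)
Your skeleton is exactly the paper's: triangle inequality on $e_{k+1}^i \eqdef \norm{\nabla f_i(x_{k+1})-v_k^i}$, then Lemma~\ref{lem:clip}(ii)--(iii) giving $\norm{\nabla f_i(x_k)-v_k^i}=\max\{0,e_k^i-\tau\}$, then $L_i$-smoothness to control the drift $\norm{\nabla f_i(x_{k+1})-\nabla f_i(x_k)}\le L_{\max}\gamma\norm{v_k}$, and finally the descent lemma to keep $\phi_k\le\phi_0$. Two points need correcting, however.

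First, your inductive claim $e_k^i\le\max\{0,e_0^i-k\tau\}$ is too strong, and the factor~$2$ in $k^\star$ is \emph{not} slack. Under the stepsize \eqref{eq:stepsize_multi_node_clipped_EF21} the paper only secures $L_{\max}\gamma\norm{v_k}\le\tau/2$, so the net per-step drop is $\tau/2$: one has $e_{k+1}^i\le e_k^i-\tau+\tfrac{\tau}{2}=e_k^i-\tfrac{\tau}{2}$ while clipping is active (this is exactly \eqref{eq:decrease_of_the_norm_diff_multi_node} in the proof of Lemma~\ref{lem:multinode-clippedEF21}). Hence $e_k^i\le\norm{\nabla f_i(x_0)}-k\tau/2$, which gives $k^\star=\tfrac{2}{\tau}(\norm{\nabla f_i(x_0)}-\tau)+1$ with no room to spare. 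Your later remark ``some fixed fraction of $\tau$'' is the right instinct; the earlier claim and the ``slightly loose'' comment should be dropped.

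Second, your proposed route for bounding $\norm{v_k}$ --- via individual $\norm{v_k^j}\le\norm{\nabla f_j(x_k)}+e_k^j$ and then controlling each $\norm{\nabla f_j(x_k)}$ through ``$\norm{\nabla f(x_k)}$ plus heterogeneity terms'' --- will stall, because the paper makes no heterogeneity assumption and the local gradients $\nabla f_j(x_k)$ are not individually controlled by $\phi_k$. The paper instead bounds the \emph{average} directly: writing $v_k=\nabla f(x_k)+\tfrac1n\sum_j\bigl[\clip_\tau(\nabla f_j(x_k)-v_{k-1}^j)-(\nabla f_j(x_k)-v_{k-1}^j)\bigr]$ gives $\norm{v_k}\le\norm{\nabla f(x_k)}+(B-\tau)$ with $B=\max_j\norm{\nabla f_j(x_0)}$, and then a one-step recursion $\norm{\nabla f(x_k)}\le\norm{\nabla f(x_{k-1})}+L\gamma\norm{v_{k-1}}$ together with $\norm{\nabla f(x_{k-1})}^2\le 2L\phi_{k-1}\le\tfrac{2}{\gamma}\phi_0$ yields the self-reproducing bound $\norm{v_k}\le\sqrt{4\phi_0/\gamma}+2(B-\tau)$ (Claims~\ref{claim:v_0_multinode}--\ref{claim:v_k_multinode}). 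This is the piece you should swap in.
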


\begin{theorem}[Convergence result]\label{thm:multinode-clippedEF21}
Let the conditions of Lemma~\ref{lem:multinode-clippedEF21} hold, and $\hat x_K$ be a point selected from the set $\{x_0,x_1,\ldots,x_K\}$ uniformly at random for $K\geq 1$. 
Then, multi-node \algname{Clip21-GD} (described in Algorithm~\ref{alg:Clip21-GD}) satisfies
\begin{align*}
\squeeze 	\Exp{\norm{ \nabla f(\hat x_K) }^2} \leq \frac{2}{\gamma} \frac{\phi_0}{K}.
\end{align*}
If $\gamma$ is chosen to be the right-hand side of \eqref{eq:stepsize_multi_node_clipped_EF21}, then 
\begin{align*}
\squeeze 	\Exp{ \norm{ \nabla f(\hat x_K) }^2} = \cO\left( \left[\left(1 + \frac{C_1}{\tau}\right) C_2+ \frac{L_{\max}^2(\FF)^2}{\tau^2}\right] \frac{1}{K}\right),
\end{align*}
where $C_1\eqdef \max_i \norm{\nabla f_i(x_0)}$, $C_2 \eqdef \max\{\FF \max(L,L_{\max}),C_1^2\}$.
\end{theorem}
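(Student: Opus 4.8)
The plan is to feed the Lyapunov descent inequality of Lemma~\ref{lem:multinode-clippedEF21} into a standard telescoping argument, and then to simplify the explicit stepsize. First note that $\phi_k = f(x_k)-f_{\inf}+\frac{A}{n}\sum_{i=1}^n\norm{\nabla f_i(x_k)-v_k^i}^2 \ge 0$ since $f\ge f_{\inf}$ and $A>0$. Summing \eqref{eq:descentIneq_multi_node_clipped_EF21} over $k=0,\dots,K-1$ and discarding the nonnegative terminal term $\phi_K$ gives $\frac{\gamma}{2}\sum_{k=0}^{K-1}\norm{\nabla f(x_k)}^2\le\phi_0$. Since $\hat x_K$ is drawn uniformly from the iterates, $\Exp{\norm{\nabla f(\hat x_K)}^2}=\frac{1}{K}\sum_k\norm{\nabla f(x_k)}^2\le\frac{2}{\gamma}\frac{\phi_0}{K}$, which is the first displayed bound (the extra index in $\{x_0,\dots,x_K\}$ is absorbed by one more application of the descent inequality).

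\textbf{Stage 2a (bounding $\phi_0$).} Because $v_{-1}^i=0$, the algorithm sets $v_0^i=\clip_\tau(\nabla f_i(x_0))$, so Lemma~\ref{lem:clip}(ii)--(iii) gives $\norm{\nabla f_i(x_0)-v_0^i}^2\le(\norm{\nabla f_i(x_0)}-\tau)^2$ for every $i$ (equal to $0$ whenever $\norm{\nabla f_i(x_0)}\le\tau$), whence $\frac{1}{n}\sum_i\norm{\nabla f_i(x_0)-v_0^i}^2\le\GG^2$. Using $1-(1-\eta)(1-\eta/2)=\tfrac{\eta}{2}(3-\eta)\ge\eta$ for $\eta\in(0,1]$ we get $A\le\gamma/(2\eta)$, hence $\phi_0\le\FF+\tfrac{\gamma}{2\eta}\GG^2$ and $\tfrac{2}{\gamma}\phi_0\le\tfrac{2\FF}{\gamma}+\tfrac{\GG^2}{\eta}$. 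The term $\GG^2/\eta$ is controlled directly: if $\max_i\norm{\nabla f_i(x_0)}\le\tau$ then all clipping at $x_0$ is inactive and the shift term in $\phi_0$ vanishes; otherwise $\eta=\tau/C_1$, and since each $(\norm{\nabla f_i(x_0)}-\tau)^2\le C_1^2$ we have $\GG^2\le C_1^2\le C_2$ and $1/\eta=C_1/\tau$, so in all cases $\GG^2/\eta\le(1+C_1/\tau)C_2$.

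\textbf{Stage 2b (bounding $2\FF/\gamma$).} Here $1/\gamma$ is the maximum of the reciprocals of the three terms in \eqref{eq:stepsize_multi_node_clipped_EF21}, so it suffices to bound $2\FF$ times each. The first reciprocal gives $2\FF(B-\tau)^2/\phi_0\le 2(B-\tau)^2=\cO(C_2)$ using $\phi_0\ge\FF$. For the second, $1-(1-\eta)(1-\eta/2)\ge\eta$, $(1-\eta)^2\le1$ and $1+2/\eta\le3/\eta$ yield $\beta_1=\cO\big((L_{\max}/L)^2/\eta^2\big)$, hence $\sqrt{\beta_1}=\cO\big((L_{\max}/L)/\eta\big)$, and the contribution is $\cO(\FF L+\FF L_{\max}/\eta)=\cO\big((1+C_1/\tau)C_2\big)$ since $1/\eta\le1+C_1/\tau$ and $C_2\ge\FF\max(L,L_{\max})$. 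For the third, the contribution is $\cO\big(L_{\max}^2\FF(\FF+\beta_2)/\tau^2\big)$; with $\beta_2=\FF+\tfrac{\GG\tau}{2\sqrt{2\eta}L_{\max}}$ this splits into $\cO\big(L_{\max}^2(\FF)^2/\tau^2\big)$ --- the second advertised term --- plus a cross term $\cO\big(\FF L_{\max}\GG/(\tau\sqrt{\eta})\big)$, which is handled by the AM--GM split $\tfrac{\FF L_{\max}\GG}{\tau\sqrt\eta}\le\tfrac12\big(\tfrac{L_{\max}^2(\FF)^2}{\tau^2}+\tfrac{\GG^2}{\eta}\big)$, whose two summands were already bounded above. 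Collecting everything gives $\tfrac{2}{\gamma}\phi_0=\cO\big((1+C_1/\tau)C_2+L_{\max}^2(\FF)^2/\tau^2\big)$, and dividing by $K$ yields the claim.

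\textbf{Main obstacle.} All the genuine analytic content is in Lemma~\ref{lem:multinode-clippedEF21}; the present proof is bookkeeping. The one place that needs care is Stage~2b: one must choose the chain of elementary estimates (notably $1-(1-\eta)(1-\eta/2)\ge\eta$, the estimate $\beta_1=\cO((L_{\max}/L)^2/\eta^2)$, and the AM--GM split of the $\beta_2$ cross term) tightly enough that the final expression collapses to exactly the two advertised terms, rather than to something carrying a worse power of $C_1/\tau$; a naive bound on $\GG^2/\eta$ by $C_1^2\cdot C_1/\tau$ would, for instance, not match the stated rate without the AM--GM rearrangement.
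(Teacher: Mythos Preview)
Your proposal is correct and follows essentially the same route as the paper: telescope the descent inequality from Lemma~\ref{lem:multinode-clippedEF21} to get $\Exp{\norm{\nabla f(\hat x_K)}^2}\le 2\phi_0/(\gamma K)$, then bound $\phi_0/\gamma$ by elementary estimates on $\eta,\beta_1,\beta_2,\GG$. The only noteworthy differences are bookkeeping: the paper drops the first stepsize constraint $\phi_0/(B-\tau)^2$ by assuming ``$\phi_0$ is very large'' and then bounds $1/\gamma$ from the remaining two terms, whereas you treat all three reciprocals explicitly (your bound $2\FF(B-\tau)^2/\phi_0\le 2(B-\tau)^2$ via $\phi_0\ge\FF$ is cleaner); and for the $\beta_2$ cross term the paper expands $\beta_2$ directly while you use the AM--GM split $\FF L_{\max}\GG/(\tau\sqrt\eta)\le\tfrac12(L_{\max}^2\FF^2/\tau^2+\GG^2/\eta)$, which recycles terms already controlled. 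Both variants land on the same $\cO$ expression.
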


Theorem~\ref{thm:multinode-clippedEF21} says that \algname{Clip21-GD} enjoys an $\cO(\nicefrac{1}{K})$ rate, which is faster than the previous state-of-the-art rate $\cO(\nicefrac{1}{\sqrt{K}})$ rate obtained by  \citep{zhang2022understanding,liu2022communication} on nonconvex problems. Also, we do not require bounded gradient and function similarity assumptions as they do (see Table~\ref{tbl:many_methods}).
Furthermore, Proposition \ref{prop:multinode-clippedEF21} says that if the clipping operator at each node  is ``active" at the start (i.e., $\norm{ \nabla f_i(x_0) } > \tau$), it will become ``inactive'' in at most $k^\star$  steps (i.e., $\norm{ \nabla f_i(x_k) - v^i_{k-1} } \leq \tau$, and \algname{Clip21-GD} will effectively become \algname{GD}. Moreover, when specializing our multi-node theory for \algname{Clip21-GD} to the $n=1$ case, and compare this to the theory from Section~\ref{subsec:singlenode_clip21}, we pay the price of a smaller maximum stepsize by a factor of $\approx 10$.  We comment more on this in Appendix~\ref{sec:9}.

\subsection{Adding DP noise}
To train the model under the privacy budget, we further add bounded Gaussian noise $z^i_{k-1}$ to clipped gradients $\clip_\tau(\nabla f_i(x_k)-v_{k-1}^i)$ before it is transmitted. 
This modified \algname{Clip21-GD} method, called \algname{DP-Clip21-GD}, achieves the linear rate with residual error due to the DP-noise for nonconvex problems under the PŁ condition. The results can be found in Appendix~\ref{sec:DP}.

\subsection{Adding communication compression}
In order to reduce the communication cost,  we further modify \algname{Clip21-GD} by replacing $\clip_\tau(\nabla f_i(x_k)-v_{k-1}^i)$ with $\cC(\clip_\tau(\nabla f_i(x_k)-v_{k-1}^i))$, where $\cC:\R^d \to \R^d$ is a contractive compressor. This method, which we call \algname{Press-Clip21-GD},   is shown to enjoy the  $\mathcal{O}(\nicefrac{1}{K})$ rate as well. The method and theory are relegated to  Appendix~\ref{sec_app:ClipPress21}.

\section{Experiments}\label{sec:exp}

\begin{figure}[t]
	\begin{center}
		\begin{tabular}{ccc}
			\multicolumn{3}{c}{\includegraphics[width=0.6\linewidth]{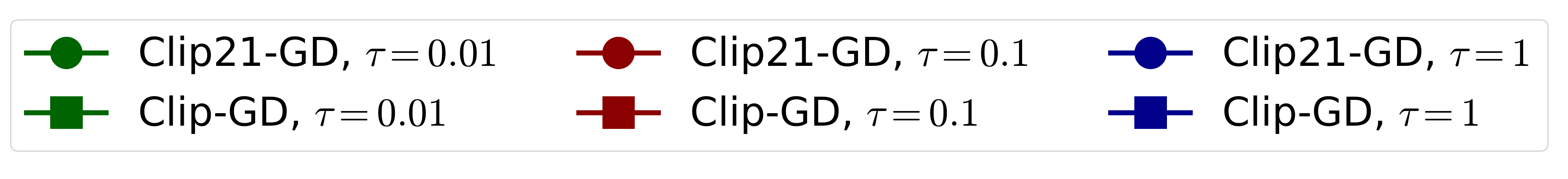}}\\
			\vspace{-5pt}
			\includegraphics[width=0.34\linewidth]{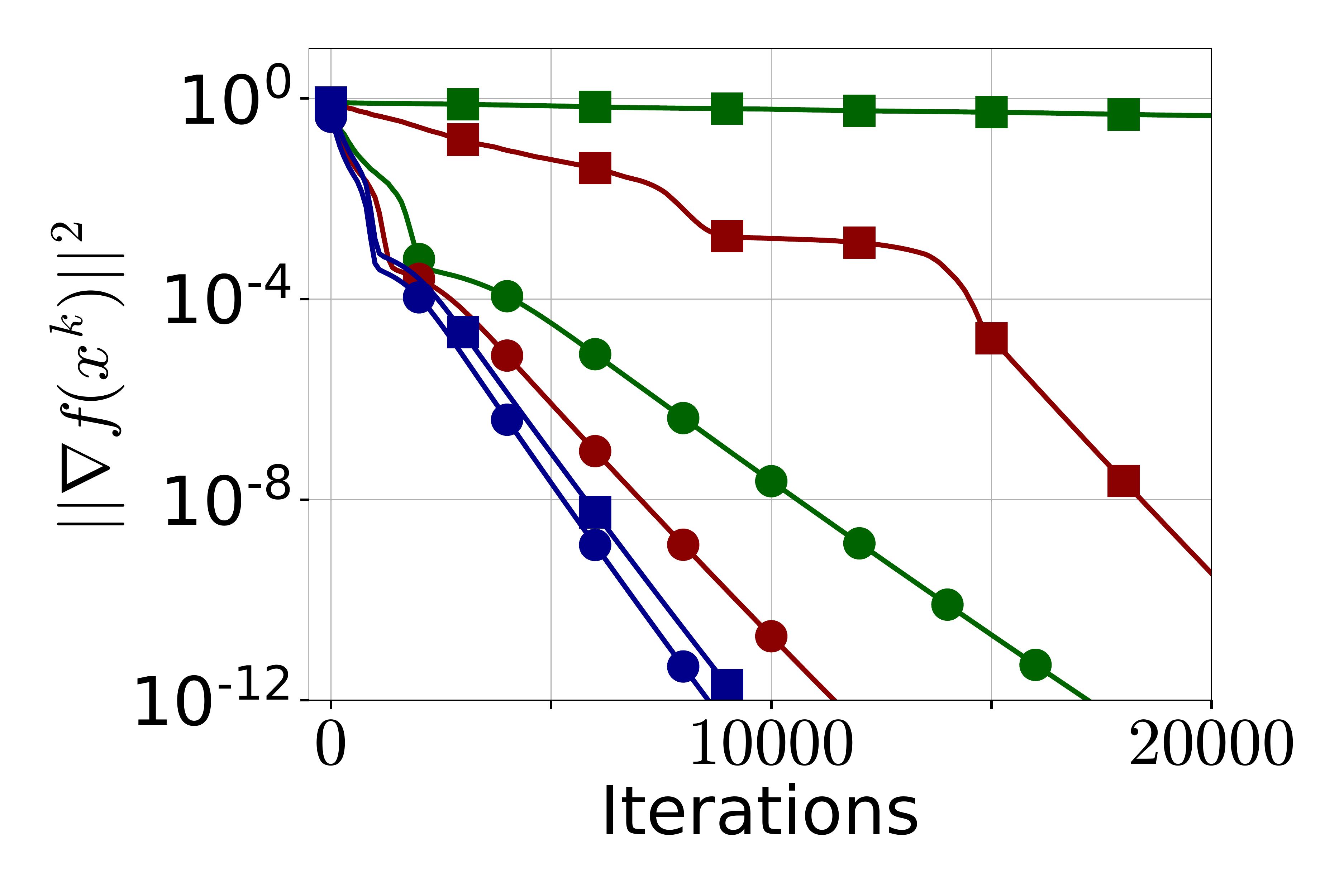} &
           \hspace{-15pt}\includegraphics[width=0.34\linewidth]{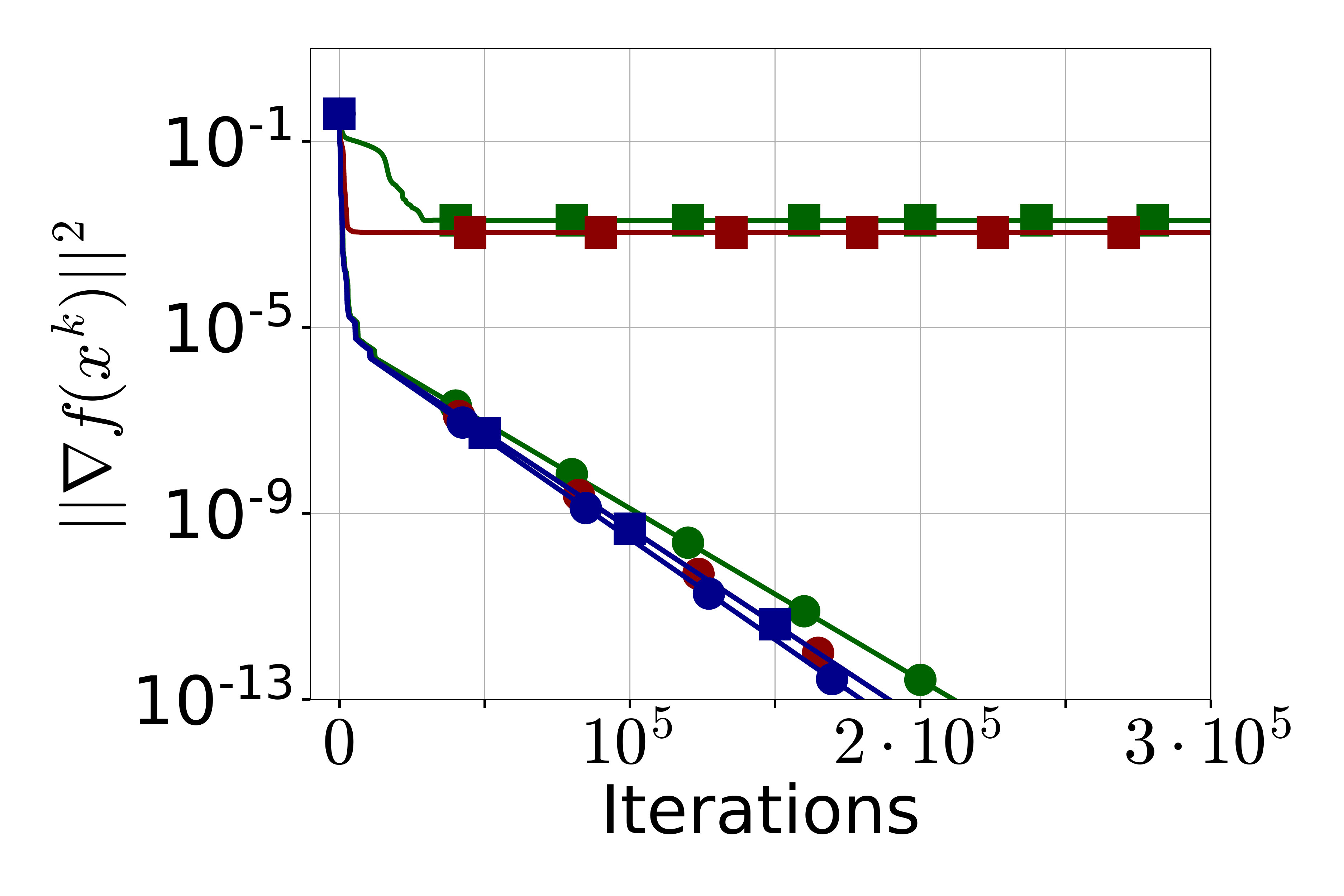}
		&	\hspace{-15pt}\includegraphics[width=0.34\linewidth]{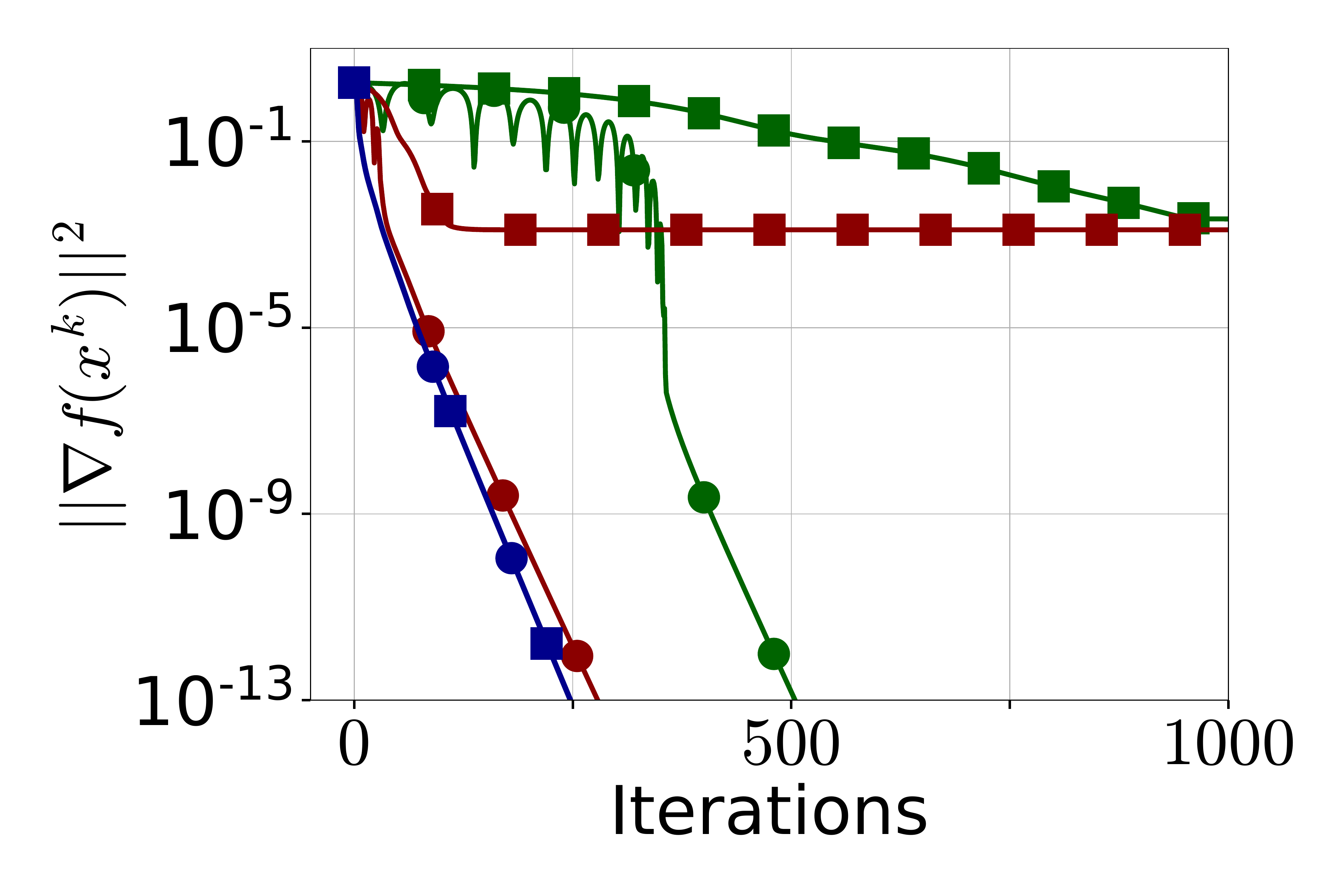} 		\\
  	{\small (a) madelon} &
			{\small (b) w7a} & 
   {\small (c) w8a} \vspace{-5pt}       
		\end{tabular}
	\end{center}
	\caption{Comparison of \tblalgname{Clip21-GD} vs \tblalgname{Clip-GD} with clipping threshold $\tau\in\{0.01,0.1,1\}$ on  logistic regression with  $\ell_2$-regularizer ({\bf first line}) and a nonconvex regularizer ({\bf second line}).}
	\label{fig:logreg_pure}
\end{figure}

To demonstrate strong performance of  \algname{Clip21-GD} and \algname{DP-Clip21-GD} over traditional clipped gradient methods, we evaluate all the methods on the  logistic regression problem,
i.e. the problem of minimizing 
$$f(x) \eqdef \frac{1}{n} \sum \limits_{i=1}^n f_i(x) + \lambda r(x),$$ 
where $$f_i(x) \eqdef \frac{1}{m} \sum\limits_{j=1}^m \log(1+e^{-b_{ij}a_{ij}^\top x}),$$ $a_{ij}\in \R^d$ is the $j^{\text{th}}$ training sample associated with class label $b_{ij}\in\{-1,1\}$, which is privately known by node $i$. 
Additional experiments on nonconvex linear regression and deep neural network training are deferred to Section~\ref{sec_app:exp}. 
We use datasets from the {\sf \small LibSVM} library \citep{chang2011libsvm}, and two types of regularization: 
\begin{itemize}
\item[(1)] $r(x) = \frac{1}{2}\|x\|^2$, which is the $\ell_2$-regularizer, and \item [(2)] $r(x) = \sum_{j=1}^d {x_j^2}/{(1+x_j^2)}$, which is a nonconvex regularizer.
\end{itemize}
Before running the algorithms, 
we preprocess each dataset as follows: we $(i)$  sort  the training samples according to the labels; $(ii)$  split the dataset into $n$ equal parts among the nodes; and $(iii)$ normalize each sample of each part by using  {\sf \small  StandardScaler} from the {\sf \small scikit-learn} library \citep{scikit-learn}.
By this preprocessing, the problem becomes more heterogeneous while the Lipschitz constants of functions $\nabla f_i$  are closer to each other. We reported the best convergence performance of the baseline \algname{Clip-GD} and 
\algname{Clip21-GD} by choosing  stepsizes $\gamma\in\{\nicefrac{1}{4L}, \nicefrac{1}{2L}, \dots, \nicefrac{8}{L}\}$.
We also set $n=10$; $\lambda=10^{-1}$ and $\lambda=10^{-4}$ for nonconvex and $\ell_2$ regularizers respectively.

\subsection{Performance of \algname{Clip21-GD} and \algname{Clip-GD}}
Figure~\ref{fig:logreg_pure} shows that   \algname{Clip21-GD} outperforms \algname{Clip-GD} in both the convergence speed and solution accuracy.
This happens because when the clipping operator is turned on at the beginning, it will be turned off in \algname{Clip-GD} with required iteration counts $k$ much larger than in \algname{Clip21-GD} for small values of $\tau$.
For $\tau\in \{0.01, 0.1\}$, \algname{Clip-GD} converges towards the neighborhood while \algname{Clip21-GD} always converges towards the stationary point. 
At iteration $k=10^4$  and for $\tau=0.01$, \algname{Clip21-GD} achieves roughly $6$ times more accurate solution than \algname{Clip-GD} for the madelon and w7a datasets. 

\subsection{Performance of \algname{DP-Clip21-GD} and \algname{DP-Clip-GD}}
We next showcase that \algname{DP-Clip21-GD} also outperforms \algname{DP-Clip-GD} for training over the privacy budget.  
In these experiments,  we set $\tau=0.1$ and   noise $\sigma\in \{0.01,0.05,0.1\}$.
Looking at Figure~\ref{fig:logreg_dp}, we see that \algname{DP-Clip21-GD} outperforms \algname{DP-Clip-GD} in solution accuracy: \algname{DP-Clip21-GD} converges towards the higher accurate solution as the noise variance $\sigma$ decreases while  \algname{DP-Clip-GD} converges towards the neighbourhood regardless of any values of $\sigma$. 
At $k > 10^4$ and $\sigma=0.01$, 
 solution accuracy from \algname{DP-Clip21-GD} is higher than \algname{DP-Clip-GD} by an order of magnitude for every benchmarked dataset.  
This happens because \algname{DP-Clip21-GD} can handle problem heterogeneity, as suggested by our theory. 

\begin{figure}[t]
	\begin{center}
		\begin{tabular}{ccc}
			\multicolumn{3}{c} {\includegraphics[width=0.7\linewidth]{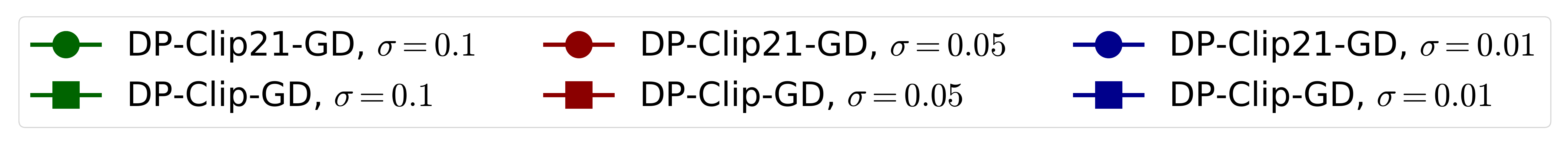}}\\
			\vspace{-5pt}
	\includegraphics[width=0.34\linewidth]{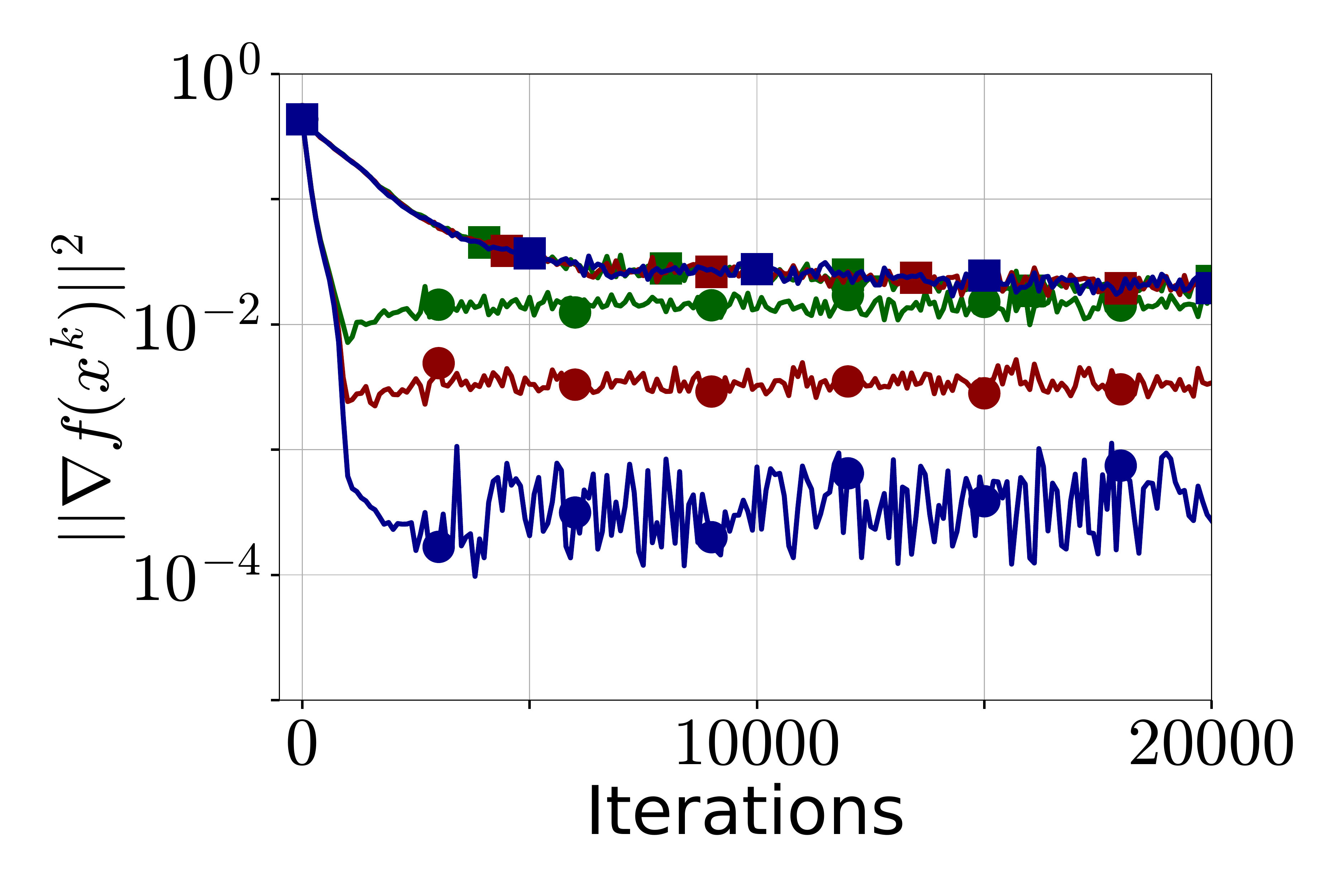} &
			\hspace{-15pt}\includegraphics[width=0.34\linewidth]{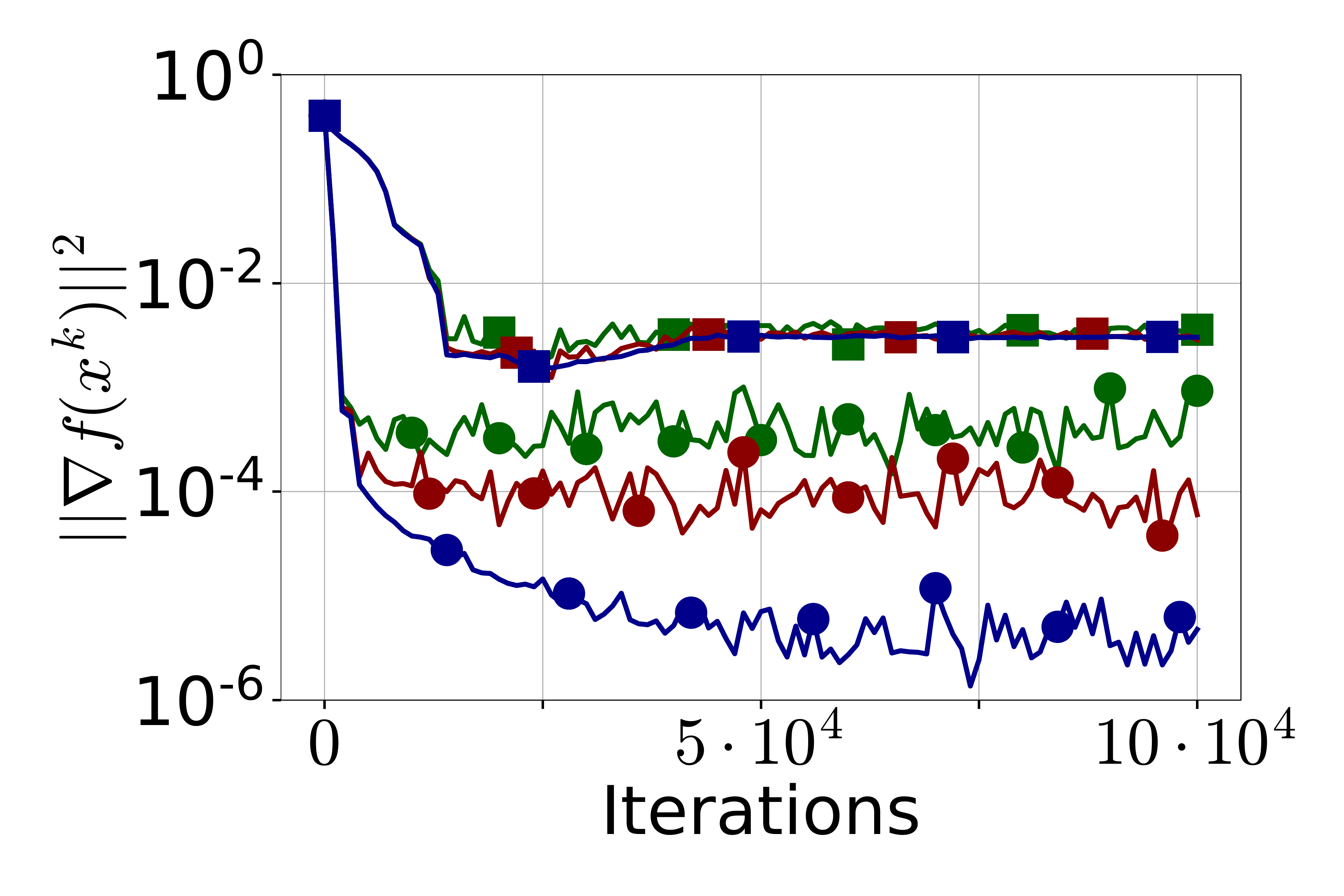}
		& 	\hspace{-15pt}\includegraphics[width=0.34\linewidth]{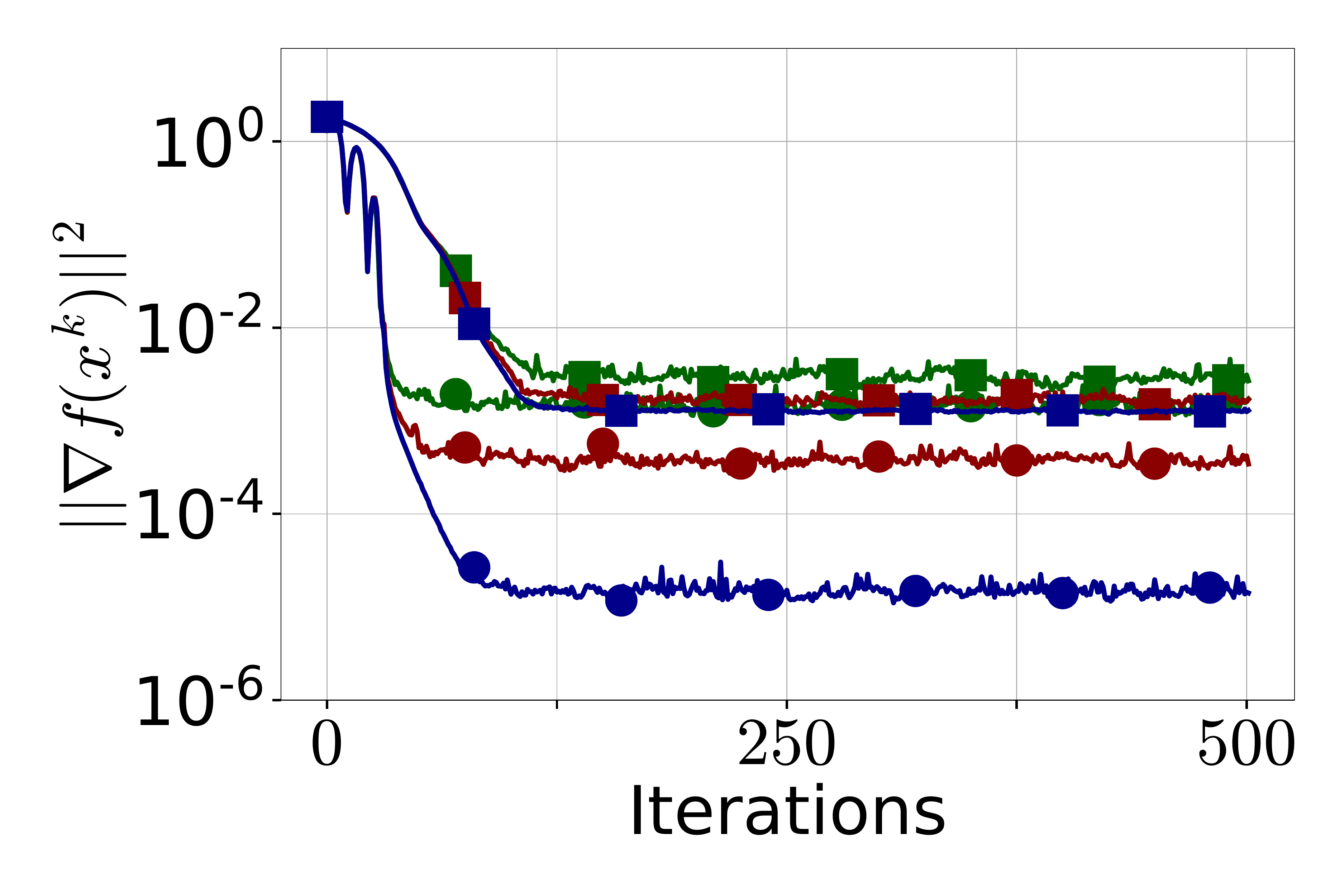} \\ 
  {\small (a) madelon} &
		{\small (b) phishing} &
  {\small (c) w8a}
		\end{tabular}
	\end{center}
	\caption{Comparison of \algname{DP-Clip21-GD} and \algname{DP-Clip-GD} with $\tau=0.1$ and $\sigma\in\{0.01,0.05,0.1\}$ on logistic regression with $\ell_2$-regularizer ({\bf first line}) and a nonconvex regularizer ({\bf second line}).}
	\label{fig:logreg_dp}
\end{figure}

\section{Conclusions, Limitations and Extensions}

We proposed \algname{Clip21-GD} -- an  error feedback mechanism for dealing with
the bias 
by  gradient clipping. 
We proved that \algname{Clip21-GD} enjoys the $\mathcal{O}(\nicefrac{1}{K})$ convergence  for nonconvex problems in single-node and multi-node settings. 
We also prove that its DP variant called \algname{DP-Clip21-GD} attains  privacy and utility guarantee for nonconvex functions that satisfy the PŁ condition.
Our numerical experiments indicate that  \algname{Clip21-GD} and  \algname{DP-Clip21-GD} attains faster convergence speed and higher solution accuracy than \algname{Clip-GD} and  \algname{DP-Clip-GD}, respectively. 
We plan to extend our theory for  \algname{Clip21-GD} to stochastic optimization as it works well in our experiments on training deep neural network models.

\bibliographystyle{unsrt}
\bibliography{refs.bib}

\begin{thebibliography}{10}

\bibitem{abadi2016deep}
Martin Abadi, Andy Chu, Ian Goodfellow, H~Brendan McMahan, Ilya Mironov, Kunal
  Talwar, and Li~Zhang.
\newblock Deep learning with differential privacy.
\newblock In {\em Proceedings of the 2016 ACM SIGSAC conference on computer and
  communications security}, pages 308--318, 2016.

\bibitem{richtarik2021ef21}
Peter Richt{\'a}rik, Igor Sokolov, and Ilyas Fatkhullin.
\newblock {EF21:} a new, simpler, theoretically better, and practically faster
  error feedback.
\newblock {\em Advances in Neural Information Processing Systems},
  34:4384--4396, 2021.

\bibitem{chen2020understanding}
Xiangyi Chen, Steven~Z Wu, and Mingyi Hong.
\newblock Understanding gradient clipping in private {SGD}: A geometric
  perspective.
\newblock {\em Advances in Neural Information Processing Systems},
  33:13773--13782, 2020.

\bibitem{pascanu2013difficulty}
Razvan Pascanu, Tomas Mikolov, and Yoshua Bengio.
\newblock On the difficulty of training recurrent neural networks.
\newblock In {\em International conference on machine learning}, pages
  1310--1318. PMLR, 2013.

\bibitem{Zhang2020Why}
Jingzhao Zhang, Tianxing He, Suvrit Sra, and Ali Jadbabaie.
\newblock Why gradient clipping accelerates training: A theoretical
  justification for adaptivity.
\newblock In {\em International Conference on Learning Representations}, 2020.

\bibitem{nazin2019algorithms}
Aleksandr~Viktorovich Nazin, AS~Nemirovsky, Aleksandr~Borisovich Tsybakov, and
  AB~Juditsky.
\newblock Algorithms of robust stochastic optimization based on mirror descent
  method.
\newblock {\em Automation and Remote Control}, 80(9):1607--1627, 2019.

\bibitem{gorbunov2020stochastic}
Eduard Gorbunov, Marina Danilova, and Alexander Gasnikov.
\newblock Stochastic optimization with heavy-tailed noise via accelerated
  gradient clipping.
\newblock {\em Advances in Neural Information Processing Systems},
  33:15042--15053, 2020.

\bibitem{karimireddy2021learning}
Sai~Praneeth Karimireddy, Lie He, and Martin Jaggi.
\newblock Learning from history for {B}yzantine robust optimization.
\newblock In {\em International Conference on Machine Learning}, pages
  5311--5319. PMLR, 2021.

\bibitem{DP-ImageNet-2022}
Alexey Kurakin, Shuang Song, Steve Chien, Roxana Geambasu, Andreas Terzis, and
  Abhradeep Thakurta.
\newblock Toward training at imagenet scale with differential privacy.
\newblock {\em arXiv preprint arXiv:2201.12328v2}, 2022.

\bibitem{FL-DP-google-blog-2022}
Brendan McMahan and Abhradeep Thakurta.
\newblock Federated learning with formal differential privacy guarantees.
\newblock Google Research Blog, February 2022.

\bibitem{zhang2022understanding}
Xinwei Zhang, Xiangyi Chen, Mingyi Hong, Steven Wu, and Jinfeng Yi.
\newblock Understanding clipping for federated learning: Convergence and
  client-level differential privacy.
\newblock In {\em International Conference on Machine Learning}, pages
  26048--26067. PMLR, 2022.

\bibitem{liu2022communication}
Mingrui Liu, Zhenxun Zhuang, Yunwei Lei, and Chunyang Liao.
\newblock A communication-efficient distributed gradient clipping algorithm for
  training deep neural networks.
\newblock {\em arXiv preprint arXiv:2205.05040}, 2022.

\bibitem{localSGD-AISTATS2020}
Ahmed Khaled, Konstantin Mishchenko, and Peter Richt\'{a}rik.
\newblock Tighter theory for local {SGD} on identical and heterogeneous data.
\newblock In {\em The 23rd International Conference on Artificial Intelligence
  and Statistics (AISTATS 2020)}, 2020.

\bibitem{Shor1985}
Naum~Zuselevich Shor.
\newblock {\em Minimization methods for non-differentiable functions},
  volume~3.
\newblock Springer Science \& Business Media, 1985.

\bibitem{Ermoliev1988}
Yuri Ermoliev.
\newblock {\em Numerical techniques for stochastic optimization}, chapter
  Stochastic quasigradient methods, pages 141--185.
\newblock Springer, 1988.

\bibitem{goodfellow2016deep}
Ian Goodfellow, Yoshua Bengio, and Aaron Courville.
\newblock {\em Deep learning}.
\newblock MIT press, 2016.

\bibitem{mai2021stability}
Vien~V Mai and Mikael Johansson.
\newblock Stability and convergence of stochastic gradient clipping: Beyond
  lipschitz continuity and smoothness.
\newblock In {\em International Conference on Machine Learning}, pages
  7325--7335. PMLR, 2021.

\bibitem{zhang2020adaptive}
Jingzhao Zhang, Sai~Praneeth Karimireddy, Andreas Veit, Seungyeon Kim, Sashank
  Reddi, Sanjiv Kumar, and Suvrit Sra.
\newblock Why are adaptive methods good for attention models?
\newblock {\em Advances in Neural Information Processing Systems},
  33:15383--15393, 2020.

\bibitem{gorbunov2021near}
Eduard Gorbunov, Marina Danilova, Innokentiy Shibaev, Pavel Dvurechensky, and
  Alexander Gasnikov.
\newblock Near-optimal high probability complexity bounds for non-smooth
  stochastic optimization with heavy-tailed noise.
\newblock {\em arXiv preprint arXiv:2106.05958}, 2021.

\bibitem{cutkosky2021high}
Ashok Cutkosky and Harsh Mehta.
\newblock High-probability bounds for non-convex stochastic optimization with
  heavy tails.
\newblock {\em Advances in Neural Information Processing Systems}, 34, 2021.

\bibitem{seide20141}
Frank Seide, Hao Fu, Jasha Droppo, Gang Li, and Dong Yu.
\newblock 1-bit stochastic gradient descent and its application to
  data-parallel distributed training of speech dnns.
\newblock In {\em Fifteenth annual conference of the international speech
  communication association}, 2014.

\bibitem{alistarh2018convergence}
Dan Alistarh, Torsten Hoefler, Mikael Johansson, Nikola Konstantinov, Sarit
  Khirirat, and C{\'e}dric Renggli.
\newblock The convergence of sparsified gradient methods.
\newblock {\em Advances in Neural Information Processing Systems}, 31, 2018.

\bibitem{stich2018sparsified}
Sebastian~U Stich, Jean-Baptiste Cordonnier, and Martin Jaggi.
\newblock Sparsified {SGD} with memory.
\newblock {\em Advances in Neural Information Processing Systems}, 31, 2018.

\bibitem{wu2018error}
Jiaxiang Wu, Weidong Huang, Junzhou Huang, and Tong Zhang.
\newblock Error compensated quantized {SGD} and its applications to large-scale
  distributed optimization.
\newblock In {\em International Conference on Machine Learning}, pages
  5325--5333. PMLR, 2018.

\bibitem{tang2019doublesqueeze}
Hanlin Tang, Chen Yu, Xiangru Lian, Tong Zhang, and Ji~Liu.
\newblock Doublesqueeze: Parallel stochastic gradient descent with double-pass
  error-compensated compression.
\newblock In {\em International Conference on Machine Learning}, pages
  6155--6165. PMLR, 2019.

\bibitem{karimireddy2019error}
Sai~Praneeth Karimireddy, Quentin Rebjock, Sebastian Stich, and Martin Jaggi.
\newblock Error feedback fixes signsgd and other gradient compression schemes.
\newblock In {\em International Conference on Machine Learning}, pages
  3252--3261. PMLR, 2019.

\bibitem{stich2020error}
Sebastian~U Stich and Sai~Praneeth Karimireddy.
\newblock The error-feedback framework: Better rates for {SGD} with delayed
  gradients and compressed updates.
\newblock {\em Journal of Machine Learning Research}, 21:1--36, 2020.

\bibitem{qian2021error}
Xun Qian, Peter Richt{\'a}rik, and Tong Zhang.
\newblock Error compensated distributed {SGD} can be accelerated.
\newblock {\em Advances in Neural Information Processing Systems},
  34:30401--30413, 2021.

\bibitem{khirirat2020compressed}
Sarit Khirirat, Sindri Magn{\'u}sson, and Mikael Johansson.
\newblock Compressed gradient methods with hessian-aided error compensation.
\newblock {\em IEEE Transactions on Signal Processing}, 69:998--1011, 2020.

\bibitem{gorbunov2020linearly}
Eduard Gorbunov, Dmitry Kovalev, Dmitry Makarenko, and Peter Richt{\'a}rik.
\newblock Linearly converging error compensated {SGD}.
\newblock {\em Advances in Neural Information Processing Systems},
  33:20889--20900, 2020.

\bibitem{EF21BW}
Ilyas Fatkhullin, Igor Sokolov, Eduard Gorbunov, Zhize Li, and Peter
  Richt\'{a}rik.
\newblock Ef21 with bells \& whistles: practical algorithmic extensions of
  modern error feedback.
\newblock {\em arXiv preprint arXiv:2110.03294}, 2021.

\bibitem{richtarik20223pc}
Peter Richt{\'a}rik, Igor Sokolov, Elnur Gasanov, Ilyas Fatkhullin, Zhize Li,
  and Eduard Gorbunov.
\newblock {3PC:} three point compressors for communication-efficient
  distributed training and a better theory for lazy aggregation.
\newblock In {\em International Conference on Machine Learning}, pages
  18596--18648. PMLR, 2022.

\bibitem{chang2011libsvm}
Chih-Chung Chang and Chih-Jen Lin.
\newblock {LibSVM}: a library for support vector machines.
\newblock {\em ACM Transactions on Intelligent Systems and Technology (TIST)},
  2(3):1--27, 2011.

\bibitem{scikit-learn}
F.~Pedregosa, G.~Varoquaux, A.~Gramfort, V.~Michel, B.~Thirion, O.~Grisel,
  M.~Blondel, P.~Prettenhofer, R.~Weiss, V.~Dubourg, J.~Vanderplas, A.~Passos,
  D.~Cournapeau, M.~Brucher, M.~Perrot, and E.~Duchesnay.
\newblock Scikit-learn: Machine learning in {P}ython.
\newblock {\em Journal of Machine Learning Research}, 12:2825--2830, 2011.

\bibitem{vgg11}
Karen Simonyan and Andrew Zisserman.
\newblock Very deep convolutional networks for large-scale image recognition.
\newblock {\em 3rd International Conference on Learning Representations
  (ICLR)}, 2014.

\bibitem{cifar10}
Alex Krizhevsky, Vinod Nair, and Geoffrey Hinton.
\newblock Cifar-10 (canadian institute for advanced research).
\newblock 2009.

\bibitem{li2021page}
Zhize Li, Hongyan Bao, Xiangliang Zhang, and Peter Richt{\'a}rik.
\newblock {PAGE:} a simple and optimal probabilistic gradient estimator for
  nonconvex optimization.
\newblock In {\em International Conference on Machine Learning}, pages
  6286--6295. PMLR, 2021.

\bibitem{chen2022bounded}
Bo~Chen and Matthew Hale.
\newblock The bounded gaussian mechanism for differential privacy.
\newblock {\em arXiv preprint arXiv:2211.17230}, 2022.

\bibitem{dwork2014algorithmic}
Cynthia Dwork, Aaron Roth, et~al.
\newblock The algorithmic foundations of differential privacy.
\newblock {\em Foundations and Trends{\textregistered} in Theoretical Computer
  Science}, 9(3--4):211--407, 2014.

\end{thebibliography}


\begin{thebibliography}{}

\end{thebibliography}

\newpage
\appendix
\onecolumn


\part*{Appendix}

\section{Relation to literature on Byzantine robustness}\label{subsec:Byz}

Algorithm~\ref{alg:Clip21-Avg} is similar to the {\em centered clipping} subroutine used by \citep{karimireddy2021learning} to obtain a Byzantine-robust estimator of the gradient.  In their setting, the clients are partitioned into two groups, regular (majority) and Byzantine (minority), and the goal is to minimize the average of the functions owned by the regular clients without a-priori knowing which clients are regular. The Byzantine clients are allowed to report any vectors in an adversarial fashion in an attempt to induce bias into gradient estimation. In this application, it is assumed that  the regular workers share the same function as this ensures that there is enough ``signal'' for the optimization method to iteratively find out which clients are regular. In contrast, we allow all functions $f_i$ to be arbitrarily heterogeneous.  
While \citep{karimireddy2021learning}  use a single shared center/shift for all clients $i \in [n]$ the purpose of which is to ``learn'' who the regular (i.e., non-Byzantine) clients  are via tracking the (homogeneous) gradient of these regular clients, we use $n$ different centers/shifts ($v_k^1,\dots,v_k^n$) designed to track and ultimately find all the original vectors $a^1,\dots,a^n$, respectively, which can be arbitrarily different. Due to the different goal they have, their analysis is completely different to ours.

\section{Additional Experiments}\label{sec_app:exp}

We now include several additional experimental results.

\subsection{Nonconvex linear regression}

We run all clipped methods and their DP versions to solve the linear regression problem with the nonconvex regularization on the form: 
\begin{equation*}
	\min_{x\in \R^d} \left[f(x) = \frac{1}{n}\sum\limits_{i=1}^n f_i(x)\right],
\end{equation*}
where each local loss function is 
\begin{equation*}
	f_i(x) = \frac{1}{m}\sum\limits_{j=1}^m (a_{ij}^\top x - b_{ij})^2 + \lambda \sum\limits_{j=1}^d \frac{x^2_j}{1+x_j^2}.
\end{equation*}
Here, $\lambda>0$ is the nonconvex regularization parameter, and $(a_{i1},b_{i1}),\ldots,(a_{im},b_{im})$ where $a_{ij}\in\mathbb{R}^d$ and $b_{ij}\in\{-1,1\}$ are $m$ training data samples available for node $i$. 
We use datasets from LibSVM library \citep{chang2011libsvm}, and perform the preprocessing steps described in Section \ref{sec:exp}. These preprocessing steps allow some workers to have data samples with only one class label, which make the problem more heterogeneous. 
Also, by the normalization step of this preprocessing Lipschitz constants of local loss functions $f_i(x)$ become close to each other. We also used the same set of parameters (i.e. $n,\lambda,\tau,\sigma$) for the linear regression problem as the logistic regression problem.

From Figure~\ref{fig:linreg_dp_and_nondp} (a)-(b), the convergence speed of \algname{Clip21-GD} is faster than or the same as \algname{Clip-GD}. 
For instance, \algname{Clip21-GD} converges faster than \algname{Clip-GD} when $\tau=0.1$ while both methods have the same convergence performance when $\tau=1$.
This happens because when the clipping operator is turned on at the beginning, it will be turned off in \algname{Clip-GD} with required iteration counts $k$ much larger than in \algname{Clip21-GD} for small values of $\tau$ (e.g. for $\tau=0.1$, \algname{Clip-GD} does not converge at all).

We also reported the training performance of \algname{DP-Clip21-GD} and \algname{DP-Clip-GD}  under the DP budget in Figure~\ref{fig:linreg_dp_and_nondp} (c)-(d). 
In particular, \algname{DP-Clip21-GD} converges towards the higher accurate solution as the noise level decreases, while  \algname{DP-Clip-GD} converges towards the neighbourhood regardless of the noise level. 
This is because \algname{DP-Clip21-GD}, unlike \algname{DP-Clip-GD}, can handle the problem heterogeneity, as suggested by our theory.

\begin{figure}[ht]
	\begin{center}
		\begin{tabular}{cccc}
			\multicolumn{2}{c}{\includegraphics[width=0.38\linewidth]{legend.pdf}}&
			\multicolumn{2}{c}{\includegraphics[width=0.45\linewidth]{legend_dp.pdf}}\\
			\includegraphics[width=0.22\linewidth]{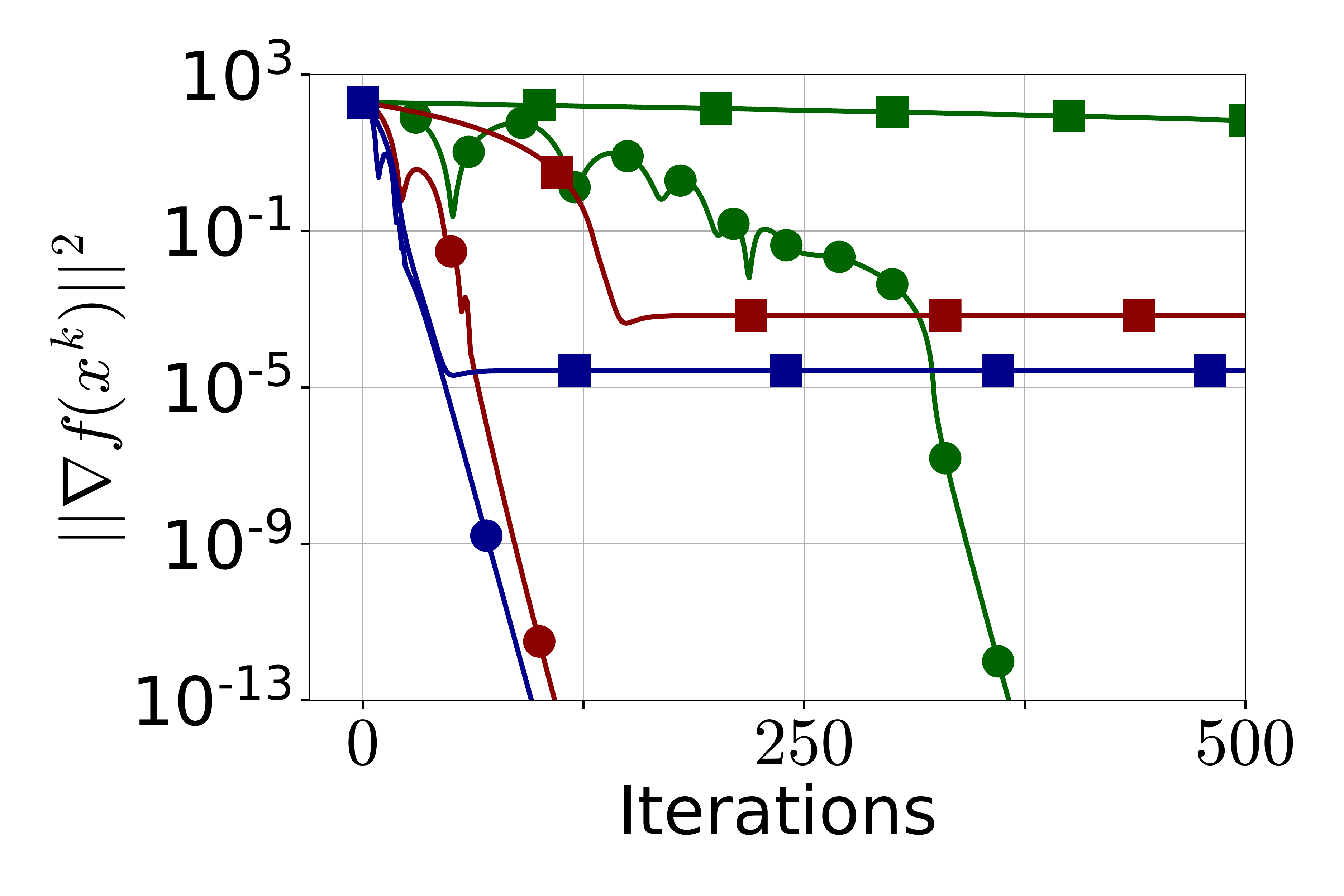} &
			\includegraphics[width=0.22\linewidth]{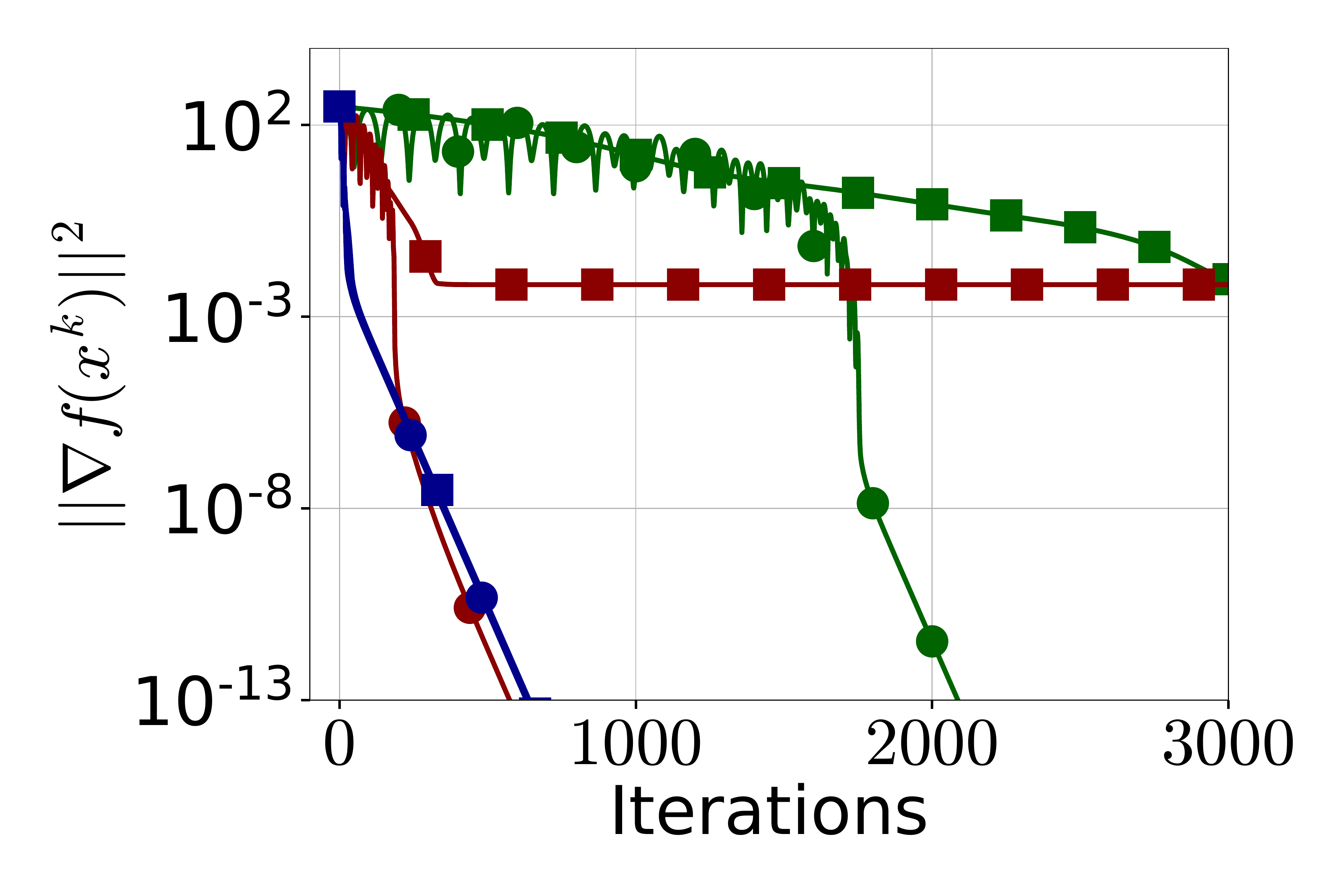} &
			\includegraphics[width=0.22\linewidth]{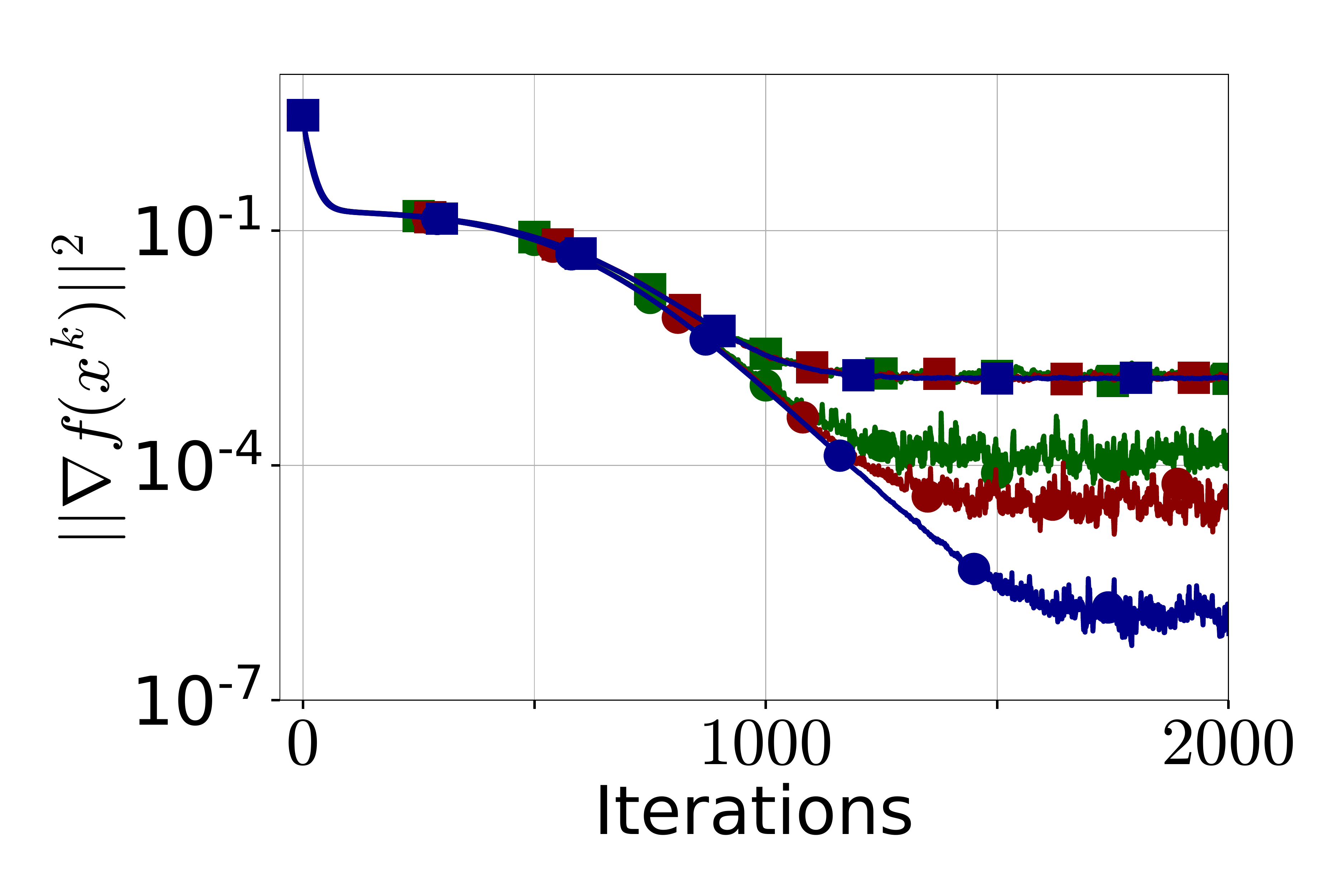} &
			\includegraphics[width=0.22\linewidth]{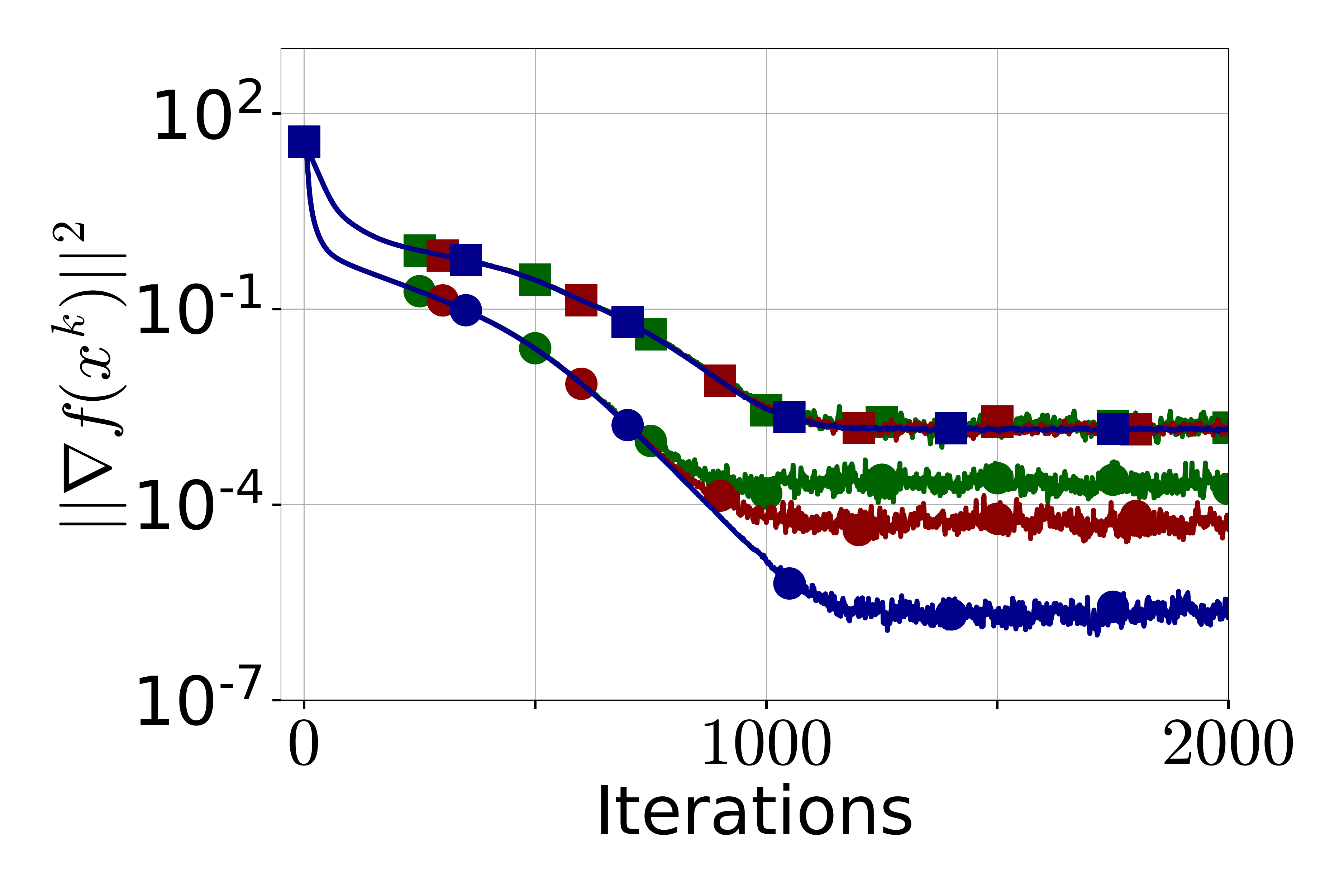}\\
			{\small (a) a1a} & 
			{\small (b) w8a } &
			{\small (c) phishing} &
			{\small (d) mushrooms}
		\end{tabular}
	\end{center}
	\caption{(a), (b) The comparison of \algname{Clip-EF21} and \algname{Clip-GD} varying clipping parameter $\tau$ on Linear Regression with nonconvex regularization. (c), (d) The comparison of \algname{DP-Clip21-GD} and \algname{DP-Clip-GD} varying noise level on Linear Regression with nonconvex regularization}
	\label{fig:linreg_dp_and_nondp}
\end{figure}

\subsection{Deep learning experiments}

We now showcase that \algname{Clip21} also outperforms \algname{Clip} for training the  VGG11 model \citep{vgg11} for multiclass classification problem on the CIFAR10 train dataset with $50000$ samples and $10$ classes ($5000$ samples for each class) \citep{cifar10}.
We modify \algname{Clip21-GD} and \algname{Clip-GD} by replacing the full local gradient $\nabla f_i(x_k)$  with its mini-batch stochastic estimator. We refer these methods as \algname{Clip21-SGD} and \algname{Clip-SGD}.

This data set is split into $10$ classes among $10$ workers according to the following rules:  
\begin{enumerate}
	\item $2500$ samples of the $i^{\text{th}}$ class are given to the $i^{\text{th}}$ client (which is in total $25000$ samples), and
	\item the rest of dataset is shuffled and partitioned randomly between workers.
\end{enumerate}
These preprocessing rules allow the $i^{\text{th}}$ worker to have most samples with the $i^{\text{th}}$ class, which makes the problem more heterogeneous. 
For each worker, its  local datasets are shuffled only once at the beginning, and  its stochastic gradient is computed for each iteration from randomly selected samples with the batch size $32$. 

We reported the best performance of \algname{Clip21-SGD} and \algname{Clip-SGD} in train loss and test accuracy from fine-tuning stepsizes (see Figure~\ref{fig:dl}). 
We observe that \algname{Clip21-SGD} outperforms \algname{Clip-SGD}  in both metrics for any values of $\tau$; see Figure~\ref{fig:dl}.
In particular, one can notice that for small value of clipping parameter ($\tau=10^{-4}$) the difference in train loss and test accuracy given by \algname{Clip21-SGD} and \algname{Clip-SGD} is significant, while for relatively large values ($\tau=10^{-2}$) the performance of algorithms becomes similar. Besides, \algname{Clip21-SGD} attains more than $3$ (in log scale) times lower train loss than \algname{Clip-SGD} at epoch $50$ for $\tau\in\{10^{-4}, 10^{-3},10^{-2}\}$. These encouraging experiments motivate us to investigate theoretical convergence guarantees for \algname{Clip21-SGD} as our future directions.  

Next, we consider DP-versions of algorithms applied on the same problem. Now workers compute mini-batches of size $512$. We add normally distributed noise to the updates varying its variance. The results of this set of experiments are presented in Figure~\ref{fig:dl_dp}. We observe that \algname{DP-Clip21-SGD} slightly outperforms \algname{DP-Clip-SGD} in both metrics.

\begin{figure}[ht]
	\begin{center}
		\begin{tabular}{ccc}
			\includegraphics[width=0.30\linewidth]{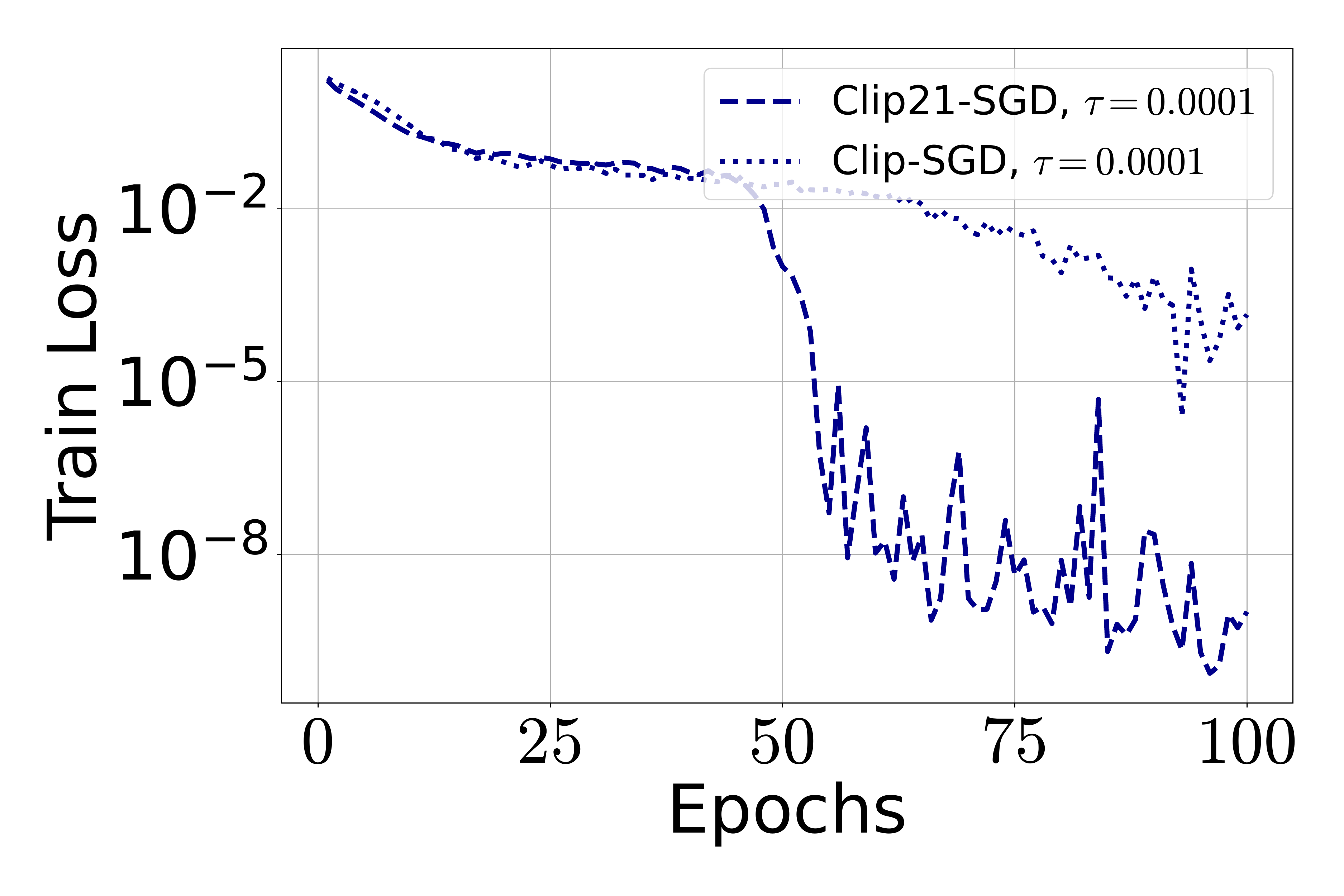} &
			\includegraphics[width=0.30\linewidth]{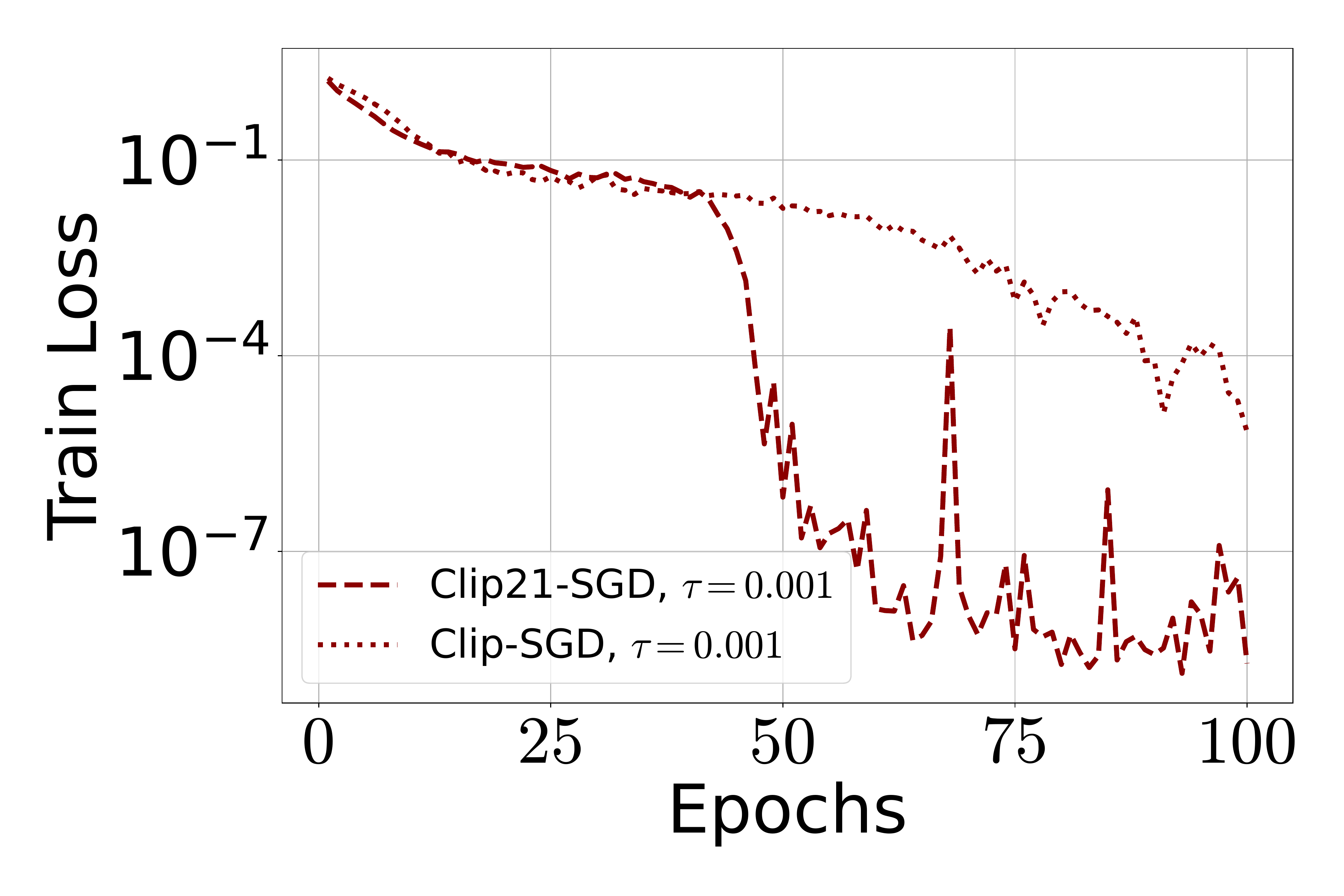} &
			\includegraphics[width=0.30\linewidth]{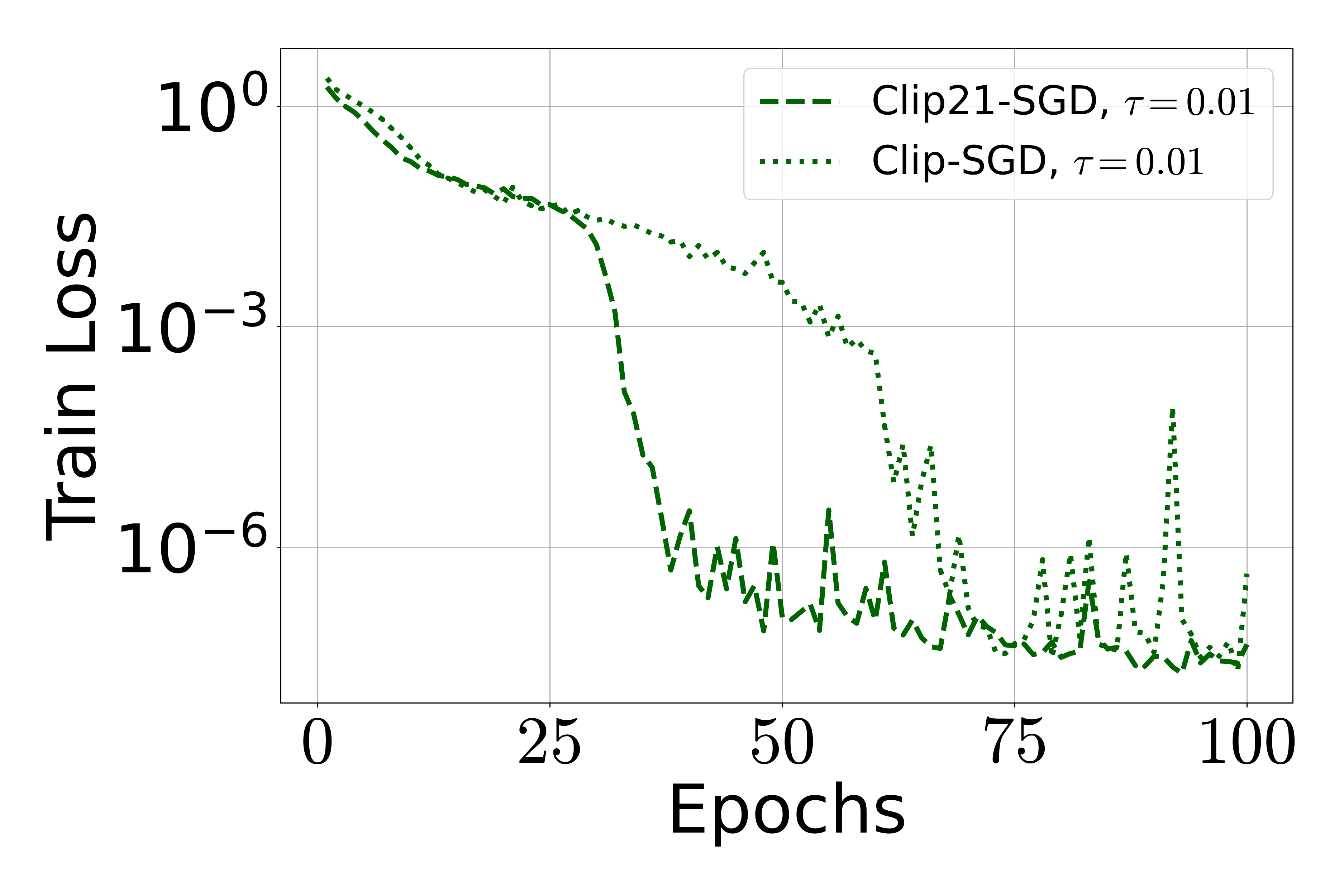}\\
			\includegraphics[width=0.30\linewidth]{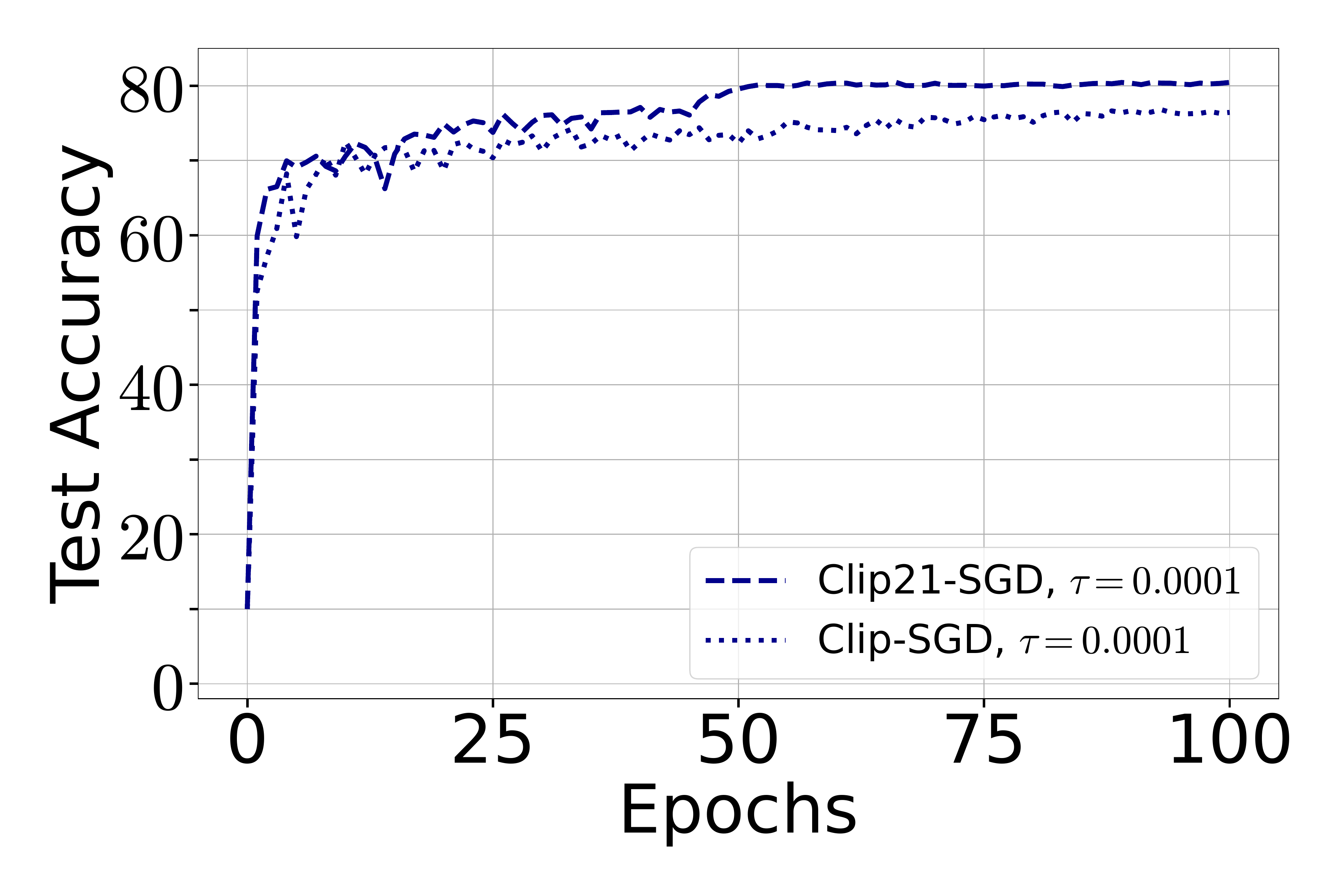} &
			\includegraphics[width=0.30\linewidth]{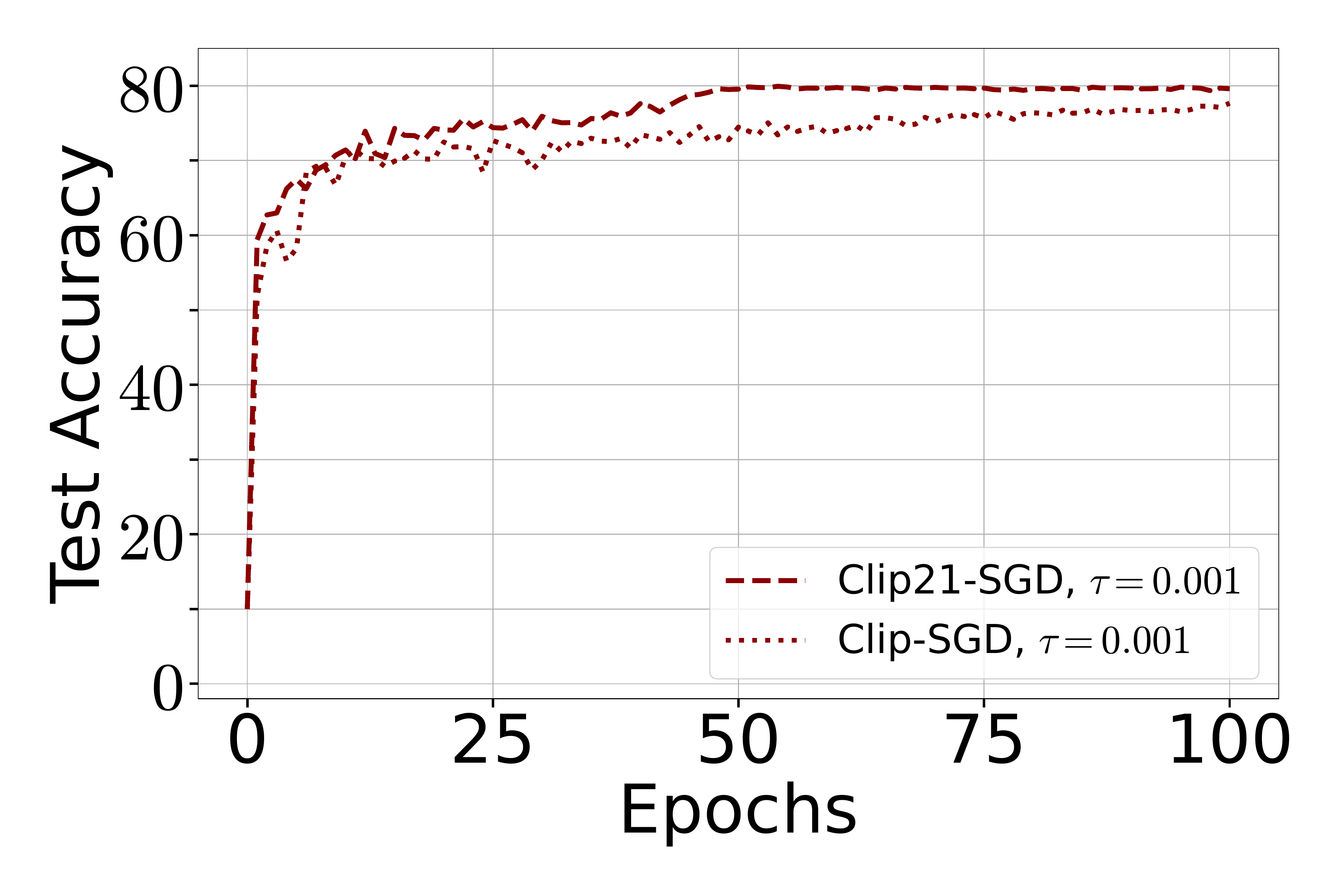} &
			\includegraphics[width=0.30\linewidth]{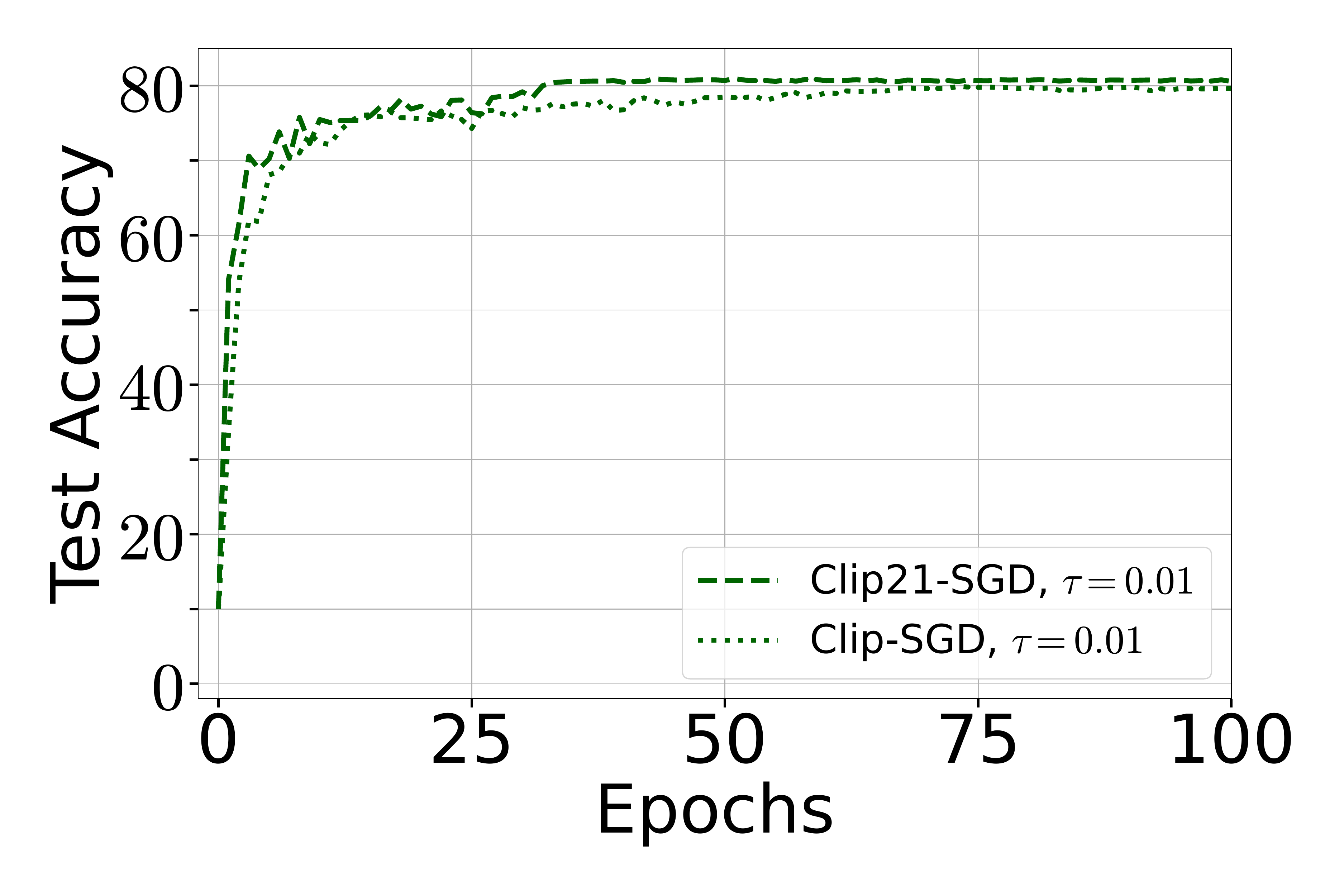}
		\end{tabular}
	\end{center}
	\caption{The performance of \algname{Clip21-SGD} and \algname{Clip-SGD} with fine-tuned stepsizes to  train the VGG11 model on the CIFAR10 dataset.}
	\label{fig:dl}
\end{figure}

\begin{figure}[ht]
	\begin{center}
		\begin{tabular}{cccc}
			\includegraphics[width=0.23\linewidth]{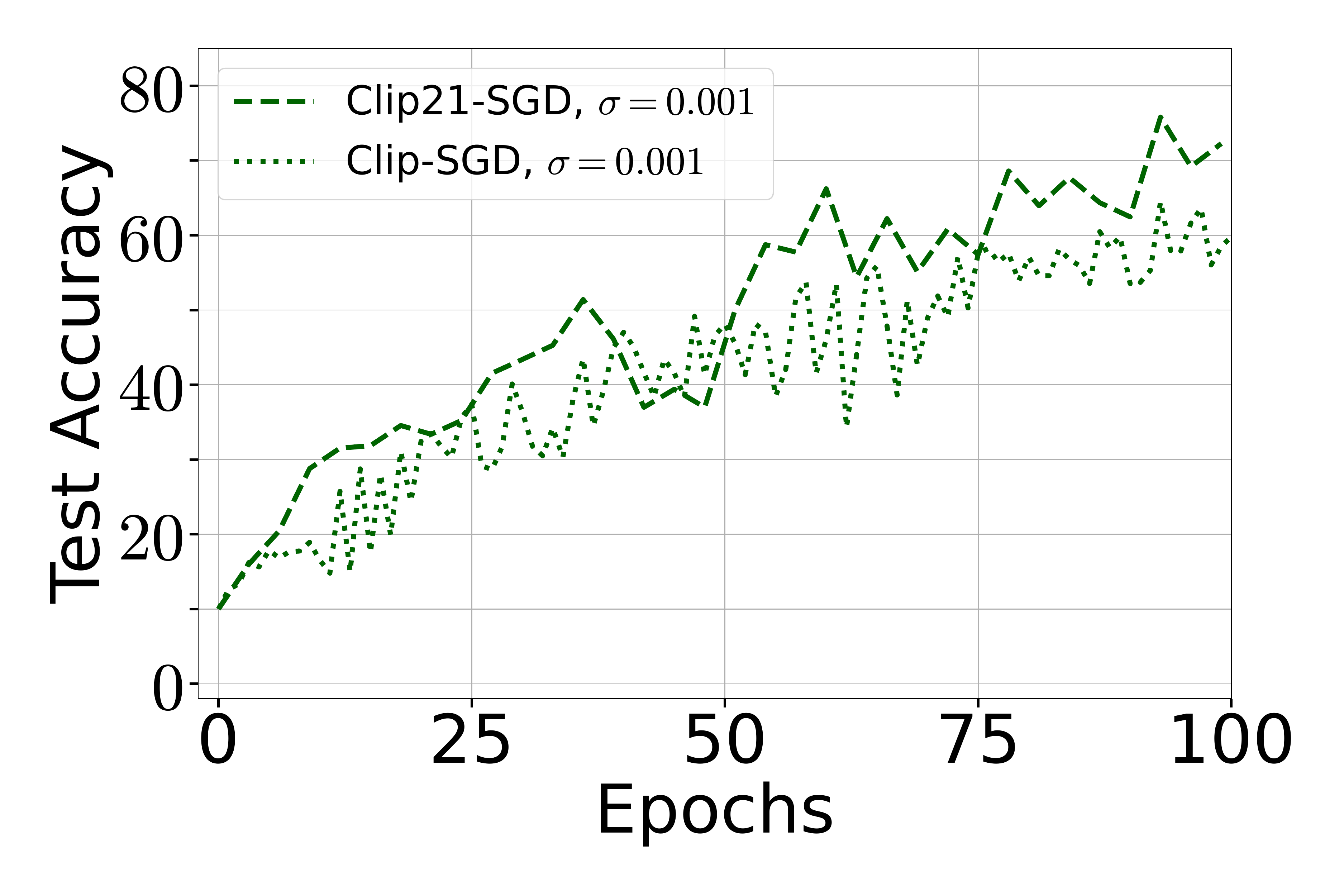} &
			\includegraphics[width=0.23\linewidth]{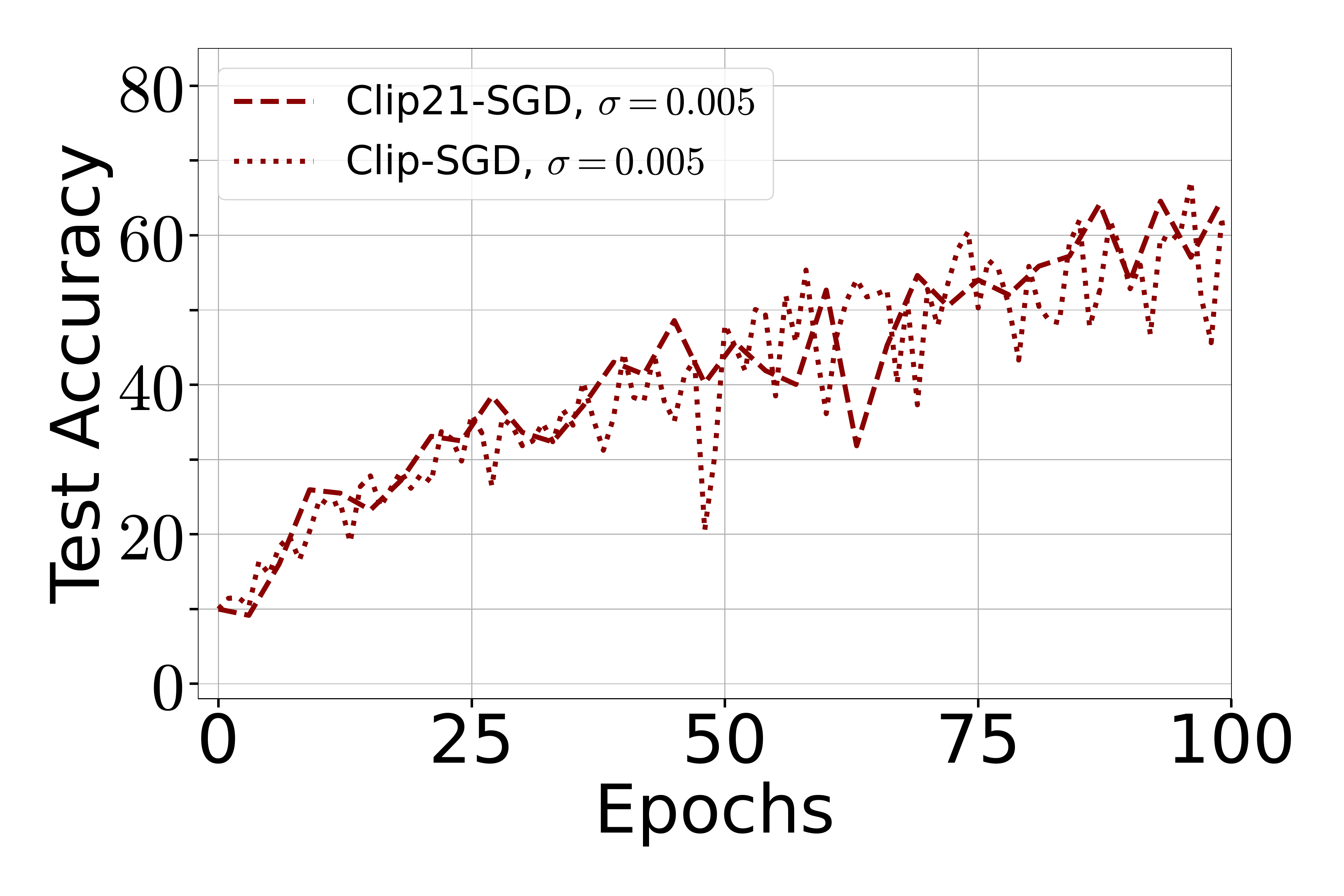} &
			\includegraphics[width=0.23\linewidth]{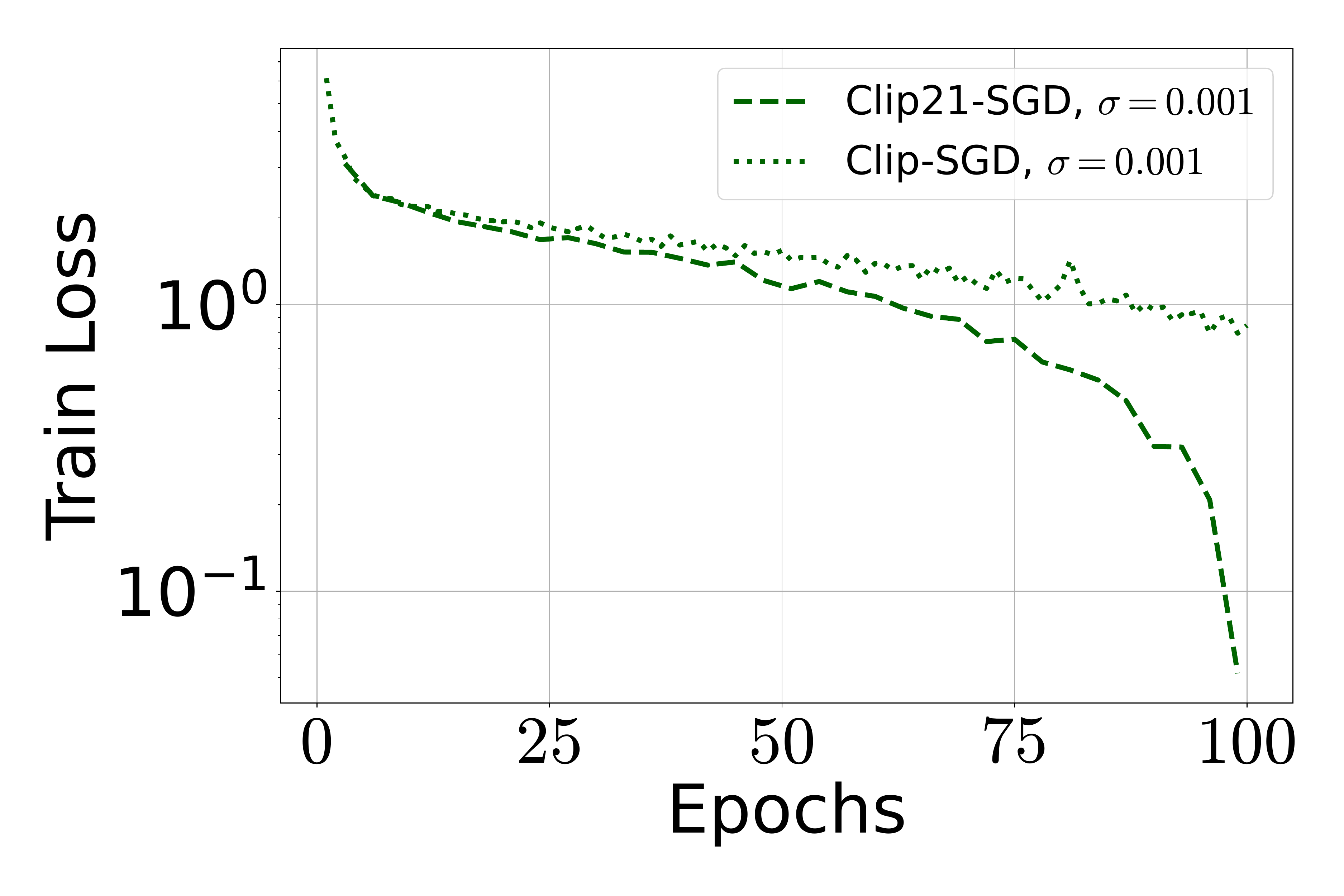}&
			\includegraphics[width=0.23\linewidth]{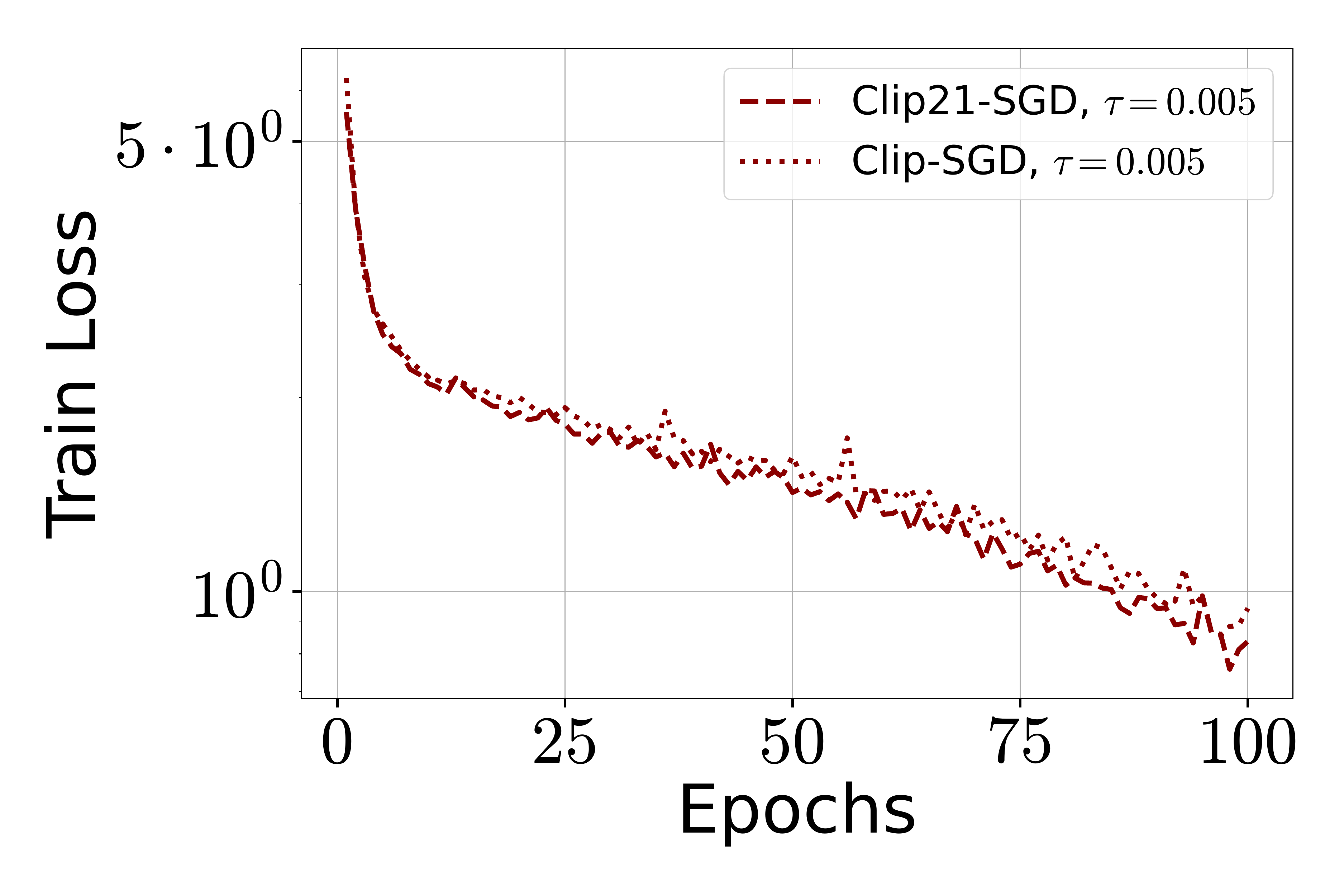}
		\end{tabular}
	\end{center}
	\caption{The performance of \algname{DP-Clip21-SGD} and \algname{DP-Clip-SGD} with fine-tuned stepsizes to  train the VGG11 model on the CIFAR10 dataset.}
	\label{fig:dl_dp}
\end{figure}

\clearpage
\section{Loss of a Constant Factor when Generalizing to Arbitrary $n$}\label{sec:9}

When specializing our multi-node theory for \algname{Clip21-GD} to the $n=1$ case, and compare this to the theory from Section~\ref{subsec:singlenode_clip21}, we pay the price of a smaller maximum stepsize, which affects constants in the convergence rate.

We illustrate this by letting $\phi_0$ to be large, $L=L_{\max}$ and $n=1$ in Lemma~\ref{lem:multinode-clippedEF21}, Proposition~ \ref{prop:multinode-clippedEF21} and Theorem~ \ref{thm:multinode-clippedEF21}. 
Suppose that  $\frac{\norm{ \nabla f(x_0)}}{\tau}$ is close to $1$. Then, Lemma~\ref{lem:multinode-clippedEF21}, Proposition~ \ref{prop:multinode-clippedEF21} and Theorem~\ref{thm:multinode-clippedEF21} with $n=1$ recover the results from Section~\ref{subsec:singlenode_clip21} under the stepsize 
$$
	\gamma \leq \frac{1}{L}\cdot \min\left(  \frac{1-1/\sqrt{2}}{1+\sqrt{1+2\beta_1}} , \frac{\tau^2}{16L \left[ \sqrt{\FF} + \sqrt{\beta_2} \right]^2} \right),
$$
where $\eta,\beta_1,\beta_2,\FF,\GG$ are defined in Theorem \ref{thm:singlenode-clippedEF21}. 

This step-size is $4$ times smaller than that allowed by Theorem~\ref{thm:singlenode-clippedEF21} dedicated to  the single-node setting. The technical difficulty preventing us from  generalizing  the single-node theory to the multi-node version in a tighter manner is related to upper bounding the quantity $\norm{ v_k }$ in the analysis. 
In the multi-node setting, we have 
$$	\norm{v_k }  = \norm{ \nabla f(x_k) +  \frac{1}{n}\sum_{i=1}^n e_k^i  } \leq \norm{ \nabla f(x_k) } + \frac{1}{n}\sum_{i=1}^n { \norm{e_k^i} },
$$
where $$e_k^i = \clip_\tau(\nabla f_i(x_k) - v_{k-1}^i) - (\nabla f_i(x_k) - v_{k-1}^i).$$ This upper bound on $\norm{ v_k }$ is looser than that we could use in the single-node setting, which instead reads: 
$$
	\norm{ v_k } \leq (1-\eta_k)\norm{  v_{k-1} } + \eta_k \norm{  \nabla f(x_k) },
$$
where $$\eta_k = \min \left\{1, \frac{1}{ \norm{  \nabla f(x_k) - v_{k-1} }} \right\}.$$

\clearpage

\section{Basic Inequalities and Useful Lemmas}

\subsection{Basic Inequalities}

Triangle inequality:
\begin{equation}
\label{eq:triangle} \norm{x+y} \leq \norm{x} + \norm{y}, \quad \forall x,y\in \R^d.
\end{equation}

Subadditivity of the square root:
\begin{equation}
\label{eq:sqrt} \sqrt{a+b} \leq \sqrt{a} + \sqrt{b}, \quad \forall a,b\geq 0.
\end{equation}

Young's inequality:
\begin{equation}
\label{eq:Young}\norm{x + y}^2 \leq (1+\theta) \norm{x}^2 + (1 + \theta^{-1}) \norm{y}^2, \quad \forall x,y\in \R^d, \; \theta>0.
\end{equation} 

\subsection{Lemmas}
In this section, we introduce several lemmas that are instrumental to our analysis.

\begin{lemma} Let $f:\R^d\to \R$ be a function with $L$-Lipschitz gradient, with $f_{\inf} \in \R$ being a lower bound on $f$. Then
\begin{equation}
\label{eq:ssss} \frac{1}{2L} \norm{\nabla f(x)}^2 \leq f(x) - f_{\inf}, \quad \forall x\in \R^d.
\end{equation}
\end{lemma}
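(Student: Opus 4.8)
The plan is to use the standard descent lemma implied by $L$-Lipschitzness of the gradient, evaluated at the point obtained by taking one full gradient step, and then invoke the lower bound $f_{\inf}$. Concretely, from \eqref{eq:L-smooth} one obtains the quadratic upper bound $f(y) \leq f(x) + \langle \nabla f(x), y-x\rangle + \frac{L}{2}\norm{y-x}^2$ valid for all $x,y\in\R^d$; this is itself a routine consequence of the fundamental theorem of calculus applied to $t\mapsto f(x+t(y-x))$ together with Cauchy–Schwarz, so I would either cite it as folklore or include the one-line derivation.

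Next I would specialize the bound to $y \eqdef x - \frac{1}{L}\nabla f(x)$, which is precisely the minimizer of the right-hand side over $y$. Plugging in gives $f(y) \leq f(x) - \frac{1}{L}\norm{\nabla f(x)}^2 + \frac{L}{2}\cdot\frac{1}{L^2}\norm{\nabla f(x)}^2 = f(x) - \frac{1}{2L}\norm{\nabla f(x)}^2$. Since $f_{\inf}$ is a global lower bound on $f$, in particular $f_{\inf} \leq f(y)$, and combining the two inequalities yields $f_{\inf} \leq f(x) - \frac{1}{2L}\norm{\nabla f(x)}^2$. Rearranging produces exactly \eqref{eq:ssss}, since $x$ was arbitrary.

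There is essentially no obstacle here: the only ``idea'' is choosing the test point $y$ to be the gradient step (equivalently, minimizing the quadratic majorant), after which the computation is mechanical. If one wishes to avoid even citing the descent lemma, the full self-contained argument writes $f(y) - f(x) = \int_0^1 \langle \nabla f(x + t(y-x)), y-x\rangle\,dt$, adds and subtracts $\langle \nabla f(x), y-x\rangle$, and bounds the remainder by $\int_0^1 \norm{\nabla f(x+t(y-x)) - \nabla f(x)}\norm{y-x}\,dt \leq \int_0^1 L t \norm{y-x}^2\,dt = \tfrac{L}{2}\norm{y-x}^2$; the rest proceeds as above.
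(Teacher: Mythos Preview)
Your argument is correct and is the standard proof of this classical fact. The paper itself states this lemma without proof, treating it as a basic known inequality, so your self-contained derivation via the descent lemma and the gradient-step test point $y = x - \tfrac{1}{L}\nabla f(x)$ is in fact more detailed than what the paper provides.
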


\begin{lemma} Let $f:\R^d\to \R$ be a function with $L$-Lipschitz gradient. Then
\begin{equation}
\label{eq:ssss_ineq} f(x) \leq f(y) + \langle \nabla f(y) , y-x \rangle + \frac{L}{2} \norm{x-y}^2, \quad \forall x\in \R^d.
\end{equation}
\end{lemma}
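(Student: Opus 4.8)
The plan is the textbook argument: integrate $\nabla f$ along the segment joining the two points, then control the deviation from linearity using $L$-Lipschitzness of $\nabla f$ and Cauchy--Schwarz. (I remark that for the inequality to hold the linear term must be $\langle \nabla f(y), x-y\rangle$ rather than $\langle \nabla f(y), y-x\rangle$; I prove that version, which is the one used as a descent lemma elsewhere in the paper.)

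First I would fix $x,y\in\R^d$ and define $g:[0,1]\to\R$ by $g(t)\eqdef f(y+t(x-y))$. Since $\nabla f$ is (Lipschitz, hence) continuous, $g$ is continuously differentiable with $g'(t)=\langle \nabla f(y+t(x-y)),\, x-y\rangle$, so the fundamental theorem of calculus gives
\[
f(x)-f(y) \;=\; g(1)-g(0) \;=\; \int_0^1 \langle \nabla f(y+t(x-y)),\, x-y\rangle \, dt .
\]
Subtracting $\langle \nabla f(y),\, x-y\rangle=\int_0^1 \langle \nabla f(y),\, x-y\rangle\,dt$ from both sides yields
\[
f(x)-f(y)-\langle \nabla f(y),\, x-y\rangle \;=\; \int_0^1 \langle \nabla f(y+t(x-y))-\nabla f(y),\, x-y\rangle \, dt .
\]
Then I would bound the integrand pointwise: by the Cauchy--Schwarz inequality and then the $L$-Lipschitz bound \eqref{eq:L-smooth} applied to the points $y+t(x-y)$ and $y$ (whose difference has norm $t\norm{x-y}$),
\[
\langle \nabla f(y+t(x-y))-\nabla f(y),\, x-y\rangle \;\leq\; \norm{\nabla f(y+t(x-y))-\nabla f(y)}\,\norm{x-y} \;\leq\; L\,t\,\norm{x-y}^2 .
\]
Integrating and using $\int_0^1 t\,dt=\tfrac12$ gives $f(x)-f(y)-\langle \nabla f(y),\, x-y\rangle \leq \tfrac{L}{2}\norm{x-y}^2$, which is the claim.

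There is no genuine obstacle here: this is a standard consequence of gradient Lipschitzness. The only points requiring a modicum of care are (i) the orientation of the segment and the sign of the linear term noted above, and (ii) the justification that $g$ is $C^1$ so that the fundamental theorem of calculus applies, which follows immediately from continuity of $\nabla f$. No additional assumptions on $f$ (convexity, twice-differentiability, a lower bound) are needed.
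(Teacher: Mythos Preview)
The paper does not actually supply a proof of this lemma; it is listed among the ``Basic Inequalities and Useful Lemmas'' as a standard fact and left unproven. Your argument is the classical one (fundamental theorem of calculus along the segment, then Cauchy--Schwarz plus $L$-Lipschitzness of $\nabla f$) and is correct. Your observation about the sign is also right: as written in the statement the linear term $\langle \nabla f(y), y-x\rangle$ has the wrong sign; the valid inequality is $f(x)\le f(y)+\langle \nabla f(y), x-y\rangle+\tfrac{L}{2}\norm{x-y}^2$, which is the form implicitly used (via Lemma~\ref{lemma:descent_li2021page}) elsewhere in the paper.
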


Lemma \ref{lemma:descent_li2021page} gives the bound on the function value after one step of a method of the type: $x_{k+1}\eqdef x_k-\gamma v_k$. We use this lemma to derive the descent inequality for \algname{Clip21-GD} and its variants.

\begin{lemma}[Lemma 2 from \citep{li2021page}]\label{lemma:descent_li2021page}
	Let $f:\R^d\to \R$ be a function with $L$-Lipschitz gradient and let $x_{k+1}\eqdef x_k-\gamma v_k$, where $\gamma >0$ and $v_k\in \R^d$ is any vector. Then
	\begin{align}\label{eq:descent_ineq}
		f(x_{k+1}) \leq f(x_k) - \frac{\gamma}{2}\norm{ \nabla f(x_k) }^2 - \left( \frac{1}{2\gamma} - \frac{L}{2}\right) \norm{ x_{k+1} - x_k }^2 + \frac{\gamma}{2}\norm{ \nabla f(x_k) - v_k }^2.
	\end{align}
\end{lemma}

Finally, we present Lemma \ref{lemma:trick_stepsize_EF21} and \ref{lemma:trick_stepsize_clippedEF21} that are useful for deriving the easy-to-write condition on the step-size that ensures the convergence of \algname{Clip21-GD} and its variants.

\begin{lemma}\label{lemma:trick_stepsize_EF21}
	If the stepsize is chosen to satisfy $$0 < \gamma \leq \frac{2}{L(\beta_1+\sqrt{\beta_1^2+4\beta_2})}$$ for some $\beta_1,\beta_2 > 0$, then $$\frac{1}{\gamma} - \beta_1 L - \gamma \beta_2 L^2 \geq 0.$$
\end{lemma}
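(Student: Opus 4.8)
The plan is to prove the elementary algebraic claim of Lemma~\ref{lemma:trick_stepsize_EF21}: if $0<\gamma\leq \frac{2}{L(\beta_1+\sqrt{\beta_1^2+4\beta_2})}$ with $\beta_1,\beta_2>0$, then $\frac{1}{\gamma}-\beta_1 L-\gamma\beta_2 L^2\geq 0$. The natural route is to treat the quantity $h(\gamma)\eqdef \frac{1}{\gamma}-\beta_1 L-\gamma\beta_2 L^2$ and show it stays nonnegative up to the stated threshold. Multiplying through by $\gamma>0$, the condition $h(\gamma)\geq 0$ is equivalent to $1-\beta_1 L\gamma-\beta_2 L^2\gamma^2\geq 0$, i.e. to $\beta_2 L^2\gamma^2+\beta_1 L\gamma-1\leq 0$. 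So everything reduces to understanding the roots of the quadratic $q(t)\eqdef \beta_2 L^2 t^2+\beta_1 L t-1$ in the variable $t=\gamma$.

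First I would observe that $q$ is an upward-opening parabola (leading coefficient $\beta_2 L^2>0$) with $q(0)=-1<0$, hence it has exactly one positive root $t_+$ and one negative root, and $q(t)\leq 0$ precisely for $t\in[t_-,t_+]$; in particular $q(\gamma)\leq 0$ holds for all $\gamma\in(0,t_+]$. Next I would compute $t_+$ by the quadratic formula:
\begin{equation}
t_+ = \frac{-\beta_1 L+\sqrt{\beta_1^2 L^2+4\beta_2 L^2}}{2\beta_2 L^2} = \frac{-\beta_1 L + L\sqrt{\beta_1^2+4\beta_2}}{2\beta_2 L^2} = \frac{\sqrt{\beta_1^2+4\beta_2}-\beta_1}{2\beta_2 L}.
\end{equation}
Then I would show this equals the threshold in the hypothesis, namely $t_+=\frac{2}{L(\beta_1+\sqrt{\beta_1^2+4\beta_2})}$. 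This is the one slightly non-obvious algebraic step: rationalize $\frac{2}{L(\beta_1+\sqrt{\beta_1^2+4\beta_2})}$ by multiplying numerator and denominator by $\sqrt{\beta_1^2+4\beta_2}-\beta_1$, using $(\sqrt{\beta_1^2+4\beta_2}+\beta_1)(\sqrt{\beta_1^2+4\beta_2}-\beta_1)=(\beta_1^2+4\beta_2)-\beta_1^2=4\beta_2$, which gives $\frac{2(\sqrt{\beta_1^2+4\beta_2}-\beta_1)}{L\cdot 4\beta_2}=\frac{\sqrt{\beta_1^2+4\beta_2}-\beta_1}{2\beta_2 L}=t_+$, as claimed.

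Putting the pieces together: the hypothesis says $0<\gamma\leq t_+$, so by the sign analysis of $q$ we get $q(\gamma)=\beta_2 L^2\gamma^2+\beta_1 L\gamma-1\leq 0$, i.e. $1-\beta_1 L\gamma-\beta_2 L^2\gamma^2\geq 0$; dividing by $\gamma>0$ yields $\frac{1}{\gamma}-\beta_1 L-\gamma\beta_2 L^2\geq 0$, which is the claim. I do not expect any real obstacle here — the only place requiring care is the rationalization identity matching $t_+$ to the stated bound, and the bookkeeping that $\beta_1,\beta_2>0$ guarantees the discriminant is positive and the root is strictly positive so the interval $(0,t_+]$ is nonempty. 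Alternatively, one could bypass the roots entirely by plugging $\gamma=\frac{c}{L}$ with $c\eqdef \frac{2}{\beta_1+\sqrt{\beta_1^2+4\beta_2}}$ directly into $1-\beta_1 L\gamma-\beta_2 L^2\gamma^2 = 1-\beta_1 c-\beta_2 c^2$ and verifying $\beta_2 c^2+\beta_1 c-1\leq 0$ at the endpoint with equality, then invoking monotonicity of $\beta_2 c^2+\beta_1 c$ in $c>0$ to extend to all smaller $\gamma$; I would present whichever is shorter.
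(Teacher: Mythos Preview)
Your proof is correct and follows essentially the same approach as the paper: both reduce the inequality to a quadratic condition and identify the positive root via the quadratic formula. The paper's version is slightly more compact, substituting $\gamma=\frac{1}{\theta L}$ to obtain $\theta^2-\beta_1\theta-\beta_2\geq 0$ directly, whereas you work with the quadratic in $\gamma$ and then rationalize to match the stated threshold; the arguments are equivalent.
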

\begin{proof}
	Let $\gamma = \nicefrac{1}{(\theta L)}$ where $\theta>0$. Then, $\gamma>0$. Also,  $\nicefrac{1}{\gamma} -\beta_1 L - \gamma \beta_2 L^2 \geq 0$ can be  expressed equivalently as 
	$\theta^2 - \beta_1 \theta - \beta_2 \geq 0.$
	This condition holds if $\theta \geq \nicefrac{(\beta_1+\sqrt{\beta_1^2+4\beta_2})}{2}$ or equivalently $0 < \gamma \leq \nicefrac{2}{[L(\beta_1+\sqrt{\beta_1^2+4\beta_2})]}$.
\end{proof}

\begin{lemma}\label{lemma:trick_stepsize_clippedEF21}
If $0< \gamma \leq \frac{4 c_3^2 \tau^2}{c_1^2 L^2 \left[ \sqrt{\FF} + \sqrt{\FF +  C\GG} \right]^2}$ for $C=\frac{4c_2c_3 \tau}{c_1^2 L}$ and for some $c_1,c_2,c_3,\FF,\GG,\tau >0$, then $c_1L\sqrt{\gamma\FF} + c_2 L \GG \gamma \leq c_3\tau$.
\end{lemma}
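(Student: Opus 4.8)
\textbf{Proof plan for Lemma~\ref{lemma:trick_stepsize_clippedEF21}.}

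The statement is purely algebraic: given the hypothesis on $\gamma$, one must verify the inequality $c_1 L \sqrt{\gamma \FF} + c_2 L \GG \gamma \leq c_3 \tau$. The plan is to reduce this to a one-variable problem in $t \eqdef \sqrt{\gamma} > 0$, treat it as a quadratic-in-$t$ inequality, and read off the allowed range of $t$ from the quadratic formula, exactly in the spirit of the proof of Lemma~\ref{lemma:trick_stepsize_EF21} above.

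Concretely, I would first substitute $t = \sqrt{\gamma}$, so that the target inequality becomes $(c_2 L \GG)\, t^2 + (c_1 L \sqrt{\FF})\, t - c_3 \tau \leq 0$, a quadratic in $t$ with positive leading coefficient and negative constant term, hence having exactly one positive root $t_+$; the inequality holds precisely when $0 < t \leq t_+$. Applying the quadratic formula gives
\[
t_+ = \frac{-c_1 L \sqrt{\FF} + \sqrt{c_1^2 L^2 \FF + 4 c_2 c_3 L \GG \tau}}{2 c_2 L \GG}.
\]
Next I would rationalize this expression (multiply numerator and denominator by $c_1 L\sqrt{\FF} + \sqrt{c_1^2 L^2 \FF + 4 c_2 c_3 L\GG\tau}$) to get the more convenient equivalent form
\[
t_+ = \frac{4 c_2 c_3 L \GG \tau}{2 c_2 L \GG\left(c_1 L\sqrt{\FF} + \sqrt{c_1^2 L^2 \FF + 4 c_2 c_3 L \GG \tau}\right)} = \frac{2 c_3 \tau}{c_1 L\sqrt{\FF} + \sqrt{c_1^2 L^2 \FF + 4 c_2 c_3 L \GG\tau}},
\]
which avoids the difference of two positive quantities in the numerator and makes a clean lower bound on $t_+$ (equivalently on $\gamma = t_+^2$) immediate. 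Pulling $c_1 L$ out of the square root in the denominator, writing $4 c_2 c_3 L \GG \tau = c_1^2 L^2 \cdot \frac{4 c_2 c_3 \tau \GG}{c_1^2 L} = c_1^2 L^2 \cdot C \GG$ with $C = \frac{4 c_2 c_3 \tau}{c_1^2 L}$ as defined, and using subadditivity of the square root \eqref{eq:sqrt}, gives $t_+^2 = \gamma_{\max} \geq \frac{4 c_3^2 \tau^2}{c_1^2 L^2 [\sqrt{\FF} + \sqrt{\FF + C\GG}]^2}$, which is exactly the stated hypothesis on $\gamma$. Hence the assumed bound on $\gamma$ implies $t = \sqrt{\gamma} \leq t_+$, which is equivalent to the desired inequality.

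I do not anticipate a genuine obstacle here — the argument is a routine manipulation of a quadratic — but the one place requiring care is the rationalization/square-root-subadditivity step: one must check that the hypothesis as written (with the $[\sqrt{\FF} + \sqrt{\FF + C\GG}]^2$ denominator) is indeed \emph{at most} $t_+^2$, i.e. that replacing $\sqrt{c_1^2 L^2\FF + 4c_2 c_3 L\GG\tau} = c_1 L \sqrt{\FF + C\GG}$ and then bounding $c_1 L\sqrt{\FF} + c_1 L\sqrt{\FF + C\GG} \geq c_1 L \sqrt{\FF} + \sqrt{\text{(the radical)}}$ goes in the correct direction; this is immediate since the two expressions are in fact equal, so no inequality is lost and the hypothesis is precisely $\gamma \le t_+^2$. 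All quantities $c_1, c_2, c_3, \FF, \GG, \tau$ being strictly positive ensures the leading coefficient $c_2 L \GG > 0$ and the discriminant is positive, so the quadratic analysis is valid without edge cases.
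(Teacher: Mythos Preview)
Your proposal is correct and takes essentially the same approach as the paper: both reduce the target inequality to a quadratic in $t=\sqrt{\gamma}$ and read off the admissible range from the positive root. The only cosmetic difference is that the paper invokes Lemma~\ref{lemma:trick_stepsize_EF21} (with $\beta_1=c_1\sqrt{\FF}/(c_3\tau)$, $\beta_2=c_2\GG/(c_3\tau L)$, and $t=\sqrt{\gamma}$ playing the role of $\gamma$ there) rather than rederiving the quadratic formula directly; as you correctly observe at the end, the hypothesis bound is \emph{exactly} $t_+^2$ (no subadditivity inequality is actually used), so your argument is in fact an unwinding of the paper's one-line citation.
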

\begin{proof}
	Using Lemma \ref{lemma:trick_stepsize_EF21} with $\beta_1=c_1\sqrt{\FF}/(c_3\tau)$ and $\beta_2=c_2\GG/(c_3\tau L)$, we prove the result. 
\end{proof}

\clearpage
\section{Proof of Lemmas in Section~\ref{sec:average}}

\subsection{Proof of Lemma~\ref{lem:a^i}}
Subtracting $a^i$ from both sides of Step 3 of Algorithm~\ref{alg:Clip21-Avg}, and applying norms, we get
\begin{eqnarray*}
\norm{a^i - v_k^i} &= & \norm{v_{k-1}^i + \clip_\tau(a^i - v_{k-1}^i) - a^i} \\
&=& \norm{\clip_\tau(a^i - v_{k-1}^i) - ( a^i - v_{k-1}^i )}.
\end{eqnarray*}
In view of Lemma~\ref{lem:clip}, parts (ii) and (iii),  the last expression is 0 if 
$\norm{a^i - v_{k-1}^i} \leq \tau$, and is equal to $\norm{a^i - v_{k-1}^i}-\tau$ otherwise. The rest follows.

\subsection{Proof of Theorem~\ref{thm:Clip21-Avg}
}
By convexity of the norm, the quantity
$\norm{ v_k - a }  =  \norm{\frac{1}{n} \sum_{i=1}^n v^i_{k} - \frac{1}{n} \sum_{i=1}^n a^i } 
$ can be upper bounded by  $\frac{1}{n} \sum_{i=1}^n \norm{ v^i_{k} - a^i }$. It remains to apply Lemma~\ref{lem:a^i}.

\clearpage
\section{Clip21-GD in the $n=1$ Regime}

We will now analyze the single-node  version of \algname{Clip21-GD} (Algorithm \ref{alg:Clip21-GD} for $n=1$), i.e., 
\begin{equation}\label{eq:8y98fd_08yfd}x_{k+1} = x_k - \gamma   v_k,\end{equation}
where \begin{equation}\label{eq:yf89gfdfd}v_{k} = v_{k-1} + \clip_\tau( \nabla f(x_k) - v_{k-1} ).\end{equation} Note that this can be rewritten into the simpler form
\begin{align}\label{eqn:ineq_v_k}
	v_k = (1-\eta_k)v_{k-1} + \eta_k \nabla f(x_k),
\end{align}
where 
\begin{equation}
\label{eq:98y98dg087fd+09ufd}
\eta_k \eqdef \min\left\{ 1, \frac{\tau}{\norm{\nabla f(x_k)-v_{k-1}}}  \right\}, \end{equation} 
and the ratio $\frac{\tau}{\norm{\nabla f(x_k)-v_{k-1}}} $ is interpreted as $+\infty$ if $\nabla f(x_k)=v_{k-1}$, which means that $\eta_k=1$ in that case. Note that this means that
\begin{equation}\label{eq:89hfd8f_98yfd9}
\norm{\nabla f(x_k) - v_k}^2 \overset{\eqref{eqn:ineq_v_k}}{=} (1-\eta_k)^2 \norm{\nabla f(x_k) - v_{k-1}}^2.
\end{equation}

Recall that the Lyapunov function was defined as
\begin{equation} \label{eq:Lyapunov-89d}\phi_k \eqdef f(x_k)-f_{\inf} +  \frac{\gamma A_\eta}{2} \norm{ \nabla f(x_k) - v_k }^2,\end{equation} where $A_\eta \eqdef \frac{1}{1-(1-\eta)\left(1-\frac{\eta}{2} \right) }$ and $\eta \eqdef \min \left\{1,\frac{\tau}{\norm{\nabla f(x_0)}} \right\}$. Also let $F_0\eqdef f(x_0)- f_{\inf}$ and $\GG \eqdef \max\left\{ 0, \norm{ \nabla f(x_0) } - \tau \right\}$.

\subsection{Claims}

We first establish several simple but helpful results.
\begin{claim}\label{claim:v_minus_1_n_1} Assume that $v_{-1}=0$. Then
\begin{equation}\label{eq:98ifd_8hfd9-1}v_0 = \clip_\tau( \nabla f(x_0)) , \end{equation}	
\begin{equation}\label{eq:98ifd_8hfd9-2}\norm{v_0} = \min \left\{ \tau, \norm{\nabla f(x_0)} \right\} ,
\end{equation} 
and
\begin{equation}\label{eq:98ifd_8hfd9-3}
	\norm{ \nabla f(x_0) - v_0 }  = \max\left\{0,\norm{ \nabla f(x_0) } - \tau \right\}.
\end{equation}	
If we additionally assume that  $\gamma \leq \frac{2}{L}$, then 
\begin{equation}\label{eq:98ifd_8hfd9-4}
	\norm{ v_0 }  \leq \sqrt{\frac{4}{\gamma} \phi_0}.
\end{equation}	
\end{claim}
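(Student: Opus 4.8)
\textbf{Proof plan for Claim~\ref{claim:v_minus_1_n_1}.}
The first three identities are immediate from the recursion~\eqref{eq:yf89gfdfd} and Lemma~\ref{lem:clip}. Since $v_{-1}=0$, the update gives $v_0 = 0 + \clip_\tau(\nabla f(x_0) - 0) = \clip_\tau(\nabla f(x_0))$, which is~\eqref{eq:98ifd_8hfd9-1}. From the definition~\eqref{eq:clip} of the clipping operator, $\norm{\clip_\tau(x)} = \min\{\tau, \norm{x}\}$, giving~\eqref{eq:98ifd_8hfd9-2} with $x = \nabla f(x_0)$. For~\eqref{eq:98ifd_8hfd9-3}, write $\norm{\nabla f(x_0) - v_0} = \norm{\nabla f(x_0) - \clip_\tau(\nabla f(x_0))}$ and apply parts~(ii) and~(iii) of Lemma~\ref{lem:clip}: the quantity is $0$ if $\norm{\nabla f(x_0)} \leq \tau$ and equals $\norm{\nabla f(x_0)} - \tau$ otherwise, which is exactly $\max\{0, \norm{\nabla f(x_0)} - \tau\}$.

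For the quantitative bound~\eqref{eq:98ifd_8hfd9-4}, the plan is to bound $\norm{v_0}$ through the Lyapunov function $\phi_0$. Unpacking~\eqref{eq:Lyapunov-89d} at $k=0$, we have $\phi_0 = f(x_0) - f_{\inf} + \frac{\gamma A_\eta}{2}\norm{\nabla f(x_0) - v_0}^2 = F_0 + \frac{\gamma A_\eta}{2}\GG^2$, where I used~\eqref{eq:98ifd_8hfd9-3} and the definition $\GG = \max\{0, \norm{\nabla f(x_0)} - \tau\}$. The key observation is the elementary inequality $\norm{v_0} = \min\{\tau, \norm{\nabla f(x_0)}\} \leq \norm{\nabla f(x_0)} \leq (\norm{\nabla f(x_0)} - \tau)_+ + \tau \leq \GG + \tau$; more usefully, splitting on whether clipping is active, either $\norm{v_0} = \norm{\nabla f(x_0)} \le \tau$, or $\norm{v_0} = \tau \le \norm{\nabla f(x_0)} = \GG + \tau$. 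I would then relate both $\tau$ and $\GG$ to $\phi_0$ and $\gamma$: since $\gamma \le 2/L$ implies $\frac{1}{2L} \ge \frac{\gamma}{4}$, the descent-type bound~\eqref{eq:ssss} gives $\frac{\gamma}{4}\norm{\nabla f(x_0)}^2 \le \frac{1}{2L}\norm{\nabla f(x_0)}^2 \le F_0 \le \phi_0$, hence $\norm{\nabla f(x_0)} \le \sqrt{4\phi_0/\gamma}$ in the non-clipping case; in the clipping case one bounds $\tau \leq \norm{\nabla f(x_0)}$ and $\GG^2 \leq$ (something controlled by $\phi_0/\gamma$ via the $\frac{\gamma A_\eta}{2}\GG^2$ term, using $A_\eta \ge 1$), so that $\norm{v_0} = \tau \le \norm{\nabla f(x_0)} \le \sqrt{4\phi_0/\gamma}$ again, or else directly $\tau^2 \le \frac{2}{\gamma A_\eta}\phi_0 \le \frac{2}{\gamma}\phi_0 \le \frac{4}{\gamma}\phi_0$.

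The cleanest route, and the one I would actually write, is to avoid cases: observe $\norm{v_0} \le \norm{\nabla f(x_0)}$ always, and $\frac{\gamma}{4}\norm{\nabla f(x_0)}^2 \le \frac{1}{2L}\norm{\nabla f(x_0)}^2 \le f(x_0) - f_{\inf} = F_0 \le \phi_0$ using $\gamma \le 2/L$ and~\eqref{eq:ssss}, therefore $\norm{v_0}^2 \le \norm{\nabla f(x_0)}^2 \le \frac{4}{\gamma}\phi_0$, which is~\eqref{eq:98ifd_8hfd9-4}. The main (minor) obstacle is just making sure the constant $4$ rather than $2$ is what drops out, which is exactly why the hypothesis is $\gamma \le 2/L$ and not $\gamma \le 1/L$; no serious difficulty is expected, as the whole claim is a bookkeeping step feeding into the descent lemma.
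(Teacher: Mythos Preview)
Your proposal is correct and, in its final ``cleanest route,'' essentially identical to the paper's proof: both chain $\norm{v_0}\le\norm{\nabla f(x_0)}$, then use \eqref{eq:ssss} and $\gamma\le 2/L$ to get $\norm{\nabla f(x_0)}^2\le 2L\FF\le 2L\phi_0\le \frac{4}{\gamma}\phi_0$. The exploratory case analysis in your middle paragraph is unnecessary (and a bit wobbly), but you rightly discard it.
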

\begin{proof}
Equation \eqref{eq:98ifd_8hfd9-1} follows from
\begin{eqnarray*}
	v_0 & \overset{\eqref{eq:yf89gfdfd}}{=} &  v_{-1} + \clip_\tau( \nabla f(x_0) - v_{-1} ) \quad = \quad	\clip_\tau(\nabla f(x_0)).
\end{eqnarray*}
Equation \eqref{eq:98ifd_8hfd9-2} follows from \eqref{eq:98ifd_8hfd9-1}. Equation \eqref{eq:98ifd_8hfd9-3} follows from \eqref{eq:98ifd_8hfd9-1} and Lemma~\ref{lem:clip}:
 \[\norm{ \nabla f(x_0) - v_0 }  \overset{\eqref{eq:98ifd_8hfd9-1}}{=} \norm{ \nabla f(x_0) - \clip_\tau(\nabla f(x_0))  }  \overset{(\text{Lemma~\ref{lem:clip}(ii)-(iii)})}{=} \max\left\{0,\norm{ \nabla f(x_0) } - \tau\right\}.\]
 Finally, \eqref{eq:98ifd_8hfd9-4} follows from
  \begin{eqnarray*}
	\norm{v_0}&\overset{\eqref{eq:98ifd_8hfd9-2}}{=}& \min \left\{ \tau, \norm{\nabla f(x_0)} \right\}  \\
	&\leq & \norm{\nabla f(x_0)}  \\& \overset{\eqref{eq:ssss}}{\leq} & \sqrt{2L\left(f(x_0) - f_{\rm inf}\right)} \\
	& \overset{\eqref{eq:Lyapunov-89d}}{\leq} &  \sqrt{2L \phi_0} \\
	&\leq &  \sqrt{\frac{4}{\gamma} \phi_0},
\end{eqnarray*}
where the last inequality follows from
the assumption $\gamma \leq \frac{2}{L}$.
\end{proof}

\begin{claim} \label{claim:98u9x8u=09d} Fix $k\geq 0$. Then
\begin{equation}\label{eq:888999007567}
	\norm{ \nabla f(x_{k+1}) - v_k }  \leq   \max\left\{0,\norm{ \nabla f(x_k) - v_{k-1}} - \tau \right\} + L \gamma \norm{ v_k } .
\end{equation}
\end{claim}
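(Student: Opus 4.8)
The goal is to bound $\norm{\nabla f(x_{k+1}) - v_k}$ in terms of the previous error $\norm{\nabla f(x_k) - v_{k-1}}$ and a term controlled by $\gamma \norm{v_k}$. The plan is to insert $\nabla f(x_k)$ as an intermediate point and split via the triangle inequality:
\begin{equation*}
\norm{\nabla f(x_{k+1}) - v_k} \leq \norm{\nabla f(x_{k+1}) - \nabla f(x_k)} + \norm{\nabla f(x_k) - v_k}.
\end{equation*}
For the first term, I would use $L$-Lipschitzness of $\nabla f$ together with the update rule $x_{k+1} - x_k = -\gamma v_k$ from \eqref{eq:8y98fd_08yfd}, which gives $\norm{\nabla f(x_{k+1}) - \nabla f(x_k)} \leq L \norm{x_{k+1} - x_k} = L\gamma \norm{v_k}$. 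That accounts for the $L\gamma\norm{v_k}$ summand in \eqref{eq:888999007567}.

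For the second term, $\norm{\nabla f(x_k) - v_k}$, I would invoke the structure of the \algname{Clip21-GD} update \eqref{eq:yf89gfdfd}: $v_k = v_{k-1} + \clip_\tau(\nabla f(x_k) - v_{k-1})$, so that
\begin{equation*}
\nabla f(x_k) - v_k = (\nabla f(x_k) - v_{k-1}) - \clip_\tau(\nabla f(x_k) - v_{k-1}).
\end{equation*}
Applying Lemma~\ref{lem:clip} parts (ii)--(iii) with the argument $x = \nabla f(x_k) - v_{k-1}$: if $\norm{\nabla f(x_k) - v_{k-1}} \leq \tau$ this difference has norm $0$, and otherwise it has norm $\norm{\nabla f(x_k) - v_{k-1}} - \tau$. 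Both cases are captured by $\max\{0, \norm{\nabla f(x_k) - v_{k-1}} - \tau\}$, which is exactly the first summand in \eqref{eq:888999007567}. Combining the two bounds yields the claim.

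This is essentially a two-line argument, so there is no serious obstacle — the only things to be careful about are (i) correctly identifying the argument of $\clip_\tau$ in the error-feedback update so that the leftover error is the ``clip minus identity'' quantity controlled by Lemma~\ref{lem:clip}, and (ii) using the right iterate relation $x_{k+1} - x_k = -\gamma v_k$ (as opposed to $-\gamma v_{k-1}$ or an averaged quantity) when bounding the gradient difference via smoothness. Once those are in place the triangle inequality closes everything.
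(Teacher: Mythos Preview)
Your proposal is correct and follows essentially the same approach as the paper: triangle inequality to split off $\norm{\nabla f(x_k)-v_k}$ and $\norm{\nabla f(x_{k+1})-\nabla f(x_k)}$, then \eqref{eq:yf89gfdfd} with Lemma~\ref{lem:clip}(ii)--(iii) for the first piece and $L$-smoothness together with \eqref{eq:8y98fd_08yfd} for the second. There is no substantive difference.
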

\begin{proof}
Indeed,
\begin{eqnarray*}
	 \norm{ \nabla f(x_{k+1}) - v_k } & 	 = 	 &\norm{\nabla f(x_k) - v_k  +  \nabla f(x_{k+1}) - \nabla f(x_k) }  \\
	& \overset{\eqref{eq:triangle}}{\leq} & 	 \norm{ \nabla f(x_k) - v_k } + \norm{ \nabla f(x_{k+1}) -\nabla f(x_k) }  \\
	& \overset{\eqref{eq:yf89gfdfd}}{=} & \norm{ (\nabla f(x_k) - v_{k-1}) - \clip_\tau(\nabla f(x_k) - v_{k-1}) } \\
 &&+ \norm{ \nabla f(x_{k+1}) -\nabla f(x_k) } 	 \\
	&\overset{(\text{Lemma~\ref{lem:clip}(ii)-(iii)})}{=} & \max\left\{0,\norm{ \nabla f(x_k) - v_{k-1}} - \tau \right\}  + \norm{ \nabla f(x_{k+1}) -\nabla f(x_k) } 	\\
	& \overset{\eqref{eq:L-smooth}}{\leq}   & \max\left\{0,\norm{ \nabla f(x_k) - v_{k-1}} - \tau \right\} + L \norm{ x_{k+1} - x_k } \\
	&\overset{\eqref{eq:8y98fd_08yfd}}{=}&  \max\left\{0,\norm{ \nabla f(x_k) - v_{k-1}} - \tau \right\}  + L \gamma \norm{ v_k }.
\end{eqnarray*}

\end{proof}

\begin{claim}\label{claim:fu9o} Fix $k\geq 1$. If $\frac{\gamma}{2}\norm{\nabla f(x_{k-1}) }^2 \leq \phi_0$,  $ \frac{\gamma}{4}\| v_{k-1}\|^2 \leq \phi_0$ and $\gamma \leq \frac{1- \frac{1}{\sqrt{2}}}{L}$, then
\begin{equation}
\label{eq:v_k-bound_09u0fd9uf}
	\norm{v_{k}} \leq  \sqrt{\frac{4}{\gamma}\phi_0}.
\end{equation}
\end{claim}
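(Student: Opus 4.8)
The plan is to bound $\norm{v_k}$ by combining the recursive structure of the error-feedback sequence with the hypotheses, proceeding in the same spirit as Claim~\ref{claim:98u9x8u=09d}. First I would use the identity \eqref{eqn:ineq_v_k}, which writes $v_k = (1-\eta_k)v_{k-1} + \eta_k \nabla f(x_k)$ with $\eta_k\in(0,1]$, so that by the triangle inequality and convexity of the norm,
\begin{align*}
\norm{v_k} \leq (1-\eta_k)\norm{v_{k-1}} + \eta_k\norm{\nabla f(x_k)}.
\end{align*}
Since this is a convex combination of $\norm{v_{k-1}}$ and $\norm{\nabla f(x_k)}$, it is at most $\max\{\norm{v_{k-1}},\norm{\nabla f(x_k)}\}$. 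The hypothesis $\frac{\gamma}{4}\norm{v_{k-1}}^2 \leq \phi_0$ gives $\norm{v_{k-1}} \leq \sqrt{\frac{4}{\gamma}\phi_0}$ directly, so it remains only to bound $\norm{\nabla f(x_k)}$ by the same quantity.

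For the gradient term, the natural route is $L$-smoothness plus the hypothesis on $\nabla f(x_{k-1})$. Write $\norm{\nabla f(x_k)} \leq \norm{\nabla f(x_{k-1})} + \norm{\nabla f(x_k) - \nabla f(x_{k-1})} \leq \norm{\nabla f(x_{k-1})} + L\gamma\norm{v_{k-1}}$, using \eqref{eq:L-smooth} and the update $x_k = x_{k-1} - \gamma v_{k-1}$. Then $\norm{\nabla f(x_{k-1})} \leq \sqrt{\frac{2}{\gamma}\phi_0}$ from the first hypothesis, and $L\gamma\norm{v_{k-1}} \leq L\gamma\sqrt{\frac{4}{\gamma}\phi_0} = 2L\sqrt{\gamma}\sqrt{\frac{1}{\gamma}\phi_0}\cdot\ldots$ — more carefully, $L\gamma\norm{v_{k-1}} = L\sqrt{\gamma}\cdot\sqrt{\gamma}\norm{v_{k-1}} \leq L\sqrt{\gamma}\cdot 2\sqrt{\phi_0}$. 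So $\norm{\nabla f(x_k)} \leq \sqrt{2}\sqrt{\frac{\phi_0}{\gamma}} + 2L\sqrt{\gamma}\sqrt{\phi_0} = \sqrt{\frac{\phi_0}{\gamma}}\left(\sqrt{2} + 2L\gamma\right)$. The stepsize bound $\gamma \leq \frac{1-1/\sqrt{2}}{L}$ gives $2L\gamma \leq 2(1-1/\sqrt{2}) = 2 - \sqrt{2}$, hence $\sqrt{2} + 2L\gamma \leq 2$, so $\norm{\nabla f(x_k)} \leq 2\sqrt{\frac{\phi_0}{\gamma}} = \sqrt{\frac{4}{\gamma}\phi_0}$, exactly the target.

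Putting the two bounds together, $\norm{v_k} \leq \max\{\norm{v_{k-1}},\norm{\nabla f(x_k)}\} \leq \sqrt{\frac{4}{\gamma}\phi_0}$, which is \eqref{eq:v_k-bound_09u0fd9uf}. The main obstacle — really the only delicate point — is choosing the right intermediate bound on $\norm{\nabla f(x_k)}$ so that the constants align: one must decide whether to bound $\norm{\nabla f(x_{k-1})}$ via the given hypothesis $\frac{\gamma}{2}\norm{\nabla f(x_{k-1})}^2 \leq \phi_0$ (giving the factor $\sqrt 2$) and then verify that the stepsize restriction is precisely calibrated so that the smoothness perturbation term $2L\gamma\sqrt{\phi_0}$ fits under the remaining budget $2-\sqrt 2$ times $\sqrt{\phi_0/\gamma}$. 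Everything else is the triangle inequality, \eqref{eq:L-smooth}, the update rule \eqref{eq:8y98fd_08yfd}, and the decomposition \eqref{eqn:ineq_v_k}; no new ideas beyond those already used in Claim~\ref{claim:98u9x8u=09d} and Claim~\ref{claim:v_minus_1_n_1} are needed.
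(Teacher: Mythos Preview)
Your proof is correct and follows essentially the same approach as the paper's: both start from the convex-combination identity \eqref{eqn:ineq_v_k}, bound $\norm{\nabla f(x_k)}$ via $\norm{\nabla f(x_{k-1})} + L\gamma\norm{v_{k-1}}$ using smoothness and the update rule, and then verify that the stepsize condition $L\gamma \leq 1-1/\sqrt{2}$ makes the constants close. The only cosmetic difference is that you pass through $\max\{\norm{v_{k-1}},\norm{\nabla f(x_k)}\}$ while the paper keeps the convex combination explicit and checks $1-\eta_k + \eta_k L\gamma + \eta_k/\sqrt{2} \leq 1$; both routes reduce to the identical numerical condition.
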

\begin{proof}
\begin{eqnarray}
	\norm{v_{k}} &\overset{\eqref{eqn:ineq_v_k}+\eqref{eq:triangle}}{\leq} & (1-\eta_{k})\|v_{k-1}\| + \eta_k\|\nabla f(x_{k})\| \notag \\
	& \overset{\eqref{eq:triangle} }{\leq} & (1-\eta_{k})\|v_{k-1}\| +  \eta_k \|\nabla f(x_{k})-\nabla f(x_{k-1})\| +\eta_k \|\nabla f(x_{k-1})\|  \notag \\
	& \overset{\eqref{eq:L-smooth}}{\leq}   &  (1-\eta_{k})\|v_{k-1}\| + \eta_k L  \norm{x_k - x_{k-1}}  +  \eta_k \norm{\nabla f(x_{k-1})} \notag  \\	& \overset{\eqref{eq:8y98fd_08yfd}}{=}   &  (1-\eta_{k})\|v_{k-1}\| + \eta_k L \gamma \norm{v_{k-1}}  +  \eta_k \|\nabla f(x_{k-1})\| \notag  \\	
	& = & (1-\eta_k + \eta_k L\gamma) \norm{ v_{k-1} } + \eta_k \norm{ \nabla f(x_{k-1}) } \notag \\
	& \overset{(*)}{\leq} & (1-\eta_k + \eta_k L\gamma) \sqrt{\frac{4}{\gamma}\phi_0} + \eta_k \sqrt{\frac{2}{\gamma}\phi_0} \notag \\
	& = & \left(1-\eta_k + \eta_k L\gamma + \frac{\eta_k}{\sqrt{2}} \right) \sqrt{\frac{4}{\gamma}\phi_0} \notag \\
	& \overset{(**)}{\leq} &  \sqrt{\frac{4}{\gamma}\phi_0}, \label{eq:09u09ud099fdfd}
\end{eqnarray}
where in (*) we have used the first two assumptions, and in (**) we have used the bound on $\gamma$.
\end{proof}

\begin{claim} \label{claim:080fd-=9090} Assume that $v_{-1}=0$. If 
\begin{align}\label{eqn:stepsize_clippedEF21_singlenode_on_closedform}
	0 \leq \gamma \leq \frac{\tau^2}{4L^2 \left[ \sqrt{\FF} + \sqrt{\FF + \frac{\tau\GG}{\sqrt{2\eta} L}} \right]^2},
\end{align}	
where $\eta\eqdef \min\left\{1,\frac{\tau}{\norm{\nabla f(x_0)}} \right\}$, $\FF \eqdef f(x_0)-f_{\rm inf}$ and $\GG \eqdef \max \left\{ 0,\norm{ \nabla f(x_0) } - \tau \right \}$, 
then $$2 L  \sqrt{\gamma\phi_0} \leq \frac{\tau}{2}.$$
\end{claim}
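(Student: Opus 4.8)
The plan is to bound $\phi_0$ from above in terms of $\FF$ and $\GG$, and then show that the stepsize restriction \eqref{eqn:stepsize_clippedEF21_singlenode_on_closedform} forces $2L\sqrt{\gamma\phi_0}\le \tau/2$. First I would recall the definition $\phi_0 = f(x_0)-f_{\inf} + \frac{\gamma A_\eta}{2}\norm{\nabla f(x_0)-v_0}^2$. Using Claim~\ref{claim:v_minus_1_n_1}, and in particular \eqref{eq:98ifd_8hfd9-3}, we have $\norm{\nabla f(x_0)-v_0} = \max\{0,\norm{\nabla f(x_0)}-\tau\} = \GG$, so $\phi_0 = \FF + \frac{\gamma A_\eta}{2}\GG^2$. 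The factor $A_\eta = \frac{1}{1-(1-\eta)(1-\eta/2)}$ is a nuisance: I would bound it from above in terms of $\eta$. A short computation gives $1-(1-\eta)(1-\eta/2) = \frac{3\eta}{2}-\frac{\eta^2}{2} = \frac{\eta}{2}(3-\eta) \ge \eta$ for $\eta \in (0,1]$, hence $A_\eta \le \frac{1}{\eta}$. Therefore $\phi_0 \le \FF + \frac{\gamma}{2\eta}\GG^2$.

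Next I would take square roots and use subadditivity \eqref{eq:sqrt}: $\sqrt{\phi_0} \le \sqrt{\FF} + \sqrt{\frac{\gamma}{2\eta}}\,\GG$. Multiplying by $2L$ gives
\[
2L\sqrt{\gamma\phi_0} \le 2L\sqrt{\gamma}\,\sqrt{\FF} + 2L\sqrt{\gamma}\cdot\sqrt{\tfrac{\gamma}{2\eta}}\,\GG = 2L\sqrt{\gamma\FF} + \frac{2L\GG}{\sqrt{2\eta}}\,\gamma.
\]
This is now exactly of the form $c_1 L\sqrt{\gamma\FF} + c_2 L\GG\gamma$ with $c_1 = 2$ and $c_2 = \frac{2}{\sqrt{2\eta}} = \sqrt{2/\eta}$, and the target bound $\tau/2$ corresponds to $c_3 = 1/2$. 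I would then invoke Lemma~\ref{lemma:trick_stepsize_clippedEF21}: it says $c_1 L\sqrt{\gamma\FF} + c_2 L\GG\gamma \le c_3\tau$ whenever $0<\gamma \le \frac{4c_3^2\tau^2}{c_1^2 L^2[\sqrt{\FF}+\sqrt{\FF + C\GG}]^2}$ with $C = \frac{4c_2 c_3\tau}{c_1^2 L}$. Plugging in $c_1=2$, $c_2=\sqrt{2/\eta}$, $c_3=1/2$ gives $C = \frac{4\cdot\sqrt{2/\eta}\cdot(1/2)\,\tau}{4L} = \frac{\sqrt{2/\eta}\,\tau}{2L} = \frac{\tau}{\sqrt{2\eta}\,L}$, matching the quantity $\frac{\tau\GG}{\sqrt{2\eta}L}$ appearing inside the square root in \eqref{eqn:stepsize_clippedEF21_singlenode_on_closedform}, and the prefactor becomes $\frac{4\cdot(1/4)\tau^2}{4L^2[\cdots]^2} = \frac{\tau^2}{4L^2[\cdots]^2}$, which is exactly the right-hand side of \eqref{eqn:stepsize_clippedEF21_singlenode_on_closedform}. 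Hence the hypothesis on $\gamma$ gives $2L\sqrt{\gamma\phi_0}\le \tau/2$, as claimed.

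The only mildly delicate points are (a) getting the clean bound $A_\eta \le 1/\eta$ right — one must check $1-(1-\eta)(1-\eta/2)\ge \eta$ holds on the whole range $\eta\in(0,1]$, which reduces to $\eta - \eta^2/2 \ge 0$, true since $\eta\le 1$; and (b) carefully matching constants so that the stepsize condition produced by Lemma~\ref{lemma:trick_stepsize_clippedEF21} coincides exactly with \eqref{eqn:stepsize_clippedEF21_singlenode_on_closedform} rather than merely being of the same order. I expect step (b), the bookkeeping of constants $c_1,c_2,c_3,C$, to be the main (though routine) obstacle; everything else is a direct chain of the already-established Claim~\ref{claim:v_minus_1_n_1}, the inequality \eqref{eq:sqrt}, and Lemma~\ref{lemma:trick_stepsize_clippedEF21}. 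One subtlety to keep in mind: in \eqref{eqn:stepsize_clippedEF21_singlenode_on_closedform} the quantity under the inner root is written $\FF + \frac{\tau\GG}{\sqrt{2\eta}L}$, so I would make sure the $\GG$ there is the same $\GG = \max\{0,\norm{\nabla f(x_0)}-\tau\}$ used in the bound on $\phi_0$, which it is by the definitions stated just before the claim.
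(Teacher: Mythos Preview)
Your proposal is correct and follows essentially the same route as the paper's own proof: expand $\phi_0$ via \eqref{eq:98ifd_8hfd9-3} from Claim~\ref{claim:v_minus_1_n_1}, bound $A_\eta\le 1/\eta$, apply subadditivity \eqref{eq:sqrt}, and finish with Lemma~\ref{lemma:trick_stepsize_clippedEF21} using exactly $c_1=2$, $c_2=\sqrt{2/\eta}$, $c_3=1/2$. The only cosmetic difference is that the paper bounds $A_\eta\le 1/\eta$ via the one-line observation $(1-\eta)(1-\eta/2)\le 1-\eta$, whereas you expand $1-(1-\eta)(1-\eta/2)=\tfrac{\eta}{2}(3-\eta)\ge \eta$; both are valid and yield the same inequality.
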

\begin{proof}
This takes a bit of effort since $\phi_0$ depends on $\gamma$ as well. In particular,
\begin{eqnarray*}
	2L \sqrt{\gamma \phi_0}
	& \overset{\eqref{eq:Lyapunov-89d}}{=} & 2L \sqrt{\gamma \left(f(x_0)-f_{\inf}\right) + \frac{\gamma^2 A_\eta}{2} \norm{ \nabla f(x_0) - v_0 }^2 } \\
	& \overset{\eqref{eq:98ifd_8hfd9-3}}{=}  & 2L \sqrt{\gamma \FF + \frac{\gamma^2 A_\eta}{2} \GG^2 } \\
	& \leq  & 2L \sqrt{\gamma \FF + \frac{\gamma^2}{2\eta}  \GG^2 },
\end{eqnarray*}	
where we reach the last inequality by the fact that $A_\eta = \frac{1}{1-(1-\eta)\left(1- \frac{\eta}{2} \right)} \leq \frac{1}{1-(1-\eta)}=\frac{1}{\eta}$. By subadditivity of $t\mapsto \sqrt{t}$, we therefore get 
\begin{eqnarray*}
	2L \sqrt{\gamma \phi_0}
	& \overset{\eqref{eq:sqrt}}{\leq} & 2L \sqrt{\FF} \sqrt{\gamma} + \sqrt{\frac{2}{\eta}}L \GG \gamma.
\end{eqnarray*}
Hence, any $\gamma >0$ satisfying
\begin{align}\label{eqn:stepsize_clippedEF21_singlenode_on}
	2L \sqrt{\FF} \sqrt{\gamma} + \sqrt{\frac{2}{\eta}} L \GG \gamma \leq \frac{\tau}{2}
\end{align}	
also satisfies $2 L  \sqrt{\gamma\phi_0} \leq \frac{\tau}{2}$. The condition \eqref{eqn:stepsize_clippedEF21_singlenode_on} can be re-written as 
\begin{align*}
	\frac{1}{\sqrt{\gamma}} - \frac{4}{\tau}\sqrt{\FF} \cdot L - \sqrt{\gamma} \cdot \frac{2\sqrt{2}}{\tau \sqrt{\eta}} \frac{\GG}{L} \cdot L^2 \geq 0.
\end{align*}
It remains to apply Lemma~\ref{lemma:trick_stepsize_clippedEF21} with $c_1 = 2$, $c_2=\sqrt{\frac{2}{\eta}}$ and $c_3 = \frac{1}{2}$.

\end{proof}

\begin{claim} \label{eq:98fy9d_09} Pick any $k\geq 0$ and assume that $\norm{\nabla f(x_{k+1})-v_k} \leq \norm{\nabla f(x_{0})}$. Also let $\eta = \min\left\{ 1, \frac{\tau}{\norm{\nabla f(x_0)}} \right\}$. Then 
\begin{equation}\label{eqn:0980d9_09ufd}
	\norm{ \nabla f(x_{k+1}) - v_{k+1} }^2 
	 \leq  (1-\eta)\left(1-\frac{\eta}{2}\right) \norm{ \nabla f(x_{k}) - v_{k}  }^2 + \left(1+\frac{2}{\eta}\right)(1-\eta)^2 L^2 \norm{ x_{k+1} - x_{k}  }^2,
\end{equation}
and
\begin{equation}\label{eq:8908fd_-0-fdi}
	\phi_{k+1}  
	\leq  \phi_k - \frac{\gamma}{2}\norm{ \nabla f(x_k) }^2 - \frac{1}{2\gamma}\left(1 - \gamma L - \gamma^2 A_\eta \left(1+ \frac{2}{\eta} \right)(1-\eta)^2  L^2\right)\norm{ x_{k+1} - x_k }^2 .
	\end{equation}
If moreover the step-size  satisfies 
\begin{equation}
	0 < \gamma \leq \frac{1}{L \left(1 + \sqrt{1 +  2 A_\eta (1-\eta)^2 \left(1+ \frac{2}{\eta} \right)}\right)} , \label{eq:stepsize_bound_6}
\end{equation}
then
\begin{eqnarray}
	\phi_{k+1} \leq  \phi_k  - \frac{\gamma}{2}\norm{ \nabla f(x_k) }^2 - \frac{\gamma}{4}\norm{ v_k  }^2.	\label{eq:descent-claim-xx}
\end{eqnarray}	
\end{claim}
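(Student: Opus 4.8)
The plan is to prove the three displayed inequalities in order, each feeding into the next. First I would establish \eqref{eqn:0980d9_09ufd}. Starting from \eqref{eq:89hfd8f_98yfd9}, $\norm{\nabla f(x_{k+1}) - v_{k+1}}^2 = (1-\eta_{k+1})^2 \norm{\nabla f(x_{k+1}) - v_{k}}^2$, and then I would split $\nabla f(x_{k+1}) - v_{k} = (\nabla f(x_{k}) - v_{k}) + (\nabla f(x_{k+1}) - \nabla f(x_{k}))$ and apply Young's inequality \eqref{eq:Young} with parameter $\theta = \eta/2$ (or a similarly tuned constant), giving a $(1+\eta/2)$ factor on the first term and a $(1+2/\eta)$ factor on the second, with the gradient difference bounded via $L$-smoothness \eqref{eq:L-smooth} by $L\norm{x_{k+1}-x_k}$. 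The key point is that the multiplier $(1-\eta_{k+1})^2(1+\eta/2)$ must be bounded by $(1-\eta)(1-\eta/2)$; this uses the hypothesis $\norm{\nabla f(x_{k+1}) - v_k}\le \norm{\nabla f(x_0)}$, which forces $\eta_{k+1} = \min\{1,\tau/\norm{\nabla f(x_{k+1})-v_k}\} \ge \min\{1,\tau/\norm{\nabla f(x_0)}\} = \eta$, hence $1-\eta_{k+1}\le 1-\eta$, and then $(1-\eta_{k+1})^2(1+\eta/2)\le (1-\eta)^2(1+\eta/2)\le(1-\eta)(1-\eta/2)$ since $(1-\eta)(1+\eta/2)\le 1-\eta/2$ for $\eta\in[0,1]$. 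Similarly $(1-\eta_{k+1})^2(1+2/\eta)\le(1-\eta)^2(1+2/\eta)$.

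Next I would derive \eqref{eq:8908fd_-0-fdi}. Recall $\phi_{k+1} = f(x_{k+1}) - f_{\inf} + \frac{\gamma A_\eta}{2}\norm{\nabla f(x_{k+1}) - v_{k+1}}^2$. For the function-value part I apply the descent Lemma~\ref{lemma:descent_li2021page} (Lemma from \citep{li2021page}) with $v_k$ as written, and for the shift part I substitute \eqref{eqn:0980d9_09ufd}. Collecting terms, the coefficient of $\norm{\nabla f(x_k) - v_k}^2$ is $\frac{\gamma}{2} - \frac{A_\eta\gamma}{2} + \frac{A_\eta\gamma}{2}(1-\eta)(1-\eta/2)$, which vanishes by the definition $A_\eta = \frac{1}{1-(1-\eta)(1-\eta/2)}$ — this is exactly why that Lyapunov weight was chosen. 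The coefficient of $\norm{x_{k+1}-x_k}^2$ is $-\left(\frac{1}{2\gamma} - \frac{L}{2}\right) + \frac{\gamma A_\eta}{2}(1+2/\eta)(1-\eta)^2 L^2$, which rearranges into the bracketed expression in \eqref{eq:8908fd_-0-fdi}.

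Finally, for \eqref{eq:descent-claim-xx} I use $x_{k+1} - x_k = -\gamma v_k$, so $\norm{x_{k+1}-x_k}^2 = \gamma^2\norm{v_k}^2$; it then suffices that $\frac{1}{2\gamma}\left(1 - \gamma L - \gamma^2 A_\eta(1+2/\eta)(1-\eta)^2 L^2\right)\gamma^2 \ge \frac{\gamma}{4}\norm{v_k}^2$-coefficient, i.e. $1 - \gamma L - \gamma^2 A_\eta(1+2/\eta)(1-\eta)^2 L^2 \ge \tfrac12$. This is a quadratic-in-$\gamma$ condition of the form handled by Lemma~\ref{lemma:trick_stepsize_EF21}: writing $\gamma = 1/(\theta L)$ it becomes $\theta^2 - \theta - 2A_\eta(1+2/\eta)(1-\eta)^2 \ge 0$ (after absorbing the $\tfrac12$), which holds precisely for $\theta \ge \frac{1 + \sqrt{1 + 8A_\eta(1+2/\eta)(1-\eta)^2}}{2}$ — consistent with the stated bound \eqref{eq:stepsize_bound_6} up to how the $\tfrac12$ slack is distributed. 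I expect the main obstacle to be the bookkeeping around constants: getting the Young parameter and the $\tfrac12$ slack split so that the claimed step-size bound \eqref{eq:stepsize_bound_6} comes out cleanly (in particular verifying $(1-\eta)^2(1+\eta/2)\le(1-\eta)(1-\eta/2)$ and chasing the $A_\eta$ cancellation carefully), rather than any conceptual difficulty.
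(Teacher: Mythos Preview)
Your proposal is correct and follows essentially the same route as the paper: use \eqref{eq:89hfd8f_98yfd9} together with the hypothesis to get $\eta_{k+1}\ge\eta$, then Young's inequality with $\theta=\eta/2$ and $L$-smoothness for \eqref{eqn:0980d9_09ufd}; combine with Lemma~\ref{lemma:descent_li2021page} and the $A_\eta$ cancellation for \eqref{eq:8908fd_-0-fdi}; finish with Lemma~\ref{lemma:trick_stepsize_EF21} and $x_{k+1}-x_k=-\gamma v_k$. One small bookkeeping correction for your last step: the condition $1-\gamma L - \gamma^2 A_\eta(1+2/\eta)(1-\eta)^2 L^2 \ge \tfrac12$ with $\gamma=1/(\theta L)$ becomes $\theta^2 - 2\theta - 2A_\eta(1+2/\eta)(1-\eta)^2 \ge 0$ (not $\theta^2-\theta-\cdots$), which yields exactly \eqref{eq:stepsize_bound_6} via Lemma~\ref{lemma:trick_stepsize_EF21} with $\beta_1=2$, $\beta_2=2A_\eta(1-\eta)^2(1+2/\eta)$.
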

\begin{proof}
Since $\norm{\nabla f(x_{k+1})-v_{k}} \leq \norm{\nabla f(x_0)}$, 
\begin{equation}\label{eq:hp87yfd0}
\eta_{k+1} \overset{\eqref{eq:98y98dg087fd+09ufd}}{=} \min\left\{ 1, \frac{\tau}{\norm{\nabla f(x_{k+1})-v_{k}}}  \right\} \geq \min\left\{ 1, \frac{\tau}{\norm{\nabla f(x_0)}}  \right\} = \eta. 
\end{equation} 
Therefore,
\begin{eqnarray*}\norm{\nabla f(x_{k+1}) - v_{k+1}}^2 & \overset{\eqref{eq:89hfd8f_98yfd9}}{=} & (1-\eta_{k+1})^2 \norm{\nabla f(x_{k+1}) - v_{k}}^2 \\
&\overset{\eqref{eq:hp87yfd0}}{\leq }&(1-\eta)^2 \norm{\nabla f(x_{k+1}) - v_{k}}^2 \\
&=&(1-\eta)^2 \norm{\nabla f(x_{k})- v_{k} + \nabla f(x_{k+1}) -\nabla f(x_{k}) }^2 \\
& \overset{\eqref{eq:Young}}{\leq} & (1+\theta)(1-\eta)^2 \norm{ \nabla f(x_{k}) - v_{k}  }^2 \\
&& + \left(1+ \theta^{-1}\right)(1-\eta)^2 \norm{ \nabla f(x_{k+1}) - \nabla f (x_{k})  }^2\\
&\leq &(1+\theta)(1-\eta)^2 \norm{ \nabla f(x_{k}) - v_{k}  }^2 + \left(1+ \theta^{-1}\right)(1-\eta)^2 L^2 \norm{ x_{k+1}  -x_{k}  }^2,
\end{eqnarray*}
where we have the freedom to choose $\theta>0$.
To obtain \eqref{eqn:0980d9_09ufd}, it remains to choose $\theta = \frac{\eta}{2}$ and apply the inequality $(1-\eta)(1+\frac{\eta}{2}) \leq 1-\frac{\eta}{2}$ (which holds for any $\eta \in \R$).

Furthermore, by combining \eqref{eqn:0980d9_09ufd} with Lemma \ref{lemma:descent_li2021page}, we get \begin{eqnarray}
	\phi_{k+1}  
	& \overset{\eqref{eq:Lyapunov-89d}}{=} &f(x_{k+1}) - f_{\rm inf} + \frac{\gamma  A_\eta}{2} \norm{ \nabla f(x_{k+1}) - v_{k+1} }^2  \notag \\
	& \overset{\eqref{eq:descent_ineq}}{\leq}& f(x_k)- f_{\inf}- \frac{\gamma}{2}\norm{ \nabla f(x_k) }^2 - \left( \frac{1}{2\gamma} - \frac{L}{2} \right)\norm{ x_{k+1} - x_k }^2 + \frac{\gamma}{2}\norm{ \nabla f(x_k) - v_{k}   }^2 \notag \\ 
	&& + \frac{\gamma  A_\eta}{2} \norm{ \nabla f(x_{k+1}) - v_{k+1} }^2 \notag \\
	&\overset{\eqref{eq:Lyapunov-89d}}{=}& \phi_k - \frac{\gamma}{2}\norm{ \nabla f(x_k) }^2 - \left( \frac{1}{2\gamma} - \frac{L}{2} \right)\norm{ x_{k+1} - x_k }^2  + \left( \frac{\gamma }{2} - \frac{\gamma  A_\eta}{2} \right)\norm{ \nabla f(x_k) - v_{k}   }^2 \notag \\
	&& + \frac{\gamma  A_\eta}{2} \norm{ \nabla f(x_{k+1}) - v_{k+1} }^2  \notag  \\
	& \overset{\eqref{eqn:0980d9_09ufd}}{\leq}& \phi_k - \frac{\gamma}{2}\norm{ \nabla f(x_k) }^2 - \left( \frac{1}{2\gamma} - \frac{L}{2} -  \frac{\gamma  A_\eta}{2}  \left( 1 + \frac{2}{\eta}\right) (1-\eta)^2 L^2 \right)\norm{ x_{k+1} - x_k }^2 \notag \\
	&& \hspace{0.5cm} + \left( \frac{\gamma}{2} + \frac{\gamma  A_\eta}{2} (1-\eta)\left(1- \frac{\eta}{2} \right) - \frac{\gamma  A_\eta}{2} \right)\norm{ \nabla f(x_k) - v_{k}   }^2 \notag \\
&=& \phi_k - \frac{\gamma}{2}\norm{ \nabla f(x_k) }^2 - \frac{1}{2\gamma}\left(1 - \gamma L - \gamma^2 A_\eta \left(1+ \frac{2}{\eta} \right)(1-\eta)^2  L^2\right)\norm{ x_{k+1} - x_k }^2,	\label{eq:988y=-oidf9u}
	\end{eqnarray}
	where in the last step the term corresponding to $\norm{ \nabla f(x_k) - v_{k}   }^2$ vanished because  $A_\eta \eqdef \frac{1}{1-(1-\eta)\left(1-\frac{\eta}{2}\right)}$. Recall that  $\eta \eqdef \min \left\{1,\frac{\tau}{\norm{\nabla f(x_0)}}\right\}$. 
		
Finally, if the step-size $\gamma$ satisfies \eqref{eq:stepsize_bound_6},
then from  Lemma \ref{lemma:trick_stepsize_EF21} with $\beta_1=2$ and $\beta_2 = 2 A_{\eta} (1-\eta)^2 \left(1+\frac{2}{\eta}\right)$, this condition implies $1 - \gamma L - \gamma^2 A_{\eta} (1-\eta)^2 \left(1+ \frac{2}{\eta}\right)  L^2  \geq \frac{1}{2}$
and thus
\begin{eqnarray*}
	\phi_{k+1} & \overset{\eqref{eq:988y=-oidf9u}}{\leq} & \phi_k - \frac{\gamma}{2}\norm{ \nabla f(x_k) }^2 - \frac{1}{4\gamma} \norm{ x_{k+1} - x_k }^2 \\
&\overset{\eqref{eq:8y98fd_08yfd}}{=}& \phi_k  - \frac{\gamma}{2}\norm{ \nabla f(x_k) }^2 - \frac{\gamma}{4}\norm{ v_k  }^2.	
\end{eqnarray*}	
\end{proof}

\subsection{Proof of Lemma~\ref{lem:singlenode-clippedEF21}}

We will derive the descent inequality 
\begin{align*}
	\phi_{k+1} & \leq \phi_k - \frac{\gamma}{2}\norm{ \nabla f(x_k) }^2 .
\end{align*}
 We consider two cases: (1) when $\tau < \| \nabla f(x_{k}) - v_{k-1}\|$ and (2) when $\| \nabla f(x_{k}) - v_{k-1}\| \leq \tau$.

For notational convenience, we assume that $\nabla f(x_{-1}) = v_{-1} = 0$ and $\phi_{-1} = \phi_0$.
 
\subsection{Case (1): $\tau < \| \nabla f(x_{k}) - v_{k-1}\|$}
Assume that \begin{equation}\label{eq:case1}\tau \leq \norm{\nabla f(x_{k}) - v_{k-1}}.\end{equation} To derive the descent inequality we will show by induction the stronger result: $\| \nabla f(x_{k}) - v_{k-1}\| \leq B - \frac{k\tau}{2}$, where $B \eqdef \norm{\nabla f(x_0)}$ and
\begin{equation}
	\phi_{k} 
	 \leq  \phi_{k-1}  - \frac{\gamma}{2}\norm{\nabla f(x_{k-1}) }^2 - \frac{\gamma}{4}\| v_{k-1}\|^2 \label{eq:detailed_descent_inequality}
\end{equation}
for any $k \geq 0$. The base of the induction is trivial: when $k = 0$ we have   $\| \nabla f(x_{k}) - v_{k-1}\| = \|\nabla f(x_0) - v_{-1}\| = \|\nabla f(x_0)\| := B$ and \eqref{eq:detailed_descent_inequality} holds by definition since $\phi_0=\phi_{-1}$. Next, we assume that for some $k\geq 0$ inequalities $\norm{\nabla f(x_t) - v_{t-1}} \leq B$ and \eqref{eq:detailed_descent_inequality} hold for $t = 0,1,\ldots, k$.

Let $0 \leq \phi_k \leq \phi_0$ for all $k\geq 0$. If $\gamma \leq 1/L$, then  
\begin{eqnarray*}
    \norm{\nabla f(x_{k-1})}^2 
    & \mathop{\leq}\limits^{\eqref{eq:ssss}} & 2L [f(x_{k-1})-f_{\inf}] \\ 
    & \mathop{\leq}\limits^{\eqref{eq:Lyapunov-89d}}  & 2L \phi_{k-1} \\ 
    & \leq & 2L \phi_0  \\ 
    & \leq & \frac{2}{\gamma}\phi_0.
\end{eqnarray*}
If $\gamma \leq \frac{1-\frac{1}{\sqrt{2}}}{L}$ and $v_{-1}=0$, then we have from Claim~\ref{claim:v_minus_1_n_1} and~\ref{claim:fu9o}  that for $k \geq 0$
\begin{eqnarray*}
	\norm{v_{k}} \leq  \sqrt{\frac{4}{\gamma}\phi_0}.    
\end{eqnarray*}

Next, from Claim~\ref{claim:98u9x8u=09d} with \eqref{eq:case1}, 
\begin{eqnarray*}
    \norm{\nabla f(x_{k+1}) - v_k} 
    & \leq & \norm{\nabla f(x_k) - v_{k-1}} - \tau + L\gamma \norm{v_k} \\ 
    & \leq & \norm{\nabla f(x_k) - v_{k-1}} - \tau + 2L \sqrt{\gamma\phi_0}. 
\end{eqnarray*}

{\bf STEP: Small stepsize.}

If $\gamma>0$ satisfies  \eqref{eqn:stepsize_clippedEF21_singlenode_on_closedform}, then from Claim~\ref{claim:080fd-=9090} we have $2 L  \sqrt{\gamma\phi_0} \leq \frac{\tau}{2}$. Hence, the above inequality and the inductive assumption imply
\begin{align}
	\norm{ \nabla f(x_{k+1}) - v_{k} }
	& \leq \norm{ \nabla f(x_{k}) - v_{k-1}  } - \frac{\tau}{2} \leq \cdots \leq \norm{ \nabla f(x_0) }- \frac{(k+1)\tau}{2}. \label{eq:decrease_of_the_norm_diff_single_node}
\end{align}
In conclusion, Eq. \eqref{eq:decrease_of_the_norm_diff_single_node} implies that $\norm{ \nabla f(x_k)-v_{k-1}} \leq \norm{\nabla f(x_0)}:= B$ for any $k\geq 0$.

{\bf STEP: Descent inequality.}

If the step-size $\gamma>0$ satisfies \eqref{eq:stepsize_bound_6}, then by the assumption that $\norm{ \nabla f(x_k)-v_{k-1}} \leq B$ for any $k\geq 0$ and  from Claim~\ref{eq:98fy9d_09} we obtain
\begin{eqnarray*}
	\phi_{k+1} \leq  \phi_k  - \frac{\gamma}{2}\norm{ \nabla f(x_k) }^2 - \frac{\gamma}{4}\norm{ v_k  }^2.
\end{eqnarray*}	
This  concludes the proof in case (1).

\subsection{Case (2): $\| \nabla f(x_{k}) - v_{k-1}\| \leq \tau$} 
Suppose that  $\| \nabla f(x_k) - v_{k-1}\| \leq \tau$. Then, by using \eqref{eqn:ineq_v_k} and by the fact that $\eta_k = \eta = 1$, we have 
\begin{align}\label{eqn:v_k_off_centralized}
    v_k = \nabla f(x_k).
\end{align}
 Therefore, single-node \algname{Clip21-GD} described in Algorithm \ref{alg:Clip21-GD} with $n=1$ reduces to classical gradient descent at step $k$. 
 From the definition of $\phi_{k}$ and Lemma \ref{lemma:descent_li2021page},
\begin{eqnarray*}
	\phi_{k+1} 
	& = & f(x_{k+1}) - f_{\rm inf} \\
	& \overset{\eqref{eq:descent_ineq}}{\leq} & f(x_k)- f_{\rm inf}- \frac{\gamma}{2}\norm{ \nabla f(x_k) }^2 - \left( \frac{1}{2\gamma} - \frac{L}{2} \right)\| x_{k+1} - x_k \|^2 + \frac{\gamma}{2}\norm{ \nabla f(x_k) - v_{k}   }^2 \\
	& {\leq} & \phi_k - \frac{\gamma}{2}\norm{ \nabla f(x_k) }^2 - \left( \frac{1}{2\gamma} - \frac{L}{2} \right)\| x_{k+1} - x_k \|^2.
\end{eqnarray*}
If $\gamma \leq 1/L$, then 
\begin{align}\label{eqn:MainIneq_case1_singlenode}
 \phi_{k+1}
	& \leq \phi_k - \frac{\gamma}{2}\norm{ \nabla f(x_k) }^2.
\end{align}
From \eqref{eqn:MainIneq_case1_singlenode}, 
we get $0\leq \phi_k \leq \phi_0$ and
\begin{align*}
	\norm{ \nabla f(x_k) } 
	& \leq  \sqrt{\norm{ \nabla f(x_k) }^2}  \leq  \sqrt{\frac{2}{\gamma}[\phi_k - \phi_{k+1}]}  \leq \sqrt{\frac{2}{\gamma}\phi_k} \leq \sqrt{\frac{2}{\gamma}\phi_0}.
\end{align*}

Finally, we will show that $\| \nabla f(x_k) - v_{k-1}\| \leq \tau$ implies $\| \nabla f(x_{k+1}) - v_{k}\| \leq \tau$. 
In this case, 
\begin{eqnarray*}
	\| \nabla f(x_{k+1}) - v_{k} \|
	& \leq & L \norm{ x_{k+1} - x_{k}    } \\ 
	& \mathop{=}\limits^{\eqref{eqn:v_k_off_centralized}}  & L \gamma \norm{  \nabla f(x_{k})    }\\
	& \mathop{\leq}\limits^{\eqref{eq:triangle} }  & L\gamma\norm{ \nabla f(x_{k}) - v_{k-1}  } + L \gamma \norm{ v_{k-1}  } \\
	& \leq  & L\gamma \tau + L \gamma \norm{ v_{k-1}  } .
\end{eqnarray*}

If $\gamma \leq \frac{1-1/\sqrt{2}}{L \left(1 + \sqrt{1 +  2\beta_1} \right)}$ where
$\beta_1 = \frac{ (1-\eta)^2(1+2/\eta)}{[1-(1-\eta)(1-\eta/2)]}$,  then  
from Claim~\ref{claim:fu9o} (clipping active at $k-1$) and from \eqref{eqn:MainIneq_case1_singlenode} (clipping inactive at $k-1$)
\begin{align*}
	\| v_{k-1}\| \leq \max \left( \sqrt{\frac{4}{\gamma}\phi_0} , \sqrt{\frac{2}{\gamma}\phi_0} \right) \leq \sqrt{\frac{4}{\gamma}\phi_0}.
\end{align*}
Note that for $k = 0$, we have $\norm{v_0} \leq \sqrt{4\phi_0/\gamma}$ due to $v_{-1} = 0$. Therefore, 
\begin{align}
	\| \nabla f(x_{k+1}) - v_{k} \|
	&  \leq L\gamma \tau+ 2L  \sqrt{ \gamma \phi_0}. \label{eq:derivation_for_proposition_2}
\end{align}
In conclusion,  $\| \nabla f(x_k) - v_{k-1}\| \leq \tau$ implies $\| \nabla f(x_{k+1}) - v_{k}\| \leq \tau$ if $\gamma$ satisfies 
\begin{align}\label{eqn:stepsize_singlenode_case1_Lemma1}
	L\gamma\tau \leq \frac{\tau}{2} \quad \text{and} \quad  2L  \sqrt{ \gamma \phi_0} \leq \frac{\tau}{2}.
\end{align}
From  Claim~\ref{claim:080fd-=9090}, 
we can express  step-size conditions  \eqref{eqn:stepsize_singlenode_case1_Lemma1}
equivalently as:
\begin{align*}
	\gamma \leq \frac{1}{2L} \quad \text{and} \quad \gamma \leq \frac{\tau^2}{4L^2 \left[ \sqrt{\FF} + \sqrt{\FF + \frac{\tau\GG}{\sqrt{2\eta} L}} \right]^2},
\end{align*}	
where $\eta=\frac{\tau}{\| \nabla f(x_0) \|}$, $\FF = f(x_0)-f_{\rm inf}$ and $\GG = \vert \| \nabla f(x_0) \| - \tau \vert$.
Putting all the conditions on $\gamma$ together, we obtain the results.

\clearpage
\section{Proof of Proposition \ref{prop:singlenode-clippedEF21}}

If $\| \nabla f(x_0) \| \leq \tau$, then from \eqref{eq:derivation_for_proposition_2} and step-size condition from Theorem \ref{thm:singlenode-clippedEF21}, we prove that the clipping operator is always turned off for all $k \geq 0$, i.e., $\| \nabla f(x_k)-v_{k-1}\|\leq \tau$ implying that $v_k = \nabla f(x_k)$ for all $k\geq 0$.

If $\| \nabla f(x_0) \| > \tau$, then the clipping operator is turned on at the beginning. Moreover, for all $k \geq 0$ such that $\| \nabla f(x_k)-v_{k-1}\|>\tau$ (clipping is turned on), we have from the derivation of \eqref{eq:decrease_of_the_norm_diff_single_node} and the step-size condition of Theorem \ref{thm:singlenode-clippedEF21} 
\begin{align*}
	\| \nabla f(x_{k}) - v_{k-1} \|
	& \leq  \| \nabla f(x_{k-1}) - v_{k-2} \| - \frac{\tau}{2} \leq \ldots \leq \| \nabla f(x_0) \| - k \frac{\tau}{2}.
\end{align*}
Therefore, the situation when $ \tau <	\| \nabla f(x_{k}) - v_{k-1} \|$ is possible only for $0\leq k < k^\star$ with $k^\star=\frac{2}{\tau}(\| \nabla f(x_0) \|-\tau)+1$. After that, the clipping operator always turns off, i.e.,  $\| \nabla f(x_{k}) - v_{k-1} \| \leq \tau$ for $k \geq k^\star$.

\clearpage
\section{Proof of Theorem \ref{thm:singlenode-clippedEF21}}
Let $\hat x_K$ be selected uniformly at random from $\{x_0,x_1,\ldots,x_{K-1}\}$. Then,
\begin{align*}
\Exp{\| \nabla f(\hat x_K) \|^2} & = \frac{1}{K}\sum_{k=0}^{K-1} \norm{ \nabla f(x_k) }^2.
\end{align*}
From Theorem \ref{thm:singlenode-clippedEF21}, 
\begin{align*}
\Exp{\| \nabla f(\hat x_K) \|^2} & \leq \frac{2}{\gamma}\frac{1}{K}\sum_{k=0}^{K-1} (\phi_k - \phi_{k+1}) = \frac{2(\phi_0 - \phi_K)}{\gamma K} \leq \frac{2\phi_0}{\gamma K}.
\end{align*}

Next, we consider the case when
\begin{equation*}
 \gamma =	\min\left( \frac{1}{L(1 + \sqrt{1 + 2\beta_1})} , \frac{\tau^2}{4L^2 \left[ \sqrt{\FF} + \sqrt{\beta_2} \right]^2} \right).
\end{equation*}
First, we have
\begin{align*}
	\beta_1 & = \frac{ (1-\eta)^2(1+\nicefrac{2}{\eta})}{1-(1-\eta)(1-\nicefrac{2}{\eta})} \leq \frac{1 + \nicefrac{2}{\eta}}{1 - (1-\eta)} = \frac{2}{\eta^2} + \frac{1}{\eta} = \cO\left(\frac{1}{\eta^2}\right) = \cO\left(1 + \frac{\|\nabla f(x_0)\|^2}{\tau^2}\right),\\
	\beta_2 & = \FF + \frac{\tau\GG}{\sqrt{2\eta} L} \leq \FF + \frac{\tau\max(\|\nabla f(x_0)\|, \tau)}{\sqrt{2\max\left(1, \frac{\tau}{\|\nabla f(x_0)\|}\right)} L} = \FF + \frac{\tau \|\nabla f(x_0)\|\sqrt{\max\left(1, \frac{\tau}{\|\nabla f(x_0)\|}\right)}}{\sqrt{2}L} \\
	&= \cO\left(\FF + \frac{\tau\|\nabla f(x_0)\|}{L} + \frac{\tau^{\nicefrac{3}{2}}\sqrt{\|\nabla f(x_0)\|}}{L}\right).
\end{align*}
Using this, we estimate $\nicefrac{1}{\gamma}$ as
\begin{align*}
	\frac{1}{\gamma} &= \max\left(L(1+\sqrt{1+2\beta_1}), \frac{4L^2\left[\sqrt{\FF} + \sqrt{\beta_2}\right]^2}{\tau^2}\right)\\
	&= \cO\left( L\left(1 + \frac{\|\nabla f(x_0)\|}{\tau}\right) + \frac{L^2\FF}{\tau^2} + \frac{L\|\nabla f(x_0)\|}{\tau} + \frac{L \sqrt{\|\nabla f(x_0)\|}}{\sqrt{\tau}}\right)\\
	&= \cO\left( L\left(1 + \frac{\|\nabla f(x_0)\|}{\tau}\right) + \frac{L^2\FF}{\tau^2}\right).
\end{align*}
Therefore, since
\begin{align*}
	\frac{A}{\gamma} = \frac{1}{2[1-(1-\eta)(1-\eta/2)]} = \cO\left(\frac{1}{\eta}\right) = \cO\left(1 + \frac{\|\nabla f(x_0)\|}{\tau}\right)
\end{align*}
and
\begin{align*}
	\|\nabla f(x_0) - v_{-1}\|^2 = \|\nabla f(x_0)\|^2, 
\end{align*}
we have
\begin{align*}
	\Exp{\| \nabla f(\hat x_K) \|^2} & \leq \frac{2\phi_0}{\gamma K} = \frac{2\left(f(x_0) - f_{\rm inf} + A \|\nabla f(x_0) - v_{-1}\|^2\right)}{\gamma K}\\
	& = \cO\left(\frac{\left(1 + \frac{\|\nabla f(x_0)\|}{\tau}\right)\max(L\FF, \| \nabla f(x_0) \|^2 )+ \frac{L^2(\FF)^2}{\tau^2}}{K}\right),
\end{align*}
which concludes the proof.

\section{Multi-node \algname{Clip21-GD}}
In this section we analyze the convergence for multi-node \algname{Clip21-GD} described in Algorithm \ref{alg:Clip21-GD}. Its update can be expressed as \eqref{eq:8y98fd_08yfd}, where $v_k =\frac{1}{n}\sum_{i=1}^n v_k^i$ and 
\begin{align}\label{eqn:v_k_distributed_equi}
	v_k^i = (1-\eta_{k}^i) v_{k-1}^i + \eta_k^i \nabla f_i(x_k).
\end{align}
Here, $\eta_k^i = \min\left( 1, \frac{\tau}{\norm{ \nabla f_i(x_k) - v_{k-1}^i} } \right)$. 
To facilitate our analysis, 
denote $\mathcal{I}_k$ as the subset from $\{1,2,\ldots,n\}$ such that $\norm{   \nabla f_i(x_k) - v_{k-1}^i }>\tau$.
Recall that the Lyapunov function is 
\begin{align}\label{eqn:phi_multinode_noDPnoComp}
    \phi_k = f(x_k)-f^{\text{inf}} + A \frac{1}{n}\sum_{i=1}^n \norm{ \nabla f_i(x_k) - v_k^i  }^2,
\end{align}
where $A= \frac{\gamma}{2[1-(1-\eta)(1-\eta/2)]}$.

\subsection{Claims}

We first establish several simple but helpful results.
\begin{claim}\label{claim:bound_diff_case_on_multinode}
Let each $f_i$ have $L_i$-Lipschitz gradient. Then,
for $ k \geq 0$, 
\begin{align}
    \norm{\nabla f_i(x_{k+1})-v^i_k} \leq \max\left\{ 0 , \norm{\nabla f_i(x_k)-v^i_{k-1}} - \tau \right\} +  L_{\max} \gamma \norm{   v_k }.
\end{align}
\end{claim}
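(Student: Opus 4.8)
The plan is to mimic the single-node Claim~\ref{claim:98u9x8u=09d} but track each worker's estimate separately. Fix $k\geq 0$ and a worker $i$. I would start from the triangle inequality applied to the decomposition
\[
\norm{\nabla f_i(x_{k+1}) - v_k^i} = \norm{\big(\nabla f_i(x_k) - v_k^i\big) + \big(\nabla f_i(x_{k+1}) - \nabla f_i(x_k)\big)} \leq \norm{\nabla f_i(x_k) - v_k^i} + \norm{\nabla f_i(x_{k+1}) - \nabla f_i(x_k)}.
\]
The first term is exactly the clipping error at step $k$: since $v_k^i = v_{k-1}^i + \clip_\tau(\nabla f_i(x_k) - v_{k-1}^i)$, we have $\nabla f_i(x_k) - v_k^i = (\nabla f_i(x_k) - v_{k-1}^i) - \clip_\tau(\nabla f_i(x_k) - v_{k-1}^i)$, and Lemma~\ref{lem:clip}(ii)--(iii) gives $\norm{\nabla f_i(x_k) - v_k^i} = \max\{0, \norm{\nabla f_i(x_k) - v_{k-1}^i} - \tau\}$.

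For the second term I would use $L_i$-Lipschitzness of $\nabla f_i$ together with $L_i \leq L_{\max}$, obtaining $\norm{\nabla f_i(x_{k+1}) - \nabla f_i(x_k)} \leq L_i \norm{x_{k+1} - x_k} \leq L_{\max}\norm{x_{k+1} - x_k}$, and then substitute the update rule $x_{k+1} - x_k = -\gamma \frac{1}{n}\sum_{j} v_k^j = -\gamma v_k$ (recall the Algorithm~\ref{alg:Clip21-GD} definition $v_k = \frac{1}{n}\sum_j v_k^j$), so that $\norm{x_{k+1} - x_k} = \gamma \norm{v_k}$. Combining the two bounds yields exactly the claimed inequality
\[
\norm{\nabla f_i(x_{k+1}) - v_k^i} \leq \max\{0, \norm{\nabla f_i(x_k) - v_{k-1}^i} - \tau\} + L_{\max}\gamma\norm{v_k}.
\]
There is essentially no obstacle here: the only subtlety is making sure to bound $L_i$ by $L_{\max}$ so that a single constant governs all workers (this is needed for the subsequent summation over $i$ and for the descent analysis), and being careful that the $k=0$ base case is covered by the convention $\nabla f_i(x_{-1}) = v_{-1}^i = 0$ used in the multi-node proof, which makes the bound trivially hold there. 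I would expect this claim to be one step of a larger induction — analogous to how Claim~\ref{claim:98u9x8u=09d} feeds into the ``small stepsize'' step of Lemma~\ref{lem:singlenode-clippedEF21} — so the proof itself is a short, direct computation, and the real work is deferred to controlling $\norm{v_k}$ (the multi-node analogue of Claim~\ref{claim:fu9o}) in later claims.
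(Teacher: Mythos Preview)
Your proposal is correct and follows essentially the same route as the paper: triangle inequality, identify the first term as the clipping error via Lemma~\ref{lem:clip}(ii)--(iii), bound the second via $L_i$-Lipschitzness (capped by $L_{\max}$) and the update rule $x_{k+1}-x_k=-\gamma v_k$. The only quibble is that the $k=0$ case needs no special convention here---$v_{-1}^i$ is simply the algorithm's initial shift, and $\nabla f_i(x_{-1})$ does not appear in the claim.
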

\begin{proof}
From the definition of the Euclidean norm, 
\begin{eqnarray*}
    \norm{\nabla f_i(x_{k+1}) - v_k^i}
    & \mathop{\leq}\limits^{\eqref{eq:triangle} } & \norm{  \nabla f_i(x_k) - v_k^i   }  + \norm{\nabla f_i(x_{k+1}) - \nabla f_i(x_k)} \\ 
    &  \mathop{=}\limits^{ \eqref{eqn:v_k_distributed_equi} } & \norm{   \nabla f_i(x_k) - v_{k-1}^i - \clip_\tau(\nabla f_i(x_k) - v_{k-1}^i)    }  
 \\
 &&+ \norm{   \nabla f_i(x_{k+1}) -  \nabla f_i(x_k)    } \\ 
    & \overset{(\text{Lemma~\ref{lem:clip}(ii)-(iii)})}{\leq} &  \max\{ 0 , \norm{\nabla f_i(x_k) - v_{k-1}^i} -\tau \} +  \norm{   \nabla f_i(x_{k+1}) -  \nabla f_i(x_k)    } \\
    & \overset{\eqref{eq:L-smooth}}{\leq}  &  \max\{ 0 , \norm{\nabla f_i(x_k) - v_{k-1}^i} -\tau \} +  L_{\max}\norm{   x_{k+1} -  x_k    } \\
    & \overset{\eqref{eq:8y98fd_08yfd}}{=}  & \max\{ 0 , \norm{\nabla f_i(x_k) - v_{k-1}^i} -\tau \} +  L_{\max} \gamma \norm{   v_k }.
\end{eqnarray*}
\end{proof}

\begin{claim}\label{claim:v_0_multinode}
Let $v_{-1}^i=0$ for all $i$, $\max_i\norm{\nabla f_i(x_0)} := B > \tau$ and $\gamma \leq 2/L$. Then,  
\begin{eqnarray}
    \norm{v_0}  \leq  \sqrt{\frac{4}{\gamma}\phi_0} + 2(B-\tau).
\end{eqnarray}
\end{claim}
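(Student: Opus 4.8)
The plan is to bound $\norm{v_0}$ by first recalling that $v_0 = \frac{1}{n}\sum_{i=1}^n v_0^i$ and, since $v_{-1}^i = 0$, the per-worker update gives $v_0^i = \clip_\tau(\nabla f_i(x_0))$ exactly as in Claim~\ref{claim:v_minus_1_n_1}. Hence by convexity of the norm, $\norm{v_0} \leq \frac{1}{n}\sum_{i=1}^n \norm{\clip_\tau(\nabla f_i(x_0))}$. For each $i$, split according to whether clipping is active: if $\norm{\nabla f_i(x_0)} \leq \tau$ then $\norm{\clip_\tau(\nabla f_i(x_0))} = \norm{\nabla f_i(x_0)}$, and if $\norm{\nabla f_i(x_0)} > \tau$ then $\norm{\clip_\tau(\nabla f_i(x_0))} = \tau \leq \norm{\nabla f_i(x_0)}$; in both cases $\norm{\clip_\tau(\nabla f_i(x_0))} \leq \norm{\nabla f_i(x_0)}$. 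Actually, to get the claimed form with the $2(B-\tau)$ term, I would instead write $\norm{v_0^i} \leq \norm{\nabla f(x_0)} + \norm{\nabla f_i(x_0) - \nabla f(x_0)}$-type decomposition, or more directly bound $\norm{v_0^i} = \norm{\clip_\tau(\nabla f_i(x_0))} \leq \tau + \max\{0, \norm{\nabla f_i(x_0)} - \tau\} \leq \tau + (B - \tau) = B$, which is not quite what we want either.

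A cleaner route: use $\norm{v_0^i} \leq \norm{v_0^i - \nabla f_i(x_0)} + \norm{\nabla f_i(x_0)}$, but that reintroduces $\norm{\nabla f_i(x_0)}$ which can be as large as $B$. The structure of the target bound $\sqrt{4\phi_0/\gamma} + 2(B-\tau)$ suggests the intended argument passes through the \emph{true} gradient $\nabla f(x_0)$: write $v_0 = \nabla f(x_0) + (v_0 - \nabla f(x_0))$, so $\norm{v_0} \leq \norm{\nabla f(x_0)} + \norm{v_0 - \nabla f(x_0)}$. The first term is controlled via \eqref{eq:ssss} and the definition \eqref{eqn:phi_multinode_noDPnoComp} of $\phi_0$: $\norm{\nabla f(x_0)} \leq \sqrt{2L(f(x_0) - f_{\inf})} \leq \sqrt{2L\phi_0} \leq \sqrt{4\phi_0/\gamma}$ using $\gamma \leq 2/L$, exactly as in the chain inside Claim~\ref{claim:v_minus_1_n_1}. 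For the second term, $v_0 - \nabla f(x_0) = \frac{1}{n}\sum_{i=1}^n\big(\clip_\tau(\nabla f_i(x_0)) - \nabla f_i(x_0)\big)$, and by Lemma~\ref{lem:clip}(ii)--(iii) each summand has norm $\max\{0, \norm{\nabla f_i(x_0)} - \tau\} \leq B - \tau$ (using $B > \tau$ so this is nonnegative and bounded by $B - \tau$ for every $i$). Thus $\norm{v_0 - \nabla f(x_0)} \leq B - \tau$, which gives $\norm{v_0} \leq \sqrt{4\phi_0/\gamma} + (B-\tau)$.

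The discrepancy is that the claim states $2(B-\tau)$ rather than $(B-\tau)$; I expect this extra factor of $2$ is simply slack the authors keep for convenience (perhaps to absorb the case $B \leq \tau$ uniformly, or to match a later telescoping argument), and the looser bound with $2(B-\tau)$ follows trivially from the tighter one. So the proof I would write is: (1) decompose $\norm{v_0} \leq \norm{\nabla f(x_0)} + \norm{v_0 - \nabla f(x_0)}$; (2) bound the first term by $\sqrt{4\phi_0/\gamma}$ via \eqref{eq:ssss}, the definition of $\phi_0$, and $\gamma \leq 2/L$; (3) bound the second term by $\frac{1}{n}\sum_i \max\{0, \norm{\nabla f_i(x_0)} - \tau\} \leq B - \tau \leq 2(B-\tau)$ using Lemma~\ref{lem:clip} and $v_{-1}^i = 0$; (4) combine.

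The main obstacle is essentially bookkeeping rather than anything deep: one must be careful that $\phi_0$ in \eqref{eqn:phi_multinode_noDPnoComp} contains the extra $\frac{A}{n}\sum_i \norm{\nabla f_i(x_0) - v_0^i}^2$ term (which is nonnegative, so dropping it only strengthens the lower bound $f(x_0) - f_{\inf} \leq \phi_0$ we need), and that the inequality $\sqrt{2L\phi_0} \leq \sqrt{4\phi_0/\gamma}$ really does need $\gamma \leq 2/L$. None of the steps require the stepsize conditions from Lemma~\ref{lem:multinode-clippedEF21} beyond $\gamma \leq 2/L$, which is implied by \eqref{eq:stepsize_multi_node_clipped_EF21}, so the claim is self-contained given the earlier lemmas.
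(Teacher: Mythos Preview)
Your proof is correct and follows essentially the same route as the paper: decompose $\norm{v_0} \leq \norm{\nabla f(x_0)} + \frac{1}{n}\sum_i \norm{\clip_\tau(\nabla f_i(x_0)) - \nabla f_i(x_0)}$, bound the first term by $\sqrt{4\phi_0/\gamma}$ via \eqref{eq:ssss} and $\gamma \leq 2/L$, and bound the second by $B-\tau$ via Lemma~\ref{lem:clip}. Your observation that the factor of $2$ in $2(B-\tau)$ is pure slack is exactly right --- the paper also first obtains $(B-\tau)$ and then simply relaxes it to $2(B-\tau)$ without comment.
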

\begin{proof}
By the fact that $v_0 = \frac{1}{n}\sum_{i=1}^n v^i_0 = \frac{1}{n}\sum_{i=1}^n  \clip_\tau( \nabla f_i(x_0))$, 
\begin{eqnarray*}
    \norm{v_0} 
    & \overset{\eqref{eq:triangle}}{\leq} & \norm{ \nabla f(x_0)} + \norm{\frac{1}{n}\sum_{i=1}^n \clip_\tau( \nabla f_i(x_0))- \nabla f(x_0)} \\
    & \overset{\eqref{eq:triangle}}{\leq} &  \norm{ \nabla f(x_0)} + \frac{1}{n}\sum_{i=1}^n \norm{\clip_\tau( \nabla f_i(x_0))- \nabla f_i(x_0)} \\ 
    & \overset{(\text{Lemma~\ref{lem:clip}(ii)-(iii)})}{\leq} & \norm{ \nabla f(x_0)} + \frac{1}{n}\sum_{i=1}^n \max\{ 0, \norm{ \nabla f_i(x_0)} - \tau  \}. 
\end{eqnarray*}    
If $\max_i\norm{\nabla f_i(x_0)} > \tau$ and $\gamma \leq 2/L$, then 
\begin{eqnarray*}
    \norm{v_0}  &\leq& \norm{ \nabla f(x_0)} + B - \tau \\
    &\leq& \norm{ \nabla f(x_0)} + 2(B - \tau) \\
    & \overset{\eqref{eq:ssss}}{\leq} & \sqrt{2L[f(x_0)-f^{\inf}]} + 2(B - \tau) \\ 
    & \overset{\eqref{eqn:phi_multinode_noDPnoComp}}{\leq}  & \sqrt{2L\phi_0} + 2(B - \tau) \\ 
    & \leq & \sqrt{\frac{4}{\gamma}\phi_0} + 2(B-\tau).
\end{eqnarray*}
\end{proof}

\begin{claim}\label{claim:v_k_multinode}
Fix $k\geq 1$. Let $f$ have $L$-Lipschitz gradient. Also suppose that  $\norm{\nabla f_i(x_k) - v_{k-1}^i} \leq {\max}_i \norm{\nabla f_i(x_0)} := B > \tau$ for $i\in\mathcal{I}_k$,  $\norm{v_{k-1}}\leq \sqrt{\frac{4}{\gamma}\phi_0} + 2(B-\tau)$, $\norm{\nabla f(x_{k-1})} \leq \sqrt{\frac{2}{\gamma}\phi_0}$, and $\gamma \leq \frac{1-\frac{1}{\sqrt{2}}}{L}$. Then, 
\begin{eqnarray}
    \norm{v_k} 
    & \leq & \sqrt{\frac{4}{\gamma}\phi_0} + 2(B-\tau).
\end{eqnarray}
\end{claim}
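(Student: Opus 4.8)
The plan is to bound $\norm{v_k}$ by using the recursive relation $v_k = v_{k-1} + \frac{1}{n}\sum_{i=1}^n g_k^i$ together with a per-coordinate decomposition. First I would write $v_k^i = (1-\eta_k^i)v_{k-1}^i + \eta_k^i\nabla f_i(x_k)$ from \eqref{eqn:v_k_distributed_equi}, so that $v_k = \frac{1}{n}\sum_{i=1}^n v_k^i$. Splitting the index set into $\cI_k$ (where clipping is active) and its complement, I get $v_k = \frac{1}{n}\sum_{i\notin\cI_k}\nabla f_i(x_k) + \frac{1}{n}\sum_{i\in\cI_k}\left[(1-\eta_k^i)v_{k-1}^i + \eta_k^i\nabla f_i(x_k)\right]$. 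Adding and subtracting $\nabla f(x_k) = \frac{1}{n}\sum_{i=1}^n\nabla f_i(x_k)$, I can write $v_k = \nabla f(x_k) + \frac{1}{n}\sum_{i\in\cI_k}(1-\eta_k^i)\left(v_{k-1}^i - \nabla f_i(x_k)\right)$, and hence by the triangle inequality $\norm{v_k}\leq\norm{\nabla f(x_k)} + \frac{1}{n}\sum_{i\in\cI_k}(1-\eta_k^i)\norm{v_{k-1}^i - \nabla f_i(x_k)}$.

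Next I would control each of the two terms. For the first, $\norm{\nabla f(x_k)} \leq \norm{\nabla f(x_{k-1})} + L\norm{x_k - x_{k-1}} \overset{\eqref{eq:8y98fd_08yfd}}{=} \norm{\nabla f(x_{k-1})} + L\gamma\norm{v_{k-1}}$, which by the hypotheses $\norm{\nabla f(x_{k-1})}\leq\sqrt{2\phi_0/\gamma}$ and $\norm{v_{k-1}}\leq\sqrt{4\phi_0/\gamma}+2(B-\tau)$ is at most $\sqrt{2\phi_0/\gamma} + L\gamma\left(\sqrt{4\phi_0/\gamma}+2(B-\tau)\right)$. For the second term, when $i\in\cI_k$ we have $\eta_k^i = \tau/\norm{\nabla f_i(x_k) - v_{k-1}^i}$, so $(1-\eta_k^i)\norm{v_{k-1}^i - \nabla f_i(x_k)} = \norm{\nabla f_i(x_k) - v_{k-1}^i} - \tau \leq B - \tau$ using the hypothesis $\norm{\nabla f_i(x_k) - v_{k-1}^i}\leq B$. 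Thus the second term is at most $B - \tau$. Combining, $\norm{v_k}\leq \sqrt{2\phi_0/\gamma} + L\gamma\sqrt{4\phi_0/\gamma} + 2L\gamma(B-\tau) + (B-\tau)$.

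Finally I would close the induction by checking the stepsize condition absorbs the extra terms. Writing $\sqrt{2\phi_0/\gamma} = \frac{1}{\sqrt{2}}\sqrt{4\phi_0/\gamma}$ and $L\gamma\sqrt{4\phi_0/\gamma} = \sqrt{4L^2\gamma}\cdot\sqrt{\phi_0} = L\sqrt{\gamma}\cdot\sqrt{4\phi_0}$, one sees that $\sqrt{2\phi_0/\gamma} + L\gamma\sqrt{4\phi_0/\gamma} \leq \left(\frac{1}{\sqrt{2}} + L\gamma\right)\sqrt{4\phi_0/\gamma}$, which is $\leq\sqrt{4\phi_0/\gamma}$ precisely when $L\gamma\leq 1-\frac{1}{\sqrt{2}}$, i.e. the assumed bound $\gamma\leq\frac{1-1/\sqrt{2}}{L}$. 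Similarly $2L\gamma(B-\tau)\leq 2(B-\tau)$ requires only $L\gamma\leq 1$, which is implied. Hence $\norm{v_k}\leq\sqrt{4\phi_0/\gamma} + 2(B-\tau)$, as claimed.

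The main obstacle I anticipate is getting the split of the $\frac{1}{n}\sum_i$ into $\cI_k$ and $\cI_k^c$ correct and tracking signs carefully when adding and subtracting $\nabla f(x_k)$, so that the residual term on the active indices collapses exactly to $\norm{\nabla f_i(x_k)-v_{k-1}^i}-\tau$; once that algebra is right, the remaining work is just plugging in the inductive hypotheses and invoking the stepsize bound, which is routine. A secondary subtlety is ensuring the hypotheses of this claim (bounds on $\norm{v_{k-1}}$, $\norm{\nabla f(x_{k-1})}$, and $\norm{\nabla f_i(x_k)-v_{k-1}^i}$) are exactly the quantities that a surrounding induction in the proof of Lemma~\ref{lem:multinode-clippedEF21} will supply.
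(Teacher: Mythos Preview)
Your proposal is correct and follows essentially the same approach as the paper: both arguments write $v_k = \nabla f(x_k) + (\text{residual})$, bound the residual by $B-\tau$ using the clipping identity (your $(1-\eta_k^i)\norm{\nabla f_i(x_k)-v_{k-1}^i} = \norm{\nabla f_i(x_k)-v_{k-1}^i}-\tau$ is exactly the paper's $\max\{0,\norm{\nabla f_i(x_k)-v_{k-1}^i}-\tau\}$), then bound $\norm{\nabla f(x_k)}$ via $\norm{\nabla f(x_{k-1})} + L\gamma\norm{v_{k-1}}$ and close using $\gamma \leq (1-1/\sqrt{2})/L$. The only difference is cosmetic---you split indices into $\cI_k$ and its complement explicitly, while the paper carries the $\max\{0,\cdot\}$ through---but the algebra and constants are identical.
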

\begin{proof}
From the definition of the Euclidean norm and by the fact that $v_k = \frac{1}{n}\sum_{i=1}^n v^i_k$,  
\begin{eqnarray*}
	\norm{v_k}
	& \overset{\eqref{eqn:v_k_distributed_equi}}{=} &  \norm{ \frac{1}{n}\sum_{i=1}^n v_{k-1}^i + \clip_\tau(\nabla f_i(x_k) - v_{k-1}^i) } \\
	& = &\norm{ \frac{1}{n}\sum_{i=1}^n \nabla f_i(x_k) + [\clip_\tau(\nabla f_i(x_k) - v_{k-1}^i) - (\nabla f_i(x_k) - v_{k-1}^i)] } \\
	& \overset{\eqref{eq:triangle}}{\leq} &\norm{ \nabla f(x_k) } + \frac{1}{n}\sum_{i=1}^n \norm{ \clip_\tau(\nabla f_i(x_k) - v_{k-1}^i) - (\nabla f_i(x_k) - v_{k-1}^i)} \\
	& \leq &\norm{  \nabla f(x_k)  } + \frac{1}{n}\sum_{i=1}^n \max\{ 0, \norm{\nabla f_i(x_k) - v_{k-1}^i} - \tau \}.
\end{eqnarray*}

If $\norm{\nabla f_i(x_k) - v_{k-1}^i} \leq {\max}_i \norm{\nabla f_i(x_0)} := B > \tau$ for $i\in\mathcal{I}_k$, then 
\begin{eqnarray*}
    \norm{v_k} 
    & \leq &  \norm{\nabla f(x_k)} + B-\tau \\
    & \overset{\eqref{eq:triangle}}{\leq} & \norm{\nabla f(x_{k-1})} + \norm{\nabla f(x_k)-\nabla f(x_{k-1})} + B-\tau \\ 
    & \overset{\eqref{eq:L-smooth}}{\leq} & \norm{\nabla f(x_{k-1})} + L\norm{x_k-x_{k-1}} + B-\tau\\ 
    & \overset{\eqref{eq:8y98fd_08yfd}}{=} & \norm{\nabla f(x_{k-1}))} + L\gamma\norm{v_{k-1}} + B-\tau.
\end{eqnarray*}

If $\norm{v_{k-1}}\leq \sqrt{\frac{4}{\gamma}\phi_0} + 2(B-\tau)$ and $\norm{\nabla f(x_{k-1})} \leq \sqrt{\frac{2}{\gamma}\phi_0}$, then 
\begin{eqnarray*}
    \norm{v_k} 
    & \leq & (L\gamma + 1/\sqrt{2})\sqrt{\frac{4}{\gamma}\phi_0} + (2L\gamma + 1)(B-\tau).
\end{eqnarray*}

If $\gamma \leq \frac{1-\frac{1}{\sqrt{2}}}{L}$, then $\gamma \leq 1/(2L)$ and 
\begin{eqnarray*}
    \norm{v_k} 
    & \leq & \sqrt{\frac{4}{\gamma}\phi_0} + 2(B-\tau).
\end{eqnarray*}
\end{proof}

\begin{claim}\label{claim:stepsize_multinode_on}
If 
\begin{align}\label{eqn:stepsize_multinode_on}
     0 < \gamma \leq \frac{\tau^2}{16L_{\max}^2(\sqrt{F_0} + \sqrt{F_0 + \frac{G_0 \tau}{2\sqrt{2\eta} L_{\max}}})^2},  
\end{align}
where $\eta\eqdef \min\left\{1,\frac{\tau}{\max_i\norm{\nabla f_i(x_0)}} \right\}$, $F_0:=f(x_0)-f_{\inf}$ and $G_0:= \sqrt{\frac{1}{n}\sum_{i=1}^n (\norm{\nabla f_i(x_0)} - \tau)^2}$,
then 
 \begin{eqnarray}
     4L_{\max} \sqrt{\gamma\phi_0} \leq \tau/2.
 \end{eqnarray}   
\end{claim}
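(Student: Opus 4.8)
The plan is to follow verbatim the template of Claim~\ref{claim:080fd-=9090} from the single-node analysis: first bound $\phi_0$ by an explicit function of $F_0,G_0,\gamma$, then strip off the outer square root via subadditivity, and finally reduce the resulting stepsize condition to Lemma~\ref{lemma:trick_stepsize_clippedEF21} with a suitable triple $(c_1,c_2,c_3)$. For the first step I would note that, since $v_{-1}^i=0$ for all $i$, Step~3 of Algorithm~\ref{alg:Clip21-GD} gives $v_0^i=\clip_\tau(\nabla f_i(x_0))$, so by Lemma~\ref{lem:clip}(ii)--(iii) each term satisfies $\norm{\nabla f_i(x_0)-v_0^i}=\max\{0,\norm{\nabla f_i(x_0)}-\tau\}$; squaring and averaging, $\frac1n\sum_{i=1}^n\norm{\nabla f_i(x_0)-v_0^i}^2\le\frac1n\sum_{i=1}^n(\norm{\nabla f_i(x_0)}-\tau)^2=G_0^2$. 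Together with the bound $A=\frac{\gamma}{2[1-(1-\eta)(1-\eta/2)]}\le\frac{\gamma}{2\eta}$ (which follows from $(1-\eta)(1-\eta/2)\le 1-\eta$) and the definition \eqref{eqn:phi_multinode_noDPnoComp} of $\phi_0$, this gives $\phi_0\le F_0+\frac{\gamma}{2\eta}G_0^2$.

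Next I would multiply by $\gamma$, take square roots, and use subadditivity \eqref{eq:sqrt} to obtain
\[
4L_{\max}\sqrt{\gamma\phi_0}\ \le\ 4L_{\max}\sqrt{\gamma F_0}+4L_{\max}\,\frac{\gamma G_0}{\sqrt{2\eta}}\ =\ 4L_{\max}\sqrt{\gamma F_0}+\frac{2\sqrt2\,L_{\max}G_0\gamma}{\sqrt\eta},
\]
so it suffices to make the right-hand side at most $\tau/2$. This is precisely the situation handled by Lemma~\ref{lemma:trick_stepsize_clippedEF21}: applying it with $L\leftarrow L_{\max}$, $c_1=4$, $c_2=2\sqrt2/\sqrt\eta$, $c_3=1/2$ gives $C=\frac{4c_2c_3\tau}{c_1^2L_{\max}}=\frac{\tau}{2\sqrt{2\eta}\,L_{\max}}$, so the hypothesis $\gamma\le\frac{4c_3^2\tau^2}{c_1^2L_{\max}^2[\sqrt{F_0}+\sqrt{F_0+CG_0}]^2}$ is exactly \eqref{eqn:stepsize_multinode_on}, and the conclusion $c_1L_{\max}\sqrt{\gamma F_0}+c_2L_{\max}G_0\gamma\le c_3\tau$ reads $4L_{\max}\sqrt{\gamma F_0}+\frac{2\sqrt2\,L_{\max}G_0\gamma}{\sqrt\eta}\le\tau/2$. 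Chaining with the display above finishes the proof.

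I do not expect any real obstacle; the argument is pure bookkeeping. The only two points needing attention are (i) picking the constants $(c_1,c_2,c_3)$ so that $C$ and the stepsize bound reproduce \eqref{eqn:stepsize_multinode_on} exactly rather than up to an absolute constant, and (ii) the degenerate cases $F_0=0$ or $G_0=0$, in which the positivity hypotheses of Lemma~\ref{lemma:trick_stepsize_clippedEF21} formally fail. These last cases I would dispatch directly: $4L_{\max}\sqrt{\gamma\phi_0}\le\tau/2$ is equivalent to $\gamma F_0+\frac{\gamma^2}{2\eta}G_0^2\le\frac{\tau^2}{64L_{\max}^2}$, and when either $F_0=0$ or $G_0=0$ the bound \eqref{eqn:stepsize_multinode_on} collapses to exactly the surviving term of this quadratic inequality.
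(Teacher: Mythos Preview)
Your proposal is correct and follows essentially the same route as the paper: bound $\phi_0$ via $v_0^i=\clip_\tau(\nabla f_i(x_0))$ and $A\le\gamma/(2\eta)$, peel off the square root by subadditivity, and reduce to a quadratic-in-$\sqrt\gamma$ stepsize condition. The only cosmetic difference is that the paper invokes Lemma~\ref{lemma:trick_stepsize_EF21} directly (with $\beta_1=8\sqrt{F_0}/\tau$, $\beta_2=\tfrac{4\sqrt2}{\sqrt\eta}\tfrac{G_0}{L_{\max}\tau}$) whereas you use its wrapper Lemma~\ref{lemma:trick_stepsize_clippedEF21}; the resulting bound is identical.
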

\begin{proof}
By the definition of $\phi_0$, 
\begin{eqnarray*}
     2L_{\max} \sqrt{\gamma\phi_0} 
     & = &   2L_{\max} \sqrt{\gamma [f(x_0)-f_{\inf}] + \gamma A \frac{1}{n}\sum_{i=1}^n\norm{  \nabla f_i(x_0) - v^i_0  }^2 } \\
     & \leq & 2L_{\max} \sqrt{\gamma [f(x_0)-f_{\inf}] + \frac{\gamma^2}{2\eta}\frac{1}{n}\sum_{i=1}^n\norm{  \nabla f_i(x_0) - v^i_0  }^2 }.
\end{eqnarray*}
Since $v_{-1}^i=0$ for all $i$, we have $v^i_0=\clip_\tau(\nabla f_i(x_0))$ and 
\begin{eqnarray*}
     2L_{\max} \sqrt{\gamma\phi_0} 
     & \overset{(\text{Lemma~\ref{lem:clip}(ii)-(iii)})}{\leq} &  2L_{\max} \sqrt{\gamma [f(x_0)-f_{\inf}] + \frac{\gamma^2}{2\eta}\frac{1}{n}\sum_{i=1}^n \max\{0, \norm{\nabla f_i(x_0)} - \tau\}^2 } \\
     & \overset{\eqref{eq:sqrt}}{\leq}   & 2L_{\max} \sqrt{\gamma}\sqrt{F_0} + \gamma L_{\max} \sqrt{\frac{2}{\eta}} G_0,
\end{eqnarray*}
where $F_0:=f(x_0)-f_{\inf}$ and $G_0:= \sqrt{\frac{1}{n}\sum_{i=1}^n (\norm{\nabla f_i(x_0)} - \tau)^2}$.

Hence, any $\gamma>0$ satisfying
\begin{eqnarray}\label{eqn:stepsize_choice_multinode}
    4L_{\max} \sqrt{\gamma}\sqrt{F_0} + \gamma 2L_{\max} \sqrt{\frac{2}{\eta}} G_0 \leq \tau/2
\end{eqnarray}
also satisfies $ 4L_{\max} \sqrt{\gamma\phi_0} \leq B/2$. Condition \eqref{eqn:stepsize_choice_multinode} can be rewritten as:
\begin{align*}
    \frac{1}{\sqrt{\gamma}} - \frac{8\sqrt{F_0}}{\tau}L_{\max} - \sqrt{\gamma} \frac{4\sqrt{2}}{\sqrt{\eta}}\frac{G_0}{L_{\max} \tau} L_{\max}^2 \geq 0.
\end{align*}
Finally, by Lemma~\ref{lemma:trick_stepsize_EF21} with $L = L_{\max}$, $\beta_1 = \frac{8\sqrt{F_0}}{\tau}$ and $\beta_2 = \frac{4\sqrt{2}}{\sqrt{\eta}}\frac{G_0}{L_{\max} \tau}$, we have 
\begin{align*}
    0 < \sqrt{\gamma} \leq \frac{\tau}{4L_{\max}(\sqrt{F_0} + \sqrt{F_0 + \frac{G_0 \tau}{2\sqrt{2\eta} L_{\max}}})}.  
\end{align*}
By taking the square, we complete our proof. 
\end{proof}

\subsection{Proof of Theorem \ref{thm:multinode-clippedEF21}}

 We then derive the descent inequality 
\begin{align*}
	\phi_{k+1} \leq \phi_k - \frac{\gamma}{2}\norm{\nabla f(x_k)}^2,
\end{align*}
for two possible cases: (1) when $\vert \mathcal{I}_k \vert >0$ and (2) when $\vert \mathcal{I}_k \vert =0$.

\paragraph{Case (1):  $\vert \mathcal{I}_k \vert >0$.}
To derive the descent inequality we will show by induction the stronger result: $\norm{ \nabla f_i(x_{k}) - v^i_{k-1}  } \leq B - \frac{k\tau}{2}$ for $i\in\mathcal{I}_k$, where $B \eqdef \max_i\norm{ \nabla f_i(x_0) }$ and
\begin{equation}
	\phi_{k} 
	\leq  \phi_{k-1}  - \frac{\gamma}{2}\norm{ \nabla f(x_{k-1})  }^2 - \frac{\gamma}{4}\norm{  v_{k-1}  }^2 \label{eq:detailed_descent_inequality_multinode}
\end{equation}
for any $k \geq 0$, where for notational convenience we assume that $\nabla f_i(x_{-1}) = v^i_{-1} = 0$ and $\phi_{-1} = \phi_0$. The base of the induction is trivial: when $k = 0$ we have   $\norm{ \nabla f_i(x_{k}) - v^i_{k-1}  } =\norm{  \nabla f_i(x_0) - v^i_{-1} } = \norm{ \nabla f_i(x_0) } \leq \max_i \norm{\nabla f_i(x_0)} = B$ where $B>\tau$ for $i\in\mathcal{I}_{-1}$ and \eqref{eq:detailed_descent_inequality_multinode} holds by definition. Next, we assume that for some $k\geq 0$ inequalities $\norm{  \nabla f_i(x_t) - v^i_{t-1} } \leq B$ for $i\in\mathcal{I}_{t-1}$ and \eqref{eq:detailed_descent_inequality_multinode} hold for $t = 0,1,\ldots, k$. 

Let $0 \leq \phi_k \leq \phi_0$ for all $k\geq 0$. If $\gamma \leq 1/L$, then  
\begin{eqnarray*}
    \norm{\nabla f(x_{k-1})}^2 
    & \mathop{\leq}\limits^{\eqref{eq:ssss}} & 2L [f(x_{k-1})-f_{\inf}] \\ 
    & \mathop{\leq}\limits^{\eqref{eq:Lyapunov-89d}}  & 2L \phi_{k-1} \\ 
    & \leq & 2L \phi_0  \\ 
    & \leq & \frac{2}{\gamma}\phi_0.
\end{eqnarray*}
If $\gamma \leq (1-1/\sqrt{2})/L$, then by using the above inequality, and also  Claim~\ref{claim:v_0_multinode} and~\ref{claim:v_k_multinode}, for $k \geq 0$ 
\begin{align*}
    \norm{v_k} \leq  \sqrt{\frac{4}{\gamma}\phi_0} + 2(B-\tau).
\end{align*}

Next, from the above inequality and from Claim \ref{claim:bound_diff_case_on_multinode}, 
for $i \in \mathcal{I}_k$
\begin{eqnarray*}
	\norm{  \nabla f_i(x_{k+1}) - v_k^i  }
 &\leq&  \norm{   \nabla f_i(x_k) - v_{k-1}^i } - \tau + L_{\max} \gamma \norm{  v_k     } \\ 
 &\leq&  \norm{   \nabla f_i(x_k) - v_{k-1}^i } - \tau + L_{\max} \gamma  \sqrt{\frac{4}{\gamma}\phi_0} + 2 L_{\max} \gamma (B-\tau) \\ 
 & = & \norm{   \nabla f_i(x_k) - v_{k-1}^i } - \tau + 2L_{\max}   \sqrt{\gamma\phi_0} + 2 L_{\max} \gamma (B-\tau). 
\end{eqnarray*}

{\bf STEP: Small step-size}

The above inequality and the inductive assumption imply for $i\in\mathcal{I}_{k}$
\begin{align}
	\norm{ \nabla f_i(x_{k+1}) - v^i_{k} }
	\leq \norm{  \nabla f_i(x_{k}) - v^i_{k-1}  }- \frac{\tau}{2} 
  \leq B - \frac{(k+1)\tau}{2} \label{eq:decrease_of_the_norm_diff_multi_node}
\end{align}

if the step-size $\gamma>0$ satisfies 
\begin{align*}
	2 L_{\max} \gamma (B-\tau) \leq  2L_{\max}   \sqrt{\gamma\phi_0}  \quad \text{and} \quad	4L_{\max} \sqrt{\gamma\phi_0} \leq \tau/2.
\end{align*}
By Claim~\ref{claim:stepsize_multinode_on}, this condition can be expressed equivalently as: 
\begin{align*}
	\gamma \leq \frac{\phi_0}{(B-\tau)^2} \quad \text{and} \quad	  \gamma \leq \frac{\tau^2}{16L_{\max}^2(\sqrt{F_0} + \sqrt{F_0 + \frac{G_0 \tau}{2\sqrt{2\eta} L_{\max}}})^2}.
\end{align*}

In conclusion, under this step-size condition, $\norm{ \nabla f_i(x_{k}) - v^i_{k-1} } \leq B$ for $i \in \mathcal{I}_{k-1}$ and $k \geq 0$.
In addition, $\mathcal{I}_{k+1} \subseteq \mathcal{I}_{k}$.

{\bf STEP: Descent inequality}

It remains to prove the descent inequality. 
By the inductive assumption proved above, 
we then have for $i\in\mathcal{I}_{k+1}$,
\begin{align}\label{eqn:eta_multinode_case_on}
    \eta^i_{k+1} = \frac{\tau}{\norm{ \nabla f_i(x_{k+1}) - v^i_{k} }}  \geq \frac{\tau}{B} = \eta. 
\end{align}
Therefore, 
\begin{eqnarray*}
	\norm{   \nabla f_i(x_{k+1}) - v^i_{k+1}    }^2
	& \overset{\eqref{eqn:v_k_distributed_equi}}{=} & 0 \cdot  \mathbbm{1}(i\in\mathcal{I}_{k+1}^\prime) +  (1-\eta^i_{k+1})^2\norm{ \nabla f_i(x_{k+1}) - v^i_{k} }^2 \cdot  \mathbbm{1}(i\in\mathcal{I}_{k+1}) \\
	&  \overset{\eqref{eqn:eta_multinode_case_on}}{\leq}  & 0 \cdot  \mathbbm{1}(i\in\mathcal{I}_{k+1}^\prime) +  (1-\eta)^2\norm{ \nabla f_i(x_{k+1}) - v^i_{k} }^2 \cdot  \mathbbm{1}(i\in\mathcal{I}_{k+1})\\
	& \leq  &   (1-\eta)^2\norm{ \nabla f_i(x_{k+1}) - v^i_{k} }^2 \\
	& \overset{\eqref{eq:Young}}{\leq}  & (1+\theta)(1-\eta)^2\norm{  \nabla f_i(x_{k}) - v^i_{k}   }^2 \\
 &&+ (1+ \nicefrac{1}{\theta})(1-\eta)^2\norm{  \nabla f_i(x_{k+1}) - \nabla f_i (x_{k})  }^2 \\
       & \overset{\eqref{eq:L_i-smooth}}{\leq}   & (1+\theta)(1-\eta)^2\norm{  \nabla f_i(x_{k}) - v^i_{k}   }^2 \\
    &&+ (1+ \nicefrac{1}{\theta})(1-\eta)^2 L_{\max}^2\norm{  x_{k+1} - x_k }^2 
\end{eqnarray*}
where 
$\theta > 0$. Taking $\theta = \nicefrac{\eta}{2}$ and applying inequality $(1-\eta)(1+\nicefrac{\eta}{2}) \leq 1-\nicefrac{\eta}{2}$, we get 
\begin{eqnarray}
	\norm{   \nabla f_i(x_{k+1}) - v^i_{k+1}    }^2
	& \leq &(1-\eta)(1-\nicefrac{\eta}{2})\norm{  \nabla f_i(x_{k}) - v^i_{k}   }^2 \nonumber \\
 &&+ (1+\nicefrac{2}{\eta})(1-\eta)^2 L_{\max}^2 \norm{  x_{k+1} - x_{k} }^2.  \label{eqn:TrickFromEF21_multinode}
\end{eqnarray}
Next, we combine the above inequality with Lemma \ref{lemma:descent_li2021page}:
\begin{eqnarray*}
	\phi_{k+1}  
	& = & f(x_{k+1}) - f_{\inf} + A \frac{1}{n}\sum_{i=1}^n \| \nabla f_i(x_{k+1}) - v^i_{k+1} \|^2\\
	& \mathop{\leq}\limits^{\eqref{eq:descent_ineq}} & f(x_k)- f_{\inf} - \frac{\gamma}{2}\norm{ \nabla f(x_k)  }^2 - \left( \frac{1}{2\gamma} - \frac{L}{2} \right)\norm{ x_{k+1} - x_k }^2  \\
	&& + \frac{\gamma}{2}\frac{1}{n}\sum_{i=1}^n\norm{ \nabla f_i(x_k) - v^i_{k} }^2 + A\frac{1}{n}\sum_{i=1}^n\norm{   \nabla f_i(x_{k+1}) - v^i_{k+1}    }^2  \\
	& = & \phi_k - \frac{\gamma}{2}\norm{ \nabla f(x_k)  }^2 - \left( \frac{1}{2\gamma} - \frac{L}{2} \right)\norm{ x_{k+1} - x_k }^2  \\
	&& + \left(\frac{\gamma}{2} -A\right)\frac{1}{n}\sum_{i=1}^n\norm{ \nabla f_i(x_k) - v^i_{k} }^2 + A\frac{1}{n}\sum_{i=1}^n\norm{   \nabla f_i(x_{k+1}) - v^i_{k+1}    }^2  \\
	& \mathop{\leq}\limits^{\eqref{eqn:TrickFromEF21_multinode}} & \phi_k - \frac{\gamma}{2}\norm{ \nabla f(x_k)  }^2 - \left( \frac{1}{2\gamma} - \frac{L}{2} -  A  \left( 1 + \frac{2}{\eta}\right) (1-\eta)^2 (L_{\max})^2 \right)\norm{ x_{k+1} - x_k }^2 \\
	&& + \left( \frac{\gamma}{2} + A(1-\eta)(1-\eta/2) - A\right)\frac{1}{n}\sum_{i=1}^n\norm{ \nabla f_i(x_{k}) - v^i_{k}   }^2  .
\end{eqnarray*}

Since $A = \frac{\gamma}{2[1-(1-\eta)(1-\eta/2)]}$, we get
\begin{align*}
	\phi_{k+1} \leq \phi_k - \frac{\gamma}{2}\norm{ \nabla f(x_k)  }^2 - \frac{1}{2\gamma}\left(1 - \gamma L - \gamma^2 \cdot \frac{ (1+2/\eta)(1-\eta)^2}{[1-(1-\eta)(1-\eta/2)]}  (L_{\max})^2\right)\norm{ x_{k+1} - x_k }^2.
\end{align*}

If the step-size $\gamma$ satisfies 
\begin{align*}
	0 < \gamma \leq \frac{1}{L \left(1 + \sqrt{1 +  2\frac{ (1-\eta)^2(1+2/\eta)}{[1-(1-\eta)(1-\eta/2)]} \left( \frac{L_{\max}}{L} \right)^2 } \right)},
\end{align*}
then from  Lemma \ref{lemma:trick_stepsize_EF21} with $L=1$, $\beta_1=2L$ and $\beta_2 = 2\frac{ (1-\eta)^2(1+2/\eta)}{[1-(1-\eta)(1-\eta/2)]}(L_{\max})^2$, this condition implies $1 - \gamma L - \gamma^2 \cdot \frac{ (1-\eta)^2(1+2/\eta)}{[1-(1-\eta)(1-\eta/2)]}  (L_{\max})^2  \geq \frac{1}{2}$
and thus
\begin{align*}
	\phi_{k+1} \leq \phi_k - \frac{\gamma}{2}\norm{ \nabla f(x_k)  }^2 - \frac{1}{4\gamma}\norm{ x_{k+1} - x_k }^2.
\end{align*}
Since $x_{k+1}-x_k = -\gamma v_k$,
\begin{align}\label{eqn:descentIneq_EF21_multinode}
	\phi_{k+1} 
	& \leq  \phi_k  - \frac{\gamma}{2}\norm{ \nabla f(x_k)  }^2 - \frac{\gamma}{4}\norm{ v_k  }^2. 
\end{align}
This concludes the proof in the case (1).

\paragraph{Case (2): $\vert \mathcal{I}_k \vert=0$.}
Suppose $\vert \mathcal{I}_k \vert=0$. Then, $\norm{ \nabla f_i(x_k) - v^i_{k-1}  } \leq \tau$ for all $i$.
Then, by using \eqref{eqn:v_k_distributed_equi} and by the fact that $\eta_k^i = \eta = 1$, we have $v_k^i = \nabla f_i(x_k)$. Therefore, \algname{Clip21-GD} described in Algorithm \ref{alg:Clip21-GD} reduces to classical gradient descent at step $k$. From the definition of $\phi_{k}$ and Lemma \ref{lemma:descent_li2021page},
\begin{align*}
	\phi_{k+1} 
	& = f(x_{k+1}) - f_{\inf} \\
	& \mathop{\leq}\limits^{\eqref{eq:descent_ineq}}  f(x_k)- f_{\inf}- \frac{\gamma}{2}\norm{ \nabla f(x_k)  }^2 - \left( \frac{1}{2\gamma} - \frac{L}{2} \right)\norm{ x_{k+1} - x_k }^2 + \frac{\gamma}{2}\norm{ \nabla f(x_k) - v_{k}   }^2 \\
	& \leq \phi_k - \frac{\gamma}{2}\norm{ \nabla f(x_k)  }^2 - \left( \frac{1}{2\gamma} - \frac{L}{2} \right)\norm{ x_{k+1} - x_k }^2.
\end{align*}
If $\gamma \leq 1/L$, then 
\begin{align}\label{eqn:MainIneq_case1_multinode}
	\phi_{k+1}
	& \leq \phi_k - \frac{\gamma}{2}\norm{ \nabla f(x_k)  }^2.
\end{align}
Therefore, $0\leq \phi_k \leq \phi_0$ and
\begin{align*}
	\norm{ \nabla f(x_k) } 
	& \leq  \sqrt{\norm{ \nabla f(x_k)  }^2}  \leq  \sqrt{\frac{2}{\gamma}[\phi_k - \phi_{k+1}]}  \leq \sqrt{\frac{2}{\gamma}\phi_k} \leq \sqrt{\frac{2}{\gamma}\phi_0}.
\end{align*}

Finally, we will show that $\norm{ \nabla f_i(x_k) - v^i_{k-1}  } \leq \tau$ for all $i$ implies $\| \nabla f_i(x_{k+1}) - v^i_{k}\| \leq \tau$ for all $i$. Indeed, in this case, we have $v^i_k = \nabla f_i(x_k)$ and 
\begin{eqnarray*}
	\norm{ \nabla f_i(x_{k+1}) - v^i_{k} }
	& \overset{\eqref{eq:L_i-smooth}}{\leq} &  L_{\max} \norm{ x_{k+1} - x_{k}    } \\ 
	& = & L_{\max} \gamma \norm{  \nabla f(x_{k})    }\\
	& \overset{\eqref{eq:triangle}}{\leq} & L_{\max} \gamma\norm{ \nabla f(x_{k}) - v_{k-1}  } + L_{\max} \gamma \norm{ v_{k-1}  } \\
	& \overset{\eqref{eq:triangle}}{\leq} & L_{\max}\gamma \frac{1}{n} \sum_{i=1}^n \norm{ \nabla f_i(x_k) - v^i_{k-1} } + L_{\max}\gamma \|v_{k-1}\| \\
	& \leq & L_{\max} \gamma \tau +L_{\max}\gamma \norm{ v_{k-1}  } .
\end{eqnarray*}

If $\gamma \leq\frac{1}{L (1 + \sqrt{1 +  2\beta_1 )}}$ where
$\beta_1 = \frac{ (1-\eta)^2(1+2/\eta)}{[1-(1-\eta)(1-\eta/2)]} \left( \frac{L_{\max}}{L} \right)^2$,  then  from Claim~\ref{claim:v_k_multinode} and \eqref{eqn:MainIneq_case1_multinode},
\begin{align*}
	\norm{v_{k-1}}  \leq \sqrt{\frac{4}{\gamma}\phi_0} + 2(B - \tau).
\end{align*}
Note that for $k = 0$ the above inequality also holds due to $v_{-1} = 0$ (see Claim~\ref{claim:v_0_multinode}). Hence, for all $i$
\begin{eqnarray}
	\norm{ \nabla f_i(x_{k+1}) - v^i_{k} }
	&  \leq & L_{\max} \gamma \tau+ 2L_{\max} \sqrt{ \gamma \phi_0} + 2L_{\max}\gamma(B-\tau) \nonumber\\
 & \leq & 2L_{\max} \sqrt{ \gamma \phi_0}  + 2L_{\max}\gamma B \label{eq:derivation_for_proposition_2_multinode}
\end{eqnarray}
In conclusion, $ \norm{ \nabla f_i(x_{k+1}) - v^i_{k} } \leq \tau$ is true for all $i$ if $\gamma$ satisfies 
\begin{align}\label{eqn:stepsize_multinode_case1_Lemma1}
	2L_{\max}\gamma B\leq \frac{\tau}{2} \quad \text{and} \quad  2L_{\max} \sqrt{ \gamma \phi_0} \leq \frac{\tau}{2}.
\end{align}
From Claim~\ref{claim:stepsize_multinode_on}, the condition \eqref{eqn:stepsize_multinode_case1_Lemma1} is hence satisfied when 
\begin{align*}
	\gamma \leq \frac{\tau}{4BL_{\max}} \quad \text{and} \quad \gamma \leq  \frac{\tau^2}{16L_{\max}^2(\sqrt{F_0} + \sqrt{F_0 + \frac{G_0 \tau}{2\sqrt{2\eta} L_{\max}}})^2},
\end{align*}	
where $\eta\eqdef \min\left\{1,\frac{\tau}{\max_i\norm{\nabla f_i(x_0)}} \right\}$, $F_0:=f(x_0)-f_{\inf}$ and $G_0:= \sqrt{\frac{1}{n}\sum_{i=1}^n (\norm{\nabla f_i(x_0)} - \tau)^2}$.
Putting all the conditions on $\gamma$ together, we obtain the results.

\newpage 
\section{Proof of Proposition \ref{prop:multinode-clippedEF21}}

 If $\| \nabla f_i(x_0) \| \leq \tau$, then from \eqref{eq:derivation_for_proposition_2_multinode} and step-size condition from Lemma~\ref{lem:multinode-clippedEF21}, we prove that the clipping operator is always turned off for all $k \geq 0$, i.e., $\| \nabla f_i(x_k)-v^i_{k-1}\|\leq \tau$ implying that $v^i_k = \nabla f_i(x_k)$ for all $k\geq 0$.

If $\| \nabla f_i(x_0) \| > \tau$, then the clipping operator is turned on at the beginning. Moreover, for all $k \geq 0$ such that $\| \nabla f_i(x_k)-v^i_{k-1}\|>\tau$ (clipping is turned on), we have from the derivation of \eqref{eq:decrease_of_the_norm_diff_multi_node} and the step-size condition of Lemma~\ref{lem:multinode-clippedEF21} 
\begin{align*}
	\| \nabla f_i(x_{k}) - v^i_{k-1} \|
	& \leq  \| \nabla f_i(x_{k-1}) - v^i_{k-2} \| - \tau/2 \leq \ldots \leq \| \nabla f_i(x_0) \| - k\tau/2.
\end{align*}
Therefore, the situation when $ \tau <	\| \nabla f_i(x_{k}) - v^i_{k-1} \|$ is possible only for $0\leq k < k^\star$ with $k^\star=\frac{2}{\tau}(\| \nabla f_i(x_0) \|-\tau)+1$. After that, the clipping operator always turns off, i.e.,  $\| \nabla f_i(x_{k}) - v^i_{k-1} \| \leq \tau$ for $k \geq k^\star$.

\newpage
\section{Proof of Theorem \ref{thm:multinode-clippedEF21}}
Let $\hat x_K$ be selected uniformly at random from $\{x_0,x_1,\ldots,x_{K-1}\}$. Then,
\begin{align*}
	\Exp{ \| \nabla f(\hat x_K) \|^2 } & = \frac{1}{K}\sum_{k=0}^{K-1} \norm{ \nabla f(x_k) }^2.
\end{align*}
From Lemma~\ref{lem:multinode-clippedEF21}, 
\begin{align*}
	\Exp{ \| \nabla f(\hat x_K) \|^2 } & \leq \frac{2}{\gamma}\frac{1}{K}\sum_{k=0}^{K-1} (\phi_k - \phi_{k+1}) = \frac{2(\phi_0 - \phi_K)}{\gamma K} \leq \frac{2\phi_0}{\gamma K}.
\end{align*}

Next, we consider the case when $\phi_0$ is very large. Then, 
\begin{equation*}
	\gamma \leq \min\left( \frac{1}{L}\frac{1}{1+\sqrt{1+2\beta_1}} , \frac{\tau^2}{16L_{\max}^2 \left[ \sqrt{\FF} + \sqrt{\beta_2} \right]^2} \right).	
\end{equation*}
First, we have $\frac{1}{\eta} = \mathcal{O}( 1 + \frac{\max_i \|\nabla f_i(x_0)\|}{\tau})$ and 
\begin{align*}
	\beta_1 & \leq  \frac{ (1-\eta)^2(1+\nicefrac{2}{\eta})}{1-(1-\eta)(1-\nicefrac{2}{\eta})} \leq \frac{1 + \nicefrac{2}{\eta}}{1 - (1-\eta)} = \frac{2}{\eta^2} + \frac{1}{\eta} = \cO\left(\frac{1}{\eta^2}\right) = \cO\left(1 + \frac{(\max_i \|\nabla f_i(x_0)\|)^2}{\tau^2}\right),\\
	\beta_2 & = \FF +  \frac{\tau\GG}{\sqrt{2\eta} L_{\max}} \leq \FF + \frac{1}{\sqrt{2}}\frac{\tau\max(\tau, \max_i\|\nabla f_i(x_0)\|)}{\sqrt{\max\left(1, \frac{\tau}{\max_i\|\nabla f_i(x_0)\|}\right)} L_{\max}} \\ & = \FF + \frac{1}{\sqrt{2}} \frac{ \tau \max_i\|\nabla f_i(x_0)\|\sqrt{\max\left(1, \frac{\tau}{\max_i \|\nabla f_i(x_0)\|}\right)}}{L_{\max}} \\
	&= \cO\left(\FF + \frac{\tau\max_i \|\nabla f_i(x_0)\|}{L_{\max}} + \frac{\tau^{3/2}{\max_i \|\nabla f_i(x_0)\|}^{1/2}}{L_{\max}}\right).
\end{align*}
Using this, we estimate $\nicefrac{1}{\gamma}$ as
\begin{align*}
	& \frac{1}{\gamma} =  \max\left( L(1+\sqrt{1+2\beta_1}), \frac{16(L_{\max})^2\left[\sqrt{\FF} + \sqrt{\beta_2}\right]^2}{\tau^2}\right)\\
	&= \cO\left( L\left(1 + \frac{\max_i \|\nabla f_i(x_0)\|}{\tau}\right) + \frac{(L_{\max})^2\FF}{\tau^2} + T\right)\\
	&= \cO\left( \max(L,L_{\max})\left(1 + \sqrt{\tau \max_i \norm{\nabla f_i(x_0)}}\right) + \frac{(L_{\max})^2\FF}{\tau^2}\right),
\end{align*}
where $T=\frac{L_{\max}\max_i \|\nabla f_i(x_0)\|}{\tau} +  L_{\max} \sqrt{\tau \max_i \norm{\nabla f_i(x_0) }}$.
Therefore, since
\begin{align*}
	\frac{A}{\gamma} = \frac{1}{2[1-(1-\eta)(1-\eta/2)]} = \cO\left(\frac{1}{\eta}\right) = \cO\left(1 + \frac{\max_i\|\nabla f_i(x_0)\|}{\tau}\right)
\end{align*}
and
\begin{align*}
	\frac{1}{n}\sum_{i=1}^n \|\nabla f_i(x_0) - v^i_{-1}\|^2 = \frac{1}{n} \sum_{i=1}^n  \|\nabla f_i(x_0)\|^2 = \cO( \max_i \| \nabla f_i(x_0) \|^2 ),
\end{align*}
we have
\begin{align*}
	\Exp{ \| \nabla f(\hat x_K) \|^2 } & \leq \frac{2\phi_0}{\gamma K} = \frac{2\left(f(x_0) - f_{\rm inf} + A \frac{1}{n}\sum_{i=1}^n \|\nabla f_i(x_0) - v^i_{-1}\|^2\right)}{\gamma K}\\
	& = \cO\left(\frac{\left(1 + \frac{\max_i \|\nabla f_i(x_0)\|}{\tau}\right) \max( \FF \max(L_{\max},L) , \max_i \| \nabla f_i(x_0) \|^2) + \frac{(L_{\max})^2(\FF)^2}{\tau^2}}{K}\right).
\end{align*}


\clearpage

\section{Adding Gaussian Noise for DP Guarantees}\label{sec:DP}

\begin{algorithm}[t]
	\centering
	\caption{\algname{DP-Clip21-GD} (Error Feedback for  DP Optimization with Clipping)}\label{alg:DP-Clip21-GD}
	\begin{algorithmic}[1]
		\STATE \textbf{Input:} initial iterate $x_{0} \in \R^d$; learning rate $\gamma>0$; initial gradient shifts $v^1_{-1},\dots,v^n_{-1}\in \R^d$; clipping threshold $\tau>0$; variance $\sigma^2>0$; variance bound $\nu>0$
		\FOR{$k=0,1, 2, \dots, K-1 $}
		\STATE Broadcast $x_k$ to all workers 
		\FOR{each worker $i=1,\ldots,n$ in parallel}
		\STATE Sample $\zeta_{k-1}^i\sim \mathcal{N}(0,\sigma^2)$ 
		\STATE Set $z_{k-1}^i =  \clip_{\nu}(\zeta_{k-1}^i)$
			\STATE Compute 	$g_k^i = \clip_\tau( \nabla f_i(x_{k}) - v^i_{k-1} ) + z_{k-1}^i$
			\STATE Update $v^i_{k} = v^i_{k-1} + g_k^i$
		\ENDFOR
		\STATE  $v_{k} = v_{k-1} + \frac{1}{n}\sum_{i=1}^n g_k^i$
		\STATE		  $x_{k+1} = x_k -  \gamma\frac{1 }{n}\sum_{i=1}^n v^i_k$ 
		
		\ENDFOR
	\end{algorithmic}
\end{algorithm}

In this section, we extend \algname{Clip21-GD} to solve the problem under target privacy budget. We call this new method \algname{DP-Clip21-GD} -- it is described in detail as Algorithm \ref{alg:DP-Clip21-GD}.
Notice that \algname{DP-Clip21-GD} with $\sigma^2=0$ reduces to  \algname{Clip21-GD}. 

Given a certain condition on the Gaussian noise, we first derive a privacy guarantee by Theorem 3.4 of \citep{chen2022bounded}, which gives an $\epsilon$-DP guarantee for the bounded Gaussian mechanism, and by the advanced composition theorem in Corollary 3.21 of \citep{dwork2014algorithmic}.
\begin{theorem}[Privacy guarantee]\label{thm:DPclipEF21_privacy}
Let $0<\epsilon<1$, $\delta>0$, $0<\alpha<1$, and $\tau \geq 6\nu \geq 6\sigma$ satisfy 
$$\squeeze \delta \geq \exp\left( -\frac{1}{2K} \left[ \frac{\epsilon}{\max(P_1,P_2)} \right]^2 \right),$$ 
where  $P_1 \eqdef \frac{2\ln(\Delta C(\tau/6, c^\star))}{\alpha}$,  $P_2\eqdef  \frac{72\tau^2}{(1-\alpha)\tau}$,  $K$ is the number of steps  and  $c^\star$ is the solution to the following problem: 
\begin{align*}
	\squeeze	\max_c &  \squeeze \quad	 \Delta C(\sqrt{\min(\sigma^2,\nu^2)},c) \eqdef \frac{C(a,\sqrt{\min(\sigma^2,\nu^2)})}{C(a+c,\sqrt{\min(\sigma^2,\nu^2)})}  \\
	\text{subject to} &\squeeze 	 \quad  0 \leq c \leq b-a \quad \text{and} \quad  \| c \| \leq 2\tau.
\end{align*}
Here, each element of $a\in\mathbb{R}^d$ is $-\nicefrac{\tau}{\sqrt{d}}$, each element of $b\in\mathbb{R}^d$ is $\nicefrac{\tau}{\sqrt{d}}$, and $$\squeeze C(y,\sigma) \eqdef \left(\int_a^b \exp\left(- \frac{\norm{ x-y }^2}{\sigma^2} \right) \;dx \right)^{-1}.$$
Then, \algname{DP-Clip21-GD} is $(\epsilon,\delta)$-differentially private for
\begin{equation}\label{eq:sigma_min}\squeeze	\min(\nu^2,\sigma^2) \geq \frac{12\tau^2\sqrt{2K \ln(1/\delta)}}{(1-\alpha)\epsilon} \eqdef \sigma^2_{\min}(K).
\end{equation}
\end{theorem}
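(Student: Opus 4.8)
The plan is to bound the per-step privacy loss of the bounded Gaussian mechanism used in \algname{DP-Clip21-GD} and then compose over the $K$ steps. First, I would observe that on each step and each worker, the vector transmitted is $\clip_\tau(\nabla f_i(x_k)-v^i_{k-1}) + z^i_{k-1}$, where $z^i_{k-1} = \clip_\nu(\zeta^i_{k-1})$ with $\zeta^i_{k-1}\sim\mathcal N(0,\sigma^2 I_d)$; this is exactly a \emph{bounded Gaussian mechanism} applied to the clipped sensitivity-bounded query $q_k^i \eqdef \clip_\tau(\nabla f_i(x_k)-v^i_{k-1})$. The clipping by $\tau$ enforces $\norm{q_k^i}\le \tau$, so that the $\ell_2$-sensitivity of $q_k^i$ with respect to changing one datapoint in $\cD_i$ is at most $2\tau$ (worst case: two vectors of length $\tau$ in opposite directions). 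The truncation of the noise to the ball of radius $\nu$ (with $\nu \le \sigma$, $\tau \ge 6\nu$) is precisely the regime in which Theorem~3.4 of \citep{chen2022bounded} gives a clean $\epsilon_0$-DP bound for a single release; I would invoke that theorem with noise scale $\sqrt{\min(\sigma^2,\nu^2)}$, sensitivity bound $2\tau$, and the box $[a,b]^d$ with $a = -\tau/\sqrt d\cdot\mathbf 1$, $b=\tau/\sqrt d\cdot\mathbf 1$ (which contains the range of $q_k^i$ since $\norm{q_k^i}\le\tau$), to conclude that one step is $\epsilon_0$-DP with $\epsilon_0 = \max(P_1,P_2)$, where $P_1,P_2$ are the two bounds coming from the two branches of the analysis in \citep{chen2022bounded} (the ``$\ln \Delta C$'' branch controlled by $\alpha$ and the Gaussian-tail branch controlled by $1-\alpha$, giving $P_2 = 72\tau^2/((1-\alpha)\tau)$ after substituting sensitivity $2\tau$ and bounding $\min(\sigma^2,\nu^2)$ from below via \eqref{eq:sigma_min}). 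The quantity $c^\star$ and the optimization defining $\Delta C$ are simply the worst-case ratio of normalizing constants over all admissible sensitivity shifts $c$ with $\norm{c}\le 2\tau$, which is what enters $P_1$.

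Next I would handle composition. Since every iterate $x_k$ is a post-processing of the previously released (noisy) messages, and each of the $K$ rounds is $\epsilon_0$-DP, the advanced composition theorem (Corollary~3.21 of \citep{dwork2014algorithmic}) gives that the full $K$-round mechanism is $(\epsilon,\delta)$-DP whenever
\[
\epsilon \;\ge\; \sqrt{2K\ln(1/\delta)}\,\epsilon_0 \;+\; K\,\epsilon_0(e^{\epsilon_0}-1),
\]
and, in the small-$\epsilon_0$ regime the statement is packaged as: $(\epsilon,\delta)$-DP holds as soon as $\delta \ge \exp\!\big(-\tfrac{1}{2K}(\epsilon/\epsilon_0)^2\big)$, i.e. $\epsilon \ge \epsilon_0\sqrt{2K\ln(1/\delta)}$. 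Substituting $\epsilon_0 = \max(P_1,P_2)$ recovers exactly the hypothesis on $\delta$ in the statement. The last thing to verify is the consistency of the hypotheses: I need the lower bound \eqref{eq:sigma_min} on $\min(\nu^2,\sigma^2)$ to be the precise threshold at which $P_2 = 72\tau^2/((1-\alpha)\tau)$ — rewriting $P_2$ properly with the noise variance in the denominator, the condition $P_2\le \epsilon/\sqrt{2K\ln(1/\delta)}$ rearranges to $\min(\nu^2,\sigma^2) \ge 12\tau^2\sqrt{2K\ln(1/\delta)}/((1-\alpha)\epsilon)=\sigma^2_{\min}(K)$, so the stated noise floor is exactly what makes the composed guarantee go through.

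The main obstacle, and the step I would be most careful about, is the correct bookkeeping in invoking Theorem~3.4 of \citep{chen2022bounded}: one must check that the truncation radius $\nu$, the clipping threshold $\tau$, the box $[a,b]^d$, and the sensitivity $2\tau$ all satisfy the hypotheses of that theorem (in particular $\tau\ge 6\nu\ge 6\sigma$ and that the range of the query lies inside the box), and that the two bounds $P_1$ and $P_2$ are transcribed with the right constants after substituting sensitivity $2\tau$ and noise scale $\sqrt{\min(\sigma^2,\nu^2)}$. Everything downstream — the advanced-composition step and the algebraic rearrangement into the $\delta$-condition and the noise floor \eqref{eq:sigma_min} — is then routine. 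A secondary point worth a sentence is that because the shifts $v^i_{k-1}$ are themselves functions of previously released noisy quantities, the query $q_k^i$ at round $k$ is \emph{adaptively} chosen; adaptive composition (which Corollary~3.21 of \citep{dwork2014algorithmic} covers) is what justifies chaining the per-round $\epsilon_0$ bounds despite this dependence.
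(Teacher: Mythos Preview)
Your proposal is correct and follows essentially the same route as the paper: a per-round $\epsilon'$-DP guarantee for the bounded Gaussian mechanism via Theorem~3.4 of \citep{chen2022bounded} (with query $\clip_\tau(\nabla f_i(x_k)-v^i_{k-1})$, sensitivity $2\tau$, and box $[a,b]$), followed by the advanced composition theorem (Corollary~3.21 of \citep{dwork2014algorithmic}) and the $\alpha$-split of $\epsilon'$ into the $\ln\Delta C$ part ($P_1$) and the variance part ($P_2$), with the final algebraic rearrangement yielding the stated condition on $\delta$ and the noise floor~\eqref{eq:sigma_min}. Your explicit remark that $v^i_{k-1}$ depends on prior releases and hence \emph{adaptive} composition is needed is a point the paper leaves implicit but is indeed what justifies the chaining.
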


The utility guarantee, presented next, can thus be obtained by substituting $\sigma^2$ from Theorem \ref{thm:DPclipEF21_privacy} into our convergence theorem, which we present in Theorem~\ref{thm:DP-clippedEF21}.

\begin{theorem}[Utility guarantee]\label{thm:DPclipEF21_utility}
Consider the problem of solving 
\eqref{eq:main}. Suppose that each $f_i$ is $L_i$-Lipschitz gradient, and that $f$ is $L$-Lipschitz gradient and satisfies the PŁ condition, i.e., there exists $\mu>0$ such that $$\squeeze f(x)-f_\star\leq \frac{1}{2\mu}\norm{\nabla f(x)}^2, \; \forall x\in \R^d,$$ where $f_\star \eqdef \min_x f(x)$.
Choose $\nu\leq \frac{\tau}{2(\sqrt{2LC}+2)}$   and let $v_{-1}^i=0$ for all $i\in [n]$, $\eta\eqdef\min\left\{1,\frac{\tau}{\max_i \| \nabla f_i(x_0)\|}\right\}$, $\FF\eqdef f(x_0)-f_\star$ and $\GG \eqdef \sqrt{\frac{1}{n}\sum_{i=1}^n (\vert \nabla f_i(x_0) - \tau \vert + \nu)^2 }$.
Choose the privacy variance $\sigma^2$ according to Theorem \ref{thm:DPclipEF21_privacy},
\begin{align} \label{eq:stepsize_DP_clipped_EF21}
\squeeze	\gamma \leq \min\left(  \frac{\eta }{4\mu} , \frac{2\mu}{L_{\max}^2} , \frac{\phi_0}{(B-\tau/2)^2} , \frac{1-1/\sqrt{2}}{L(1+\sqrt{1+8\beta_1})} , \frac{\tau^2}{64 L_{\max}^2 \left[ \sqrt{\FF} + \sqrt{\beta_2} \right]^2}  \right)	
\end{align}
where $\beta_1\eqdef \frac{(1+2/\eta)(1-\eta)(1-\eta/2)}{\eta}\left( \frac{L_{\max}}{L}\right)^2$, and $\beta_2 \eqdef \FF + \frac{\tau\GG}{2\sqrt{2\eta}L_{\max}}$. Then, \algname{DP-Clip21-GD} (described in Algorithm~\ref{alg:DP-Clip21-GD}) satisfies 
\begin{align} 
\squeeze	\Exp{\phi_{K}} \leq (1-\gamma\mu)^K\phi_0 + A_3 \sigma^2_{\min}(K),
\end{align}
where $\sigma^2_{\min}(K)$ is as in \eqref{eq:sigma_min},
 $A_3\eqdef \frac{1}{\eta\mu}{2(1+ \frac{2}{\eta})}$ and $$\squeeze\phi_k \eqdef f(x_k)-f_{\star}+\frac{2\gamma}{\eta}\frac{1}{n}\sum \limits_{i=1}^n\norm{ \nabla f_i(x_k) - v_k^i }^2.$$ 
 Here, $0 \leq \phi_k \leq \phi_0 + C \nu^2$ for some $C>0$.
\end{theorem}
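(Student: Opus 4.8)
The plan is to mirror the analysis of multi-node \algname{Clip21-GD} (Lemma~\ref{lem:multinode-clippedEF21}, Proposition~\ref{prop:multinode-clippedEF21}, Theorem~\ref{thm:multinode-clippedEF21}), carrying through the extra terms produced by the clipped Gaussian perturbations $z_{k-1}^i=\clip_\nu(\zeta_{k-1}^i)$, and then upgrading the resulting descent inequality from an $\cO(\nicefrac1K)$ rate to a linear rate via the PŁ inequality. The key first observation is that $\norm{z_{k-1}^i}\le\nu$ for all $i,k$ \emph{deterministically}, since $\clip_\nu$ is the projection onto $\cB(0,\nu)$; hence $\bar z_{k-1}\eqdef\frac1n\sum_i z_{k-1}^i$ obeys $\norm{\bar z_{k-1}}\le\nu$. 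Writing $g_k^i=\clip_\tau(\nabla f_i(x_k)-v_{k-1}^i)+z_{k-1}^i$ and $e_k^i\eqdef(\nabla f_i(x_k)-v_{k-1}^i)-\clip_\tau(\nabla f_i(x_k)-v_{k-1}^i)$, Lemma~\ref{lem:clip}(ii)--(iii) gives $\norm{e_k^i}=\max\{0,\norm{\nabla f_i(x_k)-v_{k-1}^i}-\tau\}$ and $v_k^i=\nabla f_i(x_k)-e_k^i+z_{k-1}^i$, so that $v_k=\nabla f(x_k)-\frac1n\sum_i e_k^i+\bar z_{k-1}$. Throughout, write $B\eqdef\max_i\norm{\nabla f_i(x_0)}$.

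First I would redo the ``finite-time-to-no-clipping'' step. As in Claim~\ref{claim:bound_diff_case_on_multinode}, but now carrying the $+z_{k-1}^i$ term, one gets $\norm{\nabla f_i(x_{k+1})-v_k^i}\le\max\{0,\norm{\nabla f_i(x_k)-v_{k-1}^i}-\tau\}+L_{\max}\gamma\norm{v_k}+\nu$. The analogues of Claims~\ref{claim:v_0_multinode}--\ref{claim:v_k_multinode} yield a trajectory bound $\norm{v_k}\le\sqrt{\nicefrac4\gamma\,\phi_0}+2(B-\nicefrac\tau2)+\cO(\nu)$ (the $\tau$ in $B-\tau$ is weakened to $\tau/2$ to leave room for the noise drift), provided $\phi_k\le\phi_0+C\nu^2$. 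Feeding in the stepsize clauses $\gamma\le\nicefrac{\phi_0}{(B-\nicefrac\tau2)^2}$ and $\gamma\le\nicefrac{\tau^2}{64L_{\max}^2[\sqrt{\FF}+\sqrt{\beta_2}]^2}$ (via the same rearrangement as Claim~\ref{claim:stepsize_multinode_on} and Lemma~\ref{lemma:trick_stepsize_clippedEF21}) together with the smallness $\nu\le\nicefrac{\tau}{2(\sqrt{2LC}+2)}$ forces $L_{\max}\gamma\norm{v_k}+\nu\le\nicefrac\tau2$. Thus every step on which clipping is active shrinks $\norm{\nabla f_i(x_k)-v_{k-1}^i}$ by at least $\tau/2$, so after $k^\star=\cO(\nicefrac1\tau)$ steps clipping is permanently inactive, whence $v_k^i=\nabla f_i(x_k)+z_{k-1}^i$ and $\frac1n\sum_i\norm{\nabla f_i(x_k)-v_k^i}^2\le\nu^2$ from then on.

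Second, the descent step. Apply Lemma~\ref{lemma:descent_li2021page} with $v_k$ in place of the gradient, bound $\norm{\nabla f(x_k)-v_k}^2\le(1+\theta)\frac1n\sum_i\norm{e_k^i}^2+(1+\theta^{-1})\nu^2$ by \eqref{eq:Young}, and combine with the contraction of the shift error: arguing as for \eqref{eqn:TrickFromEF21_multinode} but peeling off the noise by one more application of Young's inequality gives $\frac1n\sum_i\norm{\nabla f_i(x_{k+1})-v_{k+1}^i}^2\le(1-\eta)(1-\nicefrac\eta2)\frac1n\sum_i\norm{\nabla f_i(x_k)-v_k^i}^2+(1+\nicefrac2\eta)(1-\eta)^2L_{\max}^2\norm{x_{k+1}-x_k}^2+\cO(\nu^2)$. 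With the Lyapunov weight $A=\nicefrac{2\gamma}{\eta}$ and the stepsize condition $\gamma\le\nicefrac{(1-\nicefrac1{\sqrt2})}{L(1+\sqrt{1+8\beta_1})}$ (Lemma~\ref{lemma:trick_stepsize_EF21}), the $\norm{x_{k+1}-x_k}^2$ coefficient becomes nonpositive and the $\frac1n\sum_i\norm{\cdot}^2$ coefficients telescope into $\phi_k$, yielding $\phi_{k+1}\le\phi_k-\tfrac\gamma2\norm{\nabla f(x_k)}^2+A_3'\nu^2$ for an explicit $A_3'$ (this holds in the clipping-active case too, after discarding a $-\tfrac\gamma4\norm{v_k}^2$ term). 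Now invoke PŁ: $-\tfrac\gamma2\norm{\nabla f(x_k)}^2\le-\gamma\mu(f(x_k)-f_\star)$, while $\gamma\le\nicefrac{\eta}{4\mu}$ and $\gamma\le\nicefrac{2\mu}{L_{\max}^2}$ ensure $(1-\eta)(1-\nicefrac\eta2)\le1-\gamma\mu$, so both parts of $\phi_k$ contract at rate $1-\gamma\mu$, giving $\phi_{k+1}\le(1-\gamma\mu)\phi_k+A_3'\nu^2$. Unrolling and summing the geometric series produces $\phi_K\le(1-\gamma\mu)^K\phi_0+\tfrac{A_3'}{\gamma\mu}\nu^2$; since $\nu^2=\cO(\sigma^2_{\min}(K))$ under the choice of $\sigma^2$ from Theorem~\ref{thm:DPclipEF21_privacy}, and since the factor $\tfrac{A_3'}{\gamma\mu}$ collapses (the $\gamma$ cancels) to $\tfrac{1}{\eta\mu}2(1+\nicefrac2\eta)=A_3$, this is the claimed estimate. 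The two-sided bound $0\le\phi_k\le\phi_0+C\nu^2$ is read off the same recursion and is precisely what the trajectory bounds in the previous paragraph require.

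The main obstacle is the circularity between the finite-time-to-no-clipping step and the descent step: the $\norm{v_k}$ drift bounds (and $\phi_k\le\phi_0+C\nu^2$) are needed to guarantee clipping switches off, yet proving $\phi_k\le\phi_0+C\nu^2$ uses the descent inequality, whose validity presupposes those very stepsize/clipping conditions. As in the proof of Theorem~\ref{thm:multinode-clippedEF21}, this must be untangled by a single simultaneous induction on $k$ carrying the hypotheses ``$\phi_k\le\phi_0+C\nu^2$'', ``$\norm{\nabla f_i(x_k)-v_{k-1}^i}\le B-\nicefrac{k\tau}2$ on the still-active coordinates'', and the per-step drift bound. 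The extra twist relative to the noiseless case is pinning the constant $C$: it must be large enough to absorb every $\nu^2$ contribution into $\phi_k$, yet small enough that $\nu\le\nicefrac{\tau}{2(\sqrt{2LC}+2)}$ remains satisfiable, and one must then verify that all clauses of \eqref{eq:stepsize_DP_clipped_EF21} are mutually compatible.
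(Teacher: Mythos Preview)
Your proposal follows the paper's approach closely: the paper proves an intermediate result (Theorem~\ref{thm:DP-clippedEF21}) giving the one-step contraction $\Exp{\phi_{k+1}}\le(1-\gamma\mu)\Exp{\phi_k}+\tfrac{2(1+2/\eta)}{\eta}\gamma\min(\nu^2,\sigma^2)$ via exactly the two-case, simultaneous-induction argument you sketch (with Claims analogous to~\ref{claim:bound_diff_case_on_multinode}--\ref{claim:stepsize_multinode_on} adapted for the extra $z_{k-1}^i$), and then unrolls. Your identification of the circularity and the need to carry ``$\phi_k\le\phi_0+C\nu^2$'' through the induction is exactly right.

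Two small but genuine slips. First, your final step ``$\nu^2=\cO(\sigma^2_{\min}(K))$'' is backwards: Theorem~\ref{thm:DPclipEF21_privacy} only gives the \emph{lower} bound $\min(\nu^2,\sigma^2)\ge\sigma^2_{\min}(K)$, so a purely deterministic argument with $\norm{z_{k-1}^i}\le\nu$ lands you at $A_3\nu^2$, not $A_3\sigma^2_{\min}(K)$. The paper instead takes expectations at the per-step level and uses $\Exp{\norm{z_k^i}^2}\le\min(\nu^2,\sigma^2)$, then \emph{sets} $\min(\nu^2,\sigma^2)=\sigma^2_{\min}(K)$ (choose $\sigma^2$ minimally); this is what produces the $\Exp{\phi_K}$ bound in the stated form. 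Second, your contraction factor $(1-\eta)(1-\nicefrac\eta2)$ on $\frac1n\sum_i\norm{\nabla f_i(x_k)-v_k^i}^2$ neglects the $(1+\theta)$ inflation from the extra Young's inequality you invoke to peel off the noise; with $\theta_1=\theta_2=\nicefrac\eta2$ the paper obtains $(1-\nicefrac\eta2)^2$ instead (see \eqref{eqn:TrickFromEF21_DP}), which after combining with the $\nicefrac\gamma2$ term yields the coefficient $1-\nicefrac\eta4$ and hence the stepsize clause $\gamma\le\nicefrac{\eta}{4\mu}$. Neither issue changes the architecture of the proof, but both need to be corrected to recover the exact statement.
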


\section{Convergence theorem for   \algname{DP-Clip21-GD}}\label{sec_app:DP}

We derive the convergence theorem for  \algname{DP-Clip21-GD} in Section~\ref{sec:DP}.
\begin{theorem}\label{thm:DP-clippedEF21}
	Consider the problem of minimizing $f(x)=\frac{1}{n}\sum_{i=1}^n f_i(x)$.
	Suppose that each $f_i(x)$ is $L_i$-Lipschitz gradient, and that the whole objective $f(x)$ is $L$-Lipschitz gradient and  satisfies the PŁ condition, i.e. there exists $\mu>0$ such that $f(x)-f_\star\leq \norm{\nabla f(x)}^2/(2\mu)$ where $f_\star = \min_x f(x)$.
	Let $v_{-1}^i=0$ for all $i$, $\eta\eqdef\min\left(1,\frac{\tau}{\max_i \| \nabla f_i(x_0)\|}\right)$, $\FF\eqdef f(x_0)-f_\star$ and $\GG \eqdef \sqrt{\frac{1}{n}\sum_{i=1}^n (\vert \nabla f_i(x_0) - \tau \vert + \nu)^2 }$.
	Then,  \algname{DP-Clip21-GD} with $\nu\leq \frac{\tau}{2(\sqrt{2LC}+2)}$ and 
	\begin{align} 
		\gamma \leq \min\left(  \frac{\eta }{4\mu} , \frac{2\mu}{L_{\max}^2} , \frac{\phi_0}{(B-\tau/2)^2} , \frac{1-1/\sqrt{2}}{L(1+\sqrt{1+8\beta_1})} , \frac{\tau^2}{64 L_{\max}^2 \left[ \sqrt{\FF} + \sqrt{\beta_2} \right]^2}  \right)	
	\end{align}
	where $\beta_1\eqdef \frac{(1+2/\eta)(1-\eta)(1-\eta/2)}{\eta}\left( \frac{L_{\max}}{L}\right)^2$, and $\beta_2 \eqdef \FF + \frac{\tau\GG}{2\sqrt{2\eta}L_{\max}}$ satisfies 
	\begin{align} \label{eq:descentIneq_DP_clipped_EF21}
		\Exp{\phi_{k+1}} \leq (1-\gamma\mu)\Exp{\phi_k} + \frac{2(1+2/\eta)}{\eta}\gamma \min(\nu^2,\sigma^2),
	\end{align}
	where $\phi_k \eqdef f(x_k)-f^{\star}+\frac{2\gamma}{\eta}\frac{1}{n}\sum_{i=1}^n\| \nabla f_i(x_k) - v_k^i \|^2$ and $0\leq \phi_k \leq \phi_0 +C\nu^2$ for some $C>0$.
\end{theorem}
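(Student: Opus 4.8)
The proof will follow the template of the proof of Theorem~\ref{thm:multinode-clippedEF21} for multi-node \algname{Clip21-GD}, modified in two ways: to carry the bounded Gaussian perturbation $z^i_{k-1}=\clip_\nu(\zeta^i_{k-1})$ through every estimate, and to invoke the PŁ inequality so that the resulting $\cO(1/K)$-type descent becomes a linear recursion. Work with the filtration $\cF_k$ generated by all randomness up to and including the computation of the shifts $v^i_k$; then $v^i_k$ and $x_{k+1}=x_k-\gamma\tfrac1n\sum_iv^i_k$ are $\cF_k$-measurable, while $z^i_k$ is the only fresh randomness used at step $k+1$. Two elementary facts about $z^i_k$ will be used repeatedly: by symmetry of the clipped Gaussian, $\Exp{z^i_k\mid\cF_k}=0$; and $\|z^i_k\|\le\nu$ together with $\Exp{\|z^i_k\|^2\mid\cF_k}\le\sigma^2$ gives $\Exp{\|z^i_k\|^2\mid\cF_k}\le\min(\nu^2,\sigma^2)$. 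Finally use the rewrite $\nabla f_i(x_{k+1})-v^i_{k+1}=(1-\eta^i_{k+1})\bigl(\nabla f_i(x_{k+1})-v^i_k\bigr)-z^i_k$ with $\eta^i_{k+1}=\min\{1,\tau/\|\nabla f_i(x_{k+1})-v^i_k\|\}$.

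\textbf{Step 1: a priori bounds and a finite-time-to-no-clipping estimate.} By a single joint induction on $k$ (mirroring the Claims in the multi-node analysis), I would show under the stated stepsize that: (i) $0\le\phi_k\le\phi_0+C\nu^2$; (ii) $\|\nabla f(x_k)\|^2\le\tfrac2\gamma(\phi_0+C\nu^2)$ and $\|v_k\|\le\sqrt{4(\phi_0+C\nu^2)/\gamma}+\cO(B)$ with $B\eqdef\max_i\|\nabla f_i(x_0)\|$; and (iii) $\|\nabla f_i(x_k)-v^i_{k-1}\|\le B$ for all $i$, hence $\eta^i_k\ge\eta$. The new feature relative to the noiseless proof is that one full iteration shrinks $\|\nabla f_i(x_k)-v^i_{k-1}\|$ only by $\ge\tau/2$ rather than $\ge\tau$: Lemma~\ref{lem:clip} strips $\tau$ off the residual, the fresh noise adds back at most $\nu\le\tau/6$, and the displacement $L_{\max}\gamma\|v_k\|$ contributes another $\le 2L_{\max}\sqrt{\gamma\phi_0}$, which the small-stepsize condition forces below $\tau/4$. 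This is exactly why the hypotheses carry $\gamma\le\phi_0/(B-\tau/2)^2$ and $\gamma\le\tau^2/\bigl(64L_{\max}^2[\sqrt{\FF}+\sqrt{\beta_2}]^2\bigr)$, the latter obtained from ``$2L_{\max}\sqrt{\gamma(\phi_0+C\nu^2)}\le\tau/4$'' via Lemma~\ref{lemma:trick_stepsize_clippedEF21} with $\FF,\GG$ as in the statement.

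\textbf{Step 2: one-step Lyapunov descent.} From the rewrite, split $\nabla f_i(x_{k+1})-v^i_{k+1}=(1-\eta^i_{k+1})\bigl(\nabla f_i(x_k)-v^i_k\bigr)+\bigl[(1-\eta^i_{k+1})(\nabla f_i(x_{k+1})-\nabla f_i(x_k))-z^i_k\bigr]$, apply Young's inequality with parameter $\eta/2$ to these two groups, take $\Exp{\cdot\mid\cF_k}$ (the cross term between the gradient drift and $z^i_k$ vanishes because $\Exp{z^i_k\mid\cF_k}=0$), and use $\eta^i_{k+1}\ge\eta$, $(1-\eta)(1+\eta/2)\le1-\eta/2$, $L_i$-smoothness and $\Exp{\|z^i_k\|^2\mid\cF_k}\le\min(\nu^2,\sigma^2)$; averaging over $i$ this yields $\Exp{\tfrac1n\sum_i\|\nabla f_i(x_{k+1})-v^i_{k+1}\|^2\mid\cF_k}\le(1-\eta)(1-\tfrac\eta2)\tfrac1n\sum_i\|\nabla f_i(x_k)-v^i_k\|^2+(1+\tfrac2\eta)(1-\eta)^2L_{\max}^2\|x_{k+1}-x_k\|^2+(1+\tfrac2\eta)\min(\nu^2,\sigma^2)$, which is the source of the factor $1+2/\eta$ on the noise. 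Multiplying by the Lyapunov weight $A=2\gamma/\eta$, adding the descent inequality of Lemma~\ref{lemma:descent_li2021page} applied with $v_k=\tfrac1n\sum_iv^i_k$, and bounding $\|\nabla f(x_k)-v_k\|^2\le\tfrac1n\sum_i\|\nabla f_i(x_k)-v^i_k\|^2$ by convexity of $\|\cdot\|^2$, the coefficient of $\|x_{k+1}-x_k\|^2=\gamma^2\|v_k\|^2$ becomes $\tfrac1{2\gamma}-\tfrac L2-A(1+\tfrac2\eta)(1-\eta)^2L_{\max}^2$, which is $\ge\tfrac1{4\gamma}$ --- so that an extra term $-\tfrac\gamma4\|v_k\|^2$ can be retained for Step~1 --- precisely when $\gamma\le\tfrac{1-1/\sqrt2}{L(1+\sqrt{1+8\beta_1})}$, by Lemma~\ref{lemma:trick_stepsize_EF21} with the stated $\beta_1$. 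The net result is $\Exp{\phi_{k+1}\mid\cF_k}\le(f(x_k)-f_\star)-\tfrac\gamma2\|\nabla f(x_k)\|^2+\bigl[\tfrac\gamma2+A(1-\eta)(1-\tfrac\eta2)\bigr]\tfrac1n\sum_i\|\nabla f_i(x_k)-v^i_k\|^2-\tfrac\gamma4\|v_k\|^2+\tfrac{2(1+2/\eta)}{\eta}\gamma\min(\nu^2,\sigma^2)$.

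\textbf{Step 3 and the main obstacle.} Apply the PŁ inequality to replace $-\tfrac\gamma2\|\nabla f(x_k)\|^2$ by $-\gamma\mu(f(x_k)-f_\star)$; since $1-(1-\eta)(1-\eta/2)=\eta(3-\eta)/2$ and $\gamma\le\eta/(4\mu)$, one checks $\tfrac\gamma2+A(1-\eta)(1-\tfrac\eta2)\le(1-\gamma\mu)A$, so the gradient part and the shift part of $\phi_k$ contract by the same factor $1-\gamma\mu$; dropping the nonnegative $-\tfrac\gamma4\|v_k\|^2$ and taking total expectation gives exactly \eqref{eq:descentIneq_DP_clipped_EF21}, $\Exp{\phi_{k+1}}\le(1-\gamma\mu)\Exp{\phi_k}+\tfrac{2(1+2/\eta)}{\eta}\gamma\min(\nu^2,\sigma^2)$, and unrolling the geometric recursion bounds $\Exp{\phi_k}$ by $\phi_0+\tfrac{2(1+2/\eta)}{\eta\mu}\min(\nu^2,\sigma^2)\le\phi_0+C\nu^2$ with $C=\tfrac{2(1+2/\eta)}{\eta\mu}$. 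The genuinely delicate point is closing the induction of Step~1: the one-step descent at iteration $k$ requires the a priori estimates $\phi_k\le\phi_0+C\nu^2$ and $\|\nabla f_i(x_k)-v^i_{k-1}\|\le B$, yet those are established only from the descent inequalities at earlier iterations, so the loop closes only if the noise-inflated Lyapunov level $\phi_0+C\nu^2$ still keeps every ``the clip shrinks the residual by $\ge\tau/2$'' estimate valid --- which forces $\nu$ to be small in a way that involves the very constant $C$ produced by the noise accumulation, i.e.\ the self-referential hypothesis $\nu\le\tau/\bigl(2(\sqrt{2LC}+2)\bigr)$. The other place requiring care is keeping the two Young-parameter choices consistent, so that the weight $A=2\gamma/\eta$ simultaneously cancels the displacement terms against Lemma~\ref{lemma:descent_li2021page}, leaves room for the PŁ contraction of the shift part, and produces the correct $1+2/\eta$ factor on the residual noise.
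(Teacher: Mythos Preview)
Your proposal is correct and follows essentially the same approach as the paper: the same Lyapunov function $\phi_k$ with weight $A=2\gamma/\eta$, the same joint induction carrying the a priori bounds $\phi_k\le\phi_0+C\nu^2$, $\|v_k\|$, and $\|\nabla f_i(x_k)-v^i_{k-1}\|\le B$, the same ``finite-time-to-no-clipping'' mechanism degraded from $\tau$ to $\tau/2$ (in the paper actually to $\tau/4$) by the noise, and the same PŁ step converting $-\tfrac\gamma2\|\nabla f(x_k)\|^2$ into a $(1-\gamma\mu)$ contraction on both parts of $\phi_k$ via $\gamma\le\eta/(4\mu)$.

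The only noteworthy deviation is in Step~2: the paper separates the noise $z^i_k$ from the rest by a \emph{second} Young inequality (parameters $\theta_1=\theta_2=\eta/2$), which produces the contraction factor $(1-\eta/2)^2$ on the shift term and $(1+2/\eta)(1-\eta)(1-\eta/2)$ on the displacement term; you instead expand the square and use $\Exp{z^i_k\mid\cF_k}=0$ to kill the cross term, which gives the slightly tighter factors $(1-\eta)(1-\eta/2)$ and $(1+2/\eta)(1-\eta)^2$. Both choices lead to the stated stepsize condition on $\beta_1$ (yours with room to spare), and both deliver the same noise coefficient $(1+2/\eta)$. The paper also makes an explicit case split on $|\cI_k|>0$ versus $|\cI_k|=0$, whereas your uniform use of $\eta^i_{k+1}\ge\eta$ absorbs the inactive-clipping case (there $\eta^i_{k+1}=1$ and the shift term vanishes), which is cleaner but equivalent.
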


\section{Proof of Theorem~\ref{thm:DP-clippedEF21}}
	The update of $v_k^i$ in \algname{DP-Clip21-GD}	can be equivalently expressed as \eqref{eq:8y98fd_08yfd}, where $v_k =\frac{1}{n}\sum_{i=1}^n v_k^i$ and 
	\begin{align}\label{eqn:v_k_DP_equi}
		v_k^i = (1-\eta_{k}^i) v_{k-1}^i + \eta_k^i \nabla f_i(x_k) + {z}_{k-1}^i,
	\end{align}
	where $\eta_k^i = \min\left( 1, \frac{\tau}{\| \nabla f_i(x_k) - v_{k-1}^i \|} \right)$ and $\norm{z_{k-1}^i} \leq \nu$.
 Recall that the Lyapunov function for analyzing the result is
 \begin{eqnarray}
\phi_k = f(x_k)-f^{\star} + A \frac{1}{n}\sum_{i=1}^n \| \nabla f_i(x_k) - v_k^i \|^2,
 \end{eqnarray}
where $A= \frac{2\gamma}{\eta}$.

\subsection{Basic Claims}

\begin{claim}\label{claim:diff_v_f_DPClip}
Let each $f_i$ have $L_i$-Lipschitz gradient. Then, for $k\geq 0$, 
    \begin{eqnarray}
		\norm{ \nabla f_i(x_{k+1}) - v_k^i } 
		 \leq \max\{ 0, \norm{ \nabla f_i(x_k) - v_{k-1}^i} -\tau \} + \nu + L_{\max} \gamma \norm{v_k}.
	\end{eqnarray}
\end{claim}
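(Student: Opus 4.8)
## Proof Proposal for Claim~\ref{claim:diff_v_f_DPClip}

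**Plan of attack.** This claim is the DP-noisy analogue of Claim~\ref{claim:bound_diff_case_on_multinode}, and I would follow exactly the same skeleton, inserting the clipped noise term $z_{k-1}^i$ at the right place and bounding its norm by $\nu$. The plan is to start from the triangle inequality, splitting $\nabla f_i(x_{k+1}) - v_k^i$ into the ``current error'' part $\nabla f_i(x_k) - v_k^i$ and the ``gradient drift'' part $\nabla f_i(x_{k+1}) - \nabla f_i(x_k)$. The drift part is handled verbatim as before: apply $L_i$-Lipschitzness of $\nabla f_i$ (via \eqref{eq:L_i-smooth}), bound $L_i \le L_{\max}$, then substitute $x_{k+1} - x_k = -\gamma v_k$ to get $L_{\max}\gamma\norm{v_k}$. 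The only new work is in the first part.

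**Main steps, in order.** First I would write, using the triangle inequality \eqref{eq:triangle},
\begin{align*}
\norm{\nabla f_i(x_{k+1}) - v_k^i} \le \norm{\nabla f_i(x_k) - v_k^i} + \norm{\nabla f_i(x_{k+1}) - \nabla f_i(x_k)}.
\end{align*}
Next I would expand the first term using the update rule \eqref{eqn:v_k_DP_equi}, or equivalently the line $v_k^i = v_{k-1}^i + g_k^i$ with $g_k^i = \clip_\tau(\nabla f_i(x_k) - v_{k-1}^i) + z_{k-1}^i$ from Algorithm~\ref{alg:DP-Clip21-GD}. This gives
\begin{align*}
\nabla f_i(x_k) - v_k^i = \bigl(\nabla f_i(x_k) - v_{k-1}^i\bigr) - \clip_\tau\bigl(\nabla f_i(x_k) - v_{k-1}^i\bigr) - z_{k-1}^i,
\end{align*}
so by the triangle inequality and then Lemma~\ref{lem:clip}(ii)--(iii) applied to $x = \nabla f_i(x_k) - v_{k-1}^i$ together with $\norm{z_{k-1}^i} = \norm{\clip_\nu(\zeta_{k-1}^i)} \le \nu$,
\begin{align*}
\norm{\nabla f_i(x_k) - v_k^i} \le \max\{0, \norm{\nabla f_i(x_k) - v_{k-1}^i} - \tau\} + \nu.
\end{align*}
Finally I would bound the drift term by $L_{\max}\norm{x_{k+1} - x_k} = L_{\max}\gamma\norm{v_k}$ using \eqref{eq:L_i-smooth} and \eqref{eq:8y98fd_08yfd}, and add the two bounds.

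**Expected obstacle.** There is essentially no obstacle here — this is a routine modification of an already-established claim. The only point requiring a moment's care is making sure the bound $\norm{z_{k-1}^i} \le \nu$ is invoked correctly: it follows because $z_{k-1}^i = \clip_\nu(\zeta_{k-1}^i)$ lands in the ball $\cB(0,\nu)$ by the very definition \eqref{eq:clip} of the clipping operator (indeed $\clip_\nu$ is the projection onto $\cB(0,\nu)$, as noted just before Lemma~\ref{lem:clip}). One should also confirm that the ``$\max\{0,\cdot\}$'' form comes out cleanly from Lemma~\ref{lem:clip} parts (ii) and (iii), which handle the two regimes $\norm{x}\le\tau$ and $\norm{x}\ge\tau$ respectively — exactly as in the proof of Claim~\ref{claim:bound_diff_case_on_multinode}. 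Everything else is direct substitution.
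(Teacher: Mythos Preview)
Your proposal is correct and follows exactly the same approach as the paper's proof: triangle inequality to split off the gradient drift, expand $v_k^i$ via the update rule, another triangle inequality to separate the noise $z_{k-1}^i$, invoke Lemma~\ref{lem:clip}(ii)--(iii) for the clipping residual, bound $\norm{z_{k-1}^i}\le\nu$, and handle the drift by \eqref{eq:L_i-smooth} and \eqref{eq:8y98fd_08yfd}. (Indeed your sign on $z_{k-1}^i$ is the correct one; the paper's display has a harmless sign typo inside a norm.)
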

\begin{proof}
 From the definition of $v_k^i$, 
 	\begin{eqnarray*}
		\norm{ \nabla f_i(x_{k+1}) - v_k^i } 
		& \overset{\eqref{eq:triangle}}{\leq} & \norm{ \nabla f_i(x_k) - v_k^i } + \norm{ \nabla f_i(x_{k+1}) -  \nabla f_i(x_k)  }  \\
		& = & \norm{ \nabla f_i(x_k) - v_{k-1}^i - \clip_\tau(\nabla f_i(x_k) - v_{k-1}^i) + z_{k-1}^i }  \\
  && +  \norm{ \nabla f_i(x_{k+1}) -  \nabla f_i(x_k)  }  \\
		& \overset{\eqref{eq:triangle}}{\leq} & \norm{ \nabla f_i(x_k) - v_{k-1}^i - \clip_\tau(\nabla f_i(x_k) - v_{k-1}^i)  } \\
  && + \norm{z_{k-1}^i} + \norm{ \nabla f_i(x_{k+1}) -  \nabla f_i(x_k)  } \\
            &  \overset{(\text{Lemma~\ref{lem:clip}(ii)-(iii)})}{\leq} & \max\{ 0, \norm{ \nabla f_i(x_k) - v_{k-1}^i} -\tau \} \\
            && + \norm{z_{k-1}^i} + \norm{ \nabla f_i(x_{k+1}) -  \nabla f_i(x_k)  } \\
            & \overset{\eqref{eq:L_i-smooth} + \eqref{eq:8y98fd_08yfd}}{\leq} &\max\{ 0, \norm{ \nabla f_i(x_k) - v_{k-1}^i} -\tau \} + \nu + L_{\max} \gamma \norm{v_k}.
	\end{eqnarray*}
\end{proof}

\begin{claim}\label{claim:v_0_DPCLip}
Let $v_{-1}^i=0$ for all $i$, $\max_i \norm{\nabla f_i(x_0)} := B > \tau$, and $\gamma \leq 2/L$. Then, 
\begin{eqnarray}
    \norm{v_0}
  \leq \sqrt{\frac{4}{\gamma}\phi_0} + 2(B+\nu-\tau).
\end{eqnarray}
\end{claim}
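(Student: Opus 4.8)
The plan is to mirror the proof of Claim~\ref{claim:v_0_multinode}, inserting one extra term to absorb the clipped Gaussian noise $z_{-1}^i$. Since $v_{-1}^i = 0$ for every $i$, Line~7 of Algorithm~\ref{alg:DP-Clip21-GD} gives $g_0^i = \clip_\tau(\nabla f_i(x_0) - v_{-1}^i) + z_{-1}^i = \clip_\tau(\nabla f_i(x_0)) + z_{-1}^i$, and hence $v_0^i = v_{-1}^i + g_0^i = \clip_\tau(\nabla f_i(x_0)) + z_{-1}^i$. By Line~6, $z_{-1}^i = \clip_\nu(\zeta_{-1}^i)$, so $\norm{z_{-1}^i} \leq \nu$. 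Averaging over $i$, I would write $v_0 = \frac{1}{n}\sum_{i=1}^n \clip_\tau(\nabla f_i(x_0)) + \frac{1}{n}\sum_{i=1}^n z_{-1}^i$.

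Next I would apply the triangle inequality \eqref{eq:triangle} to peel off $\nabla f(x_0) = \frac{1}{n}\sum_i \nabla f_i(x_0)$, the per-node clipping error, and the noise:
\[
\norm{v_0} \leq \norm{\nabla f(x_0)} + \frac{1}{n}\sum_{i=1}^n \norm{\clip_\tau(\nabla f_i(x_0)) - \nabla f_i(x_0)} + \frac{1}{n}\sum_{i=1}^n\norm{z_{-1}^i}.
\]
For the middle term I would invoke Lemma~\ref{lem:clip}(ii)-(iii), giving $\norm{\clip_\tau(\nabla f_i(x_0)) - \nabla f_i(x_0)} = \max\{0, \norm{\nabla f_i(x_0)} - \tau\} \leq B - \tau$, where $B = \max_i\norm{\nabla f_i(x_0)} > \tau$; for the last term I would use $\norm{z_{-1}^i} \leq \nu$. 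This yields $\norm{v_0} \leq \norm{\nabla f(x_0)} + (B-\tau) + \nu$, and since $B + \nu - \tau \geq 0$ under the standing hypothesis, I can loosen $(B-\tau) + \nu \leq 2(B + \nu - \tau)$.

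Finally I would bound $\norm{\nabla f(x_0)}$: by \eqref{eq:ssss} (valid with the lower bound $f_\star = \min_x f(x)$), $\norm{\nabla f(x_0)} \leq \sqrt{2L(f(x_0) - f_\star)} = \sqrt{2L\FF}$; since $A = \frac{2\gamma}{\eta} > 0$ we have $\phi_0 = \FF + A\frac{1}{n}\sum_i\norm{\nabla f_i(x_0) - v_0^i}^2 \geq \FF$, and $\gamma \leq 2/L$ gives $2L \leq 4/\gamma$, so $\norm{\nabla f(x_0)} \leq \sqrt{2L\phi_0} \leq \sqrt{(4/\gamma)\phi_0}$. Combining the two displays proves the claim. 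There is no real obstacle here — the argument is a short chain of triangle inequalities plus the basic clipping identity; the only points requiring care are the loosening step, which uses precisely $B > \tau$ and $\nu \geq 0$, and the observation that $\phi_0$ dominates $\FF$.
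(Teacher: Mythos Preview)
Your proposal is correct and follows essentially the same approach as the paper's own proof: both write $v_0^i = \clip_\tau(\nabla f_i(x_0)) + z_{-1}^i$, apply the triangle inequality to separate $\nabla f(x_0)$, the clipping error (bounded via Lemma~\ref{lem:clip}(ii)--(iii) by $B-\tau$), and the noise (bounded by $\nu$), then loosen $(B-\tau)+\nu \leq 2(B+\nu-\tau)$ and control $\norm{\nabla f(x_0)}$ by $\sqrt{2L\FF}\leq\sqrt{2L\phi_0}\leq\sqrt{4\phi_0/\gamma}$.
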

\begin{proof}
By the fact that $v_0 = \frac{1}{n}\sum_{i=1}^n v_0^i = \frac{1}{n}\sum_{i=1}^n [\clip_\tau(\nabla f_i(x_0)) + z_{-1}^i]$, 
\begin{eqnarray*}
    \norm{v_0}
    & \overset{\eqref{eq:triangle}}{\leq} & \norm{\nabla f(x_0)} + \frac{1}{n}\sum_{i=1}^n \norm{\clip_\tau(\nabla f_i(x_0)) - \nabla f_i(x_0)} + \frac{1}{n}\sum_{i=1}^n \norm{z_{-1}^i} \\ 
    & \overset{\text{(Lemma~\ref{lem:clip}(ii)-(iii))}}{\leq} & \norm{\nabla f(x_0)} + \frac{1}{n}\sum_{i=1}^n \max\{0,\norm{\nabla f_i(x_0)}-\tau\} + \frac{1}{n}\sum_{i=1}^n \norm{z_{-1}^i} \\
    & \leq & \norm{\nabla f(x_0)} + \frac{1}{n}\sum_{i=1}^n \max\{0,\norm{\nabla f_i(x_0)}-\tau\} + \nu. 
\end{eqnarray*}

If  $\norm{\nabla f_i(x_k)-v_{k-1}^i} \leq \max_i \norm{\nabla f_i(x_0)} \eqdef B > \tau$ and $\gamma \leq 2/L$, then  
\begin{eqnarray*}
    \norm{v_0}
    & \leq & \norm{\nabla f(x_0)} + B + \nu - \tau \\ 
    & \overset{\eqref{eq:ssss}}{\leq} & \sqrt{2L (f(x_0)-f_{\inf})} + B + \nu -\tau \\ 
    & \leq & \sqrt{2L \phi_0} + B + \nu -\tau \\ 
    & \leq & \sqrt{\frac{4}{\gamma}\phi_0} + 2(B+\nu-\tau).
\end{eqnarray*}
\end{proof}

\begin{claim}\label{claim:v_k_DPCLip}
Fix $k \geq 1$. Let $f$ have $L$-Lipschitz gradient. Also suppose that  $\norm{\nabla f_i(x_k)-v_{k-1}^i} \leq \max_i \norm{\nabla f_i(x_0)} \eqdef B > \tau$, $\norm{v_{k-1}} \leq \sqrt{\frac{4}{\gamma}\phi_0} + 2(B+\nu-\tau)$, $\norm{\nabla f(x_{k-1})} \leq \sqrt{\frac{2}{\gamma}\phi_0} + \sqrt{2L C} \nu$ for some $C>0$, $\gamma \leq (1 - 1/\sqrt{2})/L$ and $\nu \leq \frac{\tau}{2(\sqrt{2LC} + 2)}$. Then, 
 	\begin{eqnarray}
		\norm{v_k  } \leq  \sqrt{\frac{4}{\gamma}\phi_0}+  2(B -\tau/2).
	\end{eqnarray}    
\end{claim}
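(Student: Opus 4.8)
The plan is to imitate the proof of Claim~\ref{claim:v_k_multinode}, but carry along the extra additive terms produced by the clipped noise $z_{k-1}^i$. First I would expand $v_k=\frac1n\sum_{i=1}^n v_k^i$ using \eqref{eqn:v_k_DP_equi} (equivalently, the update of $v_k^i$ in Algorithm~\ref{alg:DP-Clip21-GD}), writing each summand as $v_k^i=\nabla f_i(x_k)+e_k^i+z_{k-1}^i$ with $e_k^i\eqdef \clip_\tau(\nabla f_i(x_k)-v_{k-1}^i)-(\nabla f_i(x_k)-v_{k-1}^i)$. Averaging and using the triangle inequality \eqref{eq:triangle} gives
\[
\norm{v_k}\le \norm{\nabla f(x_k)}+\frac1n\sum_{i=1}^n\norm{e_k^i}+\frac1n\sum_{i=1}^n\norm{z_{k-1}^i}.
\]
By Lemma~\ref{lem:clip}(ii)--(iii), $\norm{e_k^i}=\max\{0,\norm{\nabla f_i(x_k)-v_{k-1}^i}-\tau\}$, so the hypothesis $\norm{\nabla f_i(x_k)-v_{k-1}^i}\le B$ bounds that average by $B-\tau$, while $\norm{z_{k-1}^i}\le\nu$ bounds the last term by $\nu$; hence $\norm{v_k}\le \norm{\nabla f(x_k)}+(B-\tau)+\nu$.

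Next I would propagate the gradient one step backwards: $L$-smoothness of $f$ (see \eqref{eq:L-smooth}) together with $x_k-x_{k-1}=-\gamma v_{k-1}$ (see \eqref{eq:8y98fd_08yfd}) gives $\norm{\nabla f(x_k)}\le\norm{\nabla f(x_{k-1})}+L\gamma\norm{v_{k-1}}$. Substituting the two standing hypotheses $\norm{\nabla f(x_{k-1})}\le\sqrt{\tfrac{2}{\gamma}\phi_0}+\sqrt{2LC}\,\nu$ and $\norm{v_{k-1}}\le\sqrt{\tfrac{4}{\gamma}\phi_0}+2(B+\nu-\tau)$, and using $\sqrt{\tfrac{2}{\gamma}\phi_0}=\tfrac{1}{\sqrt2}\sqrt{\tfrac{4}{\gamma}\phi_0}$, I would collect terms into
\[
\norm{v_k}\le\Big(\tfrac{1}{\sqrt2}+L\gamma\Big)\sqrt{\tfrac{4}{\gamma}\phi_0}+(2L\gamma+1)(B-\tau)+\big(2L\gamma+1+\sqrt{2LC}\big)\nu.
\]

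Finally I would absorb the coefficients. Since $\gamma\le\frac{1-1/\sqrt2}{L}$ forces $L\gamma\le 1-\tfrac{1}{\sqrt2}$, so $\tfrac{1}{\sqrt2}+L\gamma\le1$, and also $L\gamma\le\tfrac12$, so $2L\gamma+1\le2$, the displayed bound reduces to $\norm{v_k}\le\sqrt{\tfrac{4}{\gamma}\phi_0}+2(B-\tau)+(2+\sqrt{2LC})\nu$. The noise hypothesis $\nu\le\frac{\tau}{2(\sqrt{2LC}+2)}$ then yields $(2+\sqrt{2LC})\nu\le\tfrac{\tau}{2}\le\tau$, and since $2(B-\tau)+\tau=2(B-\tau/2)$ this gives exactly $\norm{v_k}\le\sqrt{\tfrac{4}{\gamma}\phi_0}+2(B-\tau/2)$. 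The argument is essentially bookkeeping; the only delicate point is that the constant $\sqrt{2LC}$ entering the a priori bound on $\norm{\nabla f(x_{k-1})}$ is precisely the one the hypothesis on $\nu$ is designed to neutralize, which is why the factor $2(\sqrt{2LC}+2)$ appears there. No induction is carried out in the claim itself — it is a one-step estimate — but together with Claim~\ref{claim:v_0_DPCLip} handling the base case $k=0$ (where $v_{-1}^i=0$) it will drive the inductive argument in the proof of Theorem~\ref{thm:DP-clippedEF21}.
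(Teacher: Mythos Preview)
Your proof is correct and follows essentially the same route as the paper's own argument: expand $v_k$ via the update rule, split off $\nabla f(x_k)$, bound the clipping error by $B-\tau$ using Lemma~\ref{lem:clip} and the noise by $\nu$, then back-propagate $\norm{\nabla f(x_k)}$ to $\norm{\nabla f(x_{k-1})}+L\gamma\norm{v_{k-1}}$ via smoothness, substitute the hypotheses, and absorb constants with the stepsize bound. The only cosmetic difference is that you explicitly relax $(\sqrt{2LC}+2)\nu\le\tau/2$ to $\le\tau$ in order to hit the stated form $2(B-\tau/2)$ exactly, whereas the paper simply asserts the final bound (which follows since $2(B-\tau)+\tau/2\le 2(B-\tau/2)$).
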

\begin{proof}
 From the definition of $v_k$, 
 \begin{eqnarray*}
v_k 
& = & \frac{1}{n}\sum_{i=1}^n [v_{k-1}^i + \clip_\tau(\nabla f_i(x_k)-v_{k-1}^i) + z_{k-1}^i] \\
& = & \nabla f(x_k) + \frac{1}{n}\sum_{i=1}^n [\clip_\tau(\nabla f_i(x_k)-v_{k-1}^i)- (\nabla f_i(x_k)-v_{k-1}^i)  +  z_{k-1}^i ].
 \end{eqnarray*}
 Therefore, 
 	\begin{eqnarray*}
		\norm{v_k  } 
		& \overset{\eqref{eq:triangle}}{\leq} & \norm{ \nabla f(x_k)} + \frac{1}{n}\sum_{i=1}^n \norm{\clip_\tau(\nabla f_i(x_k)-v_{k-1}^i)- (\nabla f_i(x_k)-v_{k-1}^i)} + \frac{1}{n}\sum_{i=1}^n \norm{z_{k-1}^i}.
	\end{eqnarray*}
By the fact that  $\norm{z_{k-1}^i} \leq \nu$ and by  {Lemma~\ref{lem:clip}(ii)-(iii)}, 
	\begin{eqnarray*}
		\norm{v_k  } 
		& \leq & \norm{\nabla f(x_k)} + \nu + \frac{1}{n}\sum_{i=1}^n \max\{0, \norm{\nabla f_i(x_k)-v_{k-1}^i)} - \tau \} \\
            & \overset{\eqref{eq:triangle}}{\leq} & \norm{\nabla f(x_{k-1})} + \norm{\nabla f(x_{k}) - \nabla f(x_{k-1})} + \nu + \frac{1}{n}\sum_{i=1}^n \max\{0, \norm{\nabla f_i(x_k)-v_{k-1}^i)} - \tau \} \\ 
            & \overset{\eqref{eq:8y98fd_08yfd} + \eqref{eq:L-smooth}}{\leq}  & \norm{\nabla f(x_{k-1})} + L \gamma \norm{ v_{k-1} } + \nu + \frac{1}{n}\sum_{i=1}^n \max\{0, \norm{\nabla f_i(x_k)-v_{k-1}^i)} - \tau \}. 
	\end{eqnarray*}

If $\norm{\nabla f_i(x_k)-v_{k-1}^i} \leq \max_i \norm{\nabla f_i(x_0)} \eqdef B > \tau$, then  
	\begin{eqnarray*}
		\norm{v_k  } 
		 \leq  \norm{\nabla f(x_{k-1})} + L \gamma \norm{ v_{k-1} } + \nu + B -\tau. 
	\end{eqnarray*}

If $\norm{v_{k-1}} \leq \sqrt{\frac{4}{\gamma}\phi_0} + 2(B+\nu-\tau)$, $\norm{\nabla f(x_{k-1})} \leq \sqrt{\frac{2}{\gamma}\phi_0} + \sqrt{2LC}\nu$ for some $C>0$, and $\gamma \leq (1 - 1/\sqrt{2})/L$, then we have $\gamma \leq 1/(2L)$ and 
 	\begin{eqnarray*}
		\norm{v_k  } 
		 & \leq &   (L\gamma + 1/\sqrt{2})  \sqrt{\frac{4}{\gamma}\phi_0}+  (2L\gamma + 1)(B  -\tau) + (2L\gamma + \sqrt{2LC} + 1)\nu \\
            & \leq & \sqrt{\frac{4}{\gamma}\phi_0}+  2(B -\tau)  + (\sqrt{2LC} + 2)\nu.
	\end{eqnarray*}

 If $\nu \leq \frac{\tau}{2(\sqrt{2LC} + 2)}$, then $\norm{v_{k-1}} \leq \sqrt{\frac{4}{\gamma}\phi_0} + 2(B+\nu-\tau) \leq  \sqrt{\frac{4}{\gamma}\phi_0}+  2(B -\tau/2)$ and 
 	\begin{eqnarray*}
		\norm{v_k  } 
		 &  \leq & \sqrt{\frac{4}{\gamma}\phi_0}+  2(B -\tau/2).
	\end{eqnarray*}
\end{proof}

\begin{claim}\label{claim:stepsize_range_DPClip}
If 
\begin{align}
    0 \leq \gamma \leq \frac{\tau^2}{64 L_{\max}^2 \left(\sqrt{F_0} + \sqrt{F_0 + \frac{G_0 \tau}{2\sqrt{2\eta}L_{\max}}} \right)^2},
\end{align}
where $\eta\eqdef \min\left\{1,\frac{\tau}{\max_i\norm{\nabla f_i(x_0)}} \right\}$, $F_0:=f(x_0)-f(x^\star)$ and $G_0:=  \sqrt{\frac{1}{n}\sum_{i=1}^n (\vert \nabla f_i(x_0) - \tau \vert + \nu)^2 }$, then
\begin{eqnarray}
    4L_{\max} \sqrt{\gamma\phi_0} \leq \frac{\tau}{4}.
\end{eqnarray}    
\end{claim}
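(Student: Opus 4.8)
\textbf{Proof proposal for Claim~\ref{claim:stepsize_range_DPClip}.}

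The plan is to mirror the structure of the proof of Claim~\ref{claim:stepsize_multinode_on} in the non-DP analysis, with the only new ingredient being the bounded Gaussian noise contribution of size at most $\nu$ per coordinate block. First I would start from the definition $\phi_0 = f(x_0)-f_\star + A\frac{1}{n}\sum_{i=1}^n \norm{\nabla f_i(x_0) - v_0^i}^2$ with $A = \frac{2\gamma}{\eta}$, so that $2L_{\max}\sqrt{\gamma\phi_0} = 2L_{\max}\sqrt{\gamma F_0 + \gamma A \frac{1}{n}\sum_i \norm{\nabla f_i(x_0)-v_0^i}^2}$. Using $A = \frac{2\gamma}{\eta} \leq \frac{\gamma}{\eta}\cdot 2$, i.e. bounding the Lyapunov coefficient so that $\gamma A \leq \frac{2\gamma^2}{\eta}$, gives $2L_{\max}\sqrt{\gamma\phi_0} \leq 2L_{\max}\sqrt{\gamma F_0 + \frac{2\gamma^2}{\eta}\frac{1}{n}\sum_i \norm{\nabla f_i(x_0)-v_0^i}^2}$.

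Next I would bound $\norm{\nabla f_i(x_0) - v_0^i}$. Since $v_{-1}^i = 0$, from line 6--7 of Algorithm~\ref{alg:DP-Clip21-GD} we have $v_0^i = \clip_\tau(\nabla f_i(x_0)) + z_{-1}^i$ with $\norm{z_{-1}^i} \leq \nu$, so by the triangle inequality and Lemma~\ref{lem:clip}(ii)--(iii),
\[
\norm{\nabla f_i(x_0) - v_0^i} \leq \norm{\nabla f_i(x_0) - \clip_\tau(\nabla f_i(x_0))} + \norm{z_{-1}^i} \leq \max\{0,\norm{\nabla f_i(x_0)}-\tau\} + \nu \leq \vert \norm{\nabla f_i(x_0)} - \tau\vert + \nu.
\]
Hence $\frac{1}{n}\sum_i \norm{\nabla f_i(x_0)-v_0^i}^2 \leq G_0^2$ with $G_0$ as defined. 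Plugging this in and applying subadditivity of the square root \eqref{eq:sqrt} yields
\[
2L_{\max}\sqrt{\gamma\phi_0} \leq 2L_{\max}\sqrt{\gamma F_0} + 2L_{\max}\sqrt{\tfrac{2}{\eta}}\,G_0\,\gamma = 2L_{\max}\sqrt{F_0}\sqrt{\gamma} + \tfrac{2\sqrt 2}{\sqrt\eta}L_{\max}G_0\,\gamma.
\]
So it suffices that the right-hand side be at most $\tau/8$ (so that $4L_{\max}\sqrt{\gamma\phi_0}\leq\tau/4$); after dividing by $\sqrt\gamma$ this is exactly the quadratic-in-$1/\sqrt\gamma$ condition $\frac{1}{\sqrt\gamma} - \frac{16\sqrt{F_0}}{\tau}L_{\max} - \sqrt\gamma\,\frac{16\sqrt2}{\sqrt\eta}\frac{G_0}{L_{\max}\tau}L_{\max}^2 \geq 0$.

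Finally I would close by invoking Lemma~\ref{lemma:trick_stepsize_EF21} with $L \leftarrow L_{\max}$, $\beta_1 = \frac{16\sqrt{F_0}}{\tau}$ and $\beta_2 = \frac{16\sqrt2}{\sqrt\eta}\frac{G_0}{L_{\max}\tau}$, which gives that the condition holds whenever $0 < \sqrt\gamma \leq \frac{2}{L_{\max}(\beta_1 + \sqrt{\beta_1^2+4\beta_2})}$; equivalently, by the same algebra used in Lemma~\ref{lemma:trick_stepsize_clippedEF21} (or by direct expansion of $\beta_1^2 + 4\beta_2$ and using $\sqrt{a+b}\le\sqrt a+\sqrt b$), this is implied by $\sqrt\gamma \leq \frac{\tau}{8L_{\max}(\sqrt{F_0} + \sqrt{F_0 + \frac{G_0\tau}{2\sqrt{2\eta}L_{\max}}})}$, and squaring gives the stated bound on $\gamma$. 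I do not expect a genuine obstacle here; the only mild care needed is tracking the numerical constants (the factor $4$ on the left-hand side versus $\tau/4$ target, which tightens the constant relative to Claim~\ref{claim:stepsize_multinode_on} by a factor of two, and the extra $\nu$ term folded into $G_0$ rather than appearing separately), and making sure the crude bound $A = \frac{2\gamma}{\eta}$ is used consistently rather than the $\frac{1}{\eta}$-type bound from the non-DP case.
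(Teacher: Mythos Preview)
Your proposal is correct and follows essentially the same approach as the paper's own proof: expand $\phi_0$ using $A=\tfrac{2\gamma}{\eta}$, bound $\norm{\nabla f_i(x_0)-v_0^i}$ via the triangle inequality and Lemma~\ref{lem:clip} to get $G_0$, apply subadditivity of the square root, and then reduce to Lemma~\ref{lemma:trick_stepsize_EF21} with exactly the same $\beta_1,\beta_2$. The only cosmetic difference is that you carry the factor $2L_{\max}\sqrt{\gamma\phi_0}\leq \tau/8$ while the paper works directly with $4L_{\max}\sqrt{\gamma\phi_0}\leq \tau/4$, and your remark that $\gamma A \leq \tfrac{2\gamma^2}{\eta}$ is in fact an equality (not a bound), but neither affects correctness.
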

\begin{proof}
From the definition of $\phi_0$ and by the fact that $v_{-1}^i=0$ and that $\norm{z^i_k} \leq \nu$,
\begin{eqnarray*}
   4L_{\max} \sqrt{\gamma\phi_0} 
   & = & 4 L_{\max} \sqrt{\gamma F_0 + \frac{2\gamma^2}{\eta} \frac{1}{n}\sum_{i=1}^n \norm{\nabla f_i(x_0) - v^i_0}^2} \\ 
   & = &  4 L_{\max} \sqrt{\gamma F_0 + \frac{2\gamma^2}{\eta} \frac{1}{n}\sum_{i=1}^n \norm{\nabla f_i(x_0) - \clip_\tau(\nabla f_i(x_0)) - z^i_{-1} }^2},
\end{eqnarray*}
where $F_0 = f(x_0)-f(x^\star)$.
Since 
\begin{eqnarray*}
     \norm{\nabla f_i(x_0) - \clip_\tau(\nabla f_i(x_0)) - z^i_{-1} } & \overset{\eqref{eq:triangle}}{\leq} & \norm{\nabla f_i(x_0) - \clip_\tau(\nabla f_i(x_0))} + \norm{z^i_{k-1}} \\ 
     & \overset{\text{(Lemma~\ref{lem:clip}(ii)-(iii))}}{\leq}& \max\{ 0 , \norm{\nabla f_i(x_0)} - \tau\}+ \nu \\
     & \leq & \vert \nabla f_i(x_0) - \tau \vert + \nu,
\end{eqnarray*}
we have 
\begin{eqnarray*}
   4L_{\max} \sqrt{\gamma\phi_0} \overset{\eqref{eq:sqrt}}{\leq} 4 L_{\max}\sqrt{\gamma}\sqrt{F_0} + 4L_{\max} \gamma \sqrt{\frac{2}{\eta}} G_0,
\end{eqnarray*}   
where $G_0 = \sqrt{\frac{1}{n}\sum_{i=1}^n (\vert \nabla f_i(x_0) - \tau \vert + \nu)^2 }$.

Here, any $\gamma > 0$ satisfying
\begin{eqnarray*}
    4 L_{\max}\sqrt{\gamma}\sqrt{F_0} + 4L_{\max} \gamma \sqrt{\frac{2}{\eta}} G_0 \leq \frac{\tau}{4}
\end{eqnarray*}
also satisfies $ 4L_{\max} \sqrt{\gamma\phi_0} \leq \frac{\tau}{4}$. This condition can be expressed equivalently as:
\begin{eqnarray*}
    \frac{1}{\sqrt{\gamma}}  - \frac{16 \sqrt{F_0}}{\tau} L_{\max} - \sqrt{\gamma} \cdot \frac{16\sqrt{2} G_0}{\tau \sqrt{\eta} L_{\max}} L_{\max}^2 \geq 0.
\end{eqnarray*}
Finally, by Lemma~\ref{lemma:trick_stepsize_EF21} with $L = L_{\max}$, $\beta_1 = \frac{16\sqrt{F_0}}{\tau}$ and $\beta_2 = \frac{16\sqrt{2}}{\sqrt{\eta}}\frac{G_0}{L_{\max} \tau}$, we have 
\begin{align*}
    0 \leq \sqrt{\gamma} \leq \frac{\tau}{8 L_{\max} \left(\sqrt{F_0} + \sqrt{F_0 + \frac{G_0 \tau}{2\sqrt{2\eta}L_{\max}}} \right)}.
\end{align*}
Taking the square, we obtain the final result. 
\end{proof}

\subsection{Proof of Theorem~\ref{thm:DP-clippedEF21}}
	
	Let $\mathcal{I}_k$ be the subset from $\{1,2,\ldots,n\}$ such that $\| \nabla f_i(x_k) - v_{k-1}^i \|>\tau$. 
	We then derive the descent inequality 
	\begin{align*}
		\phi_{k+1} \leq \phi_k - \frac{\gamma}{2}\| \nabla f(x_k)\|^2 + \frac{2\gamma}{\eta} \left( 1 + \frac{2}{\eta}\right) \nu^2,
	\end{align*}
	  for two possible cases: (1) when $\vert \mathcal{I}_k \vert >0$ and (2) when $\vert \mathcal{I}_k \vert =0$.

	\paragraph{Case (1):  $\vert \mathcal{I}_k \vert >0$.}
	To derive the descent inequality we will show by induction the stronger result: $\| \nabla f_i(x_{k}) - v^i_{k-1}\| \leq B - \frac{k\tau}{2}$ for $i\in\mathcal{I}_k$, where $B \eqdef \max_i\|\nabla f_i(x_0)\|$ and
	\begin{equation}
		\phi_{k+1} \leq (1-\gamma\mu)\phi_k   - \frac{\gamma}{4}\norm{ v_k  }^2 + \frac{2\gamma}{\eta} \left( 1 + \frac{2}{\eta}\right)\nu^2 \label{eq:detailed_descent_inequality_DP}
	\end{equation}
	for any $k \geq 0$, where for notational convenience we assume that $\nabla f_i(x_{-1}) = v^i_{-1}= 0$ and $\phi_{-1} = \phi_0$. The base of the induction is trivial: when $k = 0$ we have   $\| \nabla f_i(x_{k}) - v^i_{k-1}\| = \|\nabla f_i(x_0) - v^i_{-1}\| = \|\nabla f_i(x_0)\| \leq \max_i \|\nabla f_i(x_0)\| = B$ for $i\in\mathcal{I}_k$ and \eqref{eq:detailed_descent_inequality_DP} holds by definition. Next, we assume that for some $k\geq 0$ inequalities $\|\nabla f_i(x_t) - v^i_{t-1}\| \leq B$ for $i\in\mathcal{I}_k$ and \eqref{eq:detailed_descent_inequality_DP} hold for $t = 0,1,\ldots, k$. 

Let $0 \leq \phi_k \leq \phi_0 + C \nu^2$ for some $C>0$. If $\gamma \leq (1-1/\sqrt{2})/L$, then we have $\gamma \leq 1/L$ and 
\begin{eqnarray*}
    \norm{\nabla f(x_{k-1})}^2 
    & \overset{\eqref{eq:ssss}}{\leq} & 2L[f(x_{k-1})-f(x^\star)] \\ 
    & \leq & 2L \phi_{k-1} \\ 
    & \leq & 2L \phi_0 + 2L C \nu^2 \\ 
    & \leq & \frac{2}{\gamma}\phi_0 + 2L C \nu^2. 
\end{eqnarray*}

If $\nu \leq \frac{\tau}{2(\sqrt{2LC} + 2)}$, then we have $\nu \leq \frac{\tau}{2}$. From Claim~\ref{claim:v_0_DPCLip}, we have $\norm{v_0} \leq \sqrt{\frac{4}{\gamma}\phi_0} + 2(B-\tau/2)$. From Claim~\ref{claim:v_k_DPCLip}, we have $\norm{v_k} \leq \sqrt{\frac{4}{\gamma}\phi_0} + 2(B-\tau/2)$ for $k \geq 1$. Hence, for $k \geq 0$, 
\begin{align*}
    \norm{v_k} \leq \sqrt{\frac{4}{\gamma}\phi_0} + 2(B-\tau/2).
\end{align*}
Next, from Claim~\ref{claim:diff_v_f_DPClip}, for $i\in\mathcal{I}_k$,
\begin{eqnarray*}
		\norm{ \nabla f_i(x_{k+1}) - v_k^i } 	
   &\leq& \norm{ \nabla f_i(x_k) - v_{k-1}^i} - \frac{\tau}{2}+ L_{\max} \gamma \norm{v_k} \\ 
   & \leq &\norm{ \nabla f_i(x_k) - v_{k-1}^i} - \frac{\tau}{2} + 2L_{\max} \sqrt{\gamma\phi_0} + 2L_{\max}\gamma(B-\tau/2). 
\end{eqnarray*}
 
{\bf STEP: Small step-size}

We can conclude that for $i\in\mathcal{I}_{k}$
	\begin{align*}
		\| \nabla f_i(x_{k+1}) - v_k^i \| \leq 	\| \nabla f_i(x_{k}) - v_{k-1}^i \| - \frac{\tau}{4} \leq \ldots \leq \| \nabla f_i(x_0) \| - \frac{(k+1)\tau}{4}
	\end{align*}
	if the following step-size conditions hold:
	\begin{align}\label{eqn:parameter_condition_DP_case_on}
	 2L_{\max}\gamma(B-\tau/2) \leq  2L_{\max} \sqrt{\gamma\phi_0}, \quad \text{and} \quad 4L_{\max} \sqrt{\gamma\phi_0} \leq \frac{\tau}{4}.
	\end{align}
From Claim~\ref{claim:stepsize_range_DPClip}, the above condition can be expressed equivalently as: 
\begin{eqnarray*}
    \gamma \leq \frac{\phi_0}{(B-\tau/2)^2} \quad \text{and} \quad  \gamma \leq \frac{\tau^2}{64 L_{\max}^2 \left(\sqrt{F_0} + \sqrt{F_0 + \frac{G_0 \tau}{2\sqrt{2\eta}L_{\max}}} \right)^2}.
\end{eqnarray*}

In conclusion, under this step-size condition, $\norm{\nabla f_i(x_k) - v^i_{k-1}} \leq B$ for $i \in \mathcal{I}_{k-1}$ and $k \geq 0$. In addition, $\mathcal{I}_{k+1} \subseteq \mathcal{I}_k$.

 {\bf STEP: Descent inequality}
	
	It remains to prove the descent inequality. 
 From the inductive assumption above, for $i\in\mathcal{I}_{k+1}$
 \begin{eqnarray}\label{eq:eta_DPClip}
     \eta^i_{k+1} = \frac{\tau}{\| \nabla f_i(x_{k+1}) - v^i_{k} \|} \geq  \frac{\tau}{B} := \eta. 
 \end{eqnarray}
Therefore, 
\begin{eqnarray*}
    \norm{ \nabla f_i(x_{k+1}) - v^i_{k+1}}^2
    & \overset{\eqref{eqn:v_k_DP_equi}}{=} & \norm{ \nabla f_i(x_{k+1}) - v_k^i - \clip_\tau(\nabla f_i(x_{k+1}) - v_k^i) - z_{k}^i}^2 \\
    & = & \norm{ z_{k}^i}^2 \cdot  \mathbbm{1}(i\in\mathcal{I}_{k+1}^\prime) \\
    && + \norm{ (1-\eta_{k+1}^i)(\nabla f_i(x_{k+1}) - v_k^i) - z_{k}^i}^2\cdot  \mathbbm{1}(i\in\mathcal{I}_{k+1}) \\
    & \overset{\eqref{eq:Young} + \eqref{eq:eta_DPClip}}{\leq} &  \norm{ z_{k}^i}^2 \cdot  \mathbbm{1}(i\in\mathcal{I}_{k+1}^\prime) +  (1+1/\theta_1) \norm{z_{k}^i}^2 \cdot  \mathbbm{1}(i\in\mathcal{I}_{k+1}) \\
    && + (1-\eta)^2 (1+\theta_1)\norm{\nabla f_i(x_{k+1}) - v_k^i}^2 \mathbbm{1}(i\in\mathcal{I}_{k+1}) \\
    & \leq & (1+1/\theta_1) \norm{z_k^i}^2 + (1-\eta)^2 (1+\theta_1)\norm{\nabla f_i(x_{k+1}) - v_k^i}^2,
\end{eqnarray*}
for $\theta>0$. By \eqref{eq:Young}, 
\begin{eqnarray*}
    \norm{ \nabla f_i(x_{k+1}) - v^i_{k+1}}^2
    &\leq& (1+\theta_1)(1+\theta_2)(1-\eta)^2\norm{\nabla f_i(x_{k}) - v_k^i}^2 \\
    && + (1+\theta_1)(1+1/\theta_2)\norm{\nabla f_i(x_{k+1}) - \nabla f _i(x_k)}^2 + (1+1/\theta_1) \norm{z_k^i}^2 \\ 
    & \overset{\eqref{eq:L_i-smooth}}{=} & (1+\theta_1)(1+\theta_2)(1-\eta)^2\norm{\nabla f_i(x_{k}) - v_k^i}^2 \\
    && + (1+\theta_1)(1+1/\theta_2)L_{\max}^2 \norm{x_{k+1} - x_k}^2 + (1+1/\theta_1) \norm{z_k^i}^2.
\end{eqnarray*}    
 Taking $\theta_1 = \theta_2 = \nicefrac{\eta}{2}$ and applying the inequality $(1-\eta)(1+\nicefrac{\eta}{2}) \leq 1-\nicefrac{\eta}{2}$, we get
	\begin{eqnarray}\label{eqn:TrickFromEF21_DP}
		\| \nabla f_i(x_{k+1}) - v^i_{k+1} \|^2 
		& \leq&  (1-\nicefrac{\eta}{2})^2 \| \nabla f_i(x_{k}) - v^i_{k}  \|^2   + (1+\nicefrac{2}{\eta})\| z_k^i\|^2 \notag \\
		&& + (1+\nicefrac{2}{\eta})(1-\eta)(1-\eta/2) (L_{\max})^2 \| x_{k+1} - x_{k}  \|^2. 
	\end{eqnarray}

	Next, we combine the above inequality with  Lemma \ref{lemma:descent_li2021page}, 
	\begin{eqnarray*}
		\phi_{k+1}  
		& = & f(x_{k+1}) - f_\star + \frac{2\gamma}{\eta} \frac{1}{n}\sum_{i=1}^n \|  \nabla f_i(x_{k+1}) - v_{k+1}^i\|^2 \\
		& \overset{\eqref{eq:descent_ineq}}{\leq} & f(x_k)- f_{\star} - \frac{\gamma}{2}\norm{ \nabla f(x_k) }^2 - \left( \frac{1}{2\gamma} - \frac{L}{2} \right)\| x_{k+1} - x_k \|^2  \\
		&&+ \frac{\gamma}{2}\frac{1}{n}\sum_{i=1}^n\norm{ \nabla f(x_k) - v_{k}   }^2 + \frac{2\gamma}{\eta}\frac{1}{n}\sum_{i=1}^n\| \nabla f_i(x_{k+1}) - v^i_{k+1} \|^2   \\
		& \overset{\eqref{eqn:TrickFromEF21_DP}}{\leq} & f(x_k)- f_{\star} - \frac{\gamma}{2}\norm{ \nabla f(x_k) }^2  \\
  && - \left( \frac{1}{2\gamma} - \frac{L}{2} -  \frac{2\gamma}{\eta} \left( 1 + \frac{2}{\eta}\right) (1-\eta)(1-\eta/2) (L_{\max})^2 \right)\| x_{k+1} - x_k \|^2 \\
		&& + \left(\frac{\gamma}{2} + \frac{2\gamma}{\eta}(1-\eta/2)^2 \right)\frac{1}{n}\sum_{i=1}^n\norm{ \nabla f(x_k) - v_{k}   }^2 + \frac{2\gamma}{\eta} \left( 1 + \frac{2}{\eta}\right) \frac{1}{n}\sum_{i=1}^n \| z_k^i\|^2.
	\end{eqnarray*}
	By the fact that $F$ satisfies the PŁ condition with $\mu>0$ and that $(1-\eta/2)^2 \leq 1-\eta/2$, 
	\begin{eqnarray*}
		\phi_{k+1} &\leq& (1-\gamma\mu)( f(x_k)- f_{\star}) + \left( 1- \frac{\eta}{4} \right)   \frac{2\gamma}{\eta} \frac{1}{n}\sum_{i=1}^n\norm{ \nabla f(x_k) - v_{k}   }^2 + \frac{2\gamma}{\eta} \left( 1 + \frac{2}{\eta}\right) \frac{1}{n}\sum_{i=1}^n \| z_k^i\|^2\\
		& & - \left( \frac{1}{2\gamma} - \frac{L}{2} -  \frac{2\gamma}{\eta} \left( 1 + \frac{2}{\eta}\right) (1-\eta)(1-\eta/2) (L_{\max})^2 \right)\| x_{k+1} - x_k \|^2.  
	\end{eqnarray*}
	Therefore, 
	\begin{align*}
		\phi_{k+1} &\leq (1-\gamma\mu)\phi_k  - \frac{1}{4\gamma}\| x_{k+1} - x_k\|^2+ \frac{2\gamma}{\eta} \left( 1 + \frac{2}{\eta}\right) \frac{1}{n}\sum_{i=1}^n \| z_k^i\|^2
	\end{align*}
	if the step-size $\gamma>0$ satisfies 
	\begin{align*}
		1-\eta/4 \leq 1-\gamma\mu \quad \text{and} \quad  \frac{1}{2\gamma} - \frac{L}{2} -  \frac{2\gamma}{\eta} \left( 1 + \frac{2}{\eta}\right) (1-\eta)(1-\eta/2) (L_{\max})^2 \geq \frac{1}{4\gamma}.
	\end{align*}
	From Lemma \ref{lemma:trick_stepsize_EF21} with $L=1$, $\beta_1=2L$ and $\beta_2 =\frac{8(1+2/\eta)(1-\eta)(1-\eta/2)}{\eta}(L_{\max})^2/L^2$, the above condition can be equivalently expressed as: 
	\begin{align*}
		0 <	\gamma \leq \min\left( \frac{\eta}{4\mu} , \frac{1}{L\left[ 1+\sqrt{1+ \frac{8(1+2/\eta)(1-\eta)(1-\eta/2)}{\eta}\left( \frac{L_{\max}}{L} \right)^2}\right]} \right).
	\end{align*}
	Since $x_{k+1}-x_k = -\gamma v_k$,
	\begin{align*}
		\phi_{k+1} 
		& \leq  (1-\gamma\mu)\phi_k  - \frac{\gamma}{4}\norm{ v_k  }^2 + \frac{2\gamma}{\eta} \left( 1 + \frac{2}{\eta}\right) \frac{1}{n}\sum_{i=1}^n \| z_k^i\|^2. 
	\end{align*}
	Since $\| z_k^i \|^2 \leq \nu^2$, we get 
	\begin{align}\label{eq:DP_DescentIneq_DP_Case1}
		\phi_{k+1} \leq (1-\gamma\mu)\phi_k  + \frac{2\gamma}{\eta} \left( 1 + \frac{2}{\eta}\right)\nu^2.
	\end{align}
	Since $\| z_k^i \|^2 \leq \nu^2$ and $\Exp{\norm{z_k^i}^2} \leq \sigma^2$, we have $\Exp{\norm{z_k^i}^2} \leq \min(\nu^2,\sigma^2)$ and
	\begin{align}\label{eq:DP_DescentIneq_final_Case1}
		\Exp{\phi_{k+1}} \leq (1-\gamma\mu)\Exp{\phi_k}  - \frac{\gamma}{4}\norm{ v_k  }^2 + \frac{2\gamma}{\eta} \left( 1 + \frac{2}{\eta}\right)\min(\nu^2,\sigma^2).
	\end{align}
	This concludes the proof in the case (1).

	\paragraph{Case (2):  $\vert \mathcal{I}_k \vert =0$.}
	Suppose $\vert \mathcal{I}_k \vert=0$. Then, $\norm{ \nabla f_i(x_k) - v^i_{k-1}  } \leq \tau$ for all $i$.
	Then, by using \eqref{eqn:v_k_DP_equi} and by the fact that $\eta_k = \eta = 1$, we have $v^i_k = \nabla f_i(x_k)+z_{k-1}^i$. Therefore, \algname{DP-Clip21-GD} 
	can be expressed equivalently as:
	\begin{align*}
		x_{k+1} = x_k - \gamma v_k,
	\end{align*}	
	where $v_k = \nabla f(x_k) + (1/n)\sum_{i=1}^n {z}_{k-1}^i$. From the definition of $\phi_{k}$ and Lemma \ref{lemma:descent_li2021page}, by the fact that $f(x)$ satisfies  the PŁ condition with $\mu>0$, and by  letting $\gamma \leq 1/L$,
	\begin{align*}
		\phi_{k+1} 
		& = f(x_{k+1}) - f_{\star} \\
		& \leq f(x_k)- f_{\star}- \frac{\gamma}{2}\norm{ \nabla f(x_k) }^2 - \left( \frac{1}{2\gamma} - \frac{L}{2} \right)\| x_{k+1} - x_k \|^2 + \frac{\gamma}{2}\norm{ \nabla f(x_k) - v_{k}   }^2 \\
		& \leq (1-\gamma\mu)[f(x_k)-f_\star] + \frac{\gamma}{2} \frac{1}{n}\sum_{i=1}^n \| {z}^i_{k-1} \|^2 \\
		& \leq  (1-\gamma\mu)\phi_k  + \frac{\gamma}{2} \frac{1}{n}\sum_{i=1}^n \| {z}^i_{k-1} \|^2.
	\end{align*}
	Since $\| z_k^i \|^2 \leq \nu^2$, we get 
\begin{align}\label{eq:DP_DescentIneq_DP_Off}
	\phi_{k+1} \leq (1-\gamma\mu)\phi_k  + \frac{\gamma}{2}\nu^2.
\end{align}
	Since $\| z_k^i \|^2 \leq \nu^2$ and $\Exp{\| z^i_{k-1} \|^2} \leq \sigma^2$, we have $\mathbf{E}\| z_k^i \|^2 \leq \min(\nu^2,\sigma^2)$ and 
	\begin{align}\label{eq:DP_DescentIneq_final_Off}
		\Exp {\phi_{k+1}} \leq (1-\gamma\mu)\Exp{\phi_k} + \frac{\gamma}{2} \min(\nu^2, \sigma^2).
	\end{align}	

Finally, we will show that $\norm{ \nabla f_i(x_k) - v^i_{k-1}  } \leq \tau$ for all $i$ implies $\| \nabla f_i(x_{k+1}) - v^i_{k}\| \leq \tau$ for all $i$. Indeed, in this case, we have $v^i_k = \nabla f_i(x_k) + z^i_{k-1}$ and
\begin{eqnarray*}
    \norm{\nabla f_i(x_{k+1}) - v_k^i}
    & = &  \norm{\nabla f_i(x_{k+1}) - \nabla f_i(x_k) - z_{k-1}^i} \\ 
    & \overset{\eqref{eq:triangle}}{\leq} & \norm{\nabla f_i(x_{k+1}) - \nabla f_i(x_k)} + \norm{z_{k-1}^i} \\ 
    & \overset{\eqref{eq:L_i-smooth} }{\leq} & L_{\max}\gamma \norm{v_k} + \norm{z_{k-1}^i} \\ 
    & \overset{\eqref{eq:triangle}}{\leq} & L_{\max}\gamma \norm{\nabla f(x_k)} + L_{\max}\gamma \frac{1}{n}\sum_{i=1}^n \norm{z_{k-1}^i} +\norm{z_{k-1}^i}. 
\end{eqnarray*}
By the fact that $\norm{z_k^i} \leq \nu$ and by setting $\gamma \leq 1/(2L)$, we have 
\begin{eqnarray*}
    \norm{\nabla f_i(x_{k+1}) - v_k^i}
    & \leq & L_{\max}\gamma \norm{\nabla f(x_k)} + (L_{\max}\gamma +1 ) \nu \\ 
    & \overset{\eqref{eq:ssss}}{\leq} & L_{\max}\gamma\sqrt{2L(f(x_k)-f(x^\star))} + (L_{\max}\gamma +1 ) \nu \\ 
    & \leq & L_{\max}\gamma\sqrt{2L \phi_k} + (L_{\max}\gamma +1 ) \nu \\ 
    & \leq &  L_{\max}\gamma\sqrt{\frac{1}{\gamma}\phi_k} + (L_{\max}\gamma +1 ) \nu.
\end{eqnarray*}
From \eqref{eq:DP_DescentIneq_DP_Off}, we have $\phi_k \leq \phi_0 + \frac{1}{2\mu}\nu^2$ and 
\begin{eqnarray*}
    \norm{\nabla f_i(x_{k+1}) - v_k^i}
    & \leq &  L_{\max}\gamma\sqrt{\frac{1}{\gamma}\phi_0 + \frac{1}{2\mu\gamma} \nu^2} + (L_{\max}\gamma +1 ) \nu \\
    & \leq & L_{\max} \sqrt{\gamma \phi_0} + \left( L_{\max}\sqrt{\frac{\gamma}{2\mu}} + L_{\max}\gamma +1 \right) \nu.
\end{eqnarray*}
We can hence conclude that $\norm{\nabla f_i(x_{k+1}) - v_k^i} \leq \tau$ if the following step-size condition holds: 
\begin{eqnarray}
    \nu \leq \frac{\tau}{2(\sqrt{2LC}+2)}, \quad L_{\max}\sqrt{\frac{\gamma}{2\mu}} \leq 1, \quad L_{\max}\gamma \leq 1 \quad  \text{and} \quad  4L_{\max} \sqrt{\gamma \phi_0} \leq \tau/4,
\end{eqnarray}
for some $C>0$. The last condition can be obtained from Claim~\ref{claim:stepsize_range_DPClip}. Finally, 
	putting all the conditions on $\gamma$ together, we obtain the results.

\section{Proof of Theorem \ref{thm:DPclipEF21_privacy}}
Let $[a,b]$ be the set such that each element of $a\in\mathbb{R}^d$ is $-\tau/\sqrt{d}$ and each element of $b\in\mathbb{R}^d$ is $\tau/\sqrt{d}$.
Next, by the fact that $\Exp{ \norm{z_k^i}^2 }  \leq \sigma^2$ and $\max \norm{ z_k^i}^2 \leq \nu^2$, we have $\Exp{ \norm{z_k^i}^2 }  \leq \min(\sigma^2,\nu^2)$.
For each client, the query $Q$ is $\clip_\tau(\nabla f_i(x_k) - v^i_{k-1})$ that has its Euclidean norm being upper-bounded by $\tau$. Therefore, by Theorem 3.4 of \citep{chen2022bounded} with $\| b-a \| \leq \| b \| + \|a\| \leq 2\tau$ and $\Delta Q \leq 2\tau$ and by the fact that $\mathbf{E}\| z_k^i \|^2\leq \min(\sigma^2,\nu^2)$, the multivariate bounded Gaussian mechanism ensures $\epsilon^\prime$-DP if 
\begin{align*}
	\min(\sigma^2,\nu^2) \geq \frac{6\tau^2}{\epsilon^\prime - \ln(\Delta C(\sqrt{\min(\sigma^2,\nu^2)},c^\star)) },
\end{align*}
where $c^\star$ is the optimal solution to the following problem: 
\begin{align*}
	\max_c & \quad  \Delta C(\sqrt{\min(\sigma^2,\nu^2)},c) \eqdef \frac{C(a,\sqrt{\min(\sigma^2,\nu^2)})}{C(a+c,\sqrt{\min(\sigma^2,\nu^2)})}  \\
	\text{subject to} & \quad  0 \leq c \leq b-a \quad \text{and} \quad  \| c \| \leq 2\tau.
\end{align*}
Here, $C(y,\sigma) \eqdef \nicefrac{1}{\int_a^b \exp(- \| x-y \|^2/\sigma^2)dx}$.

By the advanced composition theorem in Corollary 3.21 of \citep{dwork2014algorithmic}, given target privacy parameters $0<\epsilon<1$ and $\delta>0$, multi-node \algname{DP-Clip21-GD} satisfying Theorem \ref{thm:DP-clippedEF21} is $(\epsilon,\delta)$-DP if 
\begin{align*}
	\min(\sigma^2,\nu^2) \geq \frac{6\tau^2}{ \nicefrac{\epsilon}{(2\sqrt{2K \ln(1/\delta)})}  - \ln(\Delta C(\sqrt{\min(\sigma^2,\nu^2)},c^\star)) },
\end{align*} 
where $K$ is the number of steps.
Hence, multi-node \algname{DP-Clip21-GD} is $(\epsilon,\delta)$-DP if 
\begin{align}\label{eqn:DP_guarantee_Clip21}
	\min(\sigma^2,\nu^2) \geq \frac{12\tau^2\sqrt{2K \ln(1/\delta)}}{(1-\alpha)\epsilon}
\end{align} 
for some $0<\epsilon<1$, $\delta>0$ and $0<\alpha<1$ such that 
\begin{align*}
\nu \leq  \tau/6 \quad \text{and} \quad \ln(\Delta C(\sqrt{\min(\sigma^2,\nu^2)},c^\star)) \leq  \nicefrac{(\alpha\epsilon)}{(2\sqrt{2K \ln(1/\delta)})}.	
\end{align*}
In other words, for $\nu\in[\sigma,\tau/6]$
\begin{align*}
	\epsilon \geq  
	\max\left(  P_1 , P_2 \right)\sqrt{2K\ln(1/\delta)},
\end{align*}
or equivalently
\begin{align*}
	\delta \geq \exp\left( - \frac{1}{2K} \left[ \frac{\epsilon}{\max(P_1,P_2)} \right]^2 \right),
\end{align*}
where $P_1 \eqdef \frac{2\ln(\Delta C(\tau/6, c^\star)}{\alpha}$ and $P_2\eqdef  \frac{72\tau^2}{(1-\alpha)\tau}$. We complete the proof. 

\section{Proof of Theorem \ref{thm:DPclipEF21_utility}}
By plugging $\min(\nu^2,\sigma^2) = \sigma^2_{\min}(K)$ into the descent inequality \eqref{eq:descentIneq_DP_clipped_EF21}
\begin{align*}
		\Exp{\phi_{k+1}} 
		& \leq (1-\gamma\mu)\Exp{\phi_k} + \frac{2(1+2/\eta)}{\eta}\gamma \min \left(\nu^2,\sigma^2 \right) \\
		& = (1-\gamma\mu)\Exp{\phi_k} + \gamma \mu \cdot  A_3 \sigma^2_{\min}(K).
\end{align*}	
Next, by applying the resulting inequality recursively over $k=0,1,\ldots,K-1$, we have the result. 
\begin{align} \label{eq:descentIneq_DP_clipped_EF21_utility}
	\Exp{\phi_{K}} \leq (1-\gamma\mu)^K\phi_0 + A_3 \sigma_{\min}^2(K).
\end{align}

\clearpage

\section{Adding Communication Compression into the Mix}\label{sec_app:ClipPress21}

\begin{algorithm}[t]
	\centering
	\caption{\algname{Press-Clip21-GD} (Error Feedback for  Distributed Optimization with Compression and Clipping)}\label{alg:ClipPress21-GD}
	\begin{algorithmic}[1]
		\STATE \textbf{Input:} initial iterate $x_{0} \in \R^d$; learning rate $\gamma>0$; initial gradient shifts $v^1_{-1},\dots,v^n_{-1}\in \R^d$; clipping threshold $\tau>0$; deterministic contractive compression $\cC:\mathbb{R}^d\rightarrow\mathbb{R}^d$, i.e. $\| \cC(v) - v \|^2\leq (1-\alpha)\|v \|^2$ for $0<\alpha\leq 1$ and $v\in\mathbb{R}^d$ 
		\FOR{$k=0,1, 2, \dots, K-1 $}
		\STATE Broadcast $x_k$ to all workers 
		\FOR{each worker $i=1,\ldots,n$ in parallel}
		\STATE Compute 	$g_k^i = \cC( \clip_\tau( \nabla f_i(x_{k}) - v^i_{k-1} ) )$
		\STATE Update $v^i_{k} = v^i_{k-1} + g_k^i$
		\ENDFOR
		\STATE  $v_{k} = v_{k-1} + \frac{1}{n}\sum_{i=1}^n g_k^i$			
		\STATE  $x_{k+1} = x_k - \gamma  \frac{1}{n}\sum_{i=1}^n v^i_k$ 
		
		\ENDFOR
	\end{algorithmic}
\end{algorithm}

In this section, we consider \algname{Press-Clip21-GD}, which is described in Algorithm \ref{alg:ClipPress21-GD}.
This algorithm attains the descent inequality as described in the next theorem:
\begin{theorem}\label{thm:ClipPress21-GD}
	Consider the problem of minimizing $f(x)=\frac{1}{n}\sum_{i=1}^n f_i(x)$.
	Suppose that each $f_i(x)$ is $L$-Lipschitz gradient and the whole objective $f(x)$ is lower bounded by $f_{\emph{inf}}$.
	Let $v_{-1}^i=0$ for all $i$, $\eta\eqdef\min\left(1,\frac{\tau}{\max_i \| \nabla f_i(x_0)\|}\right)$, $\FF\eqdef f(x_0)-f_{\emph{inf}}$, and  $\GG \eqdef \sqrt{\frac{1}{n}\sum_{i=1}^n (\max\{0,\norm{\nabla f_i(x_0)} - \tau\} + \sqrt{1-\alpha}\tau)^2 }$.
	Then,   \algname{Press-Clip21-GD}  with 
	\begin{align} \label{eq:stepsize_compclip_clipped_EF21}
		\squeeze \gamma \leq  \min\left( \frac{1-\sqrt{1-\alpha}}{2\sqrt{1-\alpha} L_{\max} } , \frac{\phi_0}{(B-[1-\sqrt{1-\alpha}]\tau)^2} ,  \frac{1-1/\sqrt{2}}{L(1+\sqrt{1+2\beta_1})} , \frac{(1-\sqrt{1-\alpha})^2\tau^2}{16L_{\max}^2 \left[ \sqrt{\FF} + \sqrt{\beta_2} \right]^2} \right),	
	\end{align}
	where $\beta_1 \eqdef 2\frac{ \max\{(1-\beta)(1+2/\beta),(1-\alpha)(1+2/\alpha)\}}{\beta}\left( \frac{L_{\max}}{L}\right)^2$ and $\beta_2 \eqdef \FF + \frac{\GG(1-\sqrt{1-\alpha})\tau}{\sqrt{2\beta} L_{\max}}$ satisfies 
	\begin{align} \label{eq:descentIneq_compclip_clipped_EF21}
		\phi_{k+1} \leq \phi_k - \frac{\gamma}{2}\| \nabla f(x_k)\|^2,
	\end{align}
	where  $\phi_k \eqdef f(x_k)-f_{\emph{inf}}+\frac{\gamma}{n\beta}\sum_{i=1}^n\| \nabla f_i(x_k) - v_k^i \|^2$, and $\beta :=  1- \max(B_1,B_2(1-\eta)^2)\in[0,\alpha]$ with  $B_1=(1-\alpha) + (1+1/\theta_1)(1+\theta_2)(1-\alpha)$, $B_2=(1+\theta_1) + (1+1/\theta_1)(1+1/\theta_2)(1-\alpha)$ 
 for some $\theta_1,\theta_2>0$ and some $\alpha\in(0,1]$. In addition,
	\begin{align}\label{eq:descentIneq_compclip_clipped_EF21_final}
		\Exp{\| \nabla f(\hat x_K) \|^2} \leq \frac{2}{\gamma} \frac{\phi_0}{K},
	\end{align}
where $\hat x_K$ is the point selected uniformly at random from $\{x_0,x_1,\ldots,x_K\}$ for $K \geq 1$.
\end{theorem}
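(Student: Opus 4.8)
The plan is to transcribe the analysis of multi-node \algname{Clip21-GD} (proof of Theorem~\ref{thm:multinode-clippedEF21}), replacing the operator $\clip_\tau$ by the composite operator $w\mapsto\cC(\clip_\tau(w))$ throughout. The single new ingredient is the error decomposition
\begin{equation*}
\norm{\cC(\clip_\tau(w))-w} \leq \norm{\clip_\tau(w)-w} + \norm{\cC(\clip_\tau(w))-\clip_\tau(w)} \leq \max\{0,\norm{w}-\tau\} + \sqrt{1-\alpha}\,\tau,
\end{equation*}
which uses Lemma~\ref{lem:clip}(ii)--(iii), contractivity of $\cC$, and $\norm{\clip_\tau(w)}\leq\tau$. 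Writing $w_k^i \eqdef \nabla f_i(x_k)-v_{k-1}^i$ and noting $\nabla f_i(x_k)-v_k^i = w_k^i - g_k^i = -(\cC(\clip_\tau(w_k^i))-w_k^i)$, this yields the analogue of Claim~\ref{claim:bound_diff_case_on_multinode},
\begin{equation*}
\norm{\nabla f_i(x_{k+1})-v_k^i} \leq \max\{0,\norm{w_k^i}-\tau\} + \sqrt{1-\alpha}\,\tau + L_{\max}\gamma\norm{v_k},
\end{equation*}
so when node $i$ has active clipping the quantity $\norm{w_k^i}$ decreases by the fixed amount $(1-\sqrt{1-\alpha})\tau$ up to the drift $L_{\max}\gamma\norm{v_k}$ --- exactly the role $\tau$ played in the no-compression proof. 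From here I reproduce: the $\norm{v_k}$-bounds (analogues of Claims~\ref{claim:v_0_multinode},~\ref{claim:v_k_multinode}) via $v_k = \nabla f(x_k)-\tfrac1n\sum_i(\nabla f_i(x_k)-v_k^i)$ together with the displays above, which is where the term $(B-[1-\sqrt{1-\alpha}]\tau)^2$ in \eqref{eq:stepsize_compclip_clipped_EF21} comes from and which needs $\gamma\leq(1-1/\sqrt2)/L$ to close the recursion; and the ``small step-size'' claim (analogue of Claim~\ref{claim:stepsize_multinode_on}), which bounds $4L_{\max}\sqrt{\gamma\phi_0}\leq\tfrac12(1-\sqrt{1-\alpha})\tau$ by subadditivity of $\sqrt{\cdot}$ and Lemma~\ref{lemma:trick_stepsize_clippedEF21}, using that $\GG^2$ as defined upper bounds $\tfrac1n\sum_i\norm{\nabla f_i(x_0)-v_0^i}^2$ (since $v_0^i = \cC(\clip_\tau(\nabla f_i(x_0)))$); this produces the last term of \eqref{eq:stepsize_compclip_clipped_EF21}.

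Next I carry out the two-case descent argument verbatim from the proof of Theorem~\ref{thm:multinode-clippedEF21}, with $\mathcal I_k\eqdef\{i:\norm{w_k^i}>\tau\}$ and Lyapunov function $\phi_k = f(x_k)-f_{\rm inf} + \tfrac{\gamma}{n\beta}\sum_i\norm{\nabla f_i(x_k)-v_k^i}^2$, i.e.\ $A = \gamma/\beta$. In \textbf{Case (1)} ($|\mathcal I_k|>0$) I prove by induction on $k$ the strengthened statement $\norm{w_k^i}\leq B - \tfrac{k(1-\sqrt{1-\alpha})\tau}{2}$ for $i\in\mathcal I_k$ (hence $\norm{w_k^i}\leq B$ and $\mathcal I_{k+1}\subseteq\mathcal I_k$) together with $\phi_{k+1}\leq\phi_k-\tfrac\gamma2\norm{\nabla f(x_k)}^2-\tfrac\gamma4\norm{v_k}^2$; the ``small step-size'' part of the step uses the $\norm{v_k}$-bound and the small-step-size claim so that the fixed decrease dominates the drift, giving the first and last two conditions of \eqref{eq:stepsize_compclip_clipped_EF21}. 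The descent part needs the contraction of $e_{k+1}^i\eqdef\cC(\clip_\tau(w_{k+1}^i))-w_{k+1}^i = -(\nabla f_i(x_{k+1})-v_{k+1}^i)$: splitting $e_{k+1}^i = [\cC(\clip_\tau(w_{k+1}^i))-\clip_\tau(w_{k+1}^i)]+[\clip_\tau(w_{k+1}^i)-w_{k+1}^i]$, bounding the first bracket via $(1-\alpha)\norm{\clip_\tau(w_{k+1}^i)}^2\leq(1-\alpha)\norm{w_{k+1}^i}^2$ and the second via Lemma~\ref{lem:clip}(iv) and $\norm{w_{k+1}^i}\leq B$ as $(1-\eta)^2\norm{w_{k+1}^i}^2$ (or $0$ when clipping is inactive), then substituting $w_{k+1}^i = (\nabla f_i(x_{k+1})-\nabla f_i(x_k)) - e_k^i$ and using \eqref{eq:L_i-smooth} --- all glued by two applications of Young's inequality \eqref{eq:Young} with free parameters $\theta_1,\theta_2>0$ --- gives, after taking the larger of the active/inactive coefficients,
\begin{equation*}
\tfrac1n\!\sum_i\norm{\nabla f_i(x_{k+1})-v_{k+1}^i}^2 \leq (1-\beta)\,\tfrac1n\!\sum_i\norm{\nabla f_i(x_k)-v_k^i}^2 + c\,L_{\max}^2\norm{x_{k+1}-x_k}^2,
\end{equation*}
with $1-\beta = \max(B_1,B_2(1-\eta)^2)$ and $c$ explicit in $\theta_1,\theta_2,\alpha,\eta$. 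Plugging this into Lemma~\ref{lemma:descent_li2021page}, the choice $A=\gamma/\beta$ kills the $\norm{\nabla f_i(x_k)-v_k^i}^2$ term, and the $\beta_1$-condition of \eqref{eq:stepsize_compclip_clipped_EF21} (via Lemma~\ref{lemma:trick_stepsize_EF21} with $\beta_2 = 2c(L/L_{\max})^2$, say) keeps the coefficient of $\norm{x_{k+1}-x_k}^2$ at least $\tfrac1{4\gamma}$, so since $x_{k+1}-x_k = -\gamma v_k$ the claimed descent follows. In \textbf{Case (2)} ($|\mathcal I_k|=0$) every $g_k^i = \cC(\nabla f_i(x_k)-v_{k-1}^i)$, i.e.\ the iteration is exactly \algname{EF21}; the same contraction argument with the clip residual identically zero gives the descent, and $\norm{\nabla f_i(x_k)-v_k^i}=\norm{\cC(w_k^i)-w_k^i}\leq\sqrt{1-\alpha}\,\tau$ gives $\norm{\nabla f_i(x_{k+1})-v_k^i}\leq L_{\max}\gamma\norm{v_k}+\sqrt{1-\alpha}\,\tau\leq\tau$ under the step-size bound, so $\mathcal I_{k+1}=\varnothing$ persists. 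Telescoping \eqref{eq:descentIneq_compclip_clipped_EF21} over $k=0,\dots,K-1$ and using $\phi_K\geq0$, $\phi_k\leq\phi_0$ as in the proof of Theorem~\ref{thm:multinode-clippedEF21} then gives $\Exp{\norm{\nabla f(\hat x_K)}^2}\leq 2\phi_0/(\gamma K)$, which is \eqref{eq:descentIneq_compclip_clipped_EF21_final}.

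The main obstacle is the combined contraction step in Case~(1): one must choose $\theta_1,\theta_2$ --- equivalently $B_1,B_2$ --- so that simultaneously $\beta = 1-\max(B_1,B_2(1-\eta)^2)$ is strictly positive (so that $A=\gamma/\beta$ and the Lyapunov function are well defined; this forces $\beta\in(0,\alpha]$ and, implicitly, $1-\alpha$ small enough relative to $\eta$) and the induced $\beta_1,\beta_2$ are precisely those in \eqref{eq:stepsize_compclip_clipped_EF21}, while also making the fixed-decrease/drift bookkeeping of the Case~(1) induction consistent with a $\norm{v_k}$-bound that itself depends on $\phi_0$ and hence on $\gamma$ --- the same circularity that Claim~\ref{claim:stepsize_multinode_on} resolves in the no-compression analysis. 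Everything else is a mechanical adaptation of the multi-node argument.
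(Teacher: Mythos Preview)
Your proposal is correct and follows essentially the same route as the paper: the same composite-error decomposition $\norm{\cC(\clip_\tau(w))-w}\le\max\{0,\norm{w}-\tau\}+\sqrt{1-\alpha}\,\tau$, the same auxiliary claims on $\norm{v_0}$, $\norm{v_k}$ and $4L_{\max}\sqrt{\gamma\phi_0}$, the same two-case induction on $|\mathcal I_k|$, and the same telescoping finish. One minor difference: in the contraction step the paper does not substitute $w_{k+1}^i$ immediately --- it first shows $\norm{\nabla f_i(x_{k+1})-v_{k+1}^i}^2\le(1-\beta)\norm{\nabla f_i(x_{k+1})-v_k^i}^2$ via the two Young's with $\theta_1,\theta_2$ (this is exactly what produces the specific $B_1,B_2$ of the statement, using $\|\clip_\tau(w)\|^2\le(1+\theta_2)\|w\|^2+(1+1/\theta_2)\|w-\clip_\tau(w)\|^2$ rather than your cruder $\|\clip_\tau(w)\|\le\|w\|$), and only then applies a \emph{third} Young's with parameter $\beta/2$ to split $\nabla f_i(x_{k+1})-v_k^i$ into $\nabla f_i(x_k)-v_k^i$ and the gradient difference, yielding the factor $(1-\beta/2)$ rather than your $(1-\beta)$; with $A=\gamma/\beta$ both choices annihilate the error term (yours gives coefficient $-\gamma/2$, the paper's gives $0$), so the descent goes through either way, but matching the stated $\beta_1$ exactly requires the paper's constants $(1+2/\beta)(1-\beta)$ in Case~(1) and $(1+2/\alpha)(1-\alpha)$ in Case~(2).
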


\section{Proof for Theorem~\ref{thm:ClipPress21-GD}}

 \algname{Press-Clip21-GD} can be equivalently expressed as:
 \begin{align}
    x_{k+1} & = x_k - \gamma v_k, \label{eq:x_update_comp_clip} \intertext{where}
     v_k & = \frac{1}{n}\sum_{i=1}^n v^i_k,  \label{eq:v_update_comp_clip}\\ 
    v_k^i &= (1-\eta_k^i) v_{k-1}^i + \eta_k^i \nabla f_i(x_k) + e_k^i,  \label{eq:v_each_update_comp_clip}
 \end{align}
 and 
$e_k^i = \cC(\clip_\tau(\nabla f_i(x_k) - v_{k-1}^i)) - \clip_\tau(\nabla f_i(x_k) - v_{k-1}^i)$ and $\eta_k^i = \min\left( 1 , \frac{\tau}{\| \nabla f_i(x_k) - v_{k-1}^i \|} \right)$. Note that the compressor $\mathcal{C}$ is contractive with $\alpha \in (0,1]$, i.e. 
\begin{align}\label{eq:contractive_comp}
    \norm{\mathcal{C}(v) - v}^2 \leq (1-\alpha)\norm{v}^2, \quad \forall v\in\mathbb{R}^d. 
\end{align}
Also recall that the Lyapunov function for this analysis is 
 \begin{eqnarray}
     \phi_k = f(x_k)-f_{\inf}+A\frac{1}{n}\sum_{i=1}^n \| \nabla f_i(x_k) - v_k^i \|^2,
 \end{eqnarray}
where $A=\frac{\gamma}{\beta}$ and $\beta :=  1- \max(B_1,B_2(1-\eta)^2)\in[0,\alpha]$ with  $B_1=(1-\alpha) + (1+1/\theta_1)(1+\theta_2)(1-\alpha)$, $B_2=(1+\theta_1) + (1+1/\theta_1)(1+1/\theta_2)(1-\alpha)$ 
 for some $\theta_1,\theta_2>0$ and some $\alpha\in(0,1]$.

\subsection{Useful claims}
We begin by presenting claims which are useful for deriving the result. 
\begin{claim}\label{claim:diff_v_f_multi_compclip}
Let each $f_i$ have $L_i$-Lipschitz gradient. Then, for $k \geq 0$, 
\begin{eqnarray}
    \norm{ \nabla f_i(x_{k+1}) - v_k^i }\leq   \max\{0, \norm{\nabla f_i(x_k) - v_{k-1}^i} - \tau\} + L_{\max}\gamma\norm{v_k} + \sqrt{1-\alpha} \tau.
\end{eqnarray}    
\end{claim}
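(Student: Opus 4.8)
# Proof Proposal for Claim~\ref{claim:diff_v_f_multi_compclip}

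The plan is to mimic the structure of Claim~\ref{claim:bound_diff_case_on_multinode} and Claim~\ref{claim:diff_v_f_DPClip}, adapting it to account for the extra compression error term. First I would start from the triangle inequality to split the quantity of interest:
\[
\norm{ \nabla f_i(x_{k+1}) - v_k^i } \leq \norm{ \nabla f_i(x_k) - v_k^i } + \norm{ \nabla f_i(x_{k+1}) - \nabla f_i(x_k) }.
\]
The second term is handled by $L_i$-Lipschitzness of $\nabla f_i$ (so it is bounded by $L_{\max}\norm{x_{k+1}-x_k}$) followed by the update rule \eqref{eq:x_update_comp_clip}, i.e.\ $x_{k+1}-x_k = -\gamma v_k$, giving the $L_{\max}\gamma\norm{v_k}$ contribution.

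For the first term, I would substitute the definition of $v_k^i$ from \eqref{eq:v_each_update_comp_clip} (or equivalently from the algorithm: $v_k^i = v_{k-1}^i + \cC(\clip_\tau(\nabla f_i(x_k)-v_{k-1}^i))$). Writing $w \eqdef \nabla f_i(x_k) - v_{k-1}^i$, we get
\[
\nabla f_i(x_k) - v_k^i = w - \cC(\clip_\tau(w)) = \big(w - \clip_\tau(w)\big) + \big(\clip_\tau(w) - \cC(\clip_\tau(w))\big).
\]
Applying the triangle inequality once more, the first bracket is bounded by $\max\{0,\norm{w}-\tau\}$ via Lemma~\ref{lem:clip}(ii)-(iii), and the second bracket is bounded using the contractivity \eqref{eq:contractive_comp} of $\cC$ together with the fact that $\norm{\clip_\tau(w)}\leq \tau$: indeed $\norm{\clip_\tau(w) - \cC(\clip_\tau(w))}^2 \leq (1-\alpha)\norm{\clip_\tau(w)}^2 \leq (1-\alpha)\tau^2$, so this term is at most $\sqrt{1-\alpha}\,\tau$. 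Combining the three pieces yields exactly the claimed inequality. I would lay this out as a single chain of inequalities with Lemma~\ref{lem:clip}(ii)-(iii), \eqref{eq:contractive_comp}, \eqref{eq:L_i-smooth} (or \eqref{eq:L-smooth}), and \eqref{eq:x_update_comp_clip} marked over the relevant steps.

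I do not expect any genuine obstacle here — this is a routine telescoping estimate. The only point requiring slight care is the bookkeeping of where the compression error enters: unlike the DP case where the noise $z_k^i$ is added \emph{after} clipping (and bounded by $\nu$ directly), here the compressor is applied \emph{to} the clipped vector, so the error is controlled by the norm of the clipped vector, which is itself bounded by $\tau$; this is precisely what produces the $\sqrt{1-\alpha}\,\tau$ term rather than a free parameter. One should double-check that the argument uses $\norm{\clip_\tau(w)} \le \tau$ (true for all $w$, with equality when $\norm{w}>\tau$ and $\norm{\clip_\tau(w)} = \norm{w} \le \tau$ otherwise) rather than any assumption on $\norm{w}$, so the bound holds unconditionally for all $k\geq 0$.
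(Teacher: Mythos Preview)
Your proposal is correct and follows essentially the same route as the paper's own proof: triangle-split $\norm{\nabla f_i(x_{k+1})-v_k^i}$, bound the smoothness piece by $L_{\max}\gamma\norm{v_k}$ via \eqref{eq:L_i-smooth} and \eqref{eq:x_update_comp_clip}, then write $\nabla f_i(x_k)-v_k^i = (w-\clip_\tau(w)) + (\clip_\tau(w)-\cC(\clip_\tau(w)))$, apply Lemma~\ref{lem:clip}(ii)--(iii) to the first bracket and \eqref{eq:contractive_comp} together with $\norm{\clip_\tau(w)}\le\tau$ to the second. Your remark that the bound is unconditional in $k$ because $\norm{\clip_\tau(w)}\le\tau$ holds for all $w$ is exactly the point the paper uses.
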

\begin{proof}
 From the definition of $v_k^i$, 
 \begin{eqnarray*}
\norm{ \nabla f_i(x_{k+1}) - v_k^i }
&  \overset{\eqref{eq:triangle}}{\leq} & \norm{\nabla f_i(x_k)-v_k^i} + \norm{\nabla f_i(x_{k+1}) - \nabla f_i(x_k)} \\
&  \overset{\eqref{eq:L_i-smooth} + \eqref{eq:x_update_comp_clip}}{\leq} & \norm{\nabla f_i(x_k)-v_k^i} + L_{\max}\gamma\norm{v_k} \\ 
&  \overset{\eqref{eq:v_each_update_comp_clip}}{=} & \norm{\nabla f_i(x_k) - v_{k-1}^i - \mathcal{C}(\clip_{\tau}(\nabla f_i(x_k) - v_{k-1}^i))} + L_{\max}\gamma\norm{v_k} \\
& \overset{\eqref{eq:triangle}}{\leq} & \norm{\nabla f_i(x_k) - v_{k-1}^i - \clip_{\tau}(\nabla f_i(x_k) - v_{k-1}^i)} +  L_{\max}\gamma\norm{v_k} \\
&& + \norm{ \clip_{\tau}(\nabla f_i(x_k) - v_{k-1}^i) - \mathcal{C}(\clip_{\tau}(\nabla f_i(x_k) - v_{k-1}^i))} \\ 
&\overset{(\text{Lemma~\ref{lem:clip}(ii)-(iii)}) }{\leq} & \max\{0, \norm{\nabla f_i(x_k) - v_{k-1}^i} - \tau\} + L_{\max}\gamma\norm{v_k} \\
&& + \norm{ \clip_{\tau}(\nabla f_i(x_k) - v_{k-1}^i) - \mathcal{C}(\clip_{\tau}(\nabla f_i(x_k) - v_{k-1}^i))} \\
& \overset{ \eqref{eq:contractive_comp} }{\leq} & \max\{0, \norm{\nabla f_i(x_k) - v_{k-1}^i} - \tau\} + L_{\max}\gamma\norm{v_k} \\
&& + \sqrt{1-\alpha}\norm{ \clip_{\tau}(\nabla f_i(x_k) - v_{k-1}^i)} \\
& \leq &  \max\{0, \norm{\nabla f_i(x_k) - v_{k-1}^i} - \tau\} + L_{\max}\gamma\norm{v_k} + \sqrt{1-\alpha} \tau.
\end{eqnarray*}
Here, the last inequality comes from the fact that $\norm{\clip_\tau(v)} \leq \tau$ for $v\in\mathbb{R}^d$.
\end{proof}

\begin{claim}\label{claim:v_0_multi_compclip}
Let $v_{-1}^i=0$ for all $i$, $\max_i\norm{\nabla f_i(x_0)} := B > \tau$ and $\gamma \leq 2/L$. Then,  
\begin{eqnarray}
    \norm{v_0}  \leq  \sqrt{\frac{4}{\gamma}\phi_0}+ 2(B-[1-\sqrt{1-\alpha}]\tau).
\end{eqnarray}    
\end{claim}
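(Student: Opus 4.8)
The plan is to exploit the initialization $v^i_{-1}=0$, which collapses the first step of \algname{Press-Clip21-GD} into a single layer of compression applied to a clipped gradient. From Steps~5--6 of Algorithm~\ref{alg:ClipPress21-GD}, $v^i_0 = v^i_{-1} + \cC(\clip_\tau(\nabla f_i(x_0) - v^i_{-1})) = \cC(\clip_\tau(\nabla f_i(x_0)))$, so by \eqref{eq:v_update_comp_clip} we have $v_0 = \frac1n\sum_{i=1}^n \cC(\clip_\tau(\nabla f_i(x_0)))$. First I would add and subtract $\nabla f(x_0) = \frac1n\sum_i \nabla f_i(x_0)$ and apply the triangle inequality \eqref{eq:triangle} twice (once to peel off $\nabla f(x_0)$, once to pull the average outside the norm) to get
\[
\norm{v_0} \le \norm{\nabla f(x_0)} + \frac1n\sum_{i=1}^n \norm{\cC(\clip_\tau(\nabla f_i(x_0))) - \nabla f_i(x_0)}.
\]

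Next I would bound each summand by inserting the intermediate point $\clip_\tau(\nabla f_i(x_0))$: the triangle inequality splits the term into a compression error $\norm{\cC(\clip_\tau(\nabla f_i(x_0))) - \clip_\tau(\nabla f_i(x_0))}$ and a clipping error $\norm{\clip_\tau(\nabla f_i(x_0)) - \nabla f_i(x_0)}$. The first is at most $\sqrt{1-\alpha}\,\norm{\clip_\tau(\nabla f_i(x_0))} \le \sqrt{1-\alpha}\,\tau$ by the contractivity \eqref{eq:contractive_comp} of $\cC$ together with the elementary bound $\norm{\clip_\tau(v)} \le \tau$; the second equals $\max\{0,\norm{\nabla f_i(x_0)} - \tau\}$ by Lemma~\ref{lem:clip}(ii)--(iii). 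Averaging over $i$ and using $\max_i\norm{\nabla f_i(x_0)} = B > \tau$ to replace $\frac1n\sum_i \max\{0,\norm{\nabla f_i(x_0)} - \tau\}$ by $B-\tau$ yields $\norm{v_0} \le \norm{\nabla f(x_0)} + \sqrt{1-\alpha}\,\tau + (B-\tau) = \norm{\nabla f(x_0)} + B - (1-\sqrt{1-\alpha})\tau$.

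Finally I would control $\norm{\nabla f(x_0)}$ via \eqref{eq:ssss}: $\norm{\nabla f(x_0)} \le \sqrt{2L(f(x_0)-f_{\inf})} \le \sqrt{2L\phi_0}$, the last inequality because $\phi_0 = f(x_0)-f_{\inf} + A\frac1n\sum_i\norm{\nabla f_i(x_0)-v^i_0}^2 \ge f(x_0)-f_{\inf}$; then the hypothesis $\gamma \le 2/L$ gives $\sqrt{2L\phi_0} \le \sqrt{\frac{4}{\gamma}\phi_0}$. Combining with the previous display and using $B - (1-\sqrt{1-\alpha})\tau > 0$ (which holds since $B > \tau$ and $0 < 1-\sqrt{1-\alpha} \le 1$) to turn one copy of this quantity into two gives $\norm{v_0} \le \sqrt{\frac{4}{\gamma}\phi_0} + 2\big(B - [1-\sqrt{1-\alpha}]\tau\big)$. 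I do not expect a genuine obstacle: this is the direct analogue of Claim~\ref{claim:v_0_multinode} (and of Claim~\ref{claim:v_0_DPCLip}), and the only new ingredient is absorbing the compression error through $\norm{\clip_\tau(\cdot)} \le \tau$ and \eqref{eq:contractive_comp}; the one point that needs a moment's care is verifying the sign condition $B - (1-\sqrt{1-\alpha})\tau > 0$ that justifies the final doubling step.
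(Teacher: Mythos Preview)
Your proposal is correct and follows essentially the same route as the paper's proof: express $v_0$ via the initialization, split off $\nabla f(x_0)$ by the triangle inequality, decompose each summand into a compression error (bounded via \eqref{eq:contractive_comp} and $\norm{\clip_\tau(\cdot)}\le\tau$) and a clipping error (handled by Lemma~\ref{lem:clip}(ii)--(iii)), then bound $\norm{\nabla f(x_0)}$ by \eqref{eq:ssss} and $\gamma\le 2/L$, and finally double $B-(1-\sqrt{1-\alpha})\tau$ using its positivity. The paper's argument is identical in structure and in every step.
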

\begin{proof}
 By the fact that $v_0 = \frac{1}{n}\sum_{i=1}^n v_0^i= \frac{1}{n}\sum_{i=1}^n \mathcal{C}(\clip_\tau(\nabla f_i(x_0)))$, 
 \begin{eqnarray*}
     \norm{v_0} 
     & \overset{\eqref{eq:triangle}}{\leq} & \norm{\nabla f(x_0)} + \norm{\frac{1}{n}\sum_{i=1}^n\mathcal{C}(\clip_\tau(\nabla f_i(x_0))) - \nabla f(x_0)} \\ 
     & \overset{\eqref{eq:triangle}}{\leq} & \norm{\nabla f(x_0)} + \frac{1}{n}\sum_{i=1}^n\norm{\mathcal{C}(\clip_\tau(\nabla f_i(x_0))) - \nabla f_i(x_0)} \\ 
      & \overset{\eqref{eq:triangle}}{\leq} & \norm{\nabla f(x_0)} + \frac{1}{n}\sum_{i=1}^n[\norm{\mathcal{C}(\clip_\tau(\nabla f_i(x_0))) - \clip_\tau(\nabla f_i(x_0))}  \\
      && + \norm{\clip_\tau(\nabla f_i(x_0))- \nabla f_i(x_0)}] \\ 
      & \overset{\eqref{eq:contractive_comp}}{\leq} & \norm{\nabla f(x_0)} + \frac{1}{n}\sum_{i=1}^n[\sqrt{1-\alpha}\norm{ \clip_\tau(\nabla f_i(x_0))} + \norm{\clip_\tau(\nabla f_i(x_0))- \nabla f_i(x_0)}] \\ 
      & \overset{(\text{Lemma~\ref{lem:clip}(ii)-(iii)}) }{\leq} & \norm{\nabla f(x_0)} + \frac{1}{n}\sum_{i=1}^n[\sqrt{1-\alpha}\norm{ \clip_\tau(\nabla f_i(x_0))} + \max\{0, \norm{\nabla f_i(x_0)} - \tau\}] \\
      & \leq & \norm{\nabla f(x_0)} + \sqrt{1-\alpha} \tau + \frac{1}{n}\sum_{i=1}^n \max\{0, \norm{\nabla f_i(x_0)} - \tau\}.
 \end{eqnarray*}
 Here, the last inequality comes from the fact that $\norm{\clip_\tau(v)} \leq \tau$ for $v\in\mathbb{R}^d$.

 If $\max_i \norm{\nabla f_i(x_0)} := B > \tau$ and $\gamma \leq 2/L$, then 
 \begin{eqnarray*}
     \norm{v_0}  
     & \leq &   \norm{\nabla f(x_0)} + \sqrt{1-\alpha} \tau + (B-\tau) \\
     & \overset{\eqref{eq:ssss}}{\leq} & \sqrt{2L[f(x_0)-f_{\inf}]}+ \sqrt{1-\alpha} \tau + (B-\tau) \\ 
     & \leq & \sqrt{2L\phi_0}+ \sqrt{1-\alpha} \tau + (B-\tau) \\
     & \leq & \sqrt{\frac{4}{\gamma}\phi_0}+ \sqrt{1-\alpha} \tau + (B-\tau) \\
     & \leq & \sqrt{\frac{4}{\gamma}\phi_0}+ 2(B-[1-\sqrt{1-\alpha}]\tau).
 \end{eqnarray*}
\end{proof}

\begin{claim}\label{claim:v_k_multi_compclip}
Fix $k\geq 1$. Let $f$ have $L$-Lipschitz gradient. Also suppose that  $\norm{\nabla f_i(x_k)-v_{k-1}^i}\leq {\max}_i \norm{\nabla f_i(x_0)} := B > \tau$, $\norm{v_{k-1}} \leq \sqrt{\frac{4}{\gamma}\phi_0} + 2(B-[1-\sqrt{1-\alpha}]\tau)$, $\norm{\nabla f(x_{k-1})} \leq \sqrt{\frac{2}{\gamma}\phi_0}$, and  $\gamma \leq \frac{1-1/\sqrt{2}}{L}$, then 
\begin{eqnarray}
        \norm{v_k} 
        & \leq & \sqrt{\frac{4}{\gamma}\phi_0} + 2(B-[1-\sqrt{1-\alpha}]\tau).
\end{eqnarray}    
\end{claim}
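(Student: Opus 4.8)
The plan is to mirror the argument used for the corresponding bounds in the uncompressed and DP cases (Claims~\ref{claim:v_k_multinode} and~\ref{claim:v_k_DPCLip}), adding one extra term to absorb the compression error. First I would start from $v_k=\frac{1}{n}\sum_{i=1}^n v_k^i$ and use $v_k^i=v_{k-1}^i+\cC(\clip_\tau(\nabla f_i(x_k)-v_{k-1}^i))$ together with $\clip_\tau(w)-w = \clip_\tau(w)-w$, to rewrite
\[
v_k = \nabla f(x_k) + \frac{1}{n}\sum_{i=1}^n\big[\clip_\tau(\nabla f_i(x_k)-v_{k-1}^i)-(\nabla f_i(x_k)-v_{k-1}^i)\big] + \frac{1}{n}\sum_{i=1}^n e_k^i,
\]
so that by the triangle inequality $\norm{v_k}$ is at most $\norm{\nabla f(x_k)}$ plus the average clipping error plus the average compression error. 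The clipping error of block $i$ equals $\max\{0,\norm{\nabla f_i(x_k)-v_{k-1}^i}-\tau\}$ by Lemma~\ref{lem:clip}(ii)--(iii), and the compression error obeys $\norm{e_k^i}=\norm{\cC(\clip_\tau(\cdot))-\clip_\tau(\cdot)}\le\sqrt{1-\alpha}\,\norm{\clip_\tau(\nabla f_i(x_k)-v_{k-1}^i)}\le\sqrt{1-\alpha}\,\tau$, using contractivity \eqref{eq:contractive_comp} and $\norm{\clip_\tau(v)}\le\tau$.

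Next I would invoke the hypothesis $\norm{\nabla f_i(x_k)-v_{k-1}^i}\le B$ with $B>\tau$ to bound each clipping error by $B-\tau$, which gives
\[
\norm{v_k}\le\norm{\nabla f(x_k)}+(B-\tau)+\sqrt{1-\alpha}\,\tau=\norm{\nabla f(x_k)}+B-(1-\sqrt{1-\alpha})\tau.
\]
Then I would expand $\norm{\nabla f(x_k)}\le\norm{\nabla f(x_{k-1})}+L\norm{x_k-x_{k-1}}=\norm{\nabla f(x_{k-1})}+L\gamma\norm{v_{k-1}}$ via $L$-smoothness \eqref{eq:L-smooth} and \eqref{eq:x_update_comp_clip}, substitute the two remaining hypotheses $\norm{\nabla f(x_{k-1})}\le\sqrt{2\phi_0/\gamma}$ and $\norm{v_{k-1}}\le\sqrt{4\phi_0/\gamma}+2(B-[1-\sqrt{1-\alpha}]\tau)$, and collect terms using $\sqrt{2\phi_0/\gamma}=\tfrac{1}{\sqrt2}\sqrt{4\phi_0/\gamma}$ to obtain
\[
\norm{v_k}\le\Big(\tfrac{1}{\sqrt2}+L\gamma\Big)\sqrt{\tfrac{4}{\gamma}\phi_0}+(1+2L\gamma)\big(B-[1-\sqrt{1-\alpha}]\tau\big).
\]

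Finally, $\gamma\le\frac{1-1/\sqrt2}{L}$ forces $L\gamma\le1-\tfrac{1}{\sqrt2}$, hence $\tfrac{1}{\sqrt2}+L\gamma\le1$ and, since $1-\tfrac{1}{\sqrt2}<\tfrac12$, also $1+2L\gamma\le2$, which collapses the right-hand side to $\sqrt{4\phi_0/\gamma}+2(B-[1-\sqrt{1-\alpha}]\tau)$, as claimed. I do not anticipate a real obstacle: the computation is the same contraction-of-the-iterates estimate as in the earlier claims, and the only point requiring care is the bookkeeping of the $\sqrt{1-\alpha}\,\tau$ term so that it merges with $B-\tau$ into the shifted quantity $B-(1-\sqrt{1-\alpha})\tau$ appearing in the statement (the same shifted quantity that must then be carried along in the induction inside the proof of Theorem~\ref{thm:ClipPress21-GD}).
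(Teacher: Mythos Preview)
Your proposal is correct and follows essentially the same argument as the paper: decompose $v_k$ into $\nabla f(x_k)$ plus an average clipping error plus an average compression error, bound the latter two by $B-\tau$ and $\sqrt{1-\alpha}\,\tau$ respectively, then control $\norm{\nabla f(x_k)}$ via $L$-smoothness and the assumed bounds on $\norm{\nabla f(x_{k-1})}$ and $\norm{v_{k-1}}$, and finally close with the stepsize condition $\gamma\le(1-1/\sqrt2)/L$. The only cosmetic difference is that you name the compression error $e_k^i$ explicitly, whereas the paper keeps it inside one triangle-inequality split; the resulting inequality $(L\gamma+\tfrac{1}{\sqrt2})\sqrt{4\phi_0/\gamma}+(1+2L\gamma)(B-[1-\sqrt{1-\alpha}]\tau)$ and its simplification are identical.
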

\begin{proof}
By the fact that $\nabla f(x) = \frac{1}{n}\sum_{i=1}^n \nabla f_i(x)$ and from the definition of $v^i_k$, 
\begin{eqnarray*}
    \norm{v_k} 
    &  \overset{\eqref{eq:triangle}}{\leq} & \norm{\nabla f(x_k)} + \norm{ \frac{1}{n}\sum_{i=1}^n \mathcal{C}(\clip_\tau(m_k^i)) - m_k^i} \\ 
    & \overset{\eqref{eq:triangle}}{\leq} & \norm{\nabla f(x_k)} +\frac{1}{n}\sum_{i=1}^n   \norm{ \mathcal{C}(\clip_\tau(m_k^i)) - m_k^i} \\
    & \overset{\eqref{eq:triangle}}{\leq} & \norm{\nabla f(x_k)} +\frac{1}{n}\sum_{i=1}^n   [\norm{ \mathcal{C}(\clip_\tau(m_k^i)) - \clip_\tau(m_k^i)} + \norm{ \clip_\tau(m_k^i) - m_k^i}] \\
    & \overset{\eqref{eq:contractive_comp}}{\leq} & \norm{\nabla f(x_k)} +\frac{1}{n}\sum_{i=1}^n   [\sqrt{1-\alpha}\norm{  \clip_\tau(m_k^i)} + \norm{ \clip_\tau(m_k^i) - m_k^i}] \\
    & \overset{(\text{Lemma~\ref{lem:clip}(ii)-(iii)}) }{\leq} &
    \norm{\nabla f(x_k)} +\frac{1}{n}\sum_{i=1}^n   [\sqrt{1-\alpha}\norm{  \clip_\tau(m_k^i)} + \max\{0,\norm{m_k^i}-\tau\}] \\
    & \leq &  \norm{\nabla f(x_k)} + \sqrt{1-\alpha}\tau + \frac{1}{n}\sum_{i=1}^n  \max\{0,\norm{m_k^i}-\tau\},
\end{eqnarray*} 
where $m_k^i = \nabla f_i(x_k)-v_{k-1}^i$. The last inequality comes from the fact that $\norm{\clip_\tau(v)} \leq \tau$ for $v\in\mathbb{R}^d$.

If $\norm{\nabla f_i(x_k)-v_{k-1}^i}\leq {\max}_i \norm{\nabla f_i(x_0)} := B > \tau$, then 
\begin{eqnarray*}
    \norm{v_k} 
    & \leq & \norm{\nabla f(x_k)} + \sqrt{1-\alpha}\tau + (B-\tau) \\ 
    & \overset{\eqref{eq:triangle}}{\leq} & \norm{\nabla f(x_{k-1})} + \norm{\nabla f(x_k) - \nabla f(x_{k-1})} + \sqrt{1-\alpha}\tau + (B-\tau) \\
    & \overset{\eqref{eq:L-smooth} + \eqref{eq:x_update_comp_clip}}{\leq} &  \norm{\nabla f(x_{k-1})} + L\gamma \norm{v_{k-1}} + \sqrt{1-\alpha}\tau + (B-\tau). 
\end{eqnarray*}

If $\norm{v_{k-1}} \leq \sqrt{\frac{4}{\gamma}\phi_0} + 2(B-[1-\sqrt{1-\alpha}]\tau)$ and $\norm{\nabla f(x_{k-1})} \leq \sqrt{\frac{2}{\gamma}\phi_0}$, then 
\begin{eqnarray*}
        \norm{v_k} 
        & \leq & (L\gamma + 1/\sqrt{2})\sqrt{\frac{4}{\gamma}\phi_0} + (2L\gamma +1)(B-[1-\sqrt{1-\alpha}]\tau ).
\end{eqnarray*}

If $\gamma \leq \frac{1-1/\sqrt{2}}{L}$, then $\gamma \leq 1/(2L)$ and 
\begin{eqnarray*}
        \norm{v_k} 
        & \leq & \sqrt{\frac{4}{\gamma}\phi_0} + 2(B-[1-\sqrt{1-\alpha}]\tau).
\end{eqnarray*}
\end{proof}

\begin{claim}\label{claim:stepsize_range_compclip}
If
\begin{eqnarray}
    0 < \gamma \leq \frac{(1-\sqrt{1-\alpha})^2\tau^2}{16L_{\max}^2 \left(\sqrt{F_0} + \sqrt{F_0 + \frac{G_0 (1-\sqrt{1-\alpha})\tau}{2\beta L_{\max}}} \right)^2},
\end{eqnarray}
for $F_0 = f(x_0)-f_{\inf}$,  $G_0 = \sqrt{\frac{1}{n}\sum_{i=1}^n (\max\{0,\norm{\nabla f_i(x_0)} - \tau\} + \sqrt{1-\alpha}\tau)^2 }$ and $\beta \geq 0$, 
then
\begin{eqnarray}
    4L_{\max}\sqrt{\gamma \phi_0} \leq\frac{ (1-\sqrt{1-\alpha})\tau}{2}.
\end{eqnarray}
\end{claim}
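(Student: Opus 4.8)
The plan is to follow the same template as the proofs of Claim~\ref{claim:stepsize_multinode_on} and Claim~\ref{claim:stepsize_range_DPClip}, adapted to handle the extra compression error. First I would unfold the definition of $\phi_0$ for \algname{Press-Clip21-GD}: since $v_{-1}^i=0$, the update rule for $v_k^i$ in Algorithm~\ref{alg:ClipPress21-GD} gives $v_0^i=\cC(\clip_\tau(\nabla f_i(x_0)))$, so, with $A=\nicefrac{\gamma}{\beta}$, we have $\phi_0 = F_0 + \frac{\gamma}{\beta}\cdot\frac{1}{n}\sum_{i=1}^n\norm{\nabla f_i(x_0)-\cC(\clip_\tau(\nabla f_i(x_0)))}^2$. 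The target then reduces to bounding $4L_{\max}\sqrt{\gamma\phi_0}$ by $\nicefrac{(1-\sqrt{1-\alpha})\tau}{2}$.

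The key per-worker estimate I would establish is $\norm{\nabla f_i(x_0)-\cC(\clip_\tau(\nabla f_i(x_0)))}\le \max\{0,\norm{\nabla f_i(x_0)}-\tau\}+\sqrt{1-\alpha}\,\tau$. I obtain it by one application of the triangle inequality~\eqref{eq:triangle} splitting the clipping error from the compression error, then Lemma~\ref{lem:clip}(ii)-(iii) to identify $\norm{\nabla f_i(x_0)-\clip_\tau(\nabla f_i(x_0))}$ with $\max\{0,\norm{\nabla f_i(x_0)}-\tau\}$, and finally the contractivity of $\cC$ from~\eqref{eq:contractive_comp} together with $\norm{\clip_\tau(v)}\le\tau$ to bound the compression error by $\sqrt{1-\alpha}\,\tau$. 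Squaring and averaging over $i$ yields exactly $\frac{1}{n}\sum_{i=1}^n\norm{\nabla f_i(x_0)-v_0^i}^2\le G_0^2$, hence $\phi_0\le F_0+\frac{\gamma}{\beta}G_0^2$ and $4L_{\max}\sqrt{\gamma\phi_0}\le 4L_{\max}\sqrt{\gamma F_0+\frac{\gamma^2}{\beta}G_0^2}$.

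Next I would apply subadditivity of the square root~\eqref{eq:sqrt} to split the right-hand side as $4L_{\max}\sqrt{\gamma}\sqrt{F_0}+\frac{4L_{\max}G_0}{\sqrt{\beta}}\gamma$, and then use $0<\beta\le 1$ (which holds since $\beta\in[0,\alpha]\subseteq[0,1]$) to replace $\nicefrac{1}{\sqrt{\beta}}$ by the larger $\nicefrac{1}{\beta}$; this is the small piece of slack that produces the $2\beta L_{\max}$ (rather than $2\sqrt{\beta}L_{\max}$) appearing in the claimed bound. Writing $c:=1-\sqrt{1-\alpha}$, the desired inequality $4L_{\max}\sqrt{\gamma\phi_0}\le\nicefrac{c\tau}{2}$ is then implied by $4L_{\max}\sqrt{\gamma}\sqrt{F_0}+\frac{4L_{\max}G_0}{\beta}\gamma\le\nicefrac{c\tau}{2}$, which, after dividing through by $\sqrt{\gamma}$ and rearranging, becomes $\frac{1}{\sqrt{\gamma}}-\frac{8\sqrt{F_0}}{c\tau}L_{\max}-\sqrt{\gamma}\cdot\frac{8G_0}{c\tau\beta L_{\max}}L_{\max}^2\ge 0$. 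Finally I invoke Lemma~\ref{lemma:trick_stepsize_EF21} with $L=L_{\max}$, with the role of its ``$\gamma$'' played by $\sqrt{\gamma}$, and with $\beta_1=\frac{8\sqrt{F_0}}{c\tau}$ and $\beta_2=\frac{8G_0}{c\tau\beta L_{\max}}$; since $\beta_1+\sqrt{\beta_1^2+4\beta_2}=\frac{8}{c\tau}\bigl(\sqrt{F_0}+\sqrt{F_0+\frac{c\tau G_0}{2\beta L_{\max}}}\bigr)$, its conclusion becomes $\sqrt{\gamma}\le \frac{c\tau}{4L_{\max}\bigl(\sqrt{F_0}+\sqrt{F_0+\frac{c\tau G_0}{2\beta L_{\max}}}\bigr)}$, and squaring (and substituting $c=1-\sqrt{1-\alpha}$) gives precisely the hypothesis of the claim.

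I do not expect a genuine obstacle here, as this is a step-size bookkeeping lemma; the only place that needs care is tracking the two distinct error sources (clipping and compression) through the single constant $G_0$, and using the crude bound $\sqrt{\beta}\ge\beta$ in the correct direction so that the stated condition (with $\beta$, not $\sqrt{\beta}$) is genuinely \emph{sufficient} — applying it the other way would only yield a necessary condition.
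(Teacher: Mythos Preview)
Your proposal is correct and follows essentially the same route as the paper: unfold $\phi_0$ using $v_0^i=\cC(\clip_\tau(\nabla f_i(x_0)))$, bound each $\norm{\nabla f_i(x_0)-v_0^i}$ by the sum of the clipping error (Lemma~\ref{lem:clip}(ii)--(iii)) and the compression error (contractivity plus $\norm{\clip_\tau(\cdot)}\le\tau$), apply subadditivity of the square root, and finish with Lemma~\ref{lemma:trick_stepsize_EF21} applied to $\sqrt{\gamma}$.

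One small point worth noting: the paper's own proof carries $\nicefrac{1}{\sqrt{\beta}}$ through to the invocation of Lemma~\ref{lemma:trick_stepsize_EF21} (taking $\beta_2=\frac{8G_0}{\sqrt{\beta}\,c\tau L_{\max}}$) and then simply states the final bound with $2\beta$ in the denominator, without justifying the passage from $\sqrt{\beta}$ to $\beta$. Your explicit insertion of the slack step $\nicefrac{1}{\sqrt{\beta}}\le\nicefrac{1}{\beta}$ (valid since $\beta\in(0,1]$) is exactly what is needed to make the claim, as stated with $2\beta$, go through; in that sense your argument is slightly more careful than the paper's.
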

\begin{proof}
By the definition of $\phi_0$, 
\begin{eqnarray*}
     4L_{\max}\sqrt{\gamma \phi_0} 
     & = & 4L_{\max}\sqrt{\gamma F_0 + \frac{\gamma^2}{\beta} \tilde G_0} \\ 
     & \overset{\eqref{eq:triangle}}{\leq} & 4L_{\max}\sqrt{\gamma} \sqrt{F_0} +  4L_{\max} \frac{\gamma}{\sqrt{\beta}} \sqrt{\tilde G_0},
\end{eqnarray*}
where $F_0 = f(x_0)-f_{\inf}$ and $\tilde G_0 = \frac{1}{n}\sum_{i=1}^n \norm{\nabla f_i(x_0)-v_0^i}^2$. Since $v_{-1}^i=0$ for all $i$, we have $v_0^i = \mathcal{C}(\clip_\tau(\nabla f_i(x_0)))$ and 
\begin{eqnarray*}
    \norm{\nabla f_i(x_0)-v_0^i} 
    & \overset{\eqref{eq:triangle}}{\leq} & \norm{\nabla f_i(x_0) - \clip_\tau(\nabla f_i(x_0))} \\
    &&+ \norm{\clip_\tau(\nabla f_i(x_0)) - \mathcal{C}(\clip_\tau(\nabla f_i(x_0)))} \\ 
    & \overset{\eqref{eq:contractive_comp}}{\leq}&  \norm{\nabla f_i(x_0) - \clip_\tau(\nabla f_i(x_0))} + \sqrt{1-\alpha}\norm{\clip_\tau(\nabla f_i(x_0))} \\ 
    & \leq & \norm{\nabla f_i(x_0) - \clip_\tau(\nabla f_i(x_0))} + \sqrt{1-\alpha}\tau \\ 
    & \overset{(\text{Lemma~\ref{lem:clip}(ii)-(iii)}) }{\leq} & \max\{0,\norm{\nabla f_i(x_0)} - \tau\} + \sqrt{1-\alpha}\tau.
\end{eqnarray*}
Therefore, 
\begin{eqnarray*}
     4L_{\max}\sqrt{\gamma \phi_0} 
       & \leq & 4L_{\max}\sqrt{\gamma} \sqrt{F_0} +  4L_{\max} \frac{\gamma}{\sqrt{\beta}} G_0,
\end{eqnarray*} 
where $G_0 = \sqrt{\frac{1}{n}\sum_{i=1}^n (\max\{0,\norm{\nabla f_i(x_0)} - \tau\} + \sqrt{1-\alpha}\tau)^2 }$. 

Hence, $\gamma > 0$ satisfying 
\begin{align}\label{eqn:step_size_range_compclip_multinode}
    4L_{\max}\sqrt{\gamma} \sqrt{F_0} +  4L_{\max} \frac{\gamma}{\sqrt{\beta}} G_0 \leq \frac{ (1-\sqrt{1-\alpha})\tau}{2}
\end{align}
also satisfies $ 4L_{\max}\sqrt{\gamma \phi_0}  \leq \frac{ (1-\sqrt{1-\alpha})\tau}{2}$. This condition \eqref{eqn:step_size_range_compclip_multinode} can be expressed equivalently as:
\begin{align*}
    \frac{1}{\sqrt{\gamma}} - \frac{8\sqrt{F_0}}{(1-\sqrt{1-\alpha})\tau} L_{\max} - \sqrt{\gamma} \frac{8G_0}{\sqrt{\beta}(1-\sqrt{1-\alpha})\tau L_{\max}} L_{\max}^2 \geq 0.
\end{align*}
Applying Lemma~\ref{lemma:trick_stepsize_EF21} with $L=L_{\max}$, $\beta_1 = \frac{8\sqrt{F_0}}{(1-\sqrt{1-\alpha})\tau}$ and $\beta_2 =\frac{8G_0}{\sqrt{\beta}(1-\sqrt{1-\alpha})\tau L_{\max}}$ yields 
\begin{eqnarray*}
    0 < \sqrt{\gamma} \leq \frac{(1-\sqrt{1-\alpha})\tau}{4L_{\max} \left(\sqrt{F_0} + \sqrt{F_0 + \frac{G_0 (1-\sqrt{1-\alpha})\tau}{2\beta L_{\max}}} \right)}.
\end{eqnarray*}
Finally, taking the square, we obtain the final result. 
\end{proof}

\subsection{Proof for \eqref{eq:descentIneq_compclip_clipped_EF21}}

 To derive our result, let $\mathcal{I}_k$ be the subset from $\{1,2,\ldots,n\}$ such that $\| \nabla f_i(x_k) - v_{k-1}^i \|>\tau$. We then derive the descent inequality . 
	\begin{align*}
		\phi_{k+1} & \leq \phi_k - \frac{\gamma}{2}\| \nabla f(x_k)\|^2,
	\end{align*}
	 for two possible cases: (1) when $\vert \mathcal{I}_k \vert > 0$ and (2) when $\vert \mathcal{I}_k \vert = 0$.
	
	\subsubsection{Case (1): $\vert \mathcal{I}_k \vert > 0$} To derive the descent inequality we will show by induction the stronger result: $\| \nabla f_i(x_{k}) - v^i_{k-1}\| \leq B - \frac{k\tau}{2}$ for $i\in\mathcal{I}_k$, where $B \eqdef \max_i\|\nabla f_i(x_0)\|$ and
	\begin{equation}
		\phi_{k} 
		\leq  \phi_{k-1}  - \frac{\gamma}{2}\norm{\nabla f(x_{k-1}) }^2 - \frac{\gamma}{4}\| v_{k-1}\|^2 \label{eq:detailed_descent_inequality_compclip}
	\end{equation}
	for any $k \geq 0$, where for notational convenience we assume that $\nabla f_i(x_{-1}) = v^i_{-1} = 0$ and $\phi_{-1} = \phi_0$. The base of the induction is trivial: when $k = 0$ we have   $\| \nabla f_i(x_{k}) - v^i_{k-1}\| = \|\nabla f_i(x_0) - v^i_{-1}\| = \|\nabla f_i(x_0)\| \leq \max_i \|\nabla f_i(x_0)\| = B$ for $B>\tau$ and for $i\in\mathcal{I}_{-1}$ and \eqref{eq:detailed_descent_inequality_compclip} holds by definition. Next, we assume that for some $k\geq 0$ inequalities $\|\nabla f_i(x_t) - v^i_{t-1}\| \leq B$ for $i\in\mathcal{I}_{t-1}$ and \eqref{eq:detailed_descent_inequality_compclip} hold for $t = 0,1,\ldots, k$. 

Let $0 \leq \phi_k \leq \phi_0$ for $k \geq 0$. If $\gamma \leq (1-1/\sqrt{2})/L$, then $\gamma \leq 1/L$ and 
\begin{eqnarray*}
    \norm{\nabla f(x_{k-1})}^2 
    & \mathop{\leq}\limits^{\eqref{eq:ssss}} & 2L [f(x_{k-1})-f_{\inf}] \\ 
    & \mathop{\leq}\limits^{\eqref{eq:Lyapunov-89d}}  & 2L \phi_{k-1} \\ 
    & \leq & 2L \phi_0  \\ 
    & \leq & \frac{2}{\gamma}\phi_0.
\end{eqnarray*}
By using the above inequality, and Claim~\ref{claim:v_0_multi_compclip} and \ref{claim:v_k_multi_compclip}, for $k\geq 0$
\begin{align*}
    \norm{v_k} \leq \sqrt{\frac{4}{\gamma}\phi_0} + 2(B-[1-\sqrt{1-\alpha}]\tau).
\end{align*}

Next, from the above inequality, and from Claim~\ref{claim:diff_v_f_multi_compclip}, for $i\in\mathcal{I}_k$
\begin{eqnarray*}
    \norm{ \nabla f_i(x_{k+1}) - v_k^i }
    &\leq&    \norm{\nabla f_i(x_k) - v_{k-1}^i} - \tau + L_{\max}\gamma\norm{v_k} + \sqrt{1-\alpha} \tau \\
    & \leq  &  \norm{\nabla f_i(x_k) - v_{k-1}^i} - [1-\sqrt{1-\alpha}]\tau + L_{\max}\gamma \sqrt{\frac{4}{\gamma}\phi_0} \\
    && + 2L_{\max}\gamma(B - [1-\sqrt{1-\alpha}]\tau) \\
    & =  &  \norm{\nabla f_i(x_k) - v_{k-1}^i} - [1-\sqrt{1-\alpha}]\tau + 2L_{\max} \sqrt{\gamma\phi_0} \\
    && + 2L_{\max}\gamma(B - [1-\sqrt{1-\alpha}]\tau).
\end{eqnarray*}

{\bf STEP: Small step-size}
 
The above inequality and the inductive assumption imply: for $ i\in\mathcal{I}_{k}$
	\begin{align}
		\| \nabla f_i(x_{k+1}) - v^i_{k} \|
		& \leq  \| \nabla f_i(x_{k}) - v^i_{k-1} \| -(1-\sqrt{1-\alpha}) \frac{\tau}{2} \leq B - \frac{(k+1)(1-\sqrt{1-\alpha})\tau}{2} \label{eq:decrease_of_the_norm_diff_compclip}
	\end{align}
	if the step-size $\gamma>0$ satisfies 
	\begin{align}\label{eqn:stepsize_range_compclip_Case_open}
		2L_{\max}\gamma(B - [1-\sqrt{1-\alpha}]\tau)  \leq  2L_{\max} \sqrt{\gamma\phi_0}  \quad \text{and} \quad 4L_{\max}\sqrt{\gamma \phi_0} \leq\frac{ (1-\sqrt{1-\alpha})\tau}{2}.
	\end{align}
By Claim~\ref{claim:stepsize_range_compclip}, this condition can rewritten into: 
\begin{align*}
    \gamma \leq \frac{\phi_0}{(B-[1-\sqrt{1-\alpha}]\tau)^2} \quad \text{and} \quad  \gamma \leq \frac{(1-\sqrt{1-\alpha})^2\tau^2}{16L_{\max}^2 \left(\sqrt{F_0} + \sqrt{F_0 + \frac{G_0 (1-\sqrt{1-\alpha})\tau}{2\beta L_{\max}}} \right)^2}.
\end{align*}
In conclusion, under this step-size condition, $\norm{ \nabla f_i(x_{k+1}) - v^i_{k} } \leq B$ for $i\in\mathcal{I}_{k}$ and $k \geq 0$. In addition, $\mathcal{I}_{k+1} \subseteq \mathcal{I}_{k}$.

{\bf STEP: Descent inequality}

It remains to prove the descent inequality. By the inductive assumption proved above, 
We then have for $i\in\mathcal{I}_{k+1}$
\begin{align}\label{eqn:eta_k_plus_1_compclip}
    \eta^i_{k+1} = \frac{\tau}{\norm{\nabla f_i(x_{k+1}) - v_k^i}} \geq \frac{\tau}{B} \eqdef \eta.
\end{align}
Therefore, 
\begin{eqnarray*}
  \norm{\nabla f_i(x_{k+1}) - v^i_{k+1}}^2 
  & = & \norm{\nabla f_i(x_{k+1})  - v^i_k - \cC(\nabla f_i(x_{k+1})  - v^i_k)}^2 \mathbbm{1}(i\in\mathcal{I}_{k+1}^{\prime})  \\
  && +   \| \nabla f_i(x_{k+1})  - v^i_k - \cC(\clip_\tau(\nabla f_i(x_{k+1})  - v^i_k)) \|^2\mathbbm{1}(i\in\mathcal{I}_{k+1}) \\
  & \overset{\eqref{eq:Young}}{\leq} &  \norm{\nabla f_i(x_{k+1})  - v^i_k - \cC(\nabla f_i(x_{k+1})  - v^i_k)}^2 \mathbbm{1}(i\in\mathcal{I}_{k+1}^{\prime})  \\
  && +   (1+\theta_1)\| \nabla f_i(x_{k+1})  - v^i_k - \clip_\tau(\nabla f_i(x_{k+1})  - v^i_k)\|^2\mathbbm{1}(i\in\mathcal{I}_{k+1}) \\
  && +    (1+1/\theta_1)\| \clip_\tau(\nabla f_i(x_{k+1})  - v^i_k) - \cC(\clip_\tau(\nabla f_i(x_{k+1})  - v^i_k)) \|^2\mathbbm{1}(i\in\mathcal{I}_{k+1}) \\ 
  & \overset{\eqref{eq:contractive_comp}}{\leq}  & (1-\alpha)\norm{\nabla f_i(x_{k+1})  - v^i_k }^2 \mathbbm{1}(i\in\mathcal{I}_{k+1}^{\prime}) \\
  && +   (1+\theta_1)\| \nabla f_i(x_{k+1})  - v^i_k - \clip_\tau(\nabla f_i(x_{k+1})  - v^i_k)\|^2\mathbbm{1}(i\in\mathcal{I}_{k+1}) \\
  && +    (1+1/\theta_1)(1-\alpha)\| \clip_\tau(\nabla f_i(x_{k+1})  - v^i_k) \|^2\mathbbm{1}(i\in\mathcal{I}_{k+1}) \\
   & \overset{\eqref{eq:Young}}{\leq}  & B_1\norm{\nabla f_i(x_{k+1})  - v^i_k }^2 \mathbbm{1}(i\in\mathcal{I}_{k+1}^{\prime}) \\
  && +   B_2 \| \nabla f_i(x_{k+1})  - v^i_k - \clip_\tau(\nabla f_i(x_{k+1})  - v^i_k)\|^2\mathbbm{1}(i\in\mathcal{I}_{k+1}) \\
   & \leq  & B_1\norm{\nabla f_i(x_{k+1})  - v^i_k }^2 \mathbbm{1}(i\in\mathcal{I}_{k+1}^{\prime}) \\
  && +   B_2 (1-\eta_{k+1}^i)^2 \| \nabla f_i(x_{k+1})  - v^i_k \|^2\mathbbm{1}(i\in\mathcal{I}_{k+1}) \\
  & \overset{\eqref{eqn:eta_k_plus_1_compclip}}{\leq} & 
  \max(B_1,B_2(1-\eta)^2)\norm{\nabla f_i(x_{k+1})  - v^i_k }^2,
\end{eqnarray*}
for $B_1=(1-\alpha) + (1+1/\theta_1)(1+\theta_2)(1-\alpha)$, $B_2=(1+\theta_1) + (1+1/\theta_1)(1+1/\theta_2)(1-\alpha)$, and $\theta_1, \theta_2 >0$.

Suppose that there exists $\theta_1,\theta_2>0$ such that $1-\beta :=  \max(B_1,B_2(1-\eta)^2)$ for $\beta\in[0,\alpha]$. Then, 
\begin{eqnarray*}
  \norm{\nabla f_i(x_{k+1}) - v^i_{k+1}}^2 
    & \leq & (1-\beta)\norm{\nabla f_i(x_{k+1})  - v^i_k }^2 \\ 
    & \overset{\eqref{eq:Young}}{\leq} & (1+\theta)(1-\beta)\norm{\nabla f_i(x_k) - v^i_k}^2 \\
    && + (1+1/\theta)(1-\beta)\norm{\nabla f_i(x_{k+1}) - \nabla f_i(x_k)}^2 \\
    & \overset{\eqref{eq:L_i-smooth}}{\leq} & (1+\theta)(1-\beta)\norm{\nabla f_i(x_k) - v^i_k}^2 \\
    && + (1+1/\theta)(1-\beta)L_{\max}^2 \norm{x_{k+1} - x_k}^2, 
\end{eqnarray*} 
for $\theta>0$.

If $\theta = \beta/2$, then 
\begin{eqnarray}\label{eqn:TrickFromEF21_multinode_compclip}
  \norm{\nabla f_i(x_{k+1}) - v^i_{k+1}}^2 \leq (1-\beta/2)\norm{\nabla f_i(x_k) - v^i_k}^2 + (1+2/\beta)(1-\beta)L_{\max}^2 \norm{x_{k+1} - x_k}^2.
\end{eqnarray} 

Hence, we can obtain the descent inequality. From  Lemma \ref{lemma:descent_li2021page}, 
\begin{eqnarray*}
    \phi_{k+1}
   & = & f(x_{k+1}) - f_{\inf} + A \frac{1}{n}\sum_{i=1}^n \| \nabla f_i(x_{k+1}) - v^i_{k+1} \|^2\\
	& \mathop{\leq}\limits^{\eqref{eq:descent_ineq}} & f(x_k)- f_{\inf} - \frac{\gamma}{2}\norm{ \nabla f(x_k)  }^2 - \left( \frac{1}{2\gamma} - \frac{L}{2} \right)\norm{ x_{k+1} - x_k }^2  \\
	&& + \frac{\gamma}{2}\frac{1}{n}\sum_{i=1}^n\norm{ \nabla f_i(x_k) - v^i_{k} }^2 + A\frac{1}{n}\sum_{i=1}^n\norm{   \nabla f_i(x_{k+1}) - v^i_{k+1}    }^2  \\
	& = & \phi_k - \frac{\gamma}{2}\norm{ \nabla f(x_k)  }^2 - \left( \frac{1}{2\gamma} - \frac{L}{2} \right)\norm{ x_{k+1} - x_k }^2  \\
	&& + \left(\frac{\gamma}{2} -A\right)\frac{1}{n}\sum_{i=1}^n\norm{ \nabla f_i(x_k) - v^i_{k} }^2 + A\frac{1}{n}\sum_{i=1}^n\norm{   \nabla f_i(x_{k+1}) - v^i_{k+1}    }^2  \\
	& \mathop{\leq}\limits^{\eqref{eqn:TrickFromEF21_multinode_compclip}} & \phi_k - \frac{\gamma}{2}\norm{ \nabla f(x_k)  }^2 - \left( \frac{1}{2\gamma} - \frac{L}{2} -  A  \left( 1 + \frac{2}{\beta}\right) (1-\beta) L_{\max}^2 \right)\norm{ x_{k+1} - x_k }^2 \\
	&& + \left( \frac{\gamma}{2} + A(1-\beta/2) - A\right)\frac{1}{n}\sum_{i=1}^n\norm{ \nabla f_i(x_{k}) - v^i_{k}   }^2  .
\end{eqnarray*}
Since $A= \frac{\gamma}{\beta}$, 
\begin{eqnarray*}
    \phi_{k+1} \leq \phi_k - \frac{\gamma}{2}\norm{ \nabla f(x_k)  }^2 - \left( \frac{1}{2\gamma} - \frac{L}{2} -  \frac{\gamma}{\beta}  \left( 1 + \frac{2}{\beta}\right) (1-\beta) L_{\max}^2 \right)\norm{ x_{k+1} - x_k }^2. 
\end{eqnarray*}

If the step-size $\gamma>0$ satisfies
\begin{eqnarray*}
    \gamma \leq \frac{1}{L\left(1 + \sqrt{1 + \frac{4}{\beta}(1+2/\beta)(1-\beta) \frac{L_{\max}^2}{L^2} } \right)},
\end{eqnarray*}
then from  Lemma \ref{lemma:trick_stepsize_EF21} with $\beta_1=2$ and $\beta_2 = \frac{4}{\beta}(1+2/\beta)(1-\beta) \frac{L_{\max}^2}{L^2}$, this condition implies that $  \frac{1}{2\gamma} - \frac{L}{2} -  \frac{\gamma}{\beta}  \left( 1 + \frac{2}{\beta}\right) (1-\beta) L_{\max}^2 \geq \frac{1}{4\gamma}$ and that 
\begin{eqnarray*}
    \phi_{k+1} 
    & \leq & \phi_k - \frac{\gamma}{2}\norm{ \nabla f(x_k)  }^2 - \frac{1}{4\gamma}\norm{ x_{k+1} - x_k }^2 \\
    & \overset{\eqref{eq:x_update_comp_clip}}{=} & \phi_k - \frac{\gamma}{2}\norm{ \nabla f(x_k)  }^2 - \frac{\gamma}{4}\norm{ v_k }^2.
\end{eqnarray*}

This concludes the proof in the case (1).

\subsubsection{Case (2): $\vert \mathcal{I}_k \vert=0$}
	Suppose $\vert \mathcal{I}_k \vert=0$. Then, 
	 we show by the induction that $\norm{ \nabla f_i(x_k) - v^i_{k-1}  } \leq \tau$ for all $i$ and 
	\begin{align}\label{eq:detailed_descent_inequality_compclip_case2}
		\phi_{k+1} \leq \phi_k - \frac{\gamma}{2}\|  \nabla f(x_{k-1})\|^2 - \frac{\gamma}{4}\|  v_{k-1}\|^2.
	\end{align}

First, we prove the first statement by the induction.
$\norm{ \nabla f_i(x_t) - v^i_{t-1}  } \leq \tau$ is true for $t=0,1,\ldots,k$.

 Let $0 \leq \phi_k \leq \phi_0$ for $k \geq 0$. If $\gamma \leq (1-1/\sqrt{2})/L$, then $\gamma \leq 1/L$ and 
\begin{eqnarray*}
    \norm{\nabla f(x_{k})}^2 
    & \mathop{\leq}\limits^{\eqref{eq:ssss}} & 2L [f(x_{k})-f(x^\star)] \\ 
    & \mathop{\leq}\limits^{\eqref{eq:Lyapunov-89d}}  & 2L \phi_{k} \\ 
    & \leq & 2L \phi_0  \\ 
    & \leq & \frac{2}{\gamma}\phi_0.
\end{eqnarray*}
Hence, from Claim~\ref{claim:diff_v_f_multi_compclip}, 
\begin{eqnarray*}
    \norm{ \nabla f_i(x_{k+1}) - v_k^i }
    &\leq &   L_{\max}\gamma\norm{v_k} + \sqrt{1-\alpha} \tau \\
    & = & L_{\max}\gamma\norm{\frac{1}{n}\sum_{i=1}^n v_{k-1}^i + \mathcal{C}(\nabla f_i(x_k)-v_{k-1}^i)} + \sqrt{1-\alpha} \tau \\
    & \overset{\eqref{eq:triangle}}{\leq} & L_{\max}\gamma \norm{\nabla f(x_k)} + L_{\max}\gamma \frac{1}{n}\sum_{i=1}^n \norm{(\nabla f_i(x_k)-v_{k-1}^i)-\mathcal{C}(\nabla f_i(x_k)-v_{k-1}^i)} \\
    && + \sqrt{1-\alpha} \tau \\
     & \overset{\eqref{eq:contractive_comp}}{\leq} & L_{\max}\gamma \norm{\nabla f(x_k)} + L_{\max}\gamma \sqrt{1-\alpha} \frac{1}{n}\sum_{i=1}^n \norm{\nabla f_i(x_k)-v_{k-1}^i} + \sqrt{1-\alpha} \tau \\
     & \leq & L_{\max}\gamma \norm{\nabla f(x_k)} + (L_{\max}\gamma + 1) \sqrt{1-\alpha} \tau \\
     & \leq &  L_{\max}\gamma \sqrt{\frac{2}{\gamma}\phi_0}+ (L_{\max}\gamma + 1) \sqrt{1-\alpha} \tau
\end{eqnarray*}   

{\bf STEP: Small step-size}

$ \| \nabla f_i(x_{k+1}) - v_k^i \|  \leq \tau$ holds if the step-size $\gamma>0$ satisfies
	\begin{align*}
		\gamma \leq \frac{1-\sqrt{1-\alpha}}{2\sqrt{1-\alpha}L_{\max}} \quad \text{and} \quad  4L_{\max}\sqrt{\gamma \phi_0} \leq\frac{ (1-\sqrt{1-\alpha})\tau}{2}.
	\end{align*}
 Here, the second condition is  fulfilled when it satisfies \eqref{eqn:stepsize_range_compclip_Case_open}.

{\bf STEP: Descent inequality}

	It remains to prove the descent inequality.  Note that \algname{Press-Clip21-GD}  reduces to \algname{EF21} at step $k$. Thus, 
	%
	\begin{align*}
		\| \nabla f_i(x_{k+1}) - v^i_{k+1} \|^2 
		& = \| \nabla f_i(x_{k+1}) - v_k^i - \cC(\nabla f_i(x_{k+1}) - v_k^i ) \|^2 \\
		& \leq (1-\alpha) \|  \nabla f_i(x_{k+1}) - v_k^i \|^2 \\
		& \leq (1-\alpha)(1+\theta)\| \nabla f_i(x_{k}) - v_k^i  \|^2 +  (1-\alpha)(1+1/\theta)\| \nabla f_i(x_{k}) - \nabla f_i(x_{k-1})  \|^2.
	\end{align*}
	By letting $\theta= \alpha/2$ and by the smoothness of each $f_i(x)$, 
	\begin{align}\label{eqn:TrickFromEF21_Comp_Clipped_Case2}
		\| \nabla f_i(x_{k+1}) - v^i_{k+1} \|^2 
		& \leq (1-\alpha/2)\| \nabla f_i(x_{k}) - v_k^i  \|^2 +  (1-\alpha)(1+2/\alpha) (L_{\max})^2 \| x_{k} - x_{k-1}  \|^2.
	\end{align}
	From the definition of $\phi_{k}$ and Lemma \ref{lemma:descent_li2021page},	
	\begin{align*}
		\phi_{k+1}  
		& \leq  f(x_k)- f_{\inf} - \frac{\gamma}{2}\norm{ \nabla f(x_k) }^2 - \left( \frac{1}{2\gamma} - \frac{L}{2} \right)\| x_{k+1} - x_k \|^2  \\
		&\hspace{0.5cm} + \frac{\gamma}{2}\frac{1}{n}\sum_{i=1}^n\norm{ \nabla f(x_k) - v_{k}   }^2 + A\frac{1}{n}\sum_{i=1}^n\| \nabla f_i(x_{k+1}) - v^i_{k+1} \|^2   \\
		& = \phi_k - \frac{\gamma}{2}\norm{ \nabla f(x_k) }^2 - \left( \frac{1}{2\gamma} - \frac{L}{2} \right)\| x_{k+1} - x_k \|^2  \\
		&\hspace{0.5cm} + \left(\frac{\gamma}{2} -A\right)\frac{1}{n}\sum_{i=1}^n\norm{ \nabla f(x_k) - v_{k}   }^2 + A\frac{1}{n}\sum_{i=1}^n\| \nabla f_i(x_{k+1}) - v^i_{k+1} \|^2   \\
		& \mathop{\leq}^{\eqref{eqn:TrickFromEF21_Comp_Clipped_Case2}} \phi_k - \frac{\gamma}{2}\norm{ \nabla f(x_k) }^2 - \left( \frac{1}{2\gamma} - \frac{L}{2} -  A  (1-\alpha)(1+2/\alpha) (L_{\max})^2 \right)\| x_{k+1} - x_k \|^2 \\
		&\hspace{0.5cm} + \left( \frac{\gamma}{2} + A(1-\alpha/2) - A\right)\frac{1}{n}\sum_{i=1}^n\norm{ \nabla f_i(x_{k}) - v^i_{k}   }^2  .
	\end{align*}
	Since $A = \frac{\gamma}{\beta}$ with $\beta \in [0,\alpha]$, we get
	\begin{align*}
		\phi_{k+1} \leq \phi_k - \frac{\gamma}{2}\norm{ \nabla f(x_k) }^2 - \frac{1}{2\gamma}\left(1 - \gamma L - \gamma^2 \cdot \frac{2(1-\alpha)(1+2/\alpha) }{\beta}  (L_{\max})^2\right)\| x_{k+1} - x_k \|^2.
	\end{align*}
	
	If the step-size $\gamma$ satisfies 
	\begin{align*}
		0 < \gamma \leq \frac{1}{L \left(1 + \sqrt{1 +  \frac{4(1-\alpha)(1+2/\alpha) }{\beta} \left( \frac{L_{\max}}{L} \right)^2 }\ \right)},
	\end{align*}
	then from  Lemma \ref{lemma:trick_stepsize_EF21} with $L=1$, $\beta_1=2L$ and $\beta_2 = \frac{4(1-\alpha)(1+2/\alpha) }{\beta}\left( \frac{L_{\max}}{L} \right)^2$, this condition implies $1 - \gamma L - \gamma^2 \cdot \frac{2(1-\alpha)(1+2/\alpha) }{\beta}  (L_{\max})^2  \geq \frac{1}{2}$
	and thus
	\begin{align*}
		\phi_{k+1} \leq \phi_k - \frac{\gamma}{2}\norm{ \nabla f(x_k) }^2 - \frac{1}{4\gamma}\| x_{k+1} - x_k \|^2.
	\end{align*}
	Since $x_{k+1}-x_k = -\gamma v_k$,
	\begin{align}\label{eqn:descentIneq_EF21_Compclip}
		\phi_{k+1} 
		& \leq  \phi_k  - \frac{\gamma}{2}\norm{ \nabla f(x_k) }^2 - \frac{\gamma}{4}\norm{ v_k  }^2. 
	\end{align}
	Putting all the conditions on $\gamma$ together, we obtain the results.
	
\subsection{Proof for the convergence bound  \eqref{eq:descentIneq_compclip_clipped_EF21_final}}	
Let $\hat x_K$ be selected uniformly at random from $\{x_0,x_1,\ldots,x_{K-1}\}$. Then,
\begin{align*}
	\Exp{ \| \nabla f(\hat x_K) \|^2 } & = \frac{1}{K}\sum_{k=0}^{K-1} \norm{ \nabla f(x_k) }^2.
\end{align*}
From \eqref{eq:descentIneq_compclip_clipped_EF21},
\begin{align*}
	\Exp{ \| \nabla f(\hat x_K) \|^2 } & \leq \frac{2}{\gamma}\frac{1}{K}\sum_{k=0}^{K-1} (\phi_k - \phi_{k+1}) = \frac{2(\phi_0 - \phi_K)}{\gamma K} \leq \frac{2\phi_0}{\gamma K}.
\end{align*}

\end{document}